\def\eqref#1{equation~\ref{#1}}
\def\1{\bm{1}}
\def\ve{{\bm{e}}}
\def\vp{{\bm{p}}}
\def\vu{{\bm{u}}}
\def\vv{{\bm{v}}}
\def\vw{{\bm{w}}}
\def\vx{{\bm{x}}}
\def\vy{{\bm{y}}}
\def\mA{{\bm{A}}}
\def\mX{{\bm{X}}}
\DeclareMathAlphabet{\mathsfit}{\encodingdefault}{\sfdefault}{m}{sl}
\SetMathAlphabet{\mathsfit}{bold}{\encodingdefault}{\sfdefault}{bx}{n}
\def\gF{{\mathcal{F}}}
\def\gG{{\mathcal{G}}}
\def\gL{{\mathcal{L}}}
\def\gP{{\mathcal{P}}}
\def\gS{{\mathcal{S}}}
\def\gV{{\mathcal{V}}}
\definecolor{mydarkblue}{rgb}{0,0.08,0.45}
\newcommand*{\ldblbrace}{\{\mskip-5mu\{}
\newcommand*{\rdblbrace}{\}\mskip-5mu\}}
\renewcommand*{\hom}{\mathsf{hom}}
\newcommand*{\Hom}{\mathsf{Hom}}
\newcommand*{\sub}{\mathsf{sub}}
\newcommand*{\Sub}{\mathsf{Sub}}
\newcommand*{\hash}{\mathsf{hash}}
\newcommand*{\Spasm}{\mathsf{Spasm}}
\newcommand*{\aut}{\mathsf{aut}}
\newcommand*{\surj}{\mathsf{surj}}
\newcommand*{\Surj}{\mathsf{Surj}}
\newcommand*{\bIso}{\mathsf{bIso}}
\newcommand*{\BIso}{\mathsf{BIso}}
\newcommand*{\bExt}{\mathsf{bExt}}
\newcommand*{\BExt}{\mathsf{BExt}}
\newcommand*{\bIsoHom}{\mathsf{bIsoHom}}
\newcommand*{\bIsoSurj}{\mathsf{bIsoSurj}}
\newcommand*{\bIsoInj}{\mathsf{bIsoInj}}
\newcommand*{\BIsoHom}{\mathsf{BIsoHom}}
\newcommand*{\BIsoSurj}{\mathsf{BIsoSurj}}
\newcommand*{\BIsoInj}{\mathsf{BIsoInj}}
\newcommand*{\BStrHom}{\mathsf{BStrHom}}
\newcommand*{\bStrHom}{\mathsf{bStrHom}}
\newcommand*{\BStrSurj}{\mathsf{BStrSurj}}
\newcommand*{\bStrSurj}{\mathsf{bStrSurj}}
\newcommand*{\cnt}{\mathsf{cnt}}
\newcommand*{\tw}{\mathsf{tw}}
\newcommand*{\dis}{\mathsf{dis}}
\newcommand*{\dep}{\mathsf{dep}}
\newcommand*{\atp}{\mathsf{atp}}
\newcommand*{\nxt}{\mathsf{next}}
\newcommand*{\pa}{\mathsf{pa}}
\newcommand*{\Desc}{\mathsf{Desc}}
\newcommand*{\meta}{\mathsf{Meta}}
\newcommand*{\twist}{\mathsf{twist}}
\newcommand{\cmark}{\ding{51}}%
\newcommand{\xmark}{\ding{55}}%
\newtheorem{theorem}{Theorem}[section]
\newtheorem{proposition}[theorem]{Proposition}
\newtheorem{lemma}[theorem]{Lemma}
\newtheorem{fact}[theorem]{Fact}
\newtheorem{corollary}[theorem]{Corollary}
\newtheorem{conjecture}[theorem]{Conjecture}
\theoremstyle{definition}
\newtheorem{definition}[theorem]{Definition}
\newtheorem{example}[theorem]{Example}
\newtheorem{remark}[theorem]{Remark}
\renewcommand \thepart{}
\renewcommand \partname{}
\newdimen\origiwspc
\newdimen\origiwstr
\newenvironment{fontspace}[2]
    {
        \origiwspc=\fontdimen2\font
        \origiwstr=\fontdimen3\font
        \fontdimen2\font=#1\origiwspc
        \fontdimen3\font=#2\origiwstr
    }
    {
        \fontdimen2\font=\origiwspc
        \fontdimen3\font=\origiwstr
    }
\title{Beyond Weisfeiler-Lehman: A Quantitative\\ Framework for GNN Expressiveness}
\author{Bohang Zhang$^{1}$\thanks{Equal technical contributions.}\ \ \thanks{Project lead.}\quad Jingchu Gai$^{1*}$\hspace{2pt}\quad Yiheng Du$^1$ \quad Qiwei Ye$^2$ \quad Di He$^1$ \quad Liwei Wang$^1$\\
$^1$Peking University\quad $^2$Beijing Academy of Artificial Intelligence \\
\texttt{zhangbohang@pku.edu.cn},\quad \texttt{\{gaijingchu,duyiheng\}@stu.pku.edu.cn}\\
\texttt{qwye@baai.ac.cn},\quad \texttt{\{dihe,wanglw\}@pku.edu.cn}
}
\begin{document}

\maketitle

\doparttoc 
\faketableofcontents

\vspace{-5pt}

\begin{abstract}

\looseness=-1 Designing expressive Graph Neural Networks (GNNs) is a fundamental topic in the graph learning community. So far, GNN expressiveness has been primarily assessed via the Weisfeiler-Lehman (WL) hierarchy. However, such an expressivity measure has notable limitations: it is inherently \emph{coarse}, \emph{qualitative}, and may not well reflect practical requirements (e.g., the ability to encode substructures). In this paper, we introduce a unified framework for \emph{quantitatively} studying the expressiveness of GNN architectures, addressing all the above limitations. Specifically, we identify a fundamental expressivity measure termed \emph{homomorphism expressivity}, which quantifies the ability of GNN models to count graphs under homomorphism. Homomorphism expressivity offers a complete and practical assessment tool: the completeness enables \emph{direct} expressivity comparisons between GNN models, while the practicality allows for understanding concrete GNN abilities such as subgraph counting. By examining four classes of prominent GNNs as case studies, we derive simple, unified, and elegant descriptions of their homomorphism expressivity for both invariant and equivariant settings. Our results provide novel insights into a series of previous work, unify the landscape of different subareas in the community, and settle several open questions. Empirically, extensive experiments on both synthetic and real-world tasks verify our theory, showing that the practical performance of GNN models aligns well with the proposed metric.
\end{abstract}

\vspace{-5pt}

\section{Introduction}
\looseness=-1 Owing to the ubiquity of graph-structured data in numerous applications, Graph Neural Networks (GNNs) have achieved enormous success in the field of machine learning over the past few years.
However, one of the most prominent drawbacks of popular GNNs lies in the limited expressive power. In particular, \citet{morris2019weisfeiler,xu2019powerful} showed that Message Passing GNNs (MPNNs) are intrinsically bounded by the 1-dimensional Weisfeiler-Lehman test (1-WL) in distinguishing non-isomorphic graphs \citep{weisfeiler1968reduction}. Since then, the Weisfeiler-Lehman hierarchy has become a yardstick to measure the expressiveness and guide designing more powerful GNN architectures (see \cref{sec:related_work} for an overview of representative approaches in this area).

However, as more and more architectures have been proposed, the limitations of the WL hierarchy are becoming increasingly evident. First, the WL hierarchy is arguably too coarse to evaluate the expressive power of \emph{practical} GNN models \citep{morris2022speqnets,puny2023equivariant}. On one hand, architectures inspired by higher-order WL tests \citep{maron2019invariant,maron2019provably,morris2019weisfeiler} often suffer from substantial computation/memory costs. On the other hand, most practical and efficient GNNs are only proved to be strictly more expressive than 1-WL by leveraging toy example graphs \citep[e.g., ][]{zhang2021nested,bevilacqua2022equivariant,wijesinghe2022a}. Such a \emph{qualitative} characterization may provide little insight into the models' true expressiveness.
Besides, the expressive power brought from the WL hierarchy often does not align well with the one required in practice \citep{velivckovic2022message}. Hence, how to study the expressiveness of GNN models in a \emph{quantitative}, \emph{systematic}, and \emph{practical} way remains a central research direction for the GNN community.

To address the above limitations, this paper takes a different approach by studying GNN expressivity from the following practical angle: \emph{What structural information can a GNN model encode}? Since the ability to detect/count graph substructures is crucial in various real-world applications \citep{chen2020can,huang2023boosting,tahmasebi2023power}, many expressive GNNs have been proposed based on preprocessing substructure information \citep{bouritsas2022improving,barcelo2021graph,bodnar2021topological,bodnar2021cellular}. However, instead of augmenting GNNs by manually preprocessed (task-specific) substructures, it is nowadays more desirable to design generic, domain-agnostic GNNs that can end-to-end \emph{learn} different structural information suitable for diverse applications. This naturally gives rise to the fundamental question of characterizing the complete set of substructures prevalent GNN models can encode. Unfortunately, this problem is widely recognized as challenging even when examining simple structures like cycles \citep{furer2017combinatorial,arvind2020weisfeiler,huang2023boosting}.

\textbf{Our contributions}. Motivated by GNNs' ability to encode substructures, this paper presents a novel framework for quantitatively analyzing the expressive power of GNN models. Our approach is rooted in a critical discovery: given a GNN model $M$, the model's output representation for any graph $G$ can be fully determined by the structural information of $G$ over some pattern family $\gF^M$, where $\gF^M$ corresponds to precisely all (and only) those substructures that can be ``encoded'' by model $M$. In this way, the set $\gF^M$ can be naturally viewed as an expressivity description of $M$: by identifying $\gF^M$ for each model $M$, the expressivity of different models can then be qualitatively/quantitatively compared by simply looking at their \emph{set inclusion relation} and \emph{set difference}.

The crux here is to define an appropriate notion of ``encodability'' so that $\gF^M$ can admit a simple description. We identify that a good candidate is the \emph{homomorphism expressivity}: i.e., $\gF^M$ consists of all substructures that can be counted by model $M$ under homomorphism (see \cref{sec:preliminary} for a formal definition). Homomorphism is a foundational concept in graph theory \citep{lovasz2012large} and is linked to many important topics such as graph coloring, graph matching, and \emph{subgraph counting}. With this concept, we are able to give complete, unified, and surprisingly elegant descriptions of the pattern family $\gF^M$ for a wide range of mainstream GNN architectures listed below:
\begin{itemize}[topsep=0pt,leftmargin=20pt]
    \setlength{\itemsep}{0pt}
    \item \textbf{MPNN} \citep[e.g.,][]{gilmer2017neural,hamilton2017inductive,kipf2017semisupervised,xu2019powerful};
    \item\begin{fontspace}{0.8}{0.8}
        \textbf{Subgraph GNN} \citep{you2021identity,zhang2021nested,bevilacqua2022equivariant,qian2022ordered};
    \end{fontspace} 
    \item \textbf{Local GNN} \citep{morris2020weisfeiler,morris2022speqnets,zhang2023complete,frasca2022understanding};
    \item \textbf{Folklore-type GNN} \citep{maron2019provably,zhang2023complete,feng2023towards}.
\end{itemize}
\looseness=-1 Technically, the descriptions are based on a novel application and extension of the concept of \emph{nested ear decomposition} (NED) in graph theory \citep{eppstein1992parallel}. We prove that: $\mathrm{(i)}$ (\emph{necessity}) each model $M$ above can count (under homomorphism) a specific family of patterns $\gF^M$, characterized by a specific type of NED; $\mathrm{(ii)}$ (\emph{sufficiency}) any pattern $F\notin \gF^M$ cannot be counted under homomorphism by model $M$; $\mathrm{(iii)}$ (\emph{completeness}) for any graph, information collected from the homomorphism count in pattern family $\gF^M$ determines its representation computed by model $M$. Therefore, \emph{homomorphism expressivity is well-defined and is a complete expressivity measure for GNN models}.

Our theory can be generalized in various aspects. One significant extension is the node-level and edge-level expressivity for equivariant GNNs \citep{azizian2021expressive,geerts2022expressiveness}, which can be naturally tackled by a fine-grained analysis of NED. As another non-trivial generalization, we study higher-order GNN variants for several of the above architectures and derive results by defining higher-order NED. Both aspects demonstrate the flexibility of our proposed framework, suggesting it as a general recipe for analyzing future architectures.

\textbf{Implications}. Homomorphism expressivity serves as a powerful toolbox for bridging different subareas in the GNN community, providing fresh understandings of a series of known results that were previously proved in complex ways, and answering a set of unresolved open problems. \textbf{First}, our results can readily establish a complete expressiveness \emph{hierarchy} among all the aforementioned architectures and their higher-order extensions. This recovers and extends a number of results in \citet{morris2020weisfeiler,qian2022ordered,zhang2023complete,frasca2022understanding} and answers their open problems (\cref{sec:qualitative}). In fact, our results go far beyond revealing the expressivity gap between models: we essentially answer \emph{how large} the gap is and establish a systematic approach to constructing counterexample graphs. \textbf{Second}, based on the relation between homomorphism and subgraph count, we are able to characterize the subgraph counting power of GNN models for \emph{all} patterns at graph, node, and edge levels, significantly advancing an open direction initiated in \citet{furer2017combinatorial,arvind2020weisfeiler} (\cref{sec:subgraph}). As a special case, our results extend recent findings in \citet{huang2023boosting} about the cycle counting power of GNN models, highlighting that Local 2-GNN can already subgraph-count all cycles/paths within 7 nodes (even at edge-level). \textbf{Third}, our results provide a new toolbox for studying the polynomial expressivity proposed recently in \citet{puny2023equivariant}, extending it to various practical architectures and answering an open question (\cref{sec:polynomial}). Empirically, an extensive set of experiments verifies our theory, showing that the homomorphism expressivity of different models matches well with their practical performance in diverse tasks.

\section{Preliminary}
\label{sec:preliminary}


\textbf{Notations}. We use $\{\ \}$ and $\ldblbrace\ \rdblbrace$ to denote sets and multisets, respectively. Given a (multi)set $S$, its cardinality is denoted as $|S|$. In this paper, we consider finite, undirected, vertex-labeled graphs with no self-loops or repeated edges. Let $G=(V_G,E_G,\ell_G)$ be a graph with vertex set $V_G$, edge set $E_G$, and label function $\ell_G$, where each edge in $E_G$ is a set $\{u,v\}\subset V_G$ of cardinality two, and $\ell_G(u)$ is the label of vertex $u$. The \emph{rooted graph} $G^u$ is a graph obtained from $G$ by marking the special vertex $u\in V_G$; we can similarly consider marking two special vertices $u,v\in V_G$ (denote by $G^{uv}$). The \emph{neighbors} of vertex $u$ is denoted as $N_G(u):=\{v\in V_G:\{u,v\}\in E_G\}$. A graph $F=(V_F,E_F,\ell_F)$ is a \emph{subgraph} of $G$ if $V_F\subset V_G$, $E_F\subset E_G$, and $\ell_F(u)=\ell_G(u)$ for all $u\in V_F$. A \emph{simple path} $P$ in $G$ is an edge set of the form $\{\{w_0,w_1\},\cdots,\{w_{k-1},w_k\}\}\subset E_G$ where $w_i\neq w_j$ for all $i\neq j$. Here, $w_0$ and $w_k$ are called endpoints of $P$ and other vertices are called internal points.

\looseness=-1 \textbf{Homomorphism, isomorphism, and subgraph count}. Given two graphs $F$ and $G$, a homomorphism from $F$ to $G$ is a mapping $f:V_F\to V_G$ that preserves edges and labels, i.e., $\ell_F(u)=\ell_G(f(u))$ for all $u\in V_F$, and $\{f(u),f(v)\}\in E_G$ for all $\{u,v\}\in E_F$. When the mapping $f$ exists, we say $F$ is homomorphic to $G$. We denote by $\Hom(F,G)$ the set of all homomorphisms from $F$ to $G$ and define $\hom(F,G)=|\Hom(F,G)|$, which counts the number of homomorphisms for pattern $F$ in graph $G$. If $f$ is further surjective on both vertices and edges, we call $G$ a \emph{homomorphic image} of $F$. Denote by $\Spasm(F)$ the set of all homomorphic images of $F$, called the spasm of $F$. For rooted graphs, homomorphism should additionally preserve vertex marking: i.e., if $f$ is a homomorphism from $F^{uv}$ to $G^{xy}$, then $f(u)=x$ and $f(v)=y$.

A mapping $f:V_F\to V_G$ is called an isomorphism if $f$ is a bijection and both $f$ and its inverse $f^{-1}$ are homomorphisms. We denote by $\Sub(F,G)$ the set of all subgraphs of $G$ isomorphic to $F$ and define $\sub(F,G)=|\Sub(F,G)|$, which counts the number of patterns $F$ occurred in graph $G$ as a subgraph. We note that a similar definition holds for rooted graphs (e.g., $\sub(F^{uv},G^{xy})$).

\textbf{Graph neural networks}. GNNs can be generally described as graph functions that are invariant under isomorphism. To achieve such invariance, most popular GNN models follow a \emph{color refinement} (CR) paradigm: they maintain a feature representation (color) for each vertex or vertex tuples and iteratively refine these features through equivariant aggregation layers. Finally, there is a global pooling layer to merge all features and obtain the graph representation. Below, we separately define the corresponding CR algorithms for four mainstream classes of GNNs studied in this paper.
\begin{itemize}[topsep=0pt,leftmargin=25pt]
    \setlength{\itemsep}{0pt}
    \vspace{-2pt}
    \item \textbf{MPNN}. Given a graph $G$, MPNN maintains a color $\chi^\mathsf{MP}_G(u)$ for each vertex $u\in V_G$. Initially, the color only depends on the vertex label, i.e., $\chi^{\mathsf{MP},(0)}_G(u)=\ell_G(u)$. Then, in each iteration, the color is refined by the following update formula (where $\hash$ is a perfect hash function):
    \begin{equation}
    \setlength{\abovedisplayskip}{2pt}
    \setlength{\belowdisplayskip}{2pt}
        \chi^{\mathsf{MP},(t+1)}_G(u)=\hash\left(\chi^{\mathsf{MP},(t)}_G(u),\ldblbrace\chi^{\mathsf{MP},(t)}_G(v):v\in N_G(u)\rdblbrace\right).
    \end{equation}
    After a sufficient number of iterations, the colors become stable. We denote by $\chi^\mathsf{MP}_G(u)$ the stable color of $u$, which is also the node feature of $u$ computed by the MPNN. The graph representation is defined as the multiset of node colors, i.e., $\chi^{\mathsf{MP}}_G(G)=\ldblbrace\chi^{\mathsf{MP}}_G(u):u\in V_G\rdblbrace$.
    
    \item \textbf{Subgraph GNN}. It treats a graph $G$ as a set of subgraphs $\ldblbrace G^u:u\in V_G\rdblbrace$, each obtained from $G$ by marking a special vertex $u\in V_G$. Subgraph GNN maintains a color $\chi^\mathsf{Sub}_G(u,v)$ for each vertex $v$ in graph $G^u$. Initially, $\chi^{\mathsf{Sub},(0)}_G(u,v)=(\ell_G(v),\mathbb I[u=v])$, where the latter term distinguishes the special mark. It then runs MPNNs independently on each graph $G^u$:
    \begin{equation}
    \setlength{\abovedisplayskip}{2pt}
    \setlength{\belowdisplayskip}{2pt}
        \chi^{\mathsf{Sub},(t+1)}_G(u,v)=\hash\left(\chi^{\mathsf{Sub},(t)}_G(u,v),\ldblbrace\chi^{\mathsf{Sub},(t)}_G(u,w):w\in N_G(v)\rdblbrace\right).
    \end{equation}
    Denote the stable color of $(u,v)$ as $\chi^\mathsf{Sub}_G(u,v)$. The node feature of $u$ computed by Subgraph GNN is defined by merging all colors in $G^u$, i.e., $\chi^\mathsf{Sub}_G(u):=\hash\left(\ldblbrace\chi^{\mathsf{Sub}}_G(u,v):v\in V_G\rdblbrace\right)$. Finally, the graph representation is defined as $\chi^{\mathsf{Sub}}_G(G)=\ldblbrace\chi^{\mathsf{Sub}}_G(u):u\in V_G\rdblbrace$.
    
    \item \textbf{Local GNN}. Inspired by the $k$-WL test \citep{grohe2017descriptive}, Local $k$-GNN is defined by replacing all global aggregations in $k$-WL by \emph{sparse} ones that only aggregate local neighbors, yielding a much more efficient CR algorithm. As an example, the iteration of Local 2-GNN has the following form and enjoys the same computational complexity as a Subgraph GNN.
    \begin{equation}
    \setlength{\abovedisplayskip}{2pt}
    \setlength{\belowdisplayskip}{2pt}
        \chi^{\mathsf{L},(t+1)}_G\!(u,v)\!=\!\hash\!\left(\chi^{\mathsf{L},(t)}_G\!(u,v),\ldblbrace\chi^{\mathsf{L},(t)}_G\!(w,v)\!:\!w\!\in\!N_G(u)\rdblbrace,\ldblbrace\chi^{\mathsf{L},(t)}_G\!(u,w)\!:\!w\!\in\!N_G(v)\rdblbrace\right)\!.
    \end{equation}
    Initially, $\chi^{\mathsf{L},(0)}_G(u,v)=\left(\ell_G(u),\ell_G(v),\mathbb I[u=v],\mathbb I[\{u,v\}\in E_G]\right)$, which is called the \emph{isomorphism type} of vertex pair $(u,v)$). We similarly denote the stable color as $\chi^\mathsf{L}_G(u,v)$ and define the node feature $\chi^{\mathsf{L}}_G(u)$ and graph representation $\chi^{\mathsf{L}}_G(G)$ as in the Subgraph GNN.
    
    \item \textbf{Folklore-type GNN}. The Folklore GNN (FGNN) is inspired by the standard $k$-FWL test \citep{cai1992optimal}. As an example, the iteration formula of 2-FGNN is written as follows:
    \begin{equation}
    \label{eq:2fwl}
    \setlength{\abovedisplayskip}{2pt}
    \setlength{\belowdisplayskip}{2pt}
        \chi^{\mathsf{F},(t+1)}_G(u,v)=\hash\left(\chi^{\mathsf{F},(t)}_G(u,v),\ldblbrace(\chi^{\mathsf{F},(t)}_G(w,v),\chi^{\mathsf{F},(t)}_G(u,w)):w\in V_G\rdblbrace\right).
    \end{equation}
    One can similarly consider the more efficient Local 2-FGNN by only aggregating local neighbors, which has the same computational complexity as Local 2-GNN and Subgraph GNN:
    \begin{equation}
    \setlength{\abovedisplayskip}{2pt}
    \setlength{\belowdisplayskip}{2pt}
        \chi^{\mathsf{LF},(t+1)}_G(u,v)\!=\!\hash\!\left(\chi^{\mathsf{LF},(t)}_G(u,v),\ldblbrace(\chi^{\mathsf{LF},(t)}_G(w,v),\chi^{\mathsf{LF},(t)}_G(u,w))\!:\!w\!\in\!N_G(u)\!\cup\! N_G(v)\rdblbrace\right)\!.
        \vspace{-6pt}
    \end{equation}
    The stable color, node feature, and graph representation can be similarly defined.
\end{itemize}
Finally, we note that the latter three types of GNNs can be naturally generalized into higher-order variants. We give a general definition of all these architectures in \cref{sec:higher-order_definition}. For the base case of $k=1$, Subgraph $(k\!-\! 1)$-GNN, Local $k$-GNN, and Local $k$-FGNN all reduce to the MPNN.

\vspace{-1pt}
\section{Homomorphism Expressivity of Graph Neural Networks}
\vspace{-1pt}
\subsection{Homomorphism expressivity}
\vspace{-1pt}

Given a GNN model $M$ and a substructure $F$, we say $M$ can count graph $F$ under homomorphism if, for any graph $G$, the graph representation $\chi^M_G(G)$ \emph{determines} the homomorphism count $\hom(F,G)$. In other words, $\chi^M_G(G)=\chi^M_H(H)$ implies $\hom(F,G)=\hom(F,H)$ for any graphs $G,H$. The central question studied in this paper is, \emph{what substructures $F$ can a GNN model $M$ count under homomorphism}? This gives rise to the notion of homomorphism expressivity defined below:
\begin{definition}
\label{def:homo_expressivity}
    The homomorphism expressivity of a GNN model $M$, denoted by $\gF^M$, is a family of (labeled) graphs satisfying the following conditions\footnote{While homomorphism expressivity exists for all common GNNs such as the ones in \cref{sec:preliminary}, we note that it may not be well-defined for certain pathological GNNs. See \cref{sec:hom_discussion} for a deep discussion on it.}:
    \begin{enumerate}[label=\alph*),topsep=0pt,leftmargin=25pt]
    \setlength{\itemsep}{-4pt}
        \vspace{-3pt}
        \item For any two graphs $G,H$, $\chi^M_G(G)=\chi^M_H(H)$ \emph{iff} $\hom(F,G)=\hom(F,H)$ for all $F\in\gF^M$; 
        \item $\gF^M$ is maximal, i.e., for any graph $F\notin\gF^M$, there exists a pair of graphs $G,H$ such that $\chi^M_G(G)=\chi^M_H(H)$ and $\hom(F,G)\neq\hom(F,H)$.
    \end{enumerate}
\end{definition}

\begin{example}
    As a simple example, consider a maximally expressive GNN $M$ that can solve the graph isomorphism problem, i.e., it computes the same representation for two graphs \emph{iff} they are isomorphic. Then, $\gF^M$ contains all graphs. This is a classic result proved in \citet{lovasz1967operations}.
\end{example}
\vspace{-2pt}

\looseness=-1 The significance of homomorphism expressivity can be justified in the following aspects. First, it is a \emph{complete} expressivity measure. Based on item (a), the homomorphism count within $\gF^M$ essentially captures \emph{all} information embedded in the graph representation computed by model $M$. This contrasts with previously studied metrics such as the ability to compute biconnectivity properties \citep{zhang2023rethinking} or count cycles \citep{huang2023boosting}, which only reflects restricted aspects of expressivity. Second, homomorphism expressivity is a \emph{quantitative} measure and is much finer than qualitative expressivity results obtained from the graph isomorphism test. Specifically, by item (a), a GNN model $M_1$ is more expressive than another model $M_2$ in distinguishing non-isomorphic graphs \emph{iff} $\gF^{M_2}\subset\gF^{M_1}$. Furthermore, by item (b), $M_1$ is strictly more expressive than $M_2$ \emph{iff} $\gF^{M_2}\subsetneq\gF^{M_1}$, and the expressivity gap can be quantitatively understood via the set difference $\gF^{M_1}\backslash \gF^{M_2}$.

Consequently, by deriving which graphs are encompassed in the graph family $\gF^M$, homomorphism expressivity provides a novel way to analyze and compare the expressivity of GNN models. In the next subsection, we will give exact characterizations of $\gF^M$ for all models $M$ defined in \cref{sec:preliminary}.

\subsection{Main results}
\label{sec:main_results}

To derive our main results, we leverage a concept in graph theory known as \emph{nested ear decomposition} (NED), which is originally introduced in \citet{eppstein1992parallel}. Here, we adapt the definition as follows:
\begin{definition}
\label{def:ned}
    Given a graph $G$, a NED $\gP$ is a partition of the edge set $E_G$ into a \emph{sequence} of simple paths $P_1,\cdots,P_m$ (called ears), which satisfies the following conditions:
    \begin{itemize}[topsep=0pt,leftmargin=25pt]
    \setlength{\itemsep}{-3pt}
        \vspace{-3pt}
        \item Any two ears $P_i$ and $P_j$ with indices $1\le i<j\le c$ do not intersect, where $c$ is the number of connected components of $G$.
        \item For each ear $P_j$ with index $j>c$, there is an ear $P_i$ with index $1\le i<j$ such that one or two endpoints of $P_j$ lie in ear $P_{i}$ (we say $P_j$ is \emph{nested} on $P_{i}$). Moreover, except for the endpoints lying in ear $P_i$, no other vertices in $P_j$ are in any previous ear $P_k$ for $1\le k<j$. If both endpoints of $P_j$ lie in $P_i$, the subpath in $P_i$ that shares the endpoints of $P_j$ is called the \emph{nested interval} of $P_j$ in $P_i$, denoted as $I(P_j)\subset P_i$. If only one endpoint lies in $P_i$, define $I(P_j)=\emptyset$.
        \item For all ears $P_j$, $P_k$ with $c<j<k\le m$, either $I(P_j)\cap I(P_k)=\emptyset$ or $I(P_j)\subset I(P_k)$.
    \end{itemize}
\end{definition}

\begin{figure}[t]
    \vspace{-25pt}
    \centering
    \small
    \begin{tabular}{cc}
        \includegraphics[height=0.14\textwidth]{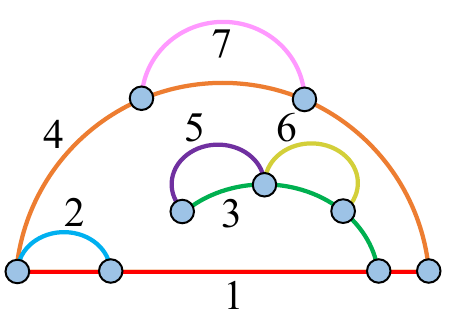} \includegraphics[height=0.14\textwidth]{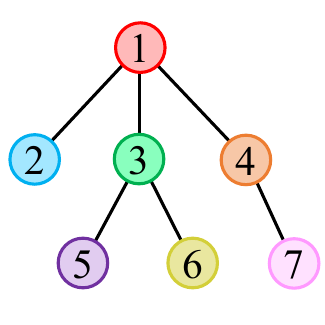} & \includegraphics[height=0.14\textwidth]{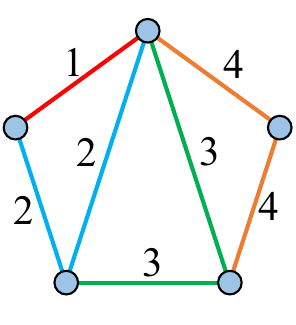} \hspace{3pt} \includegraphics[height=0.14\textwidth]{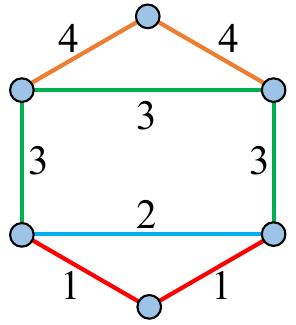} \hspace{3pt}  \includegraphics[height=0.14\textwidth]{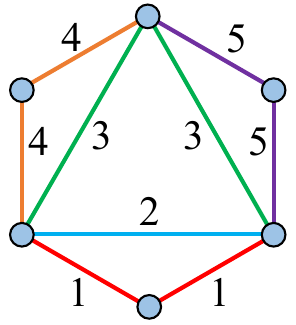} \hspace{3pt}  \includegraphics[height=0.14\textwidth]{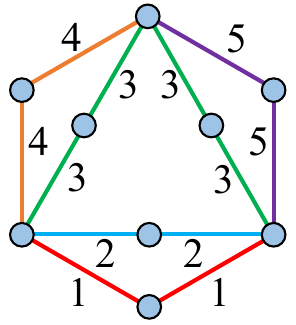}\\
        (a) Illustration of NED\hspace{5pt} & \hspace{-5pt}(b) Examples of endpoint-shared/strong/almost-strong/general NED
    \end{tabular}
    \vspace{-11pt}
    \caption{Illustration of NED and its variants. The number $j$ next to each edge indicates that the edge belongs to the ear $P_j$. Different colors represent different ears. See \cref{fig:more_examples} for more examples.}
    \label{fig:ned}
    \vspace{-7pt}
\end{figure}
\vspace{-3pt}

\looseness=-1 Intuitively, \cref{def:ned} states that the relation between different ears forms a \emph{forest}, in that each ear is nested on its parent. Moreover, the nested intervals either do not intersect or have inclusion relations for different children of the same parent ear. We give illustrations of NED in \cref{fig:ned}.

In this paper, we considerably extend the concept of NED to several variants defined below:
\begin{itemize}[topsep=0pt,leftmargin=25pt]
    \setlength{\itemsep}{0pt}
    \item \textbf{Endpoint-shared NED}: a NED is called endpoint-shared if all ears with non-empty nested intervals share a common endpoint (see \cref{fig:ned}(b,1)).
    \item \textbf{Strong NED}: a NED is called strong if for any two children $P_j$, $P_k$ ($j<k$) nested on the same parent ear, we have $I(P_j)\subset I(P_k)$ (see \cref{fig:ned}(b,2)).
    \item \textbf{Almost-strong NED}: a NED is called almost-strong if for any children $P_j$, $P_k$ ($j<k$) nested on the same parent ear and $|I(P_j)|>1$ , we have $I(P_j)\subset I(P_k)$ (see \cref{fig:ned}(b,3)).
\end{itemize}

We are now ready to present our main results:
\begin{theorem}
\label{thm:main}
    For all GNN models $M$ defined in \cref{sec:preliminary}, the graph family $\gF^M$ satisfying \cref{def:homo_expressivity} exists (and is unique). Moreover, each $\gF^M$ can be separately described below:
    \begin{itemize}[topsep=0pt,leftmargin=25pt]
        \setlength{\itemsep}{-4pt}
        \vspace{-4pt}
        \item \textbf{MPNN}: $\gF^\mathsf{MP}=\{F:F\text{ is a forest}\}$;
        \item \textbf{Subgraph GNN}: $\gF^\mathsf{Sub}=\{F:F\text{ has an endpoint-shared NED}\}$;
        \item \textbf{Local 2-GNN}: $\gF^\mathsf{L}=\{F:F\text{ has a strong NED}\}$;
        \item \textbf{Local 2-FGNN}: $\gF^\mathsf{LF}=\{F:F\text{ has an almost-strong NED}\}$;
        \item \textbf{2-FGNN}: $\gF^\mathsf{F}=\{F:F\text{ has a NED}\}$.
    \end{itemize}
\end{theorem}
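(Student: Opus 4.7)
The plan is to prove \cref{thm:main} by establishing, for each model $M\in\{\mathsf{MP},\mathsf{Sub},\mathsf{L},\mathsf{LF},\mathsf{F}\}$, three statements in turn: (i) \emph{soundness}, i.e.\ $\chi^M_G(G)=\chi^M_H(H)$ implies $\hom(F,G)=\hom(F,H)$ for every $F$ in the claimed family; (ii) \emph{completeness}, agreement of $\hom(F,\cdot)$ over the claimed family implies agreement of $\chi^M$; and (iii) \emph{maximality}, any pattern outside the claimed family is separated from some graph pair that $M$ cannot distinguish. Uniqueness of $\gF^M$ then follows formally. The common backbone is a \emph{rooted} refinement of NED matching each model's coordinate structure: patterns rooted at a single vertex for MPNN, and at an ordered pair $(u,v)$ for the other four models, where either $v$ is the shared endpoint of all non-trivial ears (Subgraph GNN) or $(u,v)$ bounds the distinguished interval of the first ear that later ears nest into (the pair-based models).

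For soundness, I would prove by induction on the CR iteration $t$ that the color of a vertex (pair) after $t$ steps determines the rooted count $\hom(F^{\bullet},G^{\bullet})$ for every rooted pattern admitting an appropriate NED with at most $t$ ears, summed over compatible roots. The base case follows from the initial coloring. The inductive step attaches a fresh ear $P_j$ to the pattern built so far, and the key observation is that each model's aggregation rule corresponds exactly to one legal mode of attachment: MPNN's neighbor aggregation appends a pendant edge, giving trees; Subgraph GNN runs independent MPNNs inside each $G^u$ so ears can only attach at the root $u$, forcing the endpoint-shared constraint; Local 2-GNN aggregates over $N_G(u)$ and $N_G(v)$ separately, forcing each new nested interval to lie inside the previous one (strong NED); Local 2-FGNN additionally aggregates across shared neighbors of $u,v$, lifting the restriction precisely for length-one intervals (almost-strong); and 2-FGNN aggregates over all $w\in V_G$, removing the chain restriction entirely.

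For completeness I would invoke a Lov\'asz/Dvo\v{r}\'ak-style inversion argument: the map from the stable coloring to the vector of homomorphism counts over $\gF^M$ is injective because the ``pattern trees'' obtained by syntactically unrolling $M$'s refinement already realize every equivalence class of stable colors. Concretely, I would set up a pattern-tree grammar for each model whose generated patterns are exactly the rooted graphs with the required NED, and verify by induction that the multiset of pattern trees up to rooted isomorphism simultaneously determines and is determined by the stable color. This step rests on the combinatorial fact that the NED variants in \cref{def:ned} are closed under the pattern-tree operations (concatenating ears, nesting, and disjoint union at the root), so no extraneous patterns sneak in.

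The main obstacle is maximality, especially separating the three pair-based families $\gF^\mathsf{L}\subsetneq\gF^\mathsf{LF}\subsetneq\gF^\mathsf{F}$. For a pattern $F$ lacking the relevant NED, I would construct CFI-style witnesses: begin with a base gadget whose indistinguishability by $M$ is certified by an explicit color-preserving bijection on the stable classes of $M$, then assemble two graphs by gluing copies of the gadget along the pattern of $F$ in two parity-flipped ways, so that $\hom(F,\cdot)$ differs precisely because the missing NED structure prevents the parity from cancelling. The delicate cases are the two pair-based gaps: refuting a strong NED while admitting an almost-strong one needs a gadget that exploits a single length-one interval, whereas refuting an almost-strong NED while admitting a general one needs a gadget built from two nested intervals of length greater than one that are not nested in each other. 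I expect most of the technical effort to lie in designing these gadgets and in verifying both the $M$-indistinguishability and the counting separation, which ultimately reduces to showing that a homomorphism $F\to G$ can be reversibly tracked by $M$ if and only if its image ears can be listed in an order satisfying the NED axioms of the model.
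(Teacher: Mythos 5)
Your three-phase plan (soundness, completeness, maximality) matches the paper's structure at a high level, and you correctly anticipate both the central role of unfolding/pattern trees and the use of Cai--F\"urer--Immerman/F\"urer-graph witnesses. But there are three places where you wave at an idea that in fact requires a substantial argument, and one of them is framed in a way that would not actually work.

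First, the completeness step is genuinely heavier than your ``invoke a Lov\'asz/Dvo\v{r}\'ak-style inversion'' suggests. The inversion cannot be run directly on NEDs; the paper introduces a calculus of tree-decomposed graphs $(F,T^r)$ and several intermediate quantities (``bag isomorphism,'' surjective/injective bag-isomorphism homomorphisms, bag extensions, bag-strong homomorphisms) and shows the relevant infinite matrices are triangular with nonzero diagonal in a carefully chosen ordering. Your statement that pattern trees ``already realize every equivalence class of stable colors'' is precisely what one of these triangularity arguments (the one relating $\mathrm{cnt}$ to bag isomorphisms) establishes; asserting it does not replace proving it.

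Second, you quietly identify each model's ``claimed family'' with an NED condition while also identifying it with whatever the unrolled refinement generates; those are two different descriptions of $\gF^M$, and showing they coincide is itself a nontrivial combinatorial lemma (the paper devotes a full section to equating families of canonical tree decompositions $\gS^M$ with the corresponding NED variants). Your remark that the NED variants are ``closed under pattern-tree operations'' is the right intuition, but the actual argument needs both directions and has to handle disconnections, cut vertices, and the distinction between ears with empty versus nonempty nested intervals.

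Third, and most seriously, your maximality plan reads as if the main difficulty is designing a few clever gadgets for the tight gaps ($\mathsf{L}$ versus $\mathsf{LF}$ versus $\mathsf{F}$). Maximality is a statement about \emph{every} $F \notin \gF^M$, not just about boundary patterns, and a case-by-case gadget design will not cover all such $F$. What is needed is a uniform reduction: one must prove that for \emph{any} $F$ without the claimed NED, the F\"urer pair $(G(F), H(F))$ is indistinguishable by $M$ while $\hom(F, G(F)) \neq \hom(F, H(F))$. The first half is handled by showing that Alice has no winning strategy in the $M$-specific pebble game on $F$, and the key lemma is the converse: if Alice \emph{can} win, her winning strategy (after normalizing so that Bob's selected component shrinks monotonically) can be read off as a canonical tree decomposition of $F$ in the family $\gS^M$, hence $F \in \gF^M$. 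This strategy-to-decomposition reduction is the technical heart of maximality and is absent from your plan; without it, you cannot rule out the existence of some $F$ that lacks the NED yet is still counted correctly by $M$.

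Finally, a small point: maximality is not about the set-inclusions $\gF^\mathsf{L} \subsetneq \gF^\mathsf{LF} \subsetneq \gF^\mathsf{F}$; those inclusions follow immediately from the definitions of the NED variants, and the strictness is a consequence of maximality once you exhibit a single pattern in the difference. Treating the pair-based gaps as the ``delicate cases'' of maximality misidentifies where the difficulty lives.
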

\vspace{-3pt}

\cref{thm:main} gives a unified description of the homomorphism expressivity for all popular GNN models defined in \cref{sec:preliminary}. Despite the elegant conclusion, the proof process is actually involved and represents a major technical contribution, so we present a proof sketch below. Our proof is divided into three parts, presented in \cref{sec:proof_main_part1,sec:proof_main_part2,sec:proof_main_part3}. First, we show the existence of $\gF^M$ for each model $M$ based on a beautiful theory developed in \citet{dell2018lov}. Using the technique of unfolding tree, we prove that $\gF^M$ \emph{at least} contains all graphs $F$ that allow a specific type of tree decomposition \citep{diestel2017graph}, and the homomorphism information of these graphs determines the representation of any graph $G$ computed by $M$ (i.e., \cref{def:homo_expressivity}(a) holds). However, characterizing $\gF^M$ in terms of tree decomposition is sophisticated and not intuitive for most models $M$. In the next step, we give an \emph{equivalent} description of $\gF^M$ based on novel extensions of NED proposed in \cref{def:ned}, which is simpler and more elegant. In the last step, we prove that $\gF^M$ does not contain other graphs. This is achieved by building non-trivial relations between three distinct theoretical tools: tree decomposition, pebble game \citep{cai1992optimal}, and F\"urer graph \citep{furer2001weisfeiler}. Through a fine-grained analysis of the F\"urer graphs expanded by $F\notin\gF^M$ (see \cref{thm:counterexample_hom,thm:counterexample_main}), we show they are precisely a pair of graphs satisfying \cref{def:homo_expressivity}(b), thus concluding the proof.

\textbf{Discussions with \citet{dell2018lov}}. Our work significantly extends a beautiful theory developed by Dell, Grohe, and Rattan, who showed that a pair of graphs $G$, $H$ are indistinguishable by 1-WL \emph{iff} $\hom(F,G)=\hom(F,H)$ for all trees $F$, and more generally, they are indistinguishable by $k$-FWL \emph{iff} $\hom(F,G)=\hom(F,H)$ for all graphs $F$ of bounded treewidth $k$. In this paper, we successfully generalize these results to a broad range of practical GNN models.
Moreover, two distinct contributions are worth discussing. First, we highlight a key insight that homomorphism can serve as a fundamental \emph{expressivity measure}, which has far-reaching consequences as will be elaborated in \cref{sec:implication}. To show that $\gF^M$ is a valid expressivity measure, we additionally prove a non-trivial result that $\gF^M$ is \emph{maximal} (\cref{def:homo_expressivity}(b)). Without this crucial property, $\gF^{M_1}\supsetneq\gF^{M_2}$ will not necessarily mean that model $M_1$ is strictly more expressive than $M_2$, thus preventing any quantitative comparison between models. Second, \citet{dell2018lov} leveraged treewidth to describe results, which, unfortunately, cannot be applied to most GNN models studied here. Instead, we resort to the novel concept of NED, by which we successfully derive unified and elegant descriptions for all models. Moreover, as will be shown later, NED is quite flexible and can be naturally generalized to node/edge-level expressivity, which is not studied in prior work.

Finally, we remark that one can derive an equivalent (perhaps simpler) description of $\gF^\mathsf{Sub}$, based on the fact that a graph $F$ has an endpoint-shared NED \emph{iff} $F$ becomes a forest when deleting the shared endpoint. Formally, denoting by $F\backslash \{u\}$ the induced subgraph of $F$ over $V_F\backslash\{u\}$, we have
\begin{corollary}
\label{thm:subgraph_gnn}
    $\gF^\mathsf{Sub}=\{F:\exists u\in V_F\text{ s.t. }F\backslash \{u\}\text{ is a forest}\}$.
\end{corollary}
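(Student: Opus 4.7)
\cref{thm:subgraph_gnn} follows from the Subgraph GNN clause of \cref{thm:main} once we establish the combinatorial equivalence
\[F\text{ has an endpoint-shared NED}\ \Longleftrightarrow\ \exists\, u\in V_F\text{ such that } F\setminus\{u\}\text{ is a forest}.\]
The plan is to prove the two directions separately, using induction on the number of ears for the forward direction and an explicit construction for the reverse direction.

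For the forward direction, fix an endpoint-shared NED $\gP=(P_1,\dots,P_m)$ with shared endpoint $u$ (if no nested ear has a non-empty interval, then $F$ is already a forest and any vertex works). I would induct on $m$ by peeling off the last ear $P_m$, which is nested on some $P_i$, and letting $F'$ denote the subgraph spanned by the remaining ears. When $I(P_m)=\emptyset$, the ear $P_m$ is a pendant path attached to $F'$ at a single previously-seen vertex $x$ with all other vertices new; after deleting $u$, the ear either becomes a pendant path on $F'\setminus\{u\}$ attached at $x$ or a completely disjoint new path (when $u=x$), so the forest property of $F'\setminus\{u\}$ inherited from the inductive hypothesis is preserved. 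When $I(P_m)\ne\emptyset$, the endpoint-shared hypothesis forces $u$ to be one of the two endpoints of $P_m$, and since the internal vertices of $P_m$ are new, $P_m\setminus\{u\}$ attaches to $F'\setminus\{u\}$ only at the other endpoint $w\in P_i$, yielding a forest again.

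For the reverse direction, given that $F\setminus\{u\}$ is a forest, the plan is to explicitly construct an endpoint-shared NED with shared endpoint $u$. Components of $F$ not containing $u$ are themselves forests and admit trivial NEDs with only empty intervals. For the component containing $u$, let $T_1,\dots,T_q$ be the connected components of $F\setminus\{u\}$ inside it, with neighbors $v_{i,1},\dots,v_{i,d_i}$ of $u$ in $T_i$. I would process $T_1$ first by taking $P_1$ to be the simple path $u-v_{1,1}-\cdots-v_{1,2}$ formed by the spoke $\{u,v_{1,1}\}$ followed by the unique $T_1$-path from $v_{1,1}$ to $v_{1,2}$ (when $d_1\ge 2$), then adding the single-edge closing ear $\{u,v_{1,2}\}$ with non-empty interval inside $P_1$ and $u$ as an endpoint. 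For each remaining neighbor $v_{1,j}$, I would add an ear of the form $u-v_{1,j}-\cdots-w_j$ that traces the unique $T_1$-path from $v_{1,j}$ to the nearest already-used vertex $w_j$; by construction $u$ is always an endpoint, both endpoints of the ear lie in a common previously-added ear, and the internal vertices are fresh. Subtrees of $T_1$ containing no further neighbors of $u$ are decomposed into pendant empty-interval path ears, and each subsequent tree $T_i$ ($i\ge 2$) is hooked in via an empty-interval spoke ear $\{u,v_{i,1}\}$ nested on $P_1$ and then treated as $T_1$ was.

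The main obstacle is verifying that the third NED condition ($I(P_j)\cap I(P_k)=\emptyset$ or $I(P_j)\subset I(P_k)$) is satisfied throughout the construction. Intervals in distinct parent ears are automatically disjoint since they live in disjoint edge sets. Intervals in the same parent ear all share $u$ as an endpoint, so they form a chain of prefixes; the key is to add the ears in an order that makes this chain strictly increasing, which can be arranged by processing the neighbors $v_{i,j}$ in an order induced by a DFS of $T_i$ rooted at $v_{i,1}$ (so that shorter prefixes appear first). A formal induction on $\sum_i d_i$ and on the size of each $T_i$ will then yield a fully valid endpoint-shared NED, completing the proof.
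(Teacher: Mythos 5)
The forward direction of your equivalence proof is essentially sound: peeling off the last ear and tracking whether $u$ lies at the attachment vertex, at the free endpoint, in the interior (which forces $u\notin F'$), or nowhere on $P_m$ all lead to adding a pendant or disjoint path to a forest, and the endpoint-shared hypothesis with $I(P_m)\neq\emptyset$ forces $u$ to be an endpoint of $P_m$. The paper itself just asserts this combinatorial equivalence and routes the formal argument through tree decompositions and the $k$-order generalization (Theorem~\ref{thm:higher-order_subgraph_GNN}), so a direct proof like yours is a legitimate alternative.

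The reverse direction has a genuine gap, and it is not merely notational. The NED ordering condition requires that for $j<k$, $I(P_j)\subset I(P_k)$ or $I(P_j)\cap I(P_k)=\emptyset$; intervals therefore grow with index. But your construction adds the closing ear $\{u,v_{1,2}\}$ as the \emph{second} ear, and its nested interval is the entire $P_1$, the maximal possible interval on $P_1$. Any subsequent ear $P_j$ with $I(P_j)$ a strict sub-prefix of $P_1$ (for instance in $T_1 = a{-}b{-}c,\ b{-}d,\ d{-}e$ with $u$ adjacent to $a,c,d,e$, where $v_{1,2}=c$ gives $P_1=u{-}a{-}b{-}c$ and the ear for $d$ has interval $u{-}a{-}b$) then violates the condition, since the larger interval already precedes it. This directly contradicts your own stated ordering principle (``shorter prefixes appear first''): the closing ear has the \emph{longest} prefix and must come last among all ears nested on $P_1$. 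Your final paragraph acknowledges that reordering is needed and appeals to a DFS of $T_i$ rooted at $v_{i,1}$, but DFS discovery order does not produce increasing intervals either (in the example above, the DFS order $a,c,d,e$ still places $c$'s closing ear before the $d$-ear), and the reordering is described only for the ``remaining neighbors'' while the closing ear's misplacement happens before that loop. A correct construction exists --- process ears so that, on each parent ear, intervals strictly increase, and defer the closing spoke until all smaller-interval siblings are placed --- but you would need to restructure the recursion around this invariant and prove it inductively, not merely assert that a DFS traversal achieves it.
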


\vspace{-5pt}
\subsection{Extending to node/edge-level expressivity}
\label{sec:node-edge}

So far, this paper mainly focuses on the graph-level expressivity, i.e., what information is encoded in the \emph{graph representation}. In this subsection, we extend all results in \cref{thm:main} to the more fine-grained node/edge-level expressivity by answering what information is encoded in the node/edge features of a GNN (i.e., $\chi^M_G(u)$ or $\chi^M_G(u,v)$ in \cref{sec:preliminary}). This yields the following definition:

\begin{definition}
\label{def:node/edge-homo_expressivity}
    The node-level homomorphism expressivity of a GNN model $M$, denoted by $\gF_\mathsf{n}^M$, is a family of connected \emph{rooted} graphs satisfying the following conditions:
    \begin{enumerate}[label=\alph*),topsep=0pt,leftmargin=25pt]
    \setlength{\itemsep}{-3pt}
        \vspace{-3pt}
        \item For any connected graphs $G,H$ and vertices $u\in V_G$, $v\in V_H$, $\chi^M_G(u)=\chi^M_H(v)$ \emph{iff} $\hom(F^w,G^u)=\hom(F^w,H^v)$ for all $F^w\in\gF_\mathsf{n}^M$; 
        \item For any rooted graph $F^w\notin\gF_\mathsf{n}^M$, there exists a pair of connected graphs $G,H$ and two vertices $u\in V_G$, $v\in V_H$ such that $\chi^M_G(u)=\chi^M_H(v)$ and $\hom(F^w,G^u)\neq\hom(F^w,H^v)$.
    \end{enumerate}
\end{definition}
\vspace{-4pt}

One can similarly define the edge-level homomorphism expressivity $\gF_\mathsf{e}^M$ to be a family of connected rooted graphs, each marking two special vertices (we omit the definition for clarity). The following result exactly characterizes $\gF_\mathsf{n}^M$ and $\gF_\mathsf{e}^M$ for all models $M$ considered in this paper:
\begin{theorem}
\label{thm:node_edge}
    For all model $M$ defined in \cref{sec:preliminary}, $\gF_\mathsf{n}^M$ and $\gF_\mathsf{e}^M$ (except MPNN) exist. Moreover,
    \begin{itemize}[topsep=0pt,leftmargin=25pt]
        \setlength{\itemsep}{-2pt}
        \vspace{-2pt}
        \item \textbf{MPNN}: $\gF_\mathsf{n}^\mathsf{MP}=\{F^w:F\text{ is a tree}\}$;
        \item \textbf{Subgraph GNN}:\\
        $\gF_\mathsf{n}^\mathsf{Sub}=\{F^w:F\text{ has a NED with shared endpoint }w\}=\{F^w:F\backslash\{w\}\text{ is a forest}\}$,\\
        $\gF_\mathsf{e}^\mathsf{Sub}=\{F^{wx}\!:\!F\text{ has a NED with shared endpoint }w\}=\{F^{wx}\!:\!F\backslash\{w\}\text{ is a forest}\}$;
        \item \textbf{2-FGNN}:
        $\gF_\mathsf{n}^\mathsf{F}=\{F^w:F\text{ has a NED where }w\text{ is an endpoint of the first ear}\}$,\\
        $\gF_\mathsf{e}^\mathsf{F}=\{F^{wx}:F\text{ has a NED where }w\text{ and }x\text{ are endpoints of the first ear}\}$.
        \vspace{-3pt}
    \end{itemize}
    The cases of Local 2-GNN and Local 2-FGNN are similar to 2-FGNN by replacing ``NED'' with ``strong NED'' and ``almost-strong NED'', respectively.
\end{theorem}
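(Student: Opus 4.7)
The plan is to lift the three-phase proof of Theorem~\ref{thm:main} (existence via unfolding/tree decomposition, equivalent NED characterization, and maximality via F\"urer graphs) to the rooted category. Concretely, I would introduce pointed and doubly pointed variants of the unfolding object used for each architecture: a rooted unfolding tree $T^{\mathsf{MP}}_G(u)$ for MPNN, a unfolding with persistent mark $u$ for $\chi^{\mathsf{Sub}}_G(u)$, and a pair-rooted unfolding $T^M_G(u,v)$ for the $2$-order architectures. A direct adaptation of the Dell--Grohe--Rattan argument to the rooted setting shows that two node (resp.\ edge) features $\chi^M_G(u)$ and $\chi^M_H(v)$ coincide iff the corresponding rooted unfolding objects are isomorphic, which in turn is equivalent to agreement of $\hom(F^w,G^u)$ on all rooted $F^w$ in a candidate family $\gF^M_\mathsf{n}$ characterized by rooted tree decompositions.

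For the sufficient direction I would use the same tree-decomposition-to-NED translation as in the proof of Theorem~\ref{thm:main}, but track the root through the construction. The key observation is that the distinguished vertex/vertices of $G$ must occur in a specific \emph{initial bag} of the tree decomposition of $F$: for Subgraph GNN the root $w$ is the globally shared mark, which under the NED translation forces $w$ to be the shared endpoint of all nested ears; for $2$-FGNN and its (almost-)strong local variants the initial bag contains the distinguished pair, so the roots sit at the endpoints of the first ear of the NED. The equivalence ``$F$ has a NED with shared endpoint $w$'' $\Leftrightarrow$ ``$F\setminus\{w\}$ is a forest'' used in the Subgraph GNN clause is then immediate in both directions: ears sharing endpoint $w$ become trees once $w$ is deleted, and conversely any spanning forest of $F\setminus\{w\}$ can be reassembled into a NED by prepending the $w$-incident edges of each ear.

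For maximality one must exhibit, for every $F^w\notin\gF^M_\mathsf{n}$ (resp.\ $F^{wx}\notin\gF^M_\mathsf{e}$), connected rooted graphs that $M$ fails to distinguish yet whose rooted homomorphism count from $F^w$ differs. I would build these as \emph{marked} F\"urer graphs: starting from the unrooted counterexample pair promised by (the proof of) Theorem~\ref{thm:main}, place the root(s) at carefully chosen F\"urer super-vertices lying in the ``twisted'' region. Indistinguishability by $M$ follows from a pinned version of the $k$-pebble game in which initial pebbles occupy the marked vertices, and the parity argument on F\"urer gadgets extends to rooted homomorphism counts because the root contributes a distinguished factor. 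The main obstacle I anticipate is precisely this rooted parity calculation: one must ensure that the placement of the mark realizes the NED-root condition failure of $F^w$ (no shared endpoint for Subgraph GNN; the root not lying on the first ear for the $2$-order models), so that the mismatch between the two rooted F\"urer graphs is genuinely witnessed by $F^w$ and not washed out by automorphisms swapping gadgets around the root. Executing this delicate rooted F\"urer analysis uniformly across the five architectures, while keeping the NED-root conditions exactly as stated, is where the bulk of the technical work will lie.
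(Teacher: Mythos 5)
Your first two phases (rooted unfolding trees and tracking the root through the tree-decomposition/NED translation) match the paper's plan, but your maximality argument has a genuine gap that would prevent the proof from going through. You propose to ``start from the unrooted counterexample pair promised by (the proof of) Theorem~\ref{thm:main}'' and place the root(s) on carefully chosen F\"urer supervertices. The problem is that no such pair is promised: it can happen that $F^w\notin\gF^M_\mathsf{n}$ while $F\in\gF^M$, in which case $G(F)$ and $H(F)$ are \emph{distinguishable} by $M$ at the graph level and therefore are useless as a node/edge-level counterexample no matter where you put the mark. Concretely, take $M=\mathsf{Sub}$ and $F$ to be a $5$-cycle with a pendant vertex $w$ attached at some cycle vertex: deleting the cycle vertex adjacent to $w$ yields a forest so $F\in\gF^\mathsf{Sub}$, yet $F\setminus\{w\}$ is still a $5$-cycle, so $F^w\notin\gF^\mathsf{Sub}_\mathsf{n}$. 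Your construction has nothing to start from here. The paper resolves this with a construction you do not anticipate: the \emph{clique-augmented F\"urer graphs} $G_k(F^{\vec w}),H_k(F^{\vec w})$, built from the F\"urer graph of $\tilde F = F\cup(\text{$k$-clique through the marked vertices})$. Since $\tilde F$ always contains a $4$-clique, it falls outside every graph-level family $\gF^M$, so the two F\"urer graphs are genuinely indistinguishable at the graph level; and in the pinned pebble game the clique acts as an escape hatch (Bob always keeps the component touching the unpebbled clique vertices), which confines the game to the $F$-part of $\tilde F$ and makes the rooted NED condition the exact criterion for Alice to win. This is the essential new idea of the node/edge-level proof, it is what makes the parity argument for $\hom(F^{\vec w},\cdot)$ meaningful, and it is absent from your proposal.
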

\vspace{-3pt}

In summary, the node/edge-level homomorphism expressivity can be naturally described using NED by further specifying the endpoints of the first ear.

\vspace{-2pt}
\subsection{Extending to Higher-order GNNs}
\label{sec:higher-order_GNNs}
\vspace{-2pt}

\looseness=-1 Finally, we discuss how our results can be naturally extended to higher-order GNNs, thus providing a complete picture of the homomorphism expressivity hierarchy for infinitely many architectures. We focus on three representative examples: Subgraph $k$-GNN \citep{qian2022ordered}, Local $k$-GNN \citep{morris2020weisfeiler}, and $k$-FGNN \citep{azizian2021expressive}. Subgraph $k$-GNN extracts a graph $G^\vu$ for each vertex $k$-tuple $\vu\in V_G^k$ and runs MPNNs independently, which recovers Subgraph GNN when $k=1$. As the reader may have guessed, the following result exactly parallels \cref{thm:subgraph_gnn}:

\begin{theorem}
\label{thm:higher-order_subgraph_GNN}
    The homomorphism expressivity of Subgraph $k$-GNN exists and can be described as $\gF^{\mathsf{Sub}(k)}=\{F:\exists U\subset V_F\text{ s.t. }|U|\le k\text{ and }F\backslash U\text{ is a forest}\}$.
\end{theorem}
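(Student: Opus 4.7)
The plan is to mirror the three-step blueprint from the proof of \cref{thm:main}: (i) show the inclusion $\supseteq$ in \cref{thm:higher-order_subgraph_GNN} together with the ``$\Rightarrow$'' direction of \cref{def:homo_expressivity}(a) via unfolding trees and MPNN's counting ability on marked graphs; (ii) show the ``$\Leftarrow$'' direction of (a) via a Lovász-type duality; (iii) establish maximality (\cref{def:homo_expressivity}(b)) for the remaining $F$ through a Fürer-type counterexample construction. Parts (i) and (ii) generalize \cref{thm:subgraph_gnn} (the $k=1$ case) in a direct fashion, while (iii) is where the bulk of the technical effort lies.

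For (i) and (ii), the key observation is that the Subgraph $k$-GNN color on a tuple $\vx \in V_G^k$ is exactly the MPNN graph representation of the marked graph $G^\vx$. Fix any $F$ with $U = \{u_1,\ldots,u_j\} \subseteq V_F$ of size $j \leq k$ such that $F \setminus U$ is a forest, and pad $\vu=(u_1,\ldots,u_j)$ to length $k$ by repetition. Then
\[
\hom(F, G) \;=\; \sum_{(x_1,\ldots,x_j) \in V_G^j} \hom(F^{\vu}, G^{\vx}),
\]
and each summand is computable from $\chi^{\mathsf{Sub}(k)}_G(\vx)$ for the following reason: decompose $F \setminus U$ into its connected trees $T_i$, root each at an arbitrary vertex, and note that the edges from $T_i$ to $U$ become ``marked neighbor'' constraints verifiable locally using MPNN's stable colors on $G^\vx$. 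Consequently $\hom(F^\vu, G^\vx)$ factorizes as an edge-consistency check on $\vx$ times a product of rooted marked-tree homomorphism counts that MPNN on $G^\vx$ can extract. This proves $\supseteq$ and the $\Rightarrow$ direction of (a). The converse $\Leftarrow$ follows from the Lovász-type theorem of \citet{dell2018lov} applied marked-graph-wise: $\chi^{\mathsf{Sub}(k)}_G(G)$ is recoverable from $\{\hom(T, G^\vx) : \vx \in V_G^k,\ T \text{ a $k$-marked tree}\}$, and the bijection between $k$-marked trees and graphs with $F \setminus U$ a forest (identify vertices that share a mark) recasts this collection in terms of $\{\hom(F,G) : F \in \gF^{\mathsf{Sub}(k)}\}$.

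For (iii), the main obstacle: given any $F$ with $F \setminus U$ non-forest for every $|U| \leq k$, one must exhibit a pair $(G_F, H_F)$ with $\chi^{\mathsf{Sub}(k)}_{G_F}(G_F) = \chi^{\mathsf{Sub}(k)}_{H_F}(H_F)$ yet $\hom(F, G_F) \neq \hom(F, H_F)$. I would adapt the Fürer/CFI expansion used in the proof of \cref{thm:main}, applied to a witness substructure of $F$ that retains a cycle after deletion of any $\le k$ vertices. The indistinguishability side should reduce to a pebble-game analysis in which Duplicator survives with $k+1$ pebbles (the $k$-tuple marking plus MPNN's single active vertex); the structural condition ``distance to forest exceeds $k$'' should furnish Duplicator's winning strategy in parallel to how ``treewidth exceeds $k$'' does for $k$-FWL in \citet{cai1992optimal}. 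The distinguishing side is a direct counting computation on the CFI twist. Carrying this out uniformly for every $F$ outside $\gF^{\mathsf{Sub}(k)}$, and precisely matching the ``distance to forest'' threshold, is the technically delicate part that I expect to occupy most of the proof; once done, it yields \cref{def:homo_expressivity}(b) and completes the argument.
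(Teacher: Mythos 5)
Your three-part blueprint (unfolding trees for inclusion, a Lov\'asz-type duality for the ``iff,'' a F\"urer construction for maximality) matches the paper's overall strategy, which is carried out by defining the tree-decomposition family $\gS^{\mathsf{Sub}(k)}$ in \cref{def:canonical_tree_decomposition_restrict_k}, invoking the Appendix~\ref{sec:proof_main} machinery verbatim, and only afterward translating the tree-decomposition characterization into the ``delete at most $k$ vertices to get a forest'' condition. Your plan instead tries to work with the elementary condition throughout, and that is where it breaks. In step~(ii) the direction you need is: if $\hom(F,G)=\hom(F,H)$ for all $F\in\gF^{\mathsf{Sub}(k)}$ then $\chi^{\mathsf{Sub}(k)}_G(G)=\chi^{\mathsf{Sub}(k)}_H(H)$, i.e.\ recovering the \emph{multiset} $\ldblbrace\chi^{\mathsf{Sub}(k)}_G(\vx)\rdblbrace_\vx$ from the aggregated quantities $\hom(F,G)=\sum_\vx\hom(F^{\vu},G^{\vx})$. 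Your claimed ``bijection between $k$-marked trees and graphs with $F\setminus U$ a forest'' supplies only a combinatorial dictionary; it does not invert the sum over $\vx$, and a na\"ive bijection cannot, since that aggregation is many-to-one. The paper closes exactly this gap with the bag-isomorphism machinery (the chain $\hom\leftrightarrow\bIso\leftrightarrow\cnt$ in \cref{thm:bIso,thm:bIsoHom,thm:bStrHom,thm:bstrhom2,thm:bIso_hom_equal,thm:cnt_bIso_equal}), relying on the lower/upper-triangular, invertible structure of the associated infinite matrices; your sketch asserts the conclusion without any substitute for that argument.

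Step~(iii) has the right instinct but two inaccuracies worth flagging. The F\"urer/twisted-F\"urer pair in \cref{thm:counterexample_hom,thm:counterexample} is built on $F$ itself, not on a ``witness substructure'': the property $\hom(F,G(F))\neq\hom(F,H(F))$ needs the F\"urer graph of $F$. And the pebble-game side is not analyzed directly against a ``distance-to-forest $>k$'' predicate; \cref{thm:counterexample_main} converts a winning strategy into a canonical tree decomposition in $\gS^{\mathsf{Sub}(k)}$, and the equivalence between that structural object and your elementary condition is a separate purely combinatorial statement (\cref{thm:higher-order_subgraph_GNN}'s description, established as one of the final equivalences in \cref{sec:higher-order_proof}). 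Bypassing the tree-decomposition intermediary would force you to re-derive that equivalence inside the game analysis, which is where your ``technically delicate part'' actually lives and where the proposal, as written, has no concrete handle.
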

\vspace{-4pt}

\begin{wrapfigure}{r}{0.41\textwidth}
    \centering
    \small
    \vspace{-14pt}
    \setlength{\tabcolsep}{2pt}
    \begin{tabular}{cc}
        \includegraphics[height=0.125\textwidth]{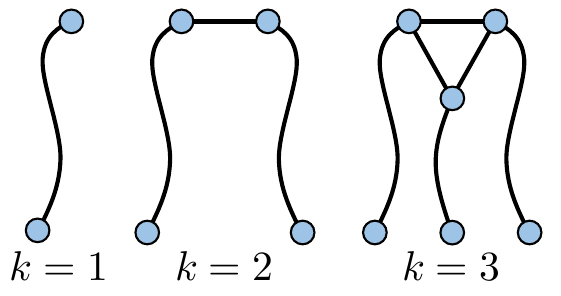} & \includegraphics[height=0.125\textwidth]{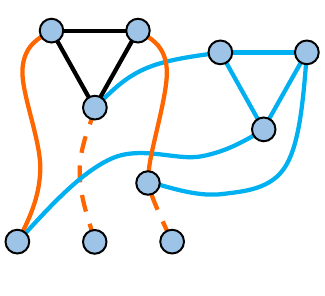} \\
        (a) $k$-order ears & \hspace{-10pt}(b) nested ``interval''
    \end{tabular}
    \vspace{-10pt}
    \caption{Illustration of higher-order ears. Each curve indicates a path (with possibly zero length) and each straight segment indicates an edge.}
    \label{fig:higher-order_ear}
    \vspace{-12pt}
\end{wrapfigure}

\looseness=-1 We next turn to Local $k$-GNN. To describe the result, we introduce a novel extension of \cref{def:ned}, called the $k$-order ear. Intuitively, it is formed by a graph of no more than $k$ vertices, plus $k$ paths each linking to a vertex in the graph (see \cref{fig:higher-order_ear}(a) for an illustration). Note that a 2-order ear is exactly a simple path. Then, we can naturally define the nested ``interval'' (see the solid orange lines in \cref{fig:higher-order_ear}(b) for an illustration) and thus define the concept of \emph{$k$-order strong NED}. Due to space limit, a formal definition is deferred to \cref{def:higher-order_strong_ned}. We have the following main result:
\begin{theorem}
\label{thm:higher-order_local_GNN}
    The homomorphism expressivity of Local $k$-GNN exists and can be described as $\gF^{\mathsf{L}(k)}=\{F:F\text{ has a }k\text{-order strong NED}\}$.
\end{theorem}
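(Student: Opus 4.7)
The plan is to follow the same three-part blueprint used for the $k=2$ case in \cref{thm:main}: first establish existence via an unfolding-tree/tree-decomposition characterization, then recast that characterization in the $k$-order NED language, and finally prove maximality by constructing explicit counterexample pairs for every $F$ that does \emph{not} admit a $k$-order strong NED.

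For existence, I would first observe that the Local $k$-GNN update on a $k$-tuple $\vu\in V_G^k$ only aggregates information from tuples obtained by replacing a single coordinate of $\vu$ with a graph neighbor of that coordinate. This ``local single-coordinate replacement'' pattern lets me unroll the iteration into an unfolding tree whose nodes are labeled by $k$-tuples and whose edges respect the local aggregation rule, exactly mirroring the argument of Dell--Grohe--Rattan adapted in \cref{sec:proof_main_part1}. Through the Lovász-type homomorphism vector argument, this yields a class $\gF^{\mathsf{L}(k)}_{\mathrm{td}}$ of graphs admitting a \emph{local} tree decomposition of width $k$ (bags of size $k$, adjacent bags differing in one coordinate which must be a graph neighbor) such that $\chi^{\mathsf{L}(k)}_G(G)=\chi^{\mathsf{L}(k)}_H(H)$ iff $\hom(F,G)=\hom(F,H)$ for all $F\in\gF^{\mathsf{L}(k)}_{\mathrm{td}}$, establishing item (a) of \cref{def:homo_expressivity} for this candidate family.

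The second step is to show $\gF^{\mathsf{L}(k)}_{\mathrm{td}}=\{F:F\text{ has a }k\text{-order strong NED}\}$. For the ``$\supseteq$'' direction, a $k$-order strong NED directly yields such a tree decomposition: one builds bags by sweeping along each $k$-order ear and letting the bag track the current vertex together with the (at most $k$) anchor vertices on the parent ear's nested interval; the strong-nesting condition guarantees that consecutive bags differ in a single coordinate which moves to a graph-neighbor, exactly the Local $k$-GNN aggregation pattern. For the converse, I would induct on the number of bags, peeling off a leaf subtree and reading off a new $k$-order ear nested on its parent; the strong condition is forced because the shared-vertex structure between sibling subtrees in a local tree decomposition leaves no room for the non-nested or partially-nested configurations ruled out by \cref{def:higher-order_strong_ned}. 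This part largely parallels the $k=2$ case and is mostly a careful bookkeeping generalization.

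The main obstacle is maximality: for any $F$ without a $k$-order strong NED, exhibiting graphs $G,H$ with $\chi^{\mathsf{L}(k)}_G(G)=\chi^{\mathsf{L}(k)}_H(H)$ but $\hom(F,G)\neq\hom(F,H)$. My plan is to lift the F\"urer-graph machinery used in \cref{sec:proof_main_part3} to order $k$: given $F$ violating $k$-order strong NED, I would construct a pair of twisted F\"urer graphs expanded from $F$ and show indistinguishability via a bespoke \emph{local $k$-pebble game} tailored to Local $k$-GNN, in which Spoiler may only slide a single pebble along a graph edge (in contrast to the free placement allowed by the $k$-FWL pebble game used by Cai--F\"urer--Immerman). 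The absence of a $k$-order strong NED of $F$ translates precisely into the absence of a winning Spoiler strategy in this restricted game, which I would prove by showing that any Spoiler play corresponds to building up a partial $k$-order strong NED of $F$, and conversely the homomorphism count $\hom(F,\cdot)$ directly detects the parity twist in the F\"urer pair. Combining these two facts, together with the equivalence established in the second step, yields \cref{def:homo_expressivity}(b) and closes the proof.
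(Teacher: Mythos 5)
Your proposal follows essentially the same three-part blueprint that the paper actually uses (and summarizes very tersely in \cref{sec:higher-order_proof}): $(\mathrm{i})$ an unfolding-tree / canonical-tree-decomposition characterization $\gS^{\mathsf{L}(k)}$ via the Dell--Grohe--Rattan-style argument of \cref{sec:proof_main_part1}, $(\mathrm{ii})$ an equivalence between $\gS^{\mathsf{L}(k)}$ and the $k$-order strong NED by generalizing \cref{thm:ned_local_lemma1,thm:ned_local_lemma2}, and $(\mathrm{iii})$ maximality by a pebble game on F\"urer graphs as in \cref{sec:proof_main_part3}. Two small imprecisions are worth flagging but do not undermine the plan: the local constraint in $\gS^{\mathsf{L}(k)}$ (\cref{def:canonical_tree_decomposition_restrict_k}(c)) is that each odd-depth bag has a \emph{single} child, not that the replaced coordinate lands on a graph neighbor --- by \cref{thm:global_aggregation} the global (free-placement) aggregation does not alter expressivity, so the unfolding tree is built with it included; correspondingly, the paper's Local pebble game permits Alice both to slide a pebble along an edge \emph{and} to freely remove-and-replace a single pebble, so your ``only slide along an edge'' restriction is slightly narrower than what the paper encodes, although it characterizes the same graph class after the global-aggregation reduction.
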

\vspace{-4pt}

Finally, let us consider the standard $k$-FGNN (or equivalently, the $k$-FWL). Unfortunately, we cannot find a description of its homomorphism expressivity based on some form of higher-order NED; nevertheless, it is easy to describe the results using the notion of treewidth (see \cref{def:treewidth}). Specifically, denoting $\tw(F)$ to be the treewidth of graph $F$, we have the following result:

\begin{theorem}
\label{thm:higher-order_FGNN}
    The homomorphism expressivity of $k$-FGNN exists and can be described as $\gF^{\mathsf{F}(k)}=\{F:\tw(F)\le k\}$.
\end{theorem}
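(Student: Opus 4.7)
The plan is to reduce the statement to the classical Dell--Grohe--Rattan characterization of $k$-FWL, which this theorem essentially lifts from the $k$-FWL test to its GNN analogue. First I would note, as in the proof sketches for \cref{thm:main}, that $k$-FGNN with a sufficiently expressive $\hash$ has precisely the same distinguishing power as $k$-FWL at every iteration, by a routine simultaneous induction on the iteration count and the $k$-tuple argument. Thus item (a) of \cref{def:homo_expressivity} reduces to the claim that for any two graphs $G, H$, their stable $k$-FWL color multisets agree iff $\hom(F, G) = \hom(F, H)$ for every $F$ with $\tw(F) \le k$, which is exactly the $k$-FWL case of the theorem of \citet{dell2018lov} already invoked in \cref{sec:proof_main_part1}.

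For the sufficiency direction ($\tw(F) \le k$ implies $F \in \gF^{\mathsf{F}(k)}$), I would follow the unfolding / tree-decomposition argument used in the first part of the proof of \cref{thm:main}. Fix a tree decomposition $(T, \{X_t\}_{t \in T})$ of $F$ of width $\le k$, so every bag $X_t$ has at most $k+1$ vertices. For each node $t$ of $T$ maintain a profile $\Phi_t$ that assigns to every map $\vu : X_t \to V_G$ the number of homomorphisms from the sub-pattern induced by the subtree rooted at $t$ into $G$ extending $\vu$ on $X_t$. A leaf-to-root dynamic program then computes $\hom(F, G)$ as a sum over $\Phi_{\mathrm{root}}$, and each merge step is a multiset aggregation over $k$-tuples captured exactly by the $k$-FWL update rule applied to $k$-tuples. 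Hence the stable $k$-FGNN coloring determines $\hom(F, G)$ for every such $F$.

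For maximality ($\tw(F) > k$ implies $F \notin \gF^{\mathsf{F}(k)}$), I would invoke the Cai--F\"urer--Immerman construction in the same spirit as the third part of the proof of \cref{thm:main}. Given $F$ with $\tw(F) \ge k+1$, build a F\"urer-style pair $(\tilde G, \hat G)$ over a base graph that witnesses the treewidth obstruction inside $F$. Two properties must then be verified: (i) $\tilde G$ and $\hat G$ are indistinguishable by $k$-FWL, which is the classical CFI theorem established via the $(k+1)$-pebble bijective game; and (ii) $\hom(F, \tilde G) \ne \hom(F, \hat G)$, which holds because the CFI twist produces a parity defect that can be detected precisely by templates whose treewidth is at least $k+1$. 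This yields the pair required by \cref{def:homo_expressivity}(b) and completes the proof.

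The main obstacle is the maximality step. Sufficiency is a clean tree-decomposition DP and the $k$-FGNN--$k$-FWL equivalence is routine, but necessity requires a quantitative CFI analysis: one has to simultaneously ensure that the F\"urer expansion used in our framework (cf.\ \cref{thm:counterexample_hom,thm:counterexample_main}) produces $k$-FWL-indistinguishable pairs whose base graph realizes the treewidth obstruction present in $F$, and that $\hom(F, \cdot)$ is genuinely shifted by the twist. Aligning the pebble-game argument for $k$-FWL-indistinguishability with the parity computation on homomorphism counts is the technical crux; I would expect to adapt the corresponding lemmas of \citet{dell2018lov} together with the F\"urer-graph machinery already developed in \cref{sec:proof_main_part3} to the vertex-labeled, higher-order GNN setting of the present theorem.
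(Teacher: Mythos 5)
Your reduction to Dell--Grohe--Rattan for \cref{def:homo_expressivity}(a) is fine, but the maximality step is where your proposal attributes to \citet{dell2018lov} a result they do not have. Their theorem is exactly the ``iff'' in item (a): $\chi^{\mathsf{F}(k)}_G(G)=\chi^{\mathsf{F}(k)}_H(H)$ \emph{iff} $\hom(F,G)=\hom(F,H)$ for every $F$ of treewidth at most $k$. It says nothing about whether a given $F$ of treewidth $>k$ can be separated under homomorphism by a $k$\text{-}FWL-indistinguishable pair of graphs, which is \cref{def:homo_expressivity}(b) and is precisely what makes $\gF^{\mathsf{F}(k)}$ a \emph{maximal} family. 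That maximality was open at the time of this paper (see the discussion of the concurrent \citet{neuen2023homomorphism}, which resolves it by a different route via oddomorphisms), so there are no ``corresponding lemmas of \citet{dell2018lov}'' to adapt --- the paper emphasizes this as one of its distinct contributions.

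Your sketch of how to establish (b) is also imprecise in a way that would stall the argument. The F\"urer pair must be $G(F)$ and $H(F)$ built from $F$ \emph{itself}, not from an auxiliary base graph that merely ``witnesses the treewidth obstruction inside $F$''; only then does \cref{thm:counterexample_hom} give $\hom(F,G(F))\ne\hom(F,H(F))$, and it does so \emph{for every} $F$, with no treewidth hypothesis. So the parity defect is not ``detected precisely by templates whose treewidth is at least $k+1$'' --- it is detected by $F$ regardless of its treewidth. The treewidth condition enters only on the $k$\text{-}FWL side: one must show that if Alice wins the $(k{+}1)$-pebble game on $(G(F),H(F))$ then $F$ admits a canonical width-$k$ tree decomposition, hence $\tw(F)\le k$. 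This is not the classical CFI theorem (which is stated for a fixed family of highly-connected base graphs); it is the content of \cref{thm:counterexample_main} (and its $\gS^{\mathsf{F}(k)}$ analogue), proved by extracting a tree decomposition from a winning strategy whose reachable states are strictly contracted (\cref{thm:pebble_game_not_merge}), followed by \cref{thm:pebble_game_basic} to translate back to indistinguishability of the F\"urer pair. That game-to-decomposition reduction is exactly the ``technical crux'' you flagged at the end, but it has to be supplied, not inherited from the literature.
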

Interestingly, one can see that $\gF^{\mathsf{Sub}(0)}$, $\gF^{\mathsf{L}(1)}$, and $\gF^{\mathsf{F}(1)}$ all degenerate to the family of forests, which coincides with the fact that all these higher-order GNNs reduces to MPNN for the base case. 

\section{Implications}
\label{sec:implication}

The previous section has provided a complete description of the homomorphism expressivity for a variety of GNN models. In this section, we highlight the significance of these results through three different contexts. We will show how homomorphism expressivity can be used to link different GNN subareas, provide new insights into various known results, and answer a number of open problems.

\subsection{Qualitative expressivity comparison}
\label{sec:qualitative}

One direct corollary of \cref{thm:main} is that it readily enables expressivity comparison among all models in \cref{sec:preliminary}. This can be summarized below:
\begin{corollary}
\label{thm:comparison}
    Under the notation of \cref{thm:main}, $\gF^\mathsf{MP}\subsetneq \gF^\mathsf{Sub}\subsetneq \gF^\mathsf{L}\subsetneq \gF^\mathsf{LF}\subsetneq \gF^\mathsf{F}$. Thus, the expressive power of the following GNN models strictly increases in order (in terms of distinguishing non-isomorphic graphs): MPNN, Subgraph GNN, Local 2-GNN, Local 2-FGNN, and 2-FGNN.
\end{corollary}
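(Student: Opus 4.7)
The corollary has two ingredients: the chain of set inclusions $\gF^\mathsf{MP}\subsetneq\gF^\mathsf{Sub}\subsetneq\gF^\mathsf{L}\subsetneq\gF^\mathsf{LF}\subsetneq\gF^\mathsf{F}$, and its translation into a strict hierarchy of graph-distinguishing power. For the non-strict inclusions $\gF^\mathsf{L}\subseteq\gF^\mathsf{LF}\subseteq\gF^\mathsf{F}$, my plan is to read them off directly from the definitions of the NED variants: each successive constraint on sibling nested intervals is strictly weaker, so every strong NED is automatically an almost-strong NED and every almost-strong NED is a NED (the NED requirement $I(P_j)\cap I(P_k)=\emptyset$ or $I(P_j)\subset I(P_k)$ being trivially satisfied when $|I(P_j)|\le 1$). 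For $\gF^\mathsf{MP}\subseteq\gF^\mathsf{Sub}$, I would invoke \cref{thm:subgraph_gnn}: any forest $F$ satisfies the criterion by removing a leaf. For $\gF^\mathsf{Sub}\subseteq\gF^\mathsf{L}$, I would start from \cref{thm:subgraph_gnn} and explicitly construct a strong NED on $F$: designate a vertex $u$ with $F\setminus\{u\}$ a forest, pick a first ear $P_1$ that contains $u$ and traverses one tree component via DFS, and realize every remaining edge incident to $u$ as a subsequent ear nested on $P_1$. Because all such ears have $u$ as one common endpoint, their nested intervals on $P_1$ all share the endpoint $u$ and are therefore totally ordered by inclusion; ordering siblings by increasing interval length then yields $I(P_j)\subset I(P_k)$ whenever $j<k$.

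For the four strict inclusions, I would take as witnesses the four patterns drawn in \cref{fig:ned}(b) together with a 3-cycle for the $\gF^\mathsf{MP}$-vs-$\gF^\mathsf{Sub}$ separation. For each such witness $F^\star$, the plan is (i) to exhibit an explicit NED of the weaker type, confirming membership in the larger class, and (ii) to show by case analysis over the choice of first ear and its ear-tree of descendants that no NED of the stricter type can exist. The hardest case I expect is the separation $\gF^\mathsf{LF}\setminus\gF^\mathsf{L}$, where the distinction hinges on whether sibling intervals of length exactly one are allowed to be incomparable; for the chordal $6$-cycle witness, I would enumerate the finitely many candidate first ears and verify that at least one pair of sibling ears is forced to have a length-one interval incomparable with another sibling's interval, which blocks the strong property while remaining almost-strong.

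Finally, to convert the set-inclusion chain into the expressivity hierarchy, I would apply \cref{def:homo_expressivity} directly. Fix a pair $(M_1,M_2)$ with $\gF^{M_2}\subsetneq\gF^{M_1}$. If $\chi^{M_1}_G(G)=\chi^{M_1}_H(H)$, then item (a) of \cref{def:homo_expressivity} gives $\hom(F,G)=\hom(F,H)$ for every $F\in\gF^{M_1}\supseteq\gF^{M_2}$, and applying item (a) to $M_2$ then yields $\chi^{M_2}_G(G)=\chi^{M_2}_H(H)$; this shows $M_1$ is at least as expressive as $M_2$. Conversely, picking a witness $F^\star\in\gF^{M_1}\setminus\gF^{M_2}$, item (b) of \cref{def:homo_expressivity} applied to $M_2$ supplies a pair $G,H$ with $\chi^{M_2}_G(G)=\chi^{M_2}_H(H)$ but $\hom(F^\star,G)\neq\hom(F^\star,H)$; since $F^\star\in\gF^{M_1}$, the contrapositive of item (a) for $M_1$ forces $\chi^{M_1}_G(G)\neq\chi^{M_1}_H(H)$. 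Thus at every step of the chain, $M_1$ strictly dominates $M_2$ in distinguishing non-isomorphic graphs, completing the proof. The main obstacle, as noted, is step (ii) of the strictness argument, since it requires a careful finite-case analysis ruling out every candidate NED of the stricter type on each small witness.
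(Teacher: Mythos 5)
Your overall plan matches the paper's: the non-strict inclusions follow from the successively weaker sibling-interval constraints on NED variants, the forest characterization handles $\gF^\mathsf{MP}\subset\gF^\mathsf{Sub}$, the four graphs of \cref{fig:ned}(b) witness the strict separations, and the translation of set inclusions into strict expressivity separations is exactly the argument the paper establishes as a general principle (via items (a) and (b) of \cref{def:homo_expressivity}) in the discussion after that definition. The one place where you diverge is $\gF^\mathsf{Sub}\subset\gF^\mathsf{L}$: the paper observes directly that any endpoint-shared NED is (after reordering siblings) a strong NED, since all ears with non-empty nested intervals on a common parent share an endpoint and so have intervals totally ordered by inclusion. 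Your detour via \cref{thm:subgraph_gnn}, attempting to build a strong NED from scratch by a DFS of a forest component, is both longer and not quite well-specified: a DFS traversal of a non-path tree does not yield a simple path, and "every remaining edge incident to $u$" leaves out tree edges in the forest components that $P_1$ does not fully cover — those must also be handed ears, and you must check the resulting sibling structure is strong. The paper's observation sidesteps these bookkeeping issues entirely, so I would replace your construction with it. (Minor: adding a 3-cycle as a witness for $\gF^\mathsf{Sub}\setminus\gF^\mathsf{MP}$ is redundant, since the first graph of \cref{fig:ned}(b) is already a non-forest in $\gF^\mathsf{Sub}$.)
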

\vspace{-10pt}
\begin{proof}
    $\gF^\mathsf{MP}\subset \gF^\mathsf{Sub}$ follows from \cref{thm:subgraph_gnn} and the fact that deleting any vertex of a forest yields a forest. $\gF^\mathsf{Sub}\subset \gF^\mathsf{L}$ follows by the fact that any endpoint-shared NED is a strong NED. $\gF^\mathsf{L}\subset \gF^\mathsf{LF}\subset \gF^\mathsf{F}$ follows similarly since any strong NED is an almost-strong NED and any almost-strong NED is a NED. To prove strict separation results, one can check that the four graphs in \cref{fig:ned}(b) precisely reveal the gap between each pair of graph families, thus concluding the proof.
\end{proof}
\vspace{-6pt}
    \looseness=-1 \cref{thm:comparison} recovers a series of results recently proved in \citet{zhang2023complete,frasca2022understanding}. Compared to their results, our approach draws a much clearer picture of the expressivity gap between different architectures and essentially answers \emph{how large} the gaps are. Moreover, we provide systematic guidance for finding counterexample graphs that unveil the expressivity gap: as shown in \cref{thm:counterexample}, any graph $F\in\gF^{M_2}\backslash\gF^{M_1}$ immediately gives a pair of non-isomorphic graphs that reveals the gap between models $M_1$ and $M_2$. We note that this readily recovers the counterexamples constructed in \citet{zhang2023complete} and greatly simplifies their sophisticated case-by-case analysis.

We next turn to three types of higher-order GNNs studied in \cref{sec:higher-order_GNNs}, for which we can establish a complete expressiveness hierarchy, as presented in \cref{thm:higher-order_comparison}. A graphical illustration of these results is given in \cref{fig:higher-order_comparison}.
\begin{corollary}
\label{thm:higher-order_comparison}
    Under the notations in \cref{sec:higher-order_GNNs}, for any $k>0$, the following hold:
    \begin{enumerate}[label=\alph*),topsep=0pt,leftmargin=25pt]
        \setlength{\itemsep}{-5pt}
        \vspace{-5pt}
        \item $\gF^{\mathsf{Sub}(k-1)}\!\subsetneq\! \gF^{\mathsf{Sub}(k)}$. I.e., the expressive power of Subgraph $k$-GNN strictly increases with $k$;
        \item $\gF^{\mathsf{L}(k)}\subsetneq \gF^{\mathsf{L}(k+1)}$. I.e., the expressive power of Local $k$-GNN strictly increases with $k$;
        \item $\gF^{\mathsf{Sub}(k)}\!\subsetneq\! \gF^{\mathsf{L}(k+1)}$. I.e., Local $(k+1)$-GNN is strictly more expressive than Subgraph $k$-GNN;
        \item $\gF^{\mathsf{F}(k)}\!\subsetneq\! \gF^{\mathsf{L}(k+1)}\!\subsetneq\! \gF^{\mathsf{F}(k+1)}$. I.e., the expressive power of Local $(k+1)$-GNN lies strictly between $k$-FWL and $(k+1)$-FWL;
        \item $\gF^{\mathsf{Sub}(k)}\!\subsetneq\! \gF^{\mathsf{F}(k+1)}$, and for all $k>1$, $\gF^{\mathsf{Sub}(k)}\backslash \gF^{\mathsf{F}(k+1)}
        \neq \emptyset$ and $\gF^{\mathsf{F}(k+1)}\backslash \gF^{\mathsf{Sub}(k)}
        \neq \emptyset$. In other words, the expressive power of Subgraph $k$-GNN lies strictly within $(k+1)$-FWL, but it is incomparable to $k$-FWL when $k>1$.
    \end{enumerate}
\end{corollary}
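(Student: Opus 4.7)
The plan is to derive all five statements from the explicit structural characterizations of $\gF^{\mathsf{Sub}(k)}$, $\gF^{\mathsf{L}(k)}$, and $\gF^{\mathsf{F}(k)}$ given in \cref{thm:higher-order_subgraph_GNN,thm:higher-order_local_GNN,thm:higher-order_FGNN}. Each non-strict inclusion will be obtained by converting one kind of decomposition into another, and each strict separation by exhibiting a specific witness graph---typically a clique, a disjoint union of small cycles, or a cycle decorated with chords.

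First I would dispatch the easy inclusions. Item (a) is immediate because $|U|\le k-1$ implies $|U|\le k$. Item (b) follows because a $k$-order ear can be regarded as a degenerate $(k+1)$-order ear by enlarging its core with one extra isolated vertex and appending one zero-length path, which turns a $k$-order strong NED into a $(k+1)$-order strong NED. The forward direction of (e) is similarly short: if $F\setminus U$ is a forest with $|U|\le k$, its tree decomposition has bags of size at most $2$, and adjoining $U$ to every bag produces a tree decomposition of $F$ of width at most $k+1$.

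The substantive inclusions are the three involving higher-order strong NEDs in (c) and (d). For $\gF^{\mathsf{Sub}(k)}\subseteq \gF^{\mathsf{L}(k+1)}$ in (c), I would place $F[U]$ inside the core of a single initial $(k+1)$-order ear and exhaust the forest $F\setminus U$ by peeling off maximal paths as nested simple-path ears, whose nested intervals satisfy the strong condition vacuously because each attaches along a distinct branch of the underlying tree. For $\gF^{\mathsf{F}(k)}\subseteq\gF^{\mathsf{L}(k+1)}$ in (d), I would induct along a rooted tree decomposition of width $k$, contributing one $(k+1)$-order ear per bag whose core is the bag and whose attaching paths descend into child subtrees, ordering siblings to enforce strong nesting. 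Conversely, for $\gF^{\mathsf{L}(k+1)}\subseteq\gF^{\mathsf{F}(k+1)}$, I would read off a tree decomposition directly from the ear tree of a $(k+1)$-order strong NED by assigning each ear a bag that contains its core plus the at-most-$(k+1)$ interface vertices shared with its parent ear, yielding width $\le k+1$.

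For the strict separations, the clique $K_{k+2}$ does most of the work: it has $\tw=k+1$, admits a $(k+1)$-order strong NED (take the whole graph as one ear), and satisfies $F\setminus U$ forest only when $|U|\ge k$, so it witnesses the gap in (a), in (b), in the first half of (d) as $K_{k+2}\in\gF^{\mathsf{L}(k+1)}\setminus\gF^{\mathsf{F}(k)}$, and in (e) as $K_{k+2}\in\gF^{\mathsf{Sub}(k)}\setminus\gF^{\mathsf{F}(k)}$. For the other incomparability direction in (e), the disjoint union of $k+1$ triangles has treewidth $2\le k$ yet needs $k+1$ vertex deletions to become acyclic, so it lies in $\gF^{\mathsf{F}(k)}\setminus\gF^{\mathsf{Sub}(k)}$ whenever $k>1$. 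The remaining separations---$\gF^{\mathsf{L}(k+1)}\setminus\gF^{\mathsf{Sub}(k)}$ in (c) and $\gF^{\mathsf{L}(k+1)}\subsetneq\gF^{\mathsf{F}(k+1)}$ in (d)---can be realized by higher-order analogues of the chordal-6-cycle graphs in \cref{fig:ned}(b), obtained by gluing $K_{k+1}$'s onto a sufficiently long cycle in ways that preserve the weaker decomposition while obstructing the stronger structural requirement. The hard part will be the NED/tree-decomposition conversions in (d): the strong-NED axiom forces the nested intervals above each parent ear to be totally ordered by inclusion, a global constraint that has no direct analogue in tree decompositions, so both the forward construction (verifying the total order of nested intervals) and the backward construction (maintaining the vertex-connectedness axiom while keeping every bag within size $k+2$) will demand substantial combinatorial bookkeeping.
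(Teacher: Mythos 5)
Your overall strategy—read off all inclusions from the explicit characterizations in \cref{thm:higher-order_subgraph_GNN,thm:higher-order_local_GNN,thm:higher-order_FGNN} and furnish a witness graph for every strict separation—is exactly how the paper proceeds, and your reading of item (e) as asserting incomparability with $\gF^{\mathsf{F}(k)}$ (not $\gF^{\mathsf{F}(k+1)}$, which is an evident typo in the statement) is correct. However, two concrete claims in your proposal are wrong or insufficient.

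First, $K_{k+2}$ is \emph{not} a single $(k+1)$-order ear. In a $(k+1)$-order ear, the core $Q$ has at most $k+1$ distinct vertices (one per inner endpoint), so at least one vertex $u$ of $K_{k+2}$ must lie on a path outside $Q$, where its degree inside the ear is at most $2$; but every vertex of $K_{k+2}$ has degree $k+1\ge 2$, and strict inequality fails only for $k=1$, where $K_3$ is still visibly not a simple path. So ``take the whole graph as one ear'' does not work. The membership $K_{k+2}\in\gF^{\mathsf{L}(k+1)}$ is nonetheless true: it follows immediately from $K_{k+2}\in\gF^{\mathsf{Sub}(k)}$ once you have the inclusion of item (c), or directly from a two-bag tree decomposition in $\gS^{\mathsf{L}(k+1)}$ with root bag $\{v_1,\ldots,v_{k+1}\}$ and child bag $\{v_1,\ldots,v_{k+1},u\}$. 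So your conclusions that rely on $K_{k+2}$ (items (a), (b), the first half of (d), the first part of (e)) are salvageable, but the one-ear justification is a genuine error.

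Second, your witnesses for $\gF^{\mathsf{L}(k+1)}\setminus\gF^{\mathsf{Sub}(k)}$ (item (c)) and for $\gF^{\mathsf{F}(k+1)}\setminus\gF^{\mathsf{L}(k+1)}$ (second half of item (d)) are left at the level of ``higher-order analogues of the chordal 6-cycle,'' which is not a construction, and I doubt a clique-decorated cycle yields either separation cleanly. The paper uses two quite different graph families here. For $\gF^{\mathsf{L}(k+1)}\setminus\gF^{\mathsf{Sub}(k)}$ it takes the $(k+1)\times(2k+2)$ grid: since every vertex meets at most four of its $k(2k+1)$ unit squares, deleting $k$ vertices cannot make it a forest (handling $k=1$ separately), yet it admits an explicit $(k+1)$-order strong NED by layers. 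For $\gF^{\mathsf{F}(k+1)}\setminus\gF^{\mathsf{L}(k+1)}$ it takes a bouquet of $k+2$ many $(k+2)$-cliques all sharing a common $(k+1)$-set---a graph of treewidth $k+1$ whose branching at the shared set cannot be serialized into the totally-ordered nested intervals a strong NED demands. Without concrete graphs and the corresponding membership/non-membership arguments, these two separations are not proved. On the plus side, your disjoint-union-of-$(k+1)$-triangles witness for $\gF^{\mathsf{F}(k)}\setminus\gF^{\mathsf{Sub}(k)}$ (valid for $k\ge 2$) is a pleasant, slightly simpler alternative to the paper's $k\times(2k+2)$ grid.
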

\vspace{-3pt}

\begin{figure}[t]
    \centering
    \small
    \vspace{-10pt}
    \setlength{\tabcolsep}{2pt}
    \includegraphics[width=0.89\textwidth]{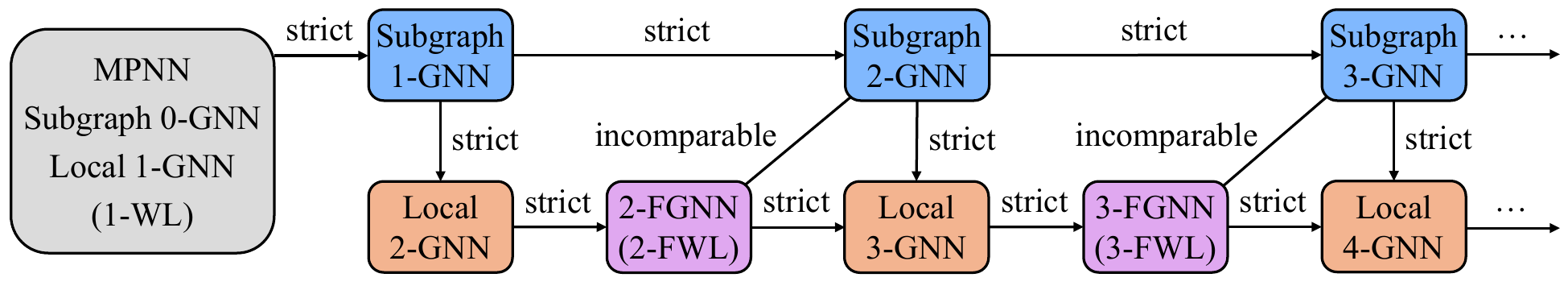}
    \vspace{-13pt}
    \caption{Expressiveness hierarchy of MPNN, Subgraph GNN, Local GNN, and FGNN.}
    \label{fig:higher-order_comparison}
    \vspace{-5pt}
\end{figure}

\looseness=-1 \cref{thm:higher-order_comparison} recovers results in \citet{morris2020weisfeiler,qian2022ordered} and further answers two open problems. First, \cref{thm:higher-order_comparison}(c) is a new result that bridges \citet{morris2020weisfeiler} with \citet{qian2022ordered} and partially answers an open question in \citet[Appendix C]{zhang2023complete}. Another new result is \cref{thm:higher-order_comparison}(d), which essentially answers a fundamental open problem raised in \citet[Appendix E]{frasca2022understanding}, showing that their proposed $\mathsf{ReIGN}(k)$ model is bounded by $k$-FWL with an inherent expressivity gap (see \cref{sec:higher-order_expressivity_gap} for a detailed discussion). To sum up, all these challenging open problems become straightforward through the lens of homomorphism expressivity.

\subsection{Subgraph counting power}
\label{sec:subgraph}

The significance of homomorphism expressivity can go much beyond qualitative comparisons between models. As another implication, it provides a systematic way to study GNNs' ability to encode structural information such as \emph{subgraph count}, which has been found crucial in numerous practical applications. Specifically, a well-known result in graph theory states that, for any graphs $F, G$, the subgraph count $\sub(F,G)$ can be determined by $\hom(\tilde F, G)$ where $\tilde F$ ranges over all homomorphic images of $F$ (i.e., $\Spasm(F)$, see \cref{sec:preliminary}) \citep{lovasz2012large,curticapean2017homomorphisms}.

Mathematically, given any graph $F$, let $\Spasm^{\not\simeq}(F)$ be any maximal set of pairwise non-isomorphic graphs chosen from $\Spasm(F)$ (see \cref{fig:spasm}(a) for an illustration). Then, we have the following linear relation for all graph $G$:
\begin{equation}
    \sub(F,G)=\sum_{\tilde F\in\Spasm^{\not\simeq}(F)}\alpha(F,\tilde F)\cdot\hom(\tilde F,G),
\end{equation}
where $\alpha(F,\tilde F)\neq 0$ is a constant scalar coefficient independent of $G$. Based on this formula, we can easily study the subgraph counting power of GNN models as shown in \cref{thm:subgraph_count_one_direction}.

\begin{definition}
    Given a GNN model $M$, we say $M$ can subgraph-count graph $F$ at graph-level if $\chi^M_G(G)=\chi^M_H(H)$ implies $\sub(F,G)=\sub(F,H)$ for any graphs $G,H$. We say $M$ can subgraph-count rooted graph $F^w$ at node-level if $\chi^M_G(u)=\chi^M_H(v)$ implies $\sub(F^w,G^u)=\sub(F^w,H^v)$ for any graphs $G,H$ and vertices $u\in V_G,v\in V_H$. We can similarly define the edge-level subgraph counting ability for rooted graphs marking two special vertices.
\end{definition}

\begin{proposition}
\label{thm:subgraph_count_one_direction}
    For any GNN model $M$ defined in \cref{sec:preliminary}, it can subgraph-count graph $F$ (at graph-level) if $\tilde F\in\gF^M$ for all $\tilde F\in\Spasm(F)$. It can subgraph-count $F^w$ (at node-level) if $\tilde F^w\in\gF_\mathsf{n}^M$ for all $\tilde F^w\in\Spasm(F^w)$. A similar result holds for edge-level subgraph counting.
\end{proposition}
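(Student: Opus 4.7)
The result is essentially a direct corollary of the spasm decomposition stated immediately above the proposition combined with the definitions of homomorphism expressivity. For the graph-level statement, the plan is as follows. Fix any two graphs $G,H$ with $\chi^M_G(G)=\chi^M_H(H)$. Starting from the identity
\begin{equation*}
\sub(F,G) = \sum_{\tilde F \in \Spasm^{\not\simeq}(F)} \alpha(F,\tilde F)\cdot \hom(\tilde F, G),
\end{equation*}
the hypothesis that $\tilde F\in\gF^M$ for every $\tilde F\in\Spasm(F)$ lets me apply \cref{def:homo_expressivity}(a) term by term, yielding $\hom(\tilde F,G)=\hom(\tilde F,H)$ for every summand. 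Because the coefficients $\alpha(F,\tilde F)$ depend only on the pattern pair and not on the host graph, forming the same linear combination over $H$ produces $\sub(F,G)=\sub(F,H)$, which is precisely the graph-level conclusion.

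For the node-level statement, the plan is to establish the analogous rooted decomposition
\begin{equation*}
\sub(F^w, G^u) = \sum_{\tilde F^w \in \Spasm^{\not\simeq}(F^w)} \alpha(F^w,\tilde F^w)\cdot \hom(\tilde F^w, G^u),
\end{equation*}
where $\Spasm(F^w)$ consists of homomorphic images of $F^w$ that preserve the root marking. This follows by the same M\"obius-type argument as in the unrooted case: every injective homomorphism $F^w\hookrightarrow G^u$ factors uniquely as a surjective root-preserving homomorphism onto some $\tilde F^w$ followed by an embedding of $\tilde F^w$ into $G^u$. Since homomorphic images of a connected rooted graph are themselves connected rooted graphs, each $\tilde F^w$ lies in the ambient class over which $\gF_\mathsf{n}^M$ is defined. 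Under the hypothesis $\tilde F^w\in \gF_\mathsf{n}^M$ for all $\tilde F^w\in\Spasm(F^w)$, \cref{def:node/edge-homo_expressivity}(a) preserves each $\hom(\tilde F^w,\cdot)$ whenever $\chi^M_G(u)=\chi^M_H(v)$, so $\sub(F^w,G^u)=\sub(F^w,H^v)$ follows. The edge-level case is verbatim identical after replacing single-vertex roots with vertex pairs $(w,x)$ and $(u,v)$.

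The main obstacle, such as it is, is bookkeeping rather than real difficulty: one must spell out the rooted spasm decomposition and check that every image appearing in it is a connected rooted graph to which \cref{def:node/edge-homo_expressivity} applies. Both points are classical extensions of the Lov\'asz--Curticapean framework and transfer with no substantive change, and the connectedness point is automatic because surjective homomorphisms preserve connectedness. All the difficult work has already been carried out in characterizing $\gF^M$ and $\gF_\mathsf{n}^M$ via NED in the main theorems of this section; the present proposition extracts a clean consequence of those characterizations with only linear-algebraic effort.
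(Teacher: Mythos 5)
Your proof is correct and follows exactly the route the paper intends: it is presented as a direct corollary of the spasm identity stated above it, and you spell out the term-by-term application of \cref{def:homo_expressivity}(a) (resp.\ \cref{def:node/edge-homo_expressivity}(a)) together with the rooted analogue of the spasm decomposition, which the paper also records in the appendix as an extension of \cref{thm:hom_surj_sub}. Nothing is missing beyond what the paper itself leaves implicit.
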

\vspace{-3pt}

The above proposition offers a simple way to affirm the ability of a GNN model $M$ to subgraph-count any pattern at graph/node/edge-level. On the other hand, one may wonder whether the converse direction also holds, i.e., $M$ cannot subgraph-count $F$ if there exists a homomorphic image $\tilde F\in\Spasm(F)$ such that $\tilde F\notin\gF^M$. We find that it is indeed the case. Specifically, if the set $\Spasm(F)\backslash\gF^M$ is not empty, then one can always find a pair of counterexample graphs $G,H$ such that $\chi^M_G(G)=\chi^M_H(H)$ but $\sub(F,G)\neq\sub(F,H)$. We eventually arrive at the following main theorem (see \cref{sec:proof_subgraph_count_main} for a proof):

\begin{theorem}
\label{thm:subgraph_count}
    For any GNN model $M$ such that their homomorphism expressivity $\gF^M$ exists, $M$ can subgraph-count $F$ \emph{iff} $\Spasm(F)\subset\gF^M$. Similar results hold for rooted graphs $F^u$/$F^{uv}$ by replacing $\gF^M$ with node/edge-level homomorphism expressivity $\gF_\mathsf{n}^M$/$\gF_\mathsf{e}^M$.
\end{theorem}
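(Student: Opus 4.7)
The ``if'' direction is exactly \cref{thm:subgraph_count_one_direction}: when every $\tilde F \in \Spasm(F)$ lies in $\gF^M$, the identity
\[
\sub(F,G) \;=\; \sum_{\tilde F \in \Spasm^{\not\simeq}(F)} \alpha(F,\tilde F)\,\hom(\tilde F, G)
\]
expresses $\sub(F,\cdot)$ as a linear combination of functions that are invariant under $M$-equivalence (by \cref{def:homo_expressivity}(a)), so $\sub(F,\cdot)$ is itself invariant.

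For the ``only if'' direction, the plan is a contrapositive argument. Assume $S := \Spasm^{\not\simeq}(F)\setminus\gF^M$ is non-empty; the goal is to exhibit $G,H$ with $\chi^M_G(G)=\chi^M_H(H)$ and $\sub(F,G)\neq\sub(F,H)$. Subtracting the above identity for $G$ and $H$ on any such pair, the $\tilde F\in\gF^M$ terms cancel, and the target reduces to finding a pair on which
\[
\Delta(G,H) \;:=\; \sum_{\tilde F \in S} \alpha(F,\tilde F)\,\bigl[\hom(\tilde F,G)-\hom(\tilde F,H)\bigr]\;\neq\;0.
\]
Equivalently, one must show that no nontrivial finite linear combination of $\{\hom(\tilde F,\cdot):\tilde F \notin \gF^M\}$ is $M$-invariant. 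The coefficients $\alpha(F,\tilde F)$ are nonzero by Lov\'asz's classical expansion, so cancellations among them are not for free.

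The key step is a triangular-elimination argument keyed on graph size. Order the (finite) set $S$ by $|V_{\tilde F}|$, breaking ties by $|E_{\tilde F}|$, and let $\tilde F^\star$ be its maximum. By \cref{def:homo_expressivity}(b) there is a witness pair $(G_0,H_0)$ with $\chi^M_{G_0}(G_0)=\chi^M_{H_0}(H_0)$ and $\hom(\tilde F^\star,G_0)\neq\hom(\tilde F^\star,H_0)$; moreover every $\tilde F \in \Spasm(F)$ strictly larger than $\tilde F^\star$ automatically lies in $\gF^M$ and contributes $0$ to $\Delta(G_0,H_0)$. To neutralize the residual contributions from $\tilde F \in S$ of size at most $|V_{\tilde F^\star}|$, the plan is to amplify the witness via $k$-fold disjoint unions $k\!\cdot\! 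G_0$ and $k\!\cdot\! H_0$. These preserve $M$-equivalence (checkable by color refinement for every model in \cref{sec:preliminary}), while $\hom(\tilde F,k\!\cdot\! G_0)-\hom(\tilde F,k\!\cdot\! H_0)$ becomes a polynomial in $k$ whose degree equals the number of connected components of $\tilde F$. A Vandermonde-type argument over varying $k$ separates contributions with different component counts, and an inductive peeling within each fixed component count (using the size ordering above) isolates the $\tilde F^\star$-term.

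The main obstacle is precisely this isolation step: \cref{def:homo_expressivity}(b) supplies only one witness pair per graph and is silent about the other $\hom(\tilde F,\cdot)$ values on it, so one cannot rule out a priori that the smaller terms in $\Delta$ exactly cancel the $\tilde F^\star$-term for every choice of $k$. A cleaner path is to reuse the F\"urer-graph machinery the authors already develop for the hard direction of \cref{thm:main}: the F\"urer expansion of $\tilde F^\star$ is a \emph{canonical} witness pair whose full profile of homomorphism counts can be read off combinatorially from the structure of $\tilde F^\star$ (cf.\ \cref{thm:counterexample_hom,thm:counterexample_main}), giving exact numerical control over each term of $\Delta$ and allowing direct verification that $\Delta\neq 0$. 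The node- and edge-level statements follow from the same argument after replacing $\Spasm(F)$, $\gF^M$, and the witness machinery by their rooted-graph analogues from \cref{sec:node-edge}.
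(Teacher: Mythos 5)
Your ``if'' direction is exactly the paper's, via the Lov\'asz spasm identity. For the ``only if'' direction, you correctly diagnose that the disjoint-union-plus-Vandermonde route stalls: \cref{def:homo_expressivity}(b) gives one witness pair per bad graph but says nothing about how the \emph{other} homomorphism counts behave on that pair, so the lower-order terms in your $\Delta$ could conspire to cancel for every $k$. The gap is real and you should not leave it as an unresolved ``main obstacle.''

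Your proposed F\"urer fallback also does not close the theorem as stated. The theorem quantifies over \emph{any} GNN model $M$ for which $\gF^M$ exists, including models beyond those in \cref{sec:preliminary}. F\"urer graphs and the associated pebble-game analysis (\cref{thm:counterexample_hom,thm:counterexample_main}) are tied to the specific color refinement procedures of those concrete models; they are not available for an abstract $M$. Even restricted to the concrete models, you would still have to verify that every $\tilde F\in S$ other than $\tilde F^\star$ has equal homomorphism counts into $G(\tilde F^\star)$ and $H(\tilde F^\star)$, which requires the observation that $\tilde F^\star$ is not a homomorphic image of any smaller $\tilde F$ --- possible but nowhere argued in your sketch.

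The paper's actual route (\cref{sec:proof_subgraph_count_main}) sidesteps all of this and works for arbitrary $M$. It uses the categorical (tensor) product: $\hom(L, G\times K)=\hom(L,G)\,\hom(L,K)$ (\cref{thm:categorical_product_hom_counting}). Because \cref{def:homo_expressivity}(a) is an ``iff,'' $M$-equivalence of $G$ and $H$ propagates to $M$-equivalence of $G\times K$ and $H\times K$ for \emph{every} graph $K$. Assuming the linear combination $\sum_L\alpha(L)\hom(L,\cdot)$ is $M$-invariant and applying it to $G\times K$, $H\times K$ gives a family of linear constraints indexed by $K$. Varying $K$ over all graphs of bounded size, the coefficient matrix $\mA^{\hom}$ factors as $\mA^{\surj}\mA^{\sub}$ (Lov\'asz), both lower/upper triangular with nonzero diagonals, hence invertible; this forces $\alpha(L)\hom(L,G)=\alpha(L)\hom(L,H)$ term by term, so $\hom(L,\cdot)$ is itself $M$-invariant for each $L$, i.e.\ $\Spasm(F)\subset\gF^M$. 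This is the key lemma (\cref{thm:subgraph_counting_main_lemma}). It is essentially a spectral/rank argument that needs no counterexample-graph construction at all. If you replace your amplification step by categorical powers $G\times K$ and invoke the triangular factorization of $\mA^{\hom}$, your outline becomes the paper's proof; as written, it has a genuine hole.
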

\vspace{-3pt}

\begin{example}
\label{thm:cycle_counting}
    As an example, we can readily characterize the cycle/path counting power of various GNNs. Denote by $C_n$/$P_n$ the simple cycle/path of $n$ vertices. Let $\{u,v\}\in E_{C_n}$ be any edge in $C_n$, and $\{w,x\}\in E_{P_n}$ be any edge in $P_n$ where $w$ is an endpoint of $P_n$. The following table lists exactly all cycles/paths each model can count at graph/node/edge-level.
    \begin{figure}[h]
    \begin{minipage}{0.54\textwidth}
    \vspace{-4pt}
    \small
    \setlength{\tabcolsep}{3pt}
    \begin{tabular}{c|ccc|ccc}
    \Xhline{1pt}
    \multirow{2}{*}{\backslashbox{Model\hspace{-12pt}}{\hspace{-8pt}Structure}} & \multicolumn{3}{c|}{Cycle} & \multicolumn{3}{c}{Path} \\ 
                & $C_n$ & $C_n^u$ & $C_n^{uv}$ & $P_n$ & $P_n^w$ & $P_n^{wx}$ \\ \Xhline{0.75pt}
    MPNN            & None & None & None & $n\!\le\! 3$ & $n\!\le\! 3$ & $n\!\le\! 3$ \\
    Subgraph GNN    & $n\!\le\! 7$ & $n\!\le\! 4$ & $n\!\le\! 4$ & $n\!\le\! 7$ & $n\!\le\! 4$ & $n\!\le\! 4$ \\ \Xhline{0.3pt}
    Local 2-GNN     & \multicolumn{3}{c|}{\multirow{3}{*}{$n\!\le\! 7$}} & \multicolumn{3}{c}{\multirow{3}{*}{$n\!\le\! 7$}} \\
    Local 2-FGNN    &         &         &        &        &        &        \\
    2-FGNN          &         &         &        &        &        &        \\ \Xhline{1pt}
    \end{tabular}
    \end{minipage}
    \hspace{0.01\textwidth}
    \begin{minipage}{0.44\textwidth}
    \centering
    \small
    \vspace{-12pt}
    \setlength{\tabcolsep}{4pt}
    \begin{tabular}{cc}
        \includegraphics[height=0.24\textwidth]{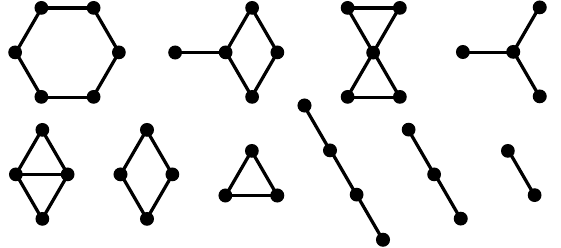} & \includegraphics[height=0.24\textwidth]{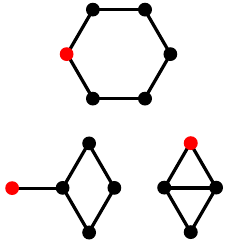} \\
        (a) $\Spasm^{\not\simeq}(C_6)$ has 10 graphs.  & (b) Rooted $C_6$
    \end{tabular}
    \vspace{-12pt}
    \caption{Illustration of homomorphic images of the 6-cycle and rooted 6-cycle.}
    \label{fig:spasm}
    \vspace{-10pt}
    \end{minipage}
    \end{figure}
\end{example}
\vspace{-10pt}

\looseness=-1 \textbf{Discussions with prior work}. Our results significantly extend \citet{huang2023boosting} in several aspects. First, we show Subgraph GNN \emph{can} count 6-cycle at graph-level by simply enumerating its spasm (see \cref{fig:spasm}(a)). However, it cannot count rooted 5/6-cycle at node-level because the homomorphic image can contain cycles that do not pass the marked vertex (see \cref{fig:spasm}(b)). This provides novel insights into \citet{huang2023boosting} and extends their results (albeit with a simpler analysis). Second, we reveal that Local 2-GNN can already count all cycles/paths that 2-FWL can count (even at edge-level). This identifies a new architecture with both efficiency and strong expressiveness in subgraph counting, considerably extending the finding in the concurrent work of \citet{zhou2023distance}.

\looseness=-1 In \cref{sec:counting_statistics} (\cref{tab:homomorphism_count_number,tab:subgraph_count_number}), we summarize the statistics of all moderate-size patterns each model can count under homomorphisms/subgraphs, which enables quantitative expressivity comparisons of different models in a clear and exact manner. We also comprehensively list the counting ability of all moderate-size patterns in \cref{tab:small_subgraph_count}, which we believe can be helpful for future research.

\subsection{Polynomial expressivity}
\label{sec:polynomial}

As the third implication, homomorphism expressivity is closely related to the \emph{polynomial expressivity} recently proposed in \citet{puny2023equivariant}. Concretely, given a model $M$, a graph $F$ is in $\gF^M$ if $M$ can express the invariant graph polynomial $P_F$ (defined in \citet{puny2023equivariant}, Section 2.2), and a rooted graph $F^{uv}$ is in $\gF_\mathsf{e}^M$ if $M$ can express the equivariant graph polynomial $P_{F^{uv}}$. Based on this connection, our work introduces a novel toolbox for studying polynomial expressivity via the NED framework and offers new insights into which graph polynomials can be computed for a variety of practical GNNs. Moreover, we readily settle an open question in \citet{puny2023equivariant}, which \emph{upper bounds} the polynomial expressivity for their proposed PPGN++:
\begin{corollary}
    PPGN++ is bounded by (and thus as expressive as) the Prototypical edge-based model defined in \citet{puny2023equivariant} for computing equivariant graph polynomials.
\end{corollary}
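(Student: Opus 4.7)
The plan is to route the claim through the correspondence between equivariant polynomial expressivity and edge-level homomorphism expressivity recorded right before the corollary: a rooted graph $F^{uv}$ lies in $\gF_\mathsf{e}^M$ \emph{iff} the equivariant graph polynomial $P_{F^{uv}}$ is expressible by $M$. It then suffices to show that both PPGN++ and the Prototypical edge-based model have equivariant polynomial expressivity equal to $\gF_\mathsf{e}^\mathsf{F}$, which by \cref{thm:node_edge} is $\{F^{uv}:F\text{ admits a NED whose first ear has endpoints }u,v\}$.

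First I would verify that PPGN++ is captured by $2$-FGNN at the edge level. Its update rule is a matrix-multiplication style layer that, for each ordered pair $(u,v)$, aggregates products of pair-features over all intermediate $w\in V_G$—structurally the recurrence in \cref{eq:2fwl}. Hence $\chi^\mathsf{F}_G(u,v)$ is at least as refined as the PPGN++ edge feature, so PPGN++'s equivariant polynomial expressivity is contained in $\gF_\mathsf{e}^\mathsf{F}$, yielding the ``bounded by'' direction. For the matching lower bound I would show that PPGN++ realizes every $P_{F^{uv}}$ for $F^{uv}\in\gF_\mathsf{e}^\mathsf{F}$ by induction on a fixed NED: the first ear corresponds to initializing an edge feature, each subsequent ear attachment corresponds to contracting along a path using $2$-FWL aggregations, and the nested-interval condition of \cref{def:ned} guarantees that each such contraction only consults pair features already computed.

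Second, I would show that the Prototypical edge-based model of \citet{puny2023equivariant} expresses \emph{exactly} the polynomials $P_{F^{uv}}$ with $F^{uv}\in\gF_\mathsf{e}^\mathsf{F}$. Its constructor repeatedly glues simple paths onto previously built sub-polynomials starting from the marked edge $\{u,v\}$, and the allowed gluing patterns translate verbatim into the ear-attachment and nesting rules of \cref{def:ned}; thus the patterns it reaches are precisely those with a NED rooted at the marked edge. Combining both identifications establishes the equality of the two polynomial expressivity classes and hence the corollary. The main obstacle is the careful translation between the architectural/algebraic descriptions of PPGN++ and the Prototypical model and our combinatorial NED framework—especially matching ear order, nested-interval containment, and the role of the distinguished first ear with the order in which the Prototypical model attaches paths and PPGN++ performs its $2$-FWL aggregations.
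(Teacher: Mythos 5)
Your high-level framing is correct: both the corollary's content and the paper's proof do run through the correspondence between edge-level equivariant polynomials and edge-level homomorphism count, and through the family $\gF_\mathsf{e}^\mathsf{F}$. But the paper's actual argument is organized quite differently from your plan, and in a way that avoids the two hardest parts of your proposal.

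First, you propose to characterize the Prototypical edge-based model's reachable polynomials directly by translating its path-gluing constructor into the ear-attachment and nesting rules of the NED of $F^{uv}$. The paper does not do this; it characterizes the Prototypical model's class as $\{F^{uv}:\tw(\tilde F)\le 2\}$ where $\tilde F:=(V_F, E_F\cup\{\{u,v\}\},\ell_F)$, and proves this by exhibiting a vertex elimination ordering of the partial $2$-tree $\tilde F$ ending in $u,v$ and invoking Prop.~H.3 of \citet{puny2023equivariant}. That is both cleaner and easier to verify than a verbatim gluing-to-NED translation, which you would have to make precise (the claim that ``the allowed gluing patterns translate verbatim'' into \cref{def:ned} is not self-evident and does not appear in the paper). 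If you insist on the NED phrasing you would also need to check, as an extra lemma, that $\{F^{uv}:F\text{ has a NED with first-ear endpoints }u,v\}$ coincides with $\{F^{uv}:\tw(\tilde F)\le 2\}$; this is true, but it is nontrivial and the paper's formulation avoids having to prove it. Second, your plan to prove that \emph{PPGN++ realizes} every $P_{F^{uv}}$ with $F^{uv}\in\gF_\mathsf{e}^\mathsf{F}$ by induction on a NED is unnecessary work: the ``and thus as expressive as'' direction is already supplied by \citet{puny2023equivariant} (Prototypical $\le$ PPGN++), so the corollary only requires the single inclusion PPGN++ $\le$ Prototypical. The paper obtains that inclusion not constructively but by contradiction: if PPGN++ computed some $P_{F^{uv}}$ with $\tw(\tilde F)\ge 3$, then since its edge features also encode adjacency, it would determine $\hom(\tilde F,\cdot)$ at graph level, contradicting $\gF^\mathsf{F}=\{F:\tw(F)\le 2\}$ (because PPGN++ distinguishes graphs exactly like $2$-FWL). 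This contradiction argument makes the whole architectural analysis you were worried about (matching ear order and nested-interval containment to $2$-FWL aggregation steps) unnecessary.

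So your route is not wrong, but it is substantially longer, relies on two translations you have not filled in, and proves an unneeded lower bound. The paper's proof replaces all of this with the treewidth-of-$\tilde F$ characterization plus the edge-level-determines-graph-level contradiction.
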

\vspace{-2pt}

Due to space limit, please refer to \cref{sec:proof_polynomial} for proof and more discussions.

\section{Experiments}
\label{sec:experiments}
\vspace{-2pt}

This section aims to verify our theory through a comprehensive set of experiments. In each experiment, we implement four types of GNN models listed in \cref{sec:preliminary}, i.e., MPNN, Subgraph GNN, Local 2-GNN, and Local 2-FGNN. Note that all of these models are much more efficient than 2-FWL. Our primary objective here is not to produce SOTA results, but rather to provide a unified and equitable empirical comparison among these models. To ensure fairness, we employ the same GIN-based design \citep{xu2019powerful} for all models and control their model sizes and training budgets to be roughly the same on each task. Details of model configurations are given in \cref{sec:experimental details}. Our code is available at \texttt{\href{https://github.com/subgraph23/homomorphism-expressivity}{https://github.com/subgraph23/homomorphism-expressivity}}.

\looseness=-1 \textbf{Synthetic task}. We first test whether these GNN models can easily \emph{learn} homomorphism information from data as our theory predicts. We use the benchmark dataset from \citet{zhao2022stars} and comprehensively test the homomorphism expressivity at graph/node/edge-level by carefully selecting 8 substructures shown in \cref{tab:exp_homo}. The reported performance is measured by the normalized Mean Absolute Error (MAE) on the test dataset. It can be seen that the model performance indeed correlates to our theoretical predictions: $(\mathrm{i})$ MPNN cannot encode any substructure under homomorphism; $(\mathrm{ii})$ Subgraph GNN cannot encode the 2th, 3rd, 5th, 7th, 8th substructures; $(\mathrm{iii})$ Local 2-GNN cannot encode the 3rd and 8th substructures; $(\mathrm{iv})$ Local 2-FGNN can encode all substructures.

\textbf{Cycle counting power}. Cycles are important structures in numerous graph learning tasks, yet encoding them is notoriously hard for GNNs. We next test the ability of different GNN models to subgraph-count (chordal) cycles at graph/node/edge-level. We follow the setting in \citet{frasca2022understanding,zhang2023complete,huang2023boosting} and present results in \cref{tab:exp_subgraph} (measured by the normalized test MAE). Remarkably, \emph{despite the same computational cost and model size}, Local 2-(F)GNN performs significantly better than Subgraph GNN and achieves good performance for counting all 3/4/5/6-cycles as well as chordal 4/5-cycles (even at edge-level). These results match \cref{thm:cycle_counting} and may suggest Local 2-(F)GNN as generic, \emph{efficient}, yet powerful architectures in solving chemical and biological tasks where counting cycles is essential (e.g., benzene rings).

\looseness=-1 \textbf{Real-world tasks}. We finally test these GNN models on three real-world benchmarks: ZINC-subset, ZINC-full \citep{dwivedi2020benchmarking}, and Alchemy \citep{chen2019alchemy}. Following the standard configuration, all models obey a 500K parameter budget. The results are shown in \cref{exp:real}. It can be seen that the performance continues to improve when a more expressive model is used. In particular, Local 2-FGNN achieves the best performance on all tasks, suggesting that its theoretical expressivity guarantee can translate to practical performance in real-world settings.

\begin{table}[h]
\small
\centering
\vspace{-10pt}
\begin{minipage}{0.51\textwidth}
    \caption{Experimental results on homomorphism counting. Red/blue nodes indicate marked vertices.}
    \label{tab:exp_homo}
    \setlength{\tabcolsep}{2pt}
    \resizebox{\textwidth}{!}{
    \begin{tabular}{c|ccc|cc|ccc}
    \Xhline{1pt}
    \multirow{2}{*}{\backslashbox{Model}{Task}} & \multicolumn{3}{c|}{Graph-level} & \multicolumn{2}{c|}{Node-level}& \multicolumn{3}{c}{Edge-level}\\
    &  \adjustbox{valign=c}{\includegraphics[height=0.075\textwidth]{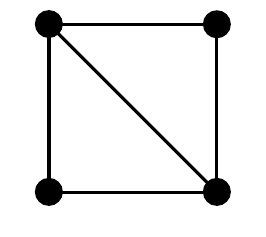}} & \adjustbox{valign=c}{\includegraphics[height=0.075\textwidth]{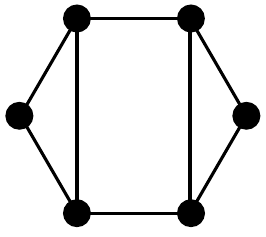}} & \adjustbox{valign=c}{\includegraphics[height=0.075\textwidth]{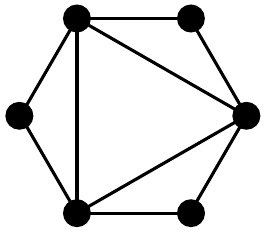}} & \adjustbox{valign=c}{\includegraphics[height=0.075\textwidth]{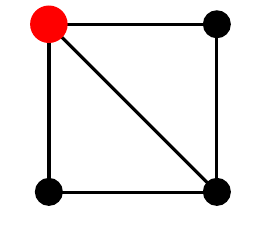}} & \adjustbox{valign=c}{\includegraphics[height=0.075\textwidth]{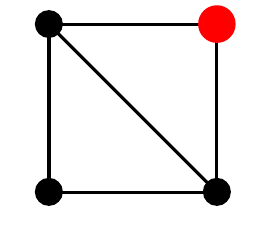}} & \adjustbox{valign=c}{\includegraphics[height=0.075\textwidth]{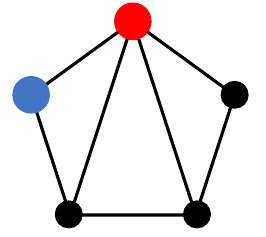}} & \adjustbox{valign=c}{\includegraphics[height=0.075\textwidth]{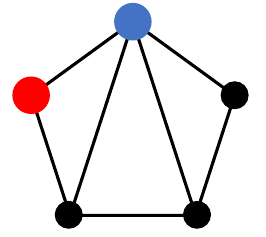}} & \adjustbox{valign=c}{\includegraphics[height=0.075\textwidth]{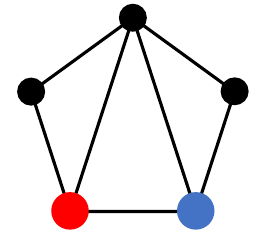}}\\ \Xhline{0.75pt}
    MPNN         & $.300$ & $.233$ & $.254$ & $.505$ & $.478$ & - & - & - \\
    Subgraph GNN & $.011$ & $.015$ & $.012$ & $.004$ & $.058$ & $.003$ & $.058$ & $.048$ \\
    Local 2-GNN  & $.008$ & $.008$ & $.010$ & $.003$ & $.004$ & $.005$ & $.006$ & $.008$ \\
    Local 2-FGNN & $.003$ & $.005$ & $.004$ & $.005$ & $.005$ & $.007$ & $.007$ & $.008$ \\ \Xhline{1pt}
    \end{tabular}
    }
\end{minipage}
\hspace{0pt}
\begin{minipage}{0.47\textwidth}
    \caption{Experimental results on ZINC and Alchemy datasets. See \cref{sec:exp_baselines} for comparisons of more GNN models in literature.}
    \label{exp:real}
    \centering
    \setlength{\tabcolsep}{3pt}
    \resizebox{\textwidth}{!}{
    \begin{tabular}{c|cc|c}
    \Xhline{1pt}
    \multirow{2}{*}{\backslashbox{Model}{Task}} & \multicolumn{2}{c|}{ZINC} & \multirow{2}{*}{Alchemy} \\
    & Subset & Full & \\ \Xhline{0.75pt}
    MPNN         & $.138 \pm .006$ & $.030 \pm .002$ & $.122 \pm .002$ \\
    Subgraph GNN & $.110 \pm .007$ & $.028 \pm .002$ & $.116 \pm .001$ \\
    Local 2-GNN  & $.069 \pm .001$ & $.024 \pm .002$ & $.114 \pm .001$ \\
    Local 2-FGNN & $.064 \pm .002$ & $.023 \pm .001$ & $.111 \pm .001$ \\ \Xhline{1pt}
    \end{tabular}
    }
\end{minipage}
\caption{Experimental results on the (Chordal) Cycle Counting task.}
\label{tab:exp_subgraph}
\setlength{\tabcolsep}{2pt}
\resizebox{\textwidth}{!}{
\begin{tabular}{c|cccccc|cccccc|cccccc}
\Xhline{1pt}
\multirow{2}{*}{\backslashbox{Model}{Task}} & \multicolumn{6}{c|}{Graph-level} & \multicolumn{6}{c|}{Node-level} & \multicolumn{6}{c}{Edge-level} \\
            & \adjustbox{valign=c}{\includegraphics[height=0.04\textwidth]{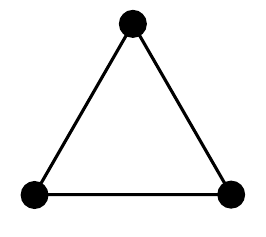}} & \adjustbox{valign=c}{\includegraphics[height=0.04\textwidth]{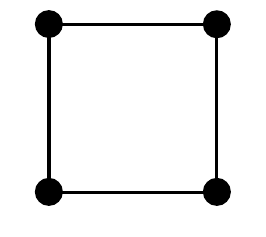}} & \adjustbox{valign=c}{\includegraphics[height=0.04\textwidth]{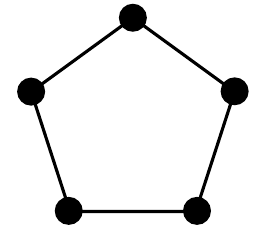}} & \adjustbox{valign=c}{\includegraphics[height=0.04\textwidth]{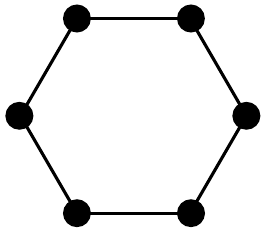}} & \adjustbox{valign=c}{\includegraphics[height=0.04\textwidth]{figure/exp_chordal_4-cycle.pdf}} & \adjustbox{valign=c}{\includegraphics[height=0.04\textwidth]{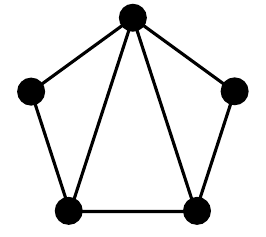}} & \adjustbox{valign=c}{\includegraphics[height=0.04\textwidth]{figure/exp_3-cycle.pdf}} & \adjustbox{valign=c}{\includegraphics[height=0.04\textwidth]{figure/exp_4-cycle.pdf}} & \adjustbox{valign=c}{\includegraphics[height=0.04\textwidth]{figure/exp_5-cycle.pdf}} & \adjustbox{valign=c}{\includegraphics[height=0.04\textwidth]{figure/exp_6-cycle.pdf}} & \adjustbox{valign=c}{\includegraphics[height=0.04\textwidth]{figure/exp_chordal_4-cycle.pdf}} & \adjustbox{valign=c}{\includegraphics[height=0.04\textwidth]{figure/exp_chordal_5-cycle.pdf}} & \adjustbox{valign=c}{\includegraphics[height=0.04\textwidth]{figure/exp_3-cycle.pdf}} & \adjustbox{valign=c}{\includegraphics[height=0.04\textwidth]{figure/exp_4-cycle.pdf}} & \adjustbox{valign=c}{\includegraphics[height=0.04\textwidth]{figure/exp_5-cycle.pdf}} & \adjustbox{valign=c}{\includegraphics[height=0.04\textwidth]{figure/exp_6-cycle.pdf}} & \adjustbox{valign=c}{\includegraphics[height=0.04\textwidth]{figure/exp_chordal_4-cycle.pdf}} & \adjustbox{valign=c}{\includegraphics[height=0.04\textwidth]{figure/exp_chordal_5-cycle.pdf}} \\ \Xhline{0.75pt}
MPNN         & \cellcolor{red!20} $.358$ & \cellcolor{red!20} $.208$ & \cellcolor{red!20} $.188$ & \cellcolor{red!20} $.146$ & \cellcolor{red!20} $.261$ & \cellcolor{red!20} $.205$ & \cellcolor{red!20} $.600$ & \cellcolor{red!20} $.413$ & \cellcolor{red!20} $.300$ & \cellcolor{red!20} $.207$ & \cellcolor{red!20} $.318$ & \cellcolor{red!20} $.237$ & - & - & - & - & - & - \\
Subgraph GNN & \cellcolor{cyan!20} $.010$ & \cellcolor{cyan!20} $.020$ & \cellcolor{cyan!20} $.024$ & \cellcolor{green!20} $.046$ & \cellcolor{cyan!20} $.007$ & \cellcolor{green!20} $.027$ & \cellcolor{cyan!20} $.003$ & \cellcolor{cyan!20} $.005$ & \cellcolor{orange!20} $.092$ & \cellcolor{orange!20} $.082$ & \cellcolor{orange!20} $.050$ & \cellcolor{orange!20} $.073$ & \cellcolor{cyan!20} $.001$ & \cellcolor{cyan!20} $.003$ &\cellcolor{orange!20} $.090$ & \cellcolor{orange!20} $.096$ & \cellcolor{green!20} $.038$ & \cellcolor{orange!20} $.065$ \\
Local 2-GNN  & \cellcolor{cyan!20} $.008$ & \cellcolor{cyan!20} $.011$ & \cellcolor{cyan!20} $.017$ & \cellcolor{green!20} $.034$ & \cellcolor{cyan!20} $.007$ & \cellcolor{cyan!20} $.016$ & \cellcolor{cyan!20} $.002$ & \cellcolor{cyan!20} $.005$ & \cellcolor{cyan!20} $.010$ & \cellcolor{cyan!20} $.023$ & \cellcolor{cyan!20} $.004$ & \cellcolor{cyan!20} $.015$ & \cellcolor{cyan!20} $.001$ & \cellcolor{cyan!20} $.005$ & \cellcolor{cyan!20} $.010$ & \cellcolor{cyan!20} $.019$ & \cellcolor{cyan!20} $.005$ & \cellcolor{cyan!20} $.014$ \\
Local 2-FGNN & \cellcolor{cyan!20} $.003$ & \cellcolor{cyan!20} $.004$ & \cellcolor{cyan!20} $.010$ & \cellcolor{cyan!20} $.020$ & \cellcolor{cyan!20} $.003$ & \cellcolor{cyan!20} $.010$ & \cellcolor{cyan!20} $.004$ & \cellcolor{cyan!20} $.006$ & \cellcolor{cyan!20} $.012$ & \cellcolor{cyan!20} $.021$ & \cellcolor{cyan!20} $.004$ & \cellcolor{cyan!20} $.014$ & \cellcolor{cyan!20} $.003$ & \cellcolor{cyan!20} $.006$ & \cellcolor{cyan!20} $.012$ & \cellcolor{cyan!20} $.022$ & \cellcolor{cyan!20} $.005$ & \cellcolor{cyan!20} $.012$ \\ \Xhline{1pt}
\end{tabular}
}
\end{table}

\section{Conclusion}
In this paper, we present a new framework for systematically and quantitatively studying the expressive power of various GNN architectures. Through the lens of homomorphism expressivity, we give exact descriptions of the graph family each model can encode in terms of homomorphism counting. Our framework stands as a valuable toolbox to unify the landscape between different subareas in the GNN community, providing deep insights into a number of prior works and answering their open problems. In particular, one can establish a complete expressiveness hierarchy between models, determine the subgraph counting capabilities of GNNs at graph/node/edge-level, and understand their polynomial expressivity. On the theoretical side, our results establish deep connections with a series of fundamental topics in graph theory (see \cref{sec:discussion}); On the practical side, these results closely correlate with the empirical performance of GNN models, as demonstrated through extensive experiments. Finally, \cref{sec:open_problems} outlines several open directions for further exploration, and we believe that the homomorphism expressivity framework paves a fresh way for future study of more expressive GNNs.

\bibliography{iclr2024_conference}
\bibliographystyle{iclr2024_conference}

\newpage

\appendix
\renewcommand \thepart{} 
\renewcommand \partname{}
\part{Appendix} 
\parttoc 
\newpage

\section{More Related Work}
\subsection{Expressive Graph Neural Networks}
\label{sec:related_work}

Since \citet{morris2019weisfeiler,xu2019powerful} discovered the limited expressive power of MPNNs in distinguishing non-isomorphic graphs, a large amount of work has been devoted to developing GNNs with better expressiveness. Below, we briefly review representative approaches in this area. For a comprehensive survey on expressive GNNs, we refer readers to \citet{morris2023weisfeiler}.

\textbf{Higher-order GNNs}. Inspired by the relation between MPNNs and the 1-WL test, a natural approach to designing provably more expressive GNNs is to mimic the higher-order WL tests. This gives rise to two fundamental types of higher-order GNNs. One type of GNNs mimics the $k$-WL test \citep{grohe2017descriptive}, and representative architectures include $k$-GNN \citep{morris2019weisfeiler} and $k$-IGN \citep{maron2019invariant,maron2019universality}; the other type of GNNs mimics the $k$-FWL (Folklore WL) test \citep{cai1992optimal} and is referred to as the the $k$-FGNN \citep{maron2019provably}. \citet{azizian2021expressive,geerts2022expressiveness} proved that each of these architectures is exactly as expressive as the corresponding higher-order WL/FWL test. Therefore, the expressiveness grows strictly as the order $k$ increases; when $k$ approaches infinity, they can universally approximate any continuous graph functions \citep{chen2019equivalence,keriven2019universal}. However, due to the inherent computation/memory complexity, these architectures are generally not practical in real-world applications.

\textbf{Local GNNs}. To improve computational efficiency, a subsequent line of work seeks to develop more scalable and practical GNN architectures by taking into account the local/sparse nature of graphs. Locality/sparsity is also an important inductive bias for graphs but is not well-exploited in higher-order GNNs, since their layer aggregation is inherently \emph{global} and the graph adjacency information is only encoded in initial node features. To address these shortcomings, \citet{morris2020weisfeiler} proposed the Local $k$-GNN (and several variants) as a replacement of $k$-GNN, which directly incorporates graph adjacency into network layers and only aggregates neighboring information instead of the global one. The authors further proved that Local $k$-GNN is strictly more expressive than $k$-GNN. Building upon Local $k$-GNN, \citet{morris2022speqnets} proposed the $(k, s)$-SpeqNet that further reduces the computational cost by considering a subset of $k$-tuples whose vertices can be grouped into no more than $s$ connected components. A similar idea appeared in \citet{zhao2022practical}, in which the authors proposed the $(k,s)$-SetGNN by considering $k$-sets instead of $k$-tuples. Besides Local $k$-GNN, recent architectures proposed in \citet{frasca2022understanding} and \citet{zhang2023complete} can be analogously understood as Local $k$-IGN and Local $k$-FGNN, respectively. Very recently, \citet{feng2023towards,zhou2023distance} generalized the Local $k$-FGNN to a broad class of Folklore-type GNNs and achieved good performance on several benchmark datasets.

\textbf{Subgraph GNNs}. Graphs that are indistinguishable by WL tests typically possess a high degree of symmetry. In light of this observation, Subgraph GNNs have recently emerged as a compelling approach to designing expressive GNNs. The basic idea is to break symmetry by transforming the original graph into a collection of slightly modified subgraphs and feeding these subgraphs into a GNN model. The earliest forms of Subgraph GNNs may track back to \citet{cotta2021reconstruction,papp2021dropgnn} (albeit with a different motivation), where the authors proposed to feed node-deleted subgraphs into an MPNN. \citet{papp2022theoretical} later argued to use node marking instead of node deletion for better expressive power, resulting in the standard Subgraph GNN studied in this paper. \citet{zhang2021nested,you2021identity} proposed the Nested GNN and Identity-aware GNN, both of which can be treated as variants of Subgraph GNNs that use \emph{ego networks} as subgraphs. In particular, the heterogeneous message passing proposed in \citet{you2021identity} can also be seen as a form of node marking. We note that the model proposed in \citet{vignac2020building} can also be interpreted as a Subgraph GNN. \citet{qian2022ordered} proposed the higher-order Subgraph GNN by marking $k$ nodes per subgraph, resulting in $n^k$ different subgraphs when the original graph has $n$ vertices. We call this architecture Subgraph $k$-GNN in this paper. The authors proved that Subgraph $k$-GNN is strictly bounded by $(k+1)$-FWL and is incomparable to $k$-FWL when $k>1$. \citet{zhou2023relational} further generalized Subgraph $k$-GNN to $(l,k)$-GNN by using $l$-GNN instead of MPNN to process each subgraph. It was proved that $(l,k)$-GNN is bounded by $(k+l)$-GNN for $l\ge 2$.

\looseness=-1 Recently, Subgraph GNNs have been greatly extended to further enable interactions \emph{between} subgraphs. This is achieved by designing cross-subgraph aggregation layers (rather than feeding each subgraph independently into a GNN). \citet{bevilacqua2022equivariant} developed the Equivariant Subgraph Aggregation Network that introduces a global aggregation between subgraphs. A similar design is proposed in the concurrent work of \citet{zhao2022stars}. These architectures were later proved to strictly improve the expressivity of the original Subgraph GNNs \citep{zhang2023rethinking,zhang2023complete}. \citet{frasca2022understanding} built a general design space of Subgraph GNNs that unifies prior work and showed that all models in this space are bounded by a variant of 2-IGN (dubbed the Local 2-IGN in this paper), which is then bounded by 2-FWL. \citet{zhang2023complete} later proved that Local 2-IGN is as expressive as Local 2-GNN and strictly less expressive than 2-FWL (2-FGNN). In this paper, we still use the term ``Subgraph GNN'' to refer to the original architecture in the previous paragraph, while using ``Local 2-GNN'' to refer to the general architecture in \citet{frasca2022understanding,zhang2023complete}.

\textbf{Substructure-based GNNs}. Another line of work sought to develop expressive GNNs from practical considerations. In particular, \citet{chen2020can} pointed out that the ability of GNNs to detect/count graph substructures like path, cycle, and clique is crucial in numerous applications. Yet, MPNNs cannot subgraph-count any cycles/cliques. While higher-order WL tests can be more powerful in counting cycles \citep{furer2017combinatorial,arvind2020weisfeiler}, they suffer from substantial computational cost. As such, several works proposed to directly incorporate substructure counting into the node features as a preprocessing step to boost the expressiveness of MPNNs \citep{bouritsas2022improving,barcelo2021graph}. Going beyond node features, \citet{bodnar2021topological,bodnar2021cellular,giusti2023cin++} further proposed a message-passing framework that enables interaction between nodes, edges, and higher-order substructures like cycles and cliques. We note that the Autobahn, TOGL, and Cy2C-GNN proposed in \citet{thiede2021autobahn,horn2022topological,choi2022cycle} can also be viewed as Substructure-based GNNs. However, most of the above approaches consider a fixed, predefined set of substructures rather than designing generic architectures that can \emph{learn} substructures in an end-to-end fashion. Recently, \citet{tahmasebi2023power} proposed the RNP-GNN, an architecture that can count any substructure by recursively splitting a graph into a collection of vertex-marked subgraphs. We note that this design shares interesting similarities to higher-order subgraph GNNs and also the SpeqNet \citep{morris2022speqnets}. \citet{huang2023boosting} proposed a generic model called I$^2$-GNN based on a variant of Subgraph 2-GNN, which can count 6-cycle at node-level. In this paper, we show the Local 2-GNN can already count all cycles/paths within 7 nodes even at edge-level while being more efficient than I$^2$-GNN (when using a similar ego network design). Finally, we remark that the polynomial expressivity proposed in \citet{puny2023equivariant} can also be seen as a generalization of substructure counting, which further takes into account the real-valued node/edge features. 

\textbf{Distance-based GNNs}. Besides structural information, distance serves as another fundamental attribute of a graph, which, again, is not captured by MPNNs and the 1-WL test. \citet{li2020distance} first proposed to improve the expressive power of GNNs by augmenting node features with Distance Encoding (DE). Related to DE, another approach to injecting distance information is the $k$-hop MPNN, which aggregates $k$-hop neighbors in a message-passing layer \citep{feng2022powerful,abboud2022shortest,wang2023mathscr}. \citet{feng2022powerful,zhang2023complete} proved that the expressive power of $k$-hop MPNN is strictly bounded by 2-FWL. Distance can also be naturally incorporated in Graph Transformers through \emph{relative positional encoding}, yielding the Graphormer architecture that has achieved remarkable performance across various benchmarks \citep{ying2021transformers}. Recently, \citet{zhang2023rethinking} built an interesting connection between distance and \emph{biconnectivity} properties, showing that distance-enhanced GNNs can detect cut vertices and cut edges of a graph. This provides insights into the practical superiority of these models as biconnectivity is closely linked to real applications in chemistry and social network analysis. \citet{zhang2023complete} proved that Local 2-GNN can provably encode the distance (and thus biconnectivity) of a graph.

\textbf{Spectral-based GNNs}. Graph spectra are also a class of fundamental properties and have long been used to design GNN models \citep{bruna2014spectral,defferrard2016convolutional}. \citet{balcilar2021analyzing,balcilar2021breaking} showed that designing GNNs in the spectral domain can easily break the 1-WL expressivity. For Graph Transformers, \citet{kreuzer2021rethinking,dwivedi2020generalization,dwivedi2022graph} proposed to incorporate the spectra of the graph Laplacian matrix to boost the expressive power beyond the 1-WL test. \citet{lim2023sign} further designed a principled equivariant architecture that takes the Laplacian eigenvalues and eigenvectors as inputs, which generalizes prior work.

\textbf{Other approaches}. \citet{murphy2019relational,chen2020can} proposed Relational Pooling as a general approach to designing expressive GNN architectures, whose basic idea is to implement a permutation-invariant GNN by symmetrizing permutation-sensitive base models. \citet{wijesinghe2022new} proposed the GraphSNN, which improves the expressive power of MPNNs by using more distinguishing edge features. Specifically, each edge feature encodes the structure of the \emph{overlap subgraph} of two 1-hop ego networks centered on the two endpoints of the edge. Very recently, \citet{dimitrov2023plane} designed a GNN model that can distinguish all planar graphs, thus achieving a strong expressivity in chemical applications since molecular graphs are often planar.

\subsection{Broader impacts and additional discussions}
\label{sec:discussion}

\textbf{Broader impact in graph theory}. Due to the fundamental nature of GNN architectures studied in this paper, our theoretical results may potentially have broader impacts on the graph theory community. Specifically, we study the color refinement (CR) algorithms corresponding to four types of (higher-order) GNNs: Subgraph $(k\!-\! 1)$-GNN, Local $k$-GNN, Local $k$-FGNN, and $k$-FGNN. All algorithms can be seen as natural extensions of the 1-WL test since \emph{they all reduce to 1-WL when $k=1$}. In the graph theory community, Subgraph GNN has another name known as the \emph{vertex-individualized} CR algorithm, which appears widely in literature \citep{babai2016graph,rattan2023weisfeiler,neuen2018exponential} and has become part of the core algorithm for fast graph isomorphism testing software \citep[e.g.,][]{mckay2014practical}. On the other hand, Local $k$-GNN and Local $k$-FGNN are surprisingly related to the \emph{guarded logic} \citep{barcelo2020logical,de2000note,baader2003description}, since the aggregations are purely local (guarded by the edge). From this perspective, these CR algorithms can be seen as natural extensions of guarded logic in higher-order scenarios.


Besides these CR algorithms, our new extensions of NED may also have implications in graph theory. In particular, the strong NED (as well as its higher-order version) is elegant and may serve as a descriptive tool to characterize certain graph families. In addition, we establish intrinsic connections between NED and tree decomposition, which may have value in understanding other graph topics related to tree decomposition. 

Finally, to our knowledge, the node/edge-level homomorphism and the corresponding subgraph counting abilities of different CR algorithms do not seem to have been systematically investigated before. Whereas in this paper, all graph/node/edge-level expressivity is studied in a unified manner. To achieve this, we introduce additional proof techniques which we believe may facilitate future study in related areas. For example, the original technique for constructing counterexample graphs satisfying \cref{def:homo_expressivity}(b) does not apply to node/edge-level settings, since they are no longer counterexample graphs satisfying \cref{def:node/edge-homo_expressivity}(b) (no matter which vertices $u\in V_G,v\in V_H$ are marked). To address the problem, we propose clique-augmented F{\"u}rer graphs, a novel class of counterexample graphs that extend several prior works \citep[e.g., ][]{cai1992optimal,furer2001weisfeiler}, and conduct a fine-grained analysis of their automorphism property (see \cref{sec:node-edge_counterexample}). We believe this new technique can be used to generalize other results from graph-level to node/edge-level settings.

\textbf{Discussions with \citet{barcelo2021graph}}. In the GNN community, \citet{barcelo2021graph} first proposed to incorporate the homomorphism count of predefined substructures into node features as an approach to enhancing the expressivity of MPNNs. They systematically investigated the questions of what substructures are useful and how the homomorphism information of these substructures can boost the model expressivity to even break out $k$-FWL. Yet, they only gave a \emph{partial} (incomplete) characterization of the substructures that can be counted by the \emph{specific} $\gF$-MPNN architecture and did not answer what substructures \emph{cannot} be encoded, whereas our paper fully addresses both questions for a variety of popular GNN models. Note that these aspects are crucial to ensure that homomorphism expressivity is well-defined. As such, our paper first identifies that homomorphism expressivity is a complete, quantitative expressivity measure to compare different GNN models.

\textbf{Discussions with the concurrent work of \citet{neuen2023homomorphism}}. After the initial submission, we became aware of a concurrent work \citep{neuen2023homomorphism}, which proved that $k$-FWL cannot count any graph with treewidth larger than $k$ under homomorphism. In our context, this result exactly shows that \cref{def:homo_expressivity}(b) is satisfied, and thus the homomorphism expressivity of $k$-FWL is well-defined. Notably, their construction of counterexample graphs is also based on F{\"u}rer graphs. Nevertheless, the proof technique between the two works is quite different: the proof in \citet{neuen2023homomorphism} is built upon a key concept called \emph{oddomorphism}, while our proof is based on the relation between tree decomposition and the simplified pebble game developed in \citet{zhang2023complete}. It is essential to underscore that our results and proof technique are more general and go beyond the standard $k$-FWL, in that $(\mathrm{i})$~it applies to a broad range of color refinement algorithms related to practical GNN architectures and $(\mathrm{ii})$~it further extends to node/edge-level homomorphism expressivity. Our theoretical results thus strictly incorporate the results in \citet{neuen2023homomorphism}. The approach in \citet{neuen2023homomorphism} (based on oddomorphism) cannot be easily generalized to these settings.

\section{Limitations and Open Directions}
\label{sec:open_problems}

There are still several open questions that are not fully explored in this paper. We list them below as promising directions for future study.

\textbf{Existence of homomorphism expressivity for refinement-based GNN architectures}. In this paper, we prove that homomorphism expressivity exists for a wide range of architectures defined in \cref{sec:preliminary}. On the other hand, we also note that it may not be well-defined for certain pathological GNNs, as illustrated in \cref{sec:hom_discussion}. Given this observation, a fundamental question is: what conditions can guarantee that the homomorphism expressivity of a GNN exists? Here, we hypothesize that a very mild condition can suffice. Specifically, we conjecture that as long as a GNN architecture is defined following a general form of color refinement procedure that outputs stable color mappings, its homomorphism expressivity always exists. We leave this conjecture as an important open problem for future study.

\textbf{Regarding higher-order Local FGNN}. This paper characterizes the homomorphism expressivity for three classes of higher-order GNNs: Subgraph $k$-GNN, Local $k$-GNN, and $k$-FGNN. In particular, we introduce the $k$-order ear and $k$-order strong NED as a way to describe the homomorphism expressivity of Local $k$-GNN. However, it remains unclear how to give a simple description of the homomorphism expressivity for Local $k$-FGNN that can generalize the concept of almost-strong NED for $k=2$. As such, our current expressiveness hierarchy (\cref{fig:higher-order_comparison}) does not support Local $k$-FGNN yet. We leave this as an open problem and make the following conjecture below. We note that a similar open question has been informally raised in \citet{zhang2023complete}.

\begin{conjecture}
    For all $k\ge 2$, Local $k$-FGNN is strictly more expressive than Local $k$-GNN and strictly less expressive than $k$-FGNN.
\end{conjecture}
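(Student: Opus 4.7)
The plan is to proceed in parallel with the proofs of \cref{thm:comparison,thm:higher-order_comparison}: establish each of the two non-strict inclusions by a direct simulation argument at the level of color refinement, and then upgrade them to strict separations via a homomorphism-expressivity characterization of Local $k$-FGNN.

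For the non-strict inclusions, observe that the $k$-FGNN update aggregates pairs $(\chi(w,v),\chi(u,w))$ over all $w\in V_G$, a superset of $N_G(u)\cup N_G(v)$; a straightforward induction on iterations then shows that the $k$-FGNN color refines the Local $k$-FGNN color, so $k$-FGNN is at least as expressive as Local $k$-FGNN (giving $\gF^{\mathsf{LF}(k)}\subseteq\gF^{\mathsf{F}(k)}$ once $\gF^{\mathsf{LF}(k)}$ is known to exist). In the other direction, the paired multiset $\ldblbrace(\chi(w,v),\chi(u,w)):w\in N_G(u)\cup N_G(v)\rdblbrace$ used by Local $k$-FGNN refines both separate multisets $\ldblbrace\chi(w,v):w\in N_G(u)\rdblbrace$ and $\ldblbrace\chi(u,w):w\in N_G(v)\rdblbrace$ used by Local $k$-GNN, yielding by the same induction that Local $k$-FGNN is at least as expressive as Local $k$-GNN.

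To upgrade these to strict inclusions, I would extend the NED framework by introducing a notion of \emph{$k$-order almost-strong NED}, relaxing the strict-inclusion requirement on nested $k$-order ``intervals'' precisely when the shared contact with the parent degenerates to a single edge (the natural higher-order analog of the $|I(P_j)|=1$ relaxation in \cref{def:ned}). Following the three-step template of \cref{sec:proof_main_part1,sec:proof_main_part2,sec:proof_main_part3}, I would then prove $\gF^{\mathsf{LF}(k)}=\{F:F\text{ has a }k\text{-order almost-strong NED}\}$ by (i) an unfolding-tree argument showing Local $k$-FGNN homomorphism-counts every such $F$; (ii) an equivalence with a tree-decomposition description analogous to the one used for Local $k$-GNN; and (iii) a F{\"u}rer-gadget construction driven by a pebble game tailored to Local $k$-FGNN, witnessing maximality. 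Both strict inclusions then follow by exhibiting witness patterns: lifting the graph of \cref{fig:ned}(b,3) by attaching $k-2$ pendant paths to its inner ears should give a graph in $\gF^{\mathsf{LF}(k)}\setminus\gF^{\mathsf{L}(k)}$; a similar lift of \cref{fig:ned}(b,4), whose treewidth remains $\le k$, should give a graph in $\gF^{\mathsf{F}(k)}\setminus\gF^{\mathsf{LF}(k)}$. Each such witness yields, via \cref{def:homo_expressivity}(b), a pair of non-isomorphic graphs separating the respective models in the graph-isomorphism sense required by the conjecture.

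The main obstacle is calibrating the definition of $k$-order almost-strong NED correctly. For $k=2$ the nested interval is a subpath and the relaxation ``$|I|=1$'' is canonical; for $k\ge 3$ the nested object is itself a $k$-order substructure (\cref{fig:higher-order_ear}(b)), and several a priori plausible weakenings exist. Too permissive a definition breaks the correspondence with the Local $k$-FGNN pebble game (so a purported graph in $\gF^{\mathsf{LF}(k)}$ is in fact not counted); too restrictive a definition misses a pattern Local $k$-FGNN actually counts. Pinning down the exact boundary, and verifying it through both the unfolding-tree analysis (necessity) and the F{\"u}rer gadget (maximality), is where I expect the bulk of the technical work to lie; as a byproduct, this also settles the existence of $\gF^{\mathsf{LF}(k)}$ left open in \cref{sec:open_problems}.
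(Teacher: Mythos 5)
This statement is labeled \texttt{Conjecture} in the paper, and the surrounding text (Appendix~B, ``Regarding higher-order Local FGNN'') explicitly poses it as an open problem, noting that the authors could not find a description of $\gF^{\mathsf{LF}(k)}$ that generalizes almost-strong NED beyond $k=2$. So there is no reference proof to compare against; your write-up can only be assessed on its internal merits as an attack on an open question, and read that way it is a research plan rather than a proof.

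Your first paragraph, establishing the two non-strict inclusions by simulation at the color-refinement level, is correct and essentially standard: the $k$-FGNN aggregation ranges over a superset of the indices used by Local $k$-FGNN and, because the stable colors refine atomic types, one can recover the local neighborhood membership of each $w$ from its color, so $k$-FGNN refines Local $k$-FGNN; dually, the Local $k$-FGNN paired multiset projects onto each of the $k$ coordinate multisets used by Local $k$-GNN after filtering by the (color-recoverable) adjacency of $w$ to $u_j$. These are fine but are not where the content of the conjecture lies.

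The strictness part has a concrete flaw even before reaching the definitional difficulty you flag. You propose witnesses obtained by attaching $k-2$ pendant paths to the $k=2$ witness graphs from \cref{fig:ned}(b,3) and (b,4). This cannot work: by \cref{thm:higher-order_comparison}(d) in the paper, $\gF^{\mathsf{F}(k)}\subsetneq\gF^{\mathsf{L}(k+1)}$, so all of $\gF^\mathsf{LF}=\gF^{\mathsf{LF}(2)}$ and $\gF^\mathsf{F}=\gF^{\mathsf{F}(2)}$ are already contained in $\gF^{\mathsf{L}(3)}$. In particular the chordal $6$-cycle and the other $k=2$ separating graphs are already in $\gF^{\mathsf{L}(3)}\subseteq\gF^{\mathsf{L}(k)}$ for $k\ge 3$, and attaching pendant paths (which have treewidth $1$ and can always be absorbed into any NED or tree decomposition) does not remove them from $\gF^{\mathsf{L}(k)}$. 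Witnesses for higher $k$ must have an intrinsically $k$-dimensional core that grows with $k$, in the spirit of the $(k+1)$-clique union or grid constructions used in \cref{sec:higher-order_expressivity_gap}; ``lifting'' the $k=2$ examples is structurally wrong.

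Beyond that, the plan correctly identifies what remains: a workable definition of $k$-order almost-strong NED that threads through all three proof phases (unfolding tree $\Rightarrow$ tree-decomposition class $\Rightarrow$ pebble game against a F\"urer gadget). This is precisely the obstacle the authors name when they leave the conjecture open, and your proposal does not resolve it; your own last paragraph says as much. One remark that may be useful to you: strict separation does not require the full characterization of $\gF^{\mathsf{LF}(k)}$. It suffices to exhibit, via the pebble-game / F\"urer machinery of \cref{sec:proof_main_part3} and \cref{sec:higher-order_proof}, explicit graphs $F_1$, $F_2$ such that Alice wins the Local $k$-FGNN game but not the Local $k$-GNN game on $F_1$, and wins the $k$-FGNN game but not the Local $k$-FGNN game on $F_2$. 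That is a potentially lighter route than first pinning down the NED class, though engineering such $F_1,F_2$ for general $k$ is itself nontrivial.
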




\textbf{Expressivity gap between Local 2-(F)GNN and 2-FGNN in practical aspects}. We have proved that 2-FGNN is strictly more expressive than Local 2-(F)GNN. However, from a practical perspective, we surprisingly find that the subgraph counting ability of Local 2-(F)GNN \emph{matches} that of 2-FGNN for all structures within a moderate size (see \cref{tab:subgraph_count_number}), although the former is much more efficient. This leads to the intriguing question of what fundamental gaps exist between the two models in practical aspects, or is the efficiency gain free?

\textbf{Other architectures}. In this paper, we comprehensively study a variety of popular GNN architectures ranging from Subgraph GNNs and Local GNNs to higher-order GNNs, and further link these architectures to Substructure-based GNNs (see \cref{sec:related_work}). Yet, we still do not cover all popular GNN architectures, such as the GSWL-based Subgraph GNN \citep{zhang2023complete,bevilacqua2022equivariant}, SpeqNet \citep{morris2022speqnets}, and I$^2$-GNN \citep{huang2023boosting}. Moreover, the classes of Distance-based GNNs and Spectral-based GNNs (\cref{sec:related_work}) are also widely used in practice, which deserve future study. We would like to raise the question of characterizing the homomorphism expressivity of Distance-based GNNs and Spectral-based GNNs as an important open question. In this way, one can gain deep insights into these models' true expressivity and enable quantitative comparisons between all mainstream architectures. Moreover, it will become clear to what extent other GNN models can encode distance and spectral information about a graph.

\section{Proof of \cref{thm:main}}
\label{sec:proof_main}

This section gives the proof of the main theorem. For ease of reading, we first restate \cref{thm:main}:

\textbf{\cref{thm:main}.}\emph{
    For all GNN models $M$ defined in \cref{sec:preliminary}, the graph family $\gF^M$ satisfying \cref{def:homo_expressivity} exists (and is unique). Moreover, each $\gF^M$ can be separately described below:
    \begin{itemize}[topsep=0pt,leftmargin=25pt]
        \item \textbf{Subgraph GNN}: $\gF^\mathsf{Sub}=\{F:F\text{ has an endpoint-shared NED}\}$;
        \item \textbf{Local 2-GNN}: $\gF^\mathsf{L}=\{F:F\text{ has a strong NED}\}$;
        \item \textbf{Local 2-FGNN}: $\gF^\mathsf{LF}=\{F:F\text{ has an almost-strong NED}\}$;
        \item \textbf{2-FGNN}: $\gF^\mathsf{F}=\{F:F\text{ has a NED}\}$.
    \end{itemize}
}

For MPNN, since it is a special case of Subgraph $k$-GNN, the proof can be found in \cref{sec:higher-order_proof}.

\subsection{Preliminary}

\textbf{Additional notations and concepts}. Besides the notations defined in \cref{sec:preliminary}, we further define the following notations. We use the symbol $\gG$ to denote the set of all finite, simple, undirected, labeled graphs. Let $G=(V_G,E_G,\ell_G)$ be a graph in $\gG$. When the label is the same for all vertices, we can omit the term $\ell_G$ and write $G=(V_G,E_G)$. Given vertex $u\in V_G$, denote the degree of $u$ as $\deg_G(u)=|N_G(u)|$, and denote the closed neighborhood of $u$ as $N_G[u]=N_G(u)\cup\{u\}$. The shortest path distance between vertices $u$ and $v$ is denoted by $\dis_G(u,v)$. 

Given a vertex set $S\subset V_G$, the \emph{induced subgraph} of $G$ over $S$, denoted as $G[S]$, is the subgraph of $G$ with vertex set $S$ and edge set $\{\{u,v\}\in E_G:u,v\in S\}$. Similarly, without abuse of notation, given an edge set $R\subset E_G$, the \emph{induced subgraph} of $G$ over $R$, denoted as $G[R]$, is the subgraph of $G$ with vertex set $\bigcup_{\{u,v\}\in R}\{u,v\}$ and edge set $R$. Given a vertex tuple $\vu=(u_1,\cdots,u_k)$, denote by $G^\vu=G^{u_1,\cdots,u_k}$ the rooted graph obtained from $G$ by marking vertices $u_1,\cdots,u_k$. The atomic type of $G$ over $\vu$, denoted by $\atp_G(\vu)$, is a $k\times k$ matrix where the element at position $(i,j)$ is the tuple $(\mathbb I[u_i=u_j],\mathbb I[\{u_i,u_j\}\in E_G])$. Given two graphs $G,H$, the graph union $G\cup H$ is defined to be the graph $(V_G\cup V_H,E_G\cup E_H,\ell_{G\cup H})$, where $\ell_{G\cup H}(u):=\ell_G(u)$ for all $u\in V_G$ and $\ell_{G\cup H}(u):=\ell_H(u)$ for all $u\in V_H$. It is well-defined iff $\ell_G(u)=\ell_H(u)$ for all $u\in V_G\cap V_H$. Finally, we use the notation $G\simeq H$ to denote that $G$ and $H$ are isomorphic graphs.

A graph $T=(V_T,E_T,\ell_T)$ is called a tree if it does not contain cycles. Let $T^r$ be a rooted tree where $r$ is the root vertex. For each vertex $t\in V_{T}$, define its depth $\dep_{T^r}(t):= \dis_T(t,r)$ to be the distance to the root, and denote by $\Desc_{T^r}(t)$ the set of descendants of $t$, namely, $s\in\Desc_{T^r}(t)$ iff $\dep_{T^r}(s)=\dep_{T^r}(t)+\dis_T(t,s)$. For each $t\in V_T\backslash\{r\}$, denote by $\pa_{T^r}(t)$ the parent vertex of $t$, i.e., the unique vertex $s\in N_T(t)$ satisfying $\dep_{T^r}(t)=\dep_{T^r}(s)+1$. Define the subtree of $T^r$ rooted at node $t$ by $T^r[t]$, which is exactly the induced subgraph $T[\Desc_{T^r}(t)]^t$ with root $t$.


\textbf{Tree decomposition}. Our proof is based on a central concept in graph theory, called tree decomposition. It can be formally defined below:

\begin{definition}[Tree decomposition]
\label{def:tree_decomposition}
    Given a graph $G=(V_G,E_G,\ell_G)$, its tree decomposition is a tree $T=(V_T,E_T,\beta_T)$, where the label function $\beta_T:V_T\to 2^{V_G}$ satisfies the following conditions:
    \begin{enumerate}[label=\alph*),topsep=0pt,leftmargin=25pt]
        \setlength{\itemsep}{-2pt}
        \item Each tree node $t\in V_T$ is associated to a non-empty subset of vertices $\beta_T(t)\subset V_G$ in $G$, called a \emph{bag}. We say tree node $t$ \emph{contains} vertex $u$ if $u\in\beta_T(t)$;
        \item For each edge $\{u,v\}\in V_G$, there exists at least one tree node $t\in V_T$ that contains the edge, i.e., $\{u,v\}\subset \beta_T(t)$;
        \item For each vertex $u\in V_G$, all tree nodes $t$ containing $u$ form a (non-empty) \emph{connected} subtree. Formally, denoting $B_T(u)=\{t\in V_T:u\in\beta_T(t)\}$, then $T[B_T(u)]$ is connected.
    \end{enumerate}
    If $T$ is a tree decomposition of $G$, we call the pair $(G,T)$ a tree-decomposed graph.
\end{definition}

We remark that given a graph $G$, there are multiple ways to decompose it and thus its tree decomposition is not unique. Several examples of tree decomposition is given in \cref{fig:tree_decomposition}. 

\begin{definition}[Treewidth]
\label{def:treewidth}
    The width of a tree decomposition is defined as one less than the maximum bag size, i.e., $\max_{t\in T} |\beta_T(t)|-1$. The \emph{treewidth} of a graph $G$, denoted as $\tw(G)$, is the minimum positive integer $k$ such that there exists a tree decomposition of width $k$.
\end{definition}

Some important facts about treewidth are listed below:
\begin{fact}
\label{fact:treewidth}
    For any graph $G$, the following hold:
    \begin{itemize}[topsep=0pt,leftmargin=25pt]
        \setlength{\itemsep}{-2pt}
        \item The treewidth of $G$ is at most $|V_G|-1$, i.e., a trivial tree decomposition that only has one node $t$ and $\beta_T(t)=V_G$.
        \item $\tw(G)=|V_G|-1$ \emph{iff} $G$ is a clique.
        \item $\tw(G)=1$ \emph{iff} $G$ is a forest.
    \end{itemize}
\end{fact}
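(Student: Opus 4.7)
The plan is to address each of the three items separately, using classical tools: the Helly property of subtrees in a tree for the clique characterization, explicit constructions for the easy directions, and an induction on $|V_G|$ for the forest direction. For $\tw(G) \leq |V_G|-1$ it suffices to exhibit the trivial decomposition with a single tree node $t$ and $\beta_T(t) = V_G$: all three conditions of \cref{def:tree_decomposition} hold trivially, and the width is $|V_G|-1$.

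For the clique characterization, suppose first that $G$ is a clique and take any tree decomposition $T$. For every pair $u,v \in V_G$ the edge $\{u,v\}$ lies in some bag, so the subtrees $B_T(u)$ and $B_T(v)$ intersect. By the Helly property of subtrees of a tree (pairwise intersecting subtrees share a common node), all the $B_T(u)$ meet at a single node $t^\star$, whose bag must contain every vertex of $G$; hence the width is at least $|V_G|-1$, and combined with item~(a) we obtain equality. For the converse I would argue the contrapositive: if $G$ is not a clique, pick any non-edge $\{u,v\}$ and build a two-node decomposition with $\beta_T(t_1) = V_G \setminus \{u\}$, $\beta_T(t_2) = V_G \setminus \{v\}$ and a single tree edge between them. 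Every edge of $G$ lies in at least one bag (only $\{u,v\}$ would be missing, and it is a non-edge), every vertex occupies a connected subtree, and the width is $|V_G|-2$.

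For the forest characterization I assume $G$ has at least one edge (the edgeless case yields $\tw(G) = 0$). If $G$ is a forest, root each component at some vertex $r$, create a singleton bag $\beta_T(t_r) = \{r\}$ and edge-bags $\beta_T(t_v) = \{v, \pa(v)\}$ for every non-root $v$, connect $t_v$ to $t_{\pa(v)}$, and link per-component decompositions by arbitrary tree edges. This yields a valid width-$1$ decomposition because the tree structure of $G$ mirrors directly into the tree of bags. Conversely, I would prove by induction on $|V_G|$ that $\tw(G) \leq 1$ forces $G$ to be a forest. Take a width-$\leq 1$ decomposition and a leaf bag $t$ with parent $s$. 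If $\beta_T(t) \subseteq \beta_T(s)$, then $t$ is redundant and can be removed. Otherwise some $v \in \beta_T(t) \setminus \beta_T(s)$ lies exclusively in $\beta_T(t)$ by the subtree property applied at a leaf, so every edge incident to $v$ must sit inside $\beta_T(t)$; since $|\beta_T(t)| \leq 2$, this forces $\deg_G(v) \leq 1$. Deleting $v$ from $G$ and from every bag produces $G'$ with $\tw(G') \leq 1$; by induction $G'$ is a forest, and reattaching the pendant or isolated vertex $v$ preserves acyclicity.

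The main obstacle is the last implication. The first two items fall out of the Helly property plus a single explicit construction, whereas the forest direction demands care in the induction: one must justify that stripping a leaf bag or a pendant vertex preserves both the validity and the width of the decomposition, and handle the bookkeeping of bags that become empty after vertex deletion (by merging the neighbors of the now-empty tree node). An alternative route would be to invoke the chordal-completion characterization of treewidth (a graph has treewidth at most $k$ iff it is a subgraph of a chordal graph with clique number at most $k+1$); specializing to $k = 1$ forces a triangle-free chordal completion, hence acyclic, but this imports heavier machinery rather than truly simplifying the argument.
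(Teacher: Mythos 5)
The paper does not prove \cref{fact:treewidth}; it is stated without argument as a standard graph-theoretic fact. Your proposal is a correct proof and follows the classical route: the one-node decomposition for the upper bound; the Helly property of subtrees of a tree (pairwise intersecting subtrees of a tree share a common node) for the ``clique $\Rightarrow$ large bag'' direction, paired with the two-bag decomposition $V_G\setminus\{u\}$, $V_G\setminus\{v\}$ across a non-edge for the converse; and a leaf-peeling induction for the forest direction, where the key observation is that a vertex appearing only in a leaf bag of size at most two has degree at most one in $G$. Your handling of the bookkeeping (deleting redundant leaves with $\beta_T(t)\subseteq\beta_T(s)$, then removing any newly empty leaf node) is exactly what makes the induction go through. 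One small caveat you correctly flag: as literally written, the third bullet fails for an edgeless graph on at least one vertex, which has treewidth $0$; the intended reading is $\tw(G)\leq 1$ iff $G$ is a forest, or equivalently $\tw(G)=1$ iff $G$ is a forest with at least one edge. The chordal-completion detour you mention would also work but, as you say, imports heavier machinery for no real gain here.
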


The above definition of tree decomposition is quite flexible without constraints on the structure of the tree or the size of each bag. Below, we define several \emph{restricted} variants of tree decomposition, which (we will later see) are closely related to the GNN architectures studied in this paper. To begin with, we first define a general concept that slight modifies the original definition (\cref{def:tree_decomposition}) such that the tree becomes \emph{rooted} and each bag is a \emph{multiset} of vertices rather than a set.

\begin{figure}[t]
\vspace{-10pt}
    \centering
    \setlength{\tabcolsep}{4pt}
    \begin{tabular}{ccccc}
        \includegraphics[height=0.14\textwidth]{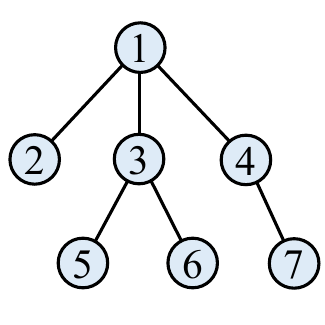} & \includegraphics[height=0.14\textwidth]{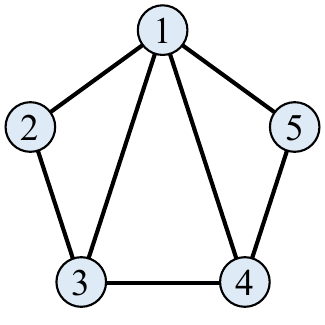} & \includegraphics[height=0.14\textwidth]{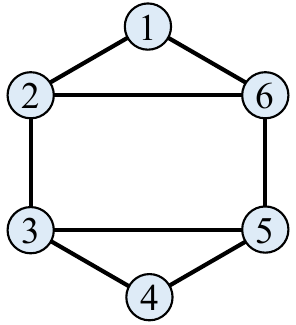} & \includegraphics[height=0.14\textwidth]{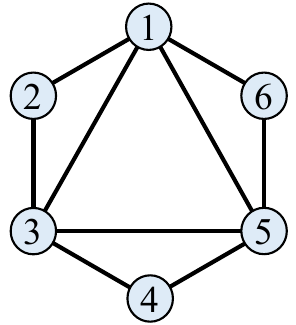} & \includegraphics[height=0.14\textwidth]{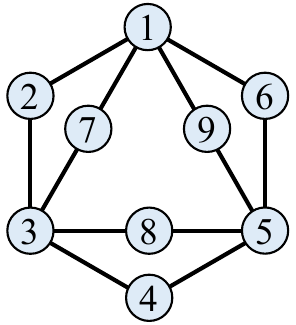} \\
        \includegraphics[height=0.21\textwidth]{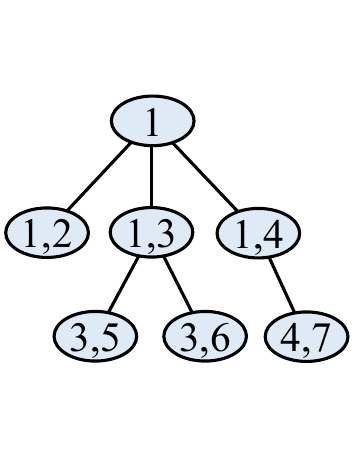} & \includegraphics[height=0.22\textwidth]{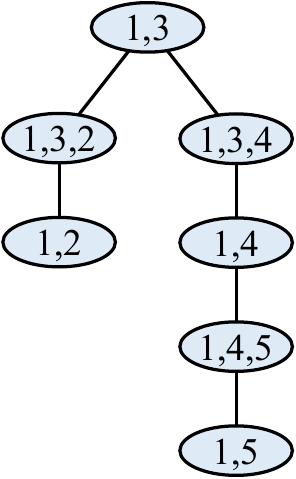} & \includegraphics[height=0.22\textwidth]{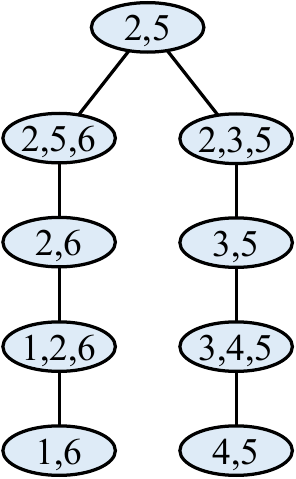} & \includegraphics[height=0.22\textwidth]{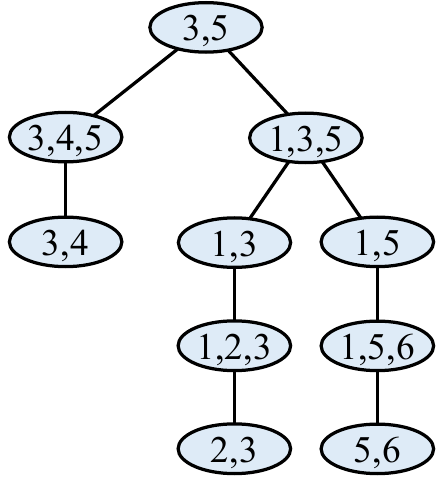} & \includegraphics[height=0.22\textwidth]{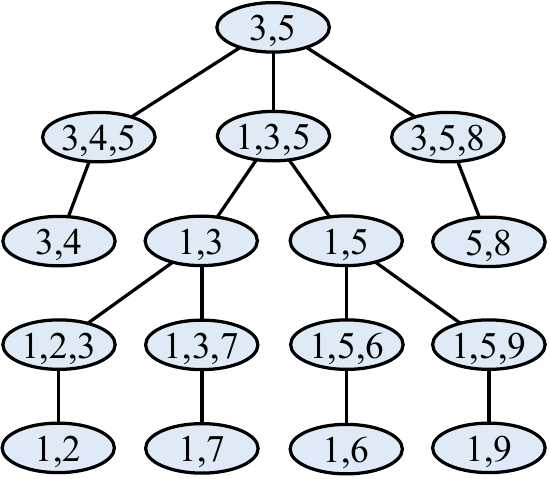} \\
        \begin{tabular}[c]{@{}c@{}}(a) The treewidth\\of a tree is 1.\end{tabular}  & \multicolumn{4}{c}{\begin{tabular}[c]{@{}c@{}}(b) Illustration of the canonical tree decomposition and \cref{def:canonical_tree_decomposition_restrict}. \\The four tree-decomposed graphs are in $\gS^\mathsf{Sub}$, $\gS^\mathsf{L}$, $\gS^\mathsf{LF}$, $\gS^\mathsf{F}$, respectively.\end{tabular}}
    \end{tabular}
    \vspace{-3pt}
    \caption{Illustration of tree decomposition.}
    \label{fig:tree_decomposition}
\end{figure}

\begin{definition}[Canonical tree decomposition]
\label{def:canonical_tree_decomposition}
    Given a graph $G=(V_G,E_G,\ell_G)$, a canonical tree decomposition of width $k$ is a rooted tree $T^r=(V_T,E_T,\beta_T)$ satisfying the following conditions:
    \begin{enumerate}[label=\alph*),topsep=0pt,leftmargin=25pt]
        \setlength{\itemsep}{-2pt}
        \item The depth of $T$ is even, i.e. $\max_{t\in V_T}\dep_{T^r}(t)$ is even;
        \item Each tree node $t\in V_T$ is associated to a multiset of vertices $\beta_T(t)\subset V_G$, called a \emph{bag}. Moreover, $|\beta_T(t)|=k$ if  $\dep_{T^r}(t)$ is \emph{even} and $|\beta_T(t)|=k+1$ if $\dep_{T^r}(t)$ is \emph{odd};
        \item For all tree edges $\{s,t\}\in E_T$ where $\dep_{T^r}(s)$ is even and $\dep_{T^r}(t)$ is odd, $\beta_T(s)\subset\beta_T(t)$ (where ``$\subset$'' denotes the multiset inclusion relation);
        \item The conditions (b) and (c) in \cref{def:tree_decomposition} are satisfied.
    \end{enumerate}
\end{definition}

As examples, one can check that the tree decomposition of all graphs in \cref{fig:tree_decomposition}(b) is canonical, but the tree decomposition in \cref{fig:tree_decomposition}(a) is not. An important observation about canonical tree decomposition is shown below:
\begin{proposition}
\label{thm:tree_disconnected}
    Let $(F,T^r)$ be any tree-decomposed graph where $T^r$ is a canonical tree decomposition of $F$. For any vertices $u,v\in V_F$, either of the following holds:
    \begin{itemize}[topsep=0pt,leftmargin=25pt]
        \setlength{\itemsep}{0pt}
        \item $u$ and $v$ are in the same bag of $T^r$, i.e., there is a node $t\in V_T$ such that $\ldblbrace u,v\rdblbrace\subset \beta_T(t)$;
        \item The induced subgraph $T[B_T(u)\cup B_T(v)]$ is disconnected.
    \end{itemize}
\end{proposition}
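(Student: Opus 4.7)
The plan is to argue by contradiction, using only the standard connectivity property of tree decompositions together with the bag-inclusion axiom of the canonical variant. Suppose $u$ and $v$ are \emph{not} in a common bag of $T^r$; I will show that the induced subgraph $T[B_T(u)\cup B_T(v)]$ must split into at least two connected components.

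First I observe that the assumption implies $B_T(u)\cap B_T(v)=\emptyset$: any tree node $t$ lying in both sets would witness $\{u,v\}\subset\beta_T(t)$. By \cref{def:tree_decomposition}(c) (inherited through \cref{def:canonical_tree_decomposition}(d)), each of $T[B_T(u)]$ and $T[B_T(v)]$ is a nonempty connected subtree of $T$. Thus $T[B_T(u)\cup B_T(v)]$ is already the disjoint union of two connected subforests as vertex sets, and it can fail to be disconnected only if some tree edge $\{x,y\}\in E_T$ has $x\in B_T(u)$ and $y\in B_T(v)$. The whole reduction is to rule out such a ``bridge'' edge.

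Here is where the canonical structure enters. In a rooted tree every edge joins a vertex of even depth to one of odd depth, so name the endpoints of $\{x,y\}$ so that $\dep_{T^r}(x)$ is even and $\dep_{T^r}(y)$ is odd; \cref{def:canonical_tree_decomposition}(c) then gives the multiset inclusion $\beta_T(x)\subset\beta_T(y)$. I split on which side of the hypothetical bridge is the even-depth endpoint. If $x\in B_T(u)$, then $u\in\beta_T(x)\subset\beta_T(y)$, so $y\in B_T(u)$; but $y\in B_T(v)$ too, contradicting $B_T(u)\cap B_T(v)=\emptyset$. If instead $x\in B_T(v)$ (so $y\in B_T(u)$), then $v\in\beta_T(x)\subset\beta_T(y)$ places $y$ into $B_T(v)\cap B_T(u)$, again a contradiction. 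Either way no bridge edge can exist, so $T[B_T(u)\cup B_T(v)]$ is disconnected.

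I do not foresee a real obstacle: the statement is essentially the observation that the ``child bag contains parent bag'' inclusion forces any connected region of $T$ that touches $B_T(u)$ to stay inside $B_T(u)$ whenever it moves from an even-depth bag to its odd-depth neighbor, and symmetrically for $B_T(v)$. The only small subtlety is that bags in \cref{def:canonical_tree_decomposition} are multisets, but the argument only uses membership of the fixed vertices $u$ and $v$, so passing to the underlying sets is harmless.
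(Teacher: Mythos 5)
Your proof is correct, and it takes a cleaner route than the paper's own argument, though both hinge on the same key axiom \cref{def:canonical_tree_decomposition}(c). The paper picks a \emph{minimum-depth} node from each of $B_T(u)$ and $B_T(v)$, notes that the deeper of the two must have odd depth (otherwise the bag inclusion would push $v$ up into its parent and contradict minimality), and then argues by ancestry that no node of $B_T(u)$ can be adjacent in $T$ to any node of $B_T(v)$. You instead note that in a rooted tree every edge pairs an even-depth node with an odd-depth node, so any hypothetical bridge edge $\{x,y\}$ between $B_T(u)$ and $B_T(v)$ already has a distinguished even-depth endpoint $x$ with $\beta_T(x)\subset\beta_T(y)$; whichever of $u,v$ lives in $\beta_T(x)$ is thereby forced into $\beta_T(y)$ as well, giving an immediate intersection contradiction. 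Your version avoids the minimum-depth extremal argument and the ancestry reasoning, reducing the proof to a short local case split on a single tree edge; the only implicit use you make of \cref{def:tree_decomposition}(c) is that $B_T(u)$ and $B_T(v)$ are nonempty connected subtrees, which you state explicitly. Both proofs are sound; yours is the tighter of the two.
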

\begin{proof}
    Assume that $u$ and $v$ are not in the same bag, i.e., $B_T(u)\cap B_T(v)=\emptyset$. Pick $s\in B_T(u)$ and $t\in B_T(v)$ such that $\dep_{T^r}(s)$ and $\dep_{T^r}(t)$ are minimal, respectively. Without loss of generality, assume that $\dep_{T^r}(s)\le \dep_{T^r}(t)$. Then, $t\neq r$ is not the root node and thus we can pick its parent $\pa_{T^r}(t)$. It follows that $\dep_{T^r}(t)$ is odd and $\beta_T(\pa_{T^r}(t))\subset \beta_T(t)$ by definition of canonical tree decomposition. Therefore, $u\notin \beta_T(\pa_{T^r}(t))$. Moreover, by the assumption that $\dep_{T^r}(s)\le \dep_{T^r}(t)$, any node in $B_T(u)$ is not a descendent of $t$. We thus conclude that there does not exist a tree edge such that the two endpoints contain $u$ and $v$, respectively.
\end{proof}

Now we are ready to define several restricted variants of canonical tree decomposition:

\begin{definition}
\label{def:canonical_tree_decomposition_restrict}
    Define four families of tree-decomposed graphs $\gS^\mathsf{Sub}$, $\gS^\mathsf{L}$, $\gS^\mathsf{LF}$, and $\gS^\mathsf{F}$ as follows:
    \begin{enumerate}[label=\alph*),topsep=0pt,leftmargin=25pt]
        \setlength{\itemsep}{0pt}
        \item $(F,T^r)\in\gS^\mathsf{F}$ iff $(F,T^r)$ satisfies \cref{def:canonical_tree_decomposition} with width $k=2$;
        \item $(F,T^r)\in\gS^\mathsf{LF}$ iff $(F,T^r)$ satisfies \cref{def:canonical_tree_decomposition} with width $k=2$, and for any tree node $t$ of odd depth, it has only one child if $w\notin\{v:v\in N_G[u],u\in \beta_T(s)\}$ where $s$ is the parent node of $t$ and $w$ is the unique vertex in $\beta_T(t)\backslash \beta_T(s)$;
        \item $(F,T^r)\in\gS^\mathsf{L}$ iff $(F,T^r)$ satisfies \cref{def:canonical_tree_decomposition} with width $k=2$, and any tree node $t$ of odd depth has only one child;
        \item $(F,T^r)\in\gS^\mathsf{Sub}$ iff $(F,T^r)$ satisfies \cref{def:canonical_tree_decomposition} with width $k=2$, and there exists a vertex $u\in V_G$ such that $u\in\beta_T(t)$ for all $t\in V_T$.
    \end{enumerate}
\end{definition}

Examples of tree-decomposed graphs in the four families are illustrated in \cref{fig:tree_decomposition}(b).

Before closing this subsection, we define several notations for tree-decomposed graphs:
\begin{definition}
\label{def:tree_decompose_induced_subgraph}
    Given canonical tree-decomposed graph $(G,T^r)$ and node $t\in V_T$, denote by $G[T^r[t]]$ the subgraph of $G$ induced by the vertex set $\{u:u\in \beta_{T^r}(s),s\in\Desc_{T^r}(t)\}$.
\end{definition}
\begin{definition}
\label{def:tree_decomposed_iso}
    Given two canonical tree-decomposed graphs $(G,T^r)$ and $(\tilde G,\tilde T^s)$, a pair of mappings $(\rho,\tau)$ is called an isomorphism from $(G,T^r)$ to $(\tilde G,\tilde T^s)$, denoted by $(G,T^r)\simeq(\tilde G,\tilde T^s)$, if the following hold:
    \begin{enumerate}[label=\alph*),topsep=0pt,leftmargin=25pt]
        \setlength{\itemsep}{0pt}
        \item $\rho$ is an isomorphism from $G$ to $\tilde G$;
        \item $\tau$ is an isomorphism from $T^r$ to $\tilde T^s$ (ignoring labels $\beta$);
        \item For any $t\in T^r$, $\rho(\beta_T(t))=\beta_{\tilde T}(\tau(t))$.
    \end{enumerate}
\end{definition}

\textbf{Equivalent formulation of GNN architectures}. Below, we give equivalent definitions for several GNN architectures presented in \cref{sec:preliminary}, which will be used in subsequent analysis. Let $G$ be a graph and $u,v\in V_G$. For all models $M$ including Subgraph GNN, Local 2-GNN, Local 2-FGNN, and 2-FGNN, we define the initial color $\tilde\chi^{M,(0)}_G(u,v)$ to be the isomorphism type of vertex pair $(u,v)$ (i.e., $\atp_G(u,v)$ plus labels of each vertex). Note that this matches the original definition except for Subgraph GNN. Then in each iteration $t$, the color is updated according to the following formula. Here, for clarity, we denote $\tilde\chi^{M,(t)}_G(u,S)=\ldblbrace\tilde\chi^{M,(t)}_G(u,v):v\in S\rdblbrace$ and $\tilde\chi^{M,(t)}_G(S,v)=\ldblbrace\tilde\chi^{M,(t)}_G(u,v):u\in S\rdblbrace$ for any model $M$ and set $S$. 
\begin{itemize}[topsep=0pt,leftmargin=25pt]
    \setlength{\itemsep}{0pt}
    \item Subgraph GNN:
    \begin{equation}
    \label{eq:subgraph_GNN_equal}
        \tilde\chi^{\mathsf{Sub},(t+1)}_G(u,v)=\hash\left(\tilde\chi^{\mathsf{Sub},(t)}_G(u,v),\tilde\chi^{\mathsf{Sub},(t)}_G(u,N_G(v)),\tilde\chi^{\mathsf{Sub},(t)}_G(u,V_G)\right).
    \end{equation}
    \item Local 2-GNN:
    \begin{equation}
    \label{eq:local_GNN_equal}
    \begin{aligned}
        \tilde\chi^{\mathsf{L},(t+1)}_G(u,v)=\hash&\left(\tilde\chi^{\mathsf{L},(t)}_G(u,v),\tilde\chi^{\mathsf{L},(t)}_G(u,N_G(v)),\tilde\chi^{\mathsf{L},(t)}_G(N_G(u),v),\right.\\
        &\quad\left.\tilde\chi^{\mathsf{L},(t)}_G(u,V_G),\tilde\chi^{\mathsf{L},(t)}_G(V_G,v)\right).
    \end{aligned}
    \end{equation}
    \item Local 2-FGNN:
    \begin{equation}
    \label{eq:local_FGNN_equal}
    \begin{aligned}
        \tilde\chi^{\mathsf{LF},(t+1)}_G(u,v)=\hash&\left(\tilde\chi^{\mathsf{LF},
        (t)}_G(u,v),\ldblbrace(\tilde\chi^{\mathsf{LF},(t)}_G(w,v),\tilde\chi^{\mathsf{LF},(t)}_G(u,w))\!:\!w\!\in\!N_G[u]\!\cup\! N_G[v]\rdblbrace,\right.\\
        &\quad\left.\tilde\chi^{\mathsf{LF},(t)}_G(u,V_G),\tilde\chi^{\mathsf{LF},(t)}_G(V_G,v)\right).
    \end{aligned}
    \end{equation}
    \item 2-FGNN:
    \begin{equation}
    \label{eq:FGNN_equal}
    \begin{aligned}
        \tilde\chi^{\mathsf{F},(t+1)}_G(u,v)=\hash&\left(\tilde\chi^{\mathsf{F},(t)}_G(u,v),\ldblbrace(\tilde\chi^{\mathsf{F},(t)}_G(w,v),\tilde\chi^{\mathsf{F},(t)}_G(u,w)):w\in V_G\rdblbrace,\right.\\
        &\quad\left.\tilde\chi^{\mathsf{F},(t)}_G(u,V_G),\tilde\chi^{\mathsf{F},(t)}_G(V_G,v)\right).
    \end{aligned}
    \end{equation}
\end{itemize}
It can be seen that we additionally add global aggregations for these architectures. Moreover, in Local 2-FGNN we replace the neighbors by closed neighbors $N_G[u]\cup N_G[v]$. The stable color of $(u,v)$ for different models is denoted by $\tilde\chi^\mathsf{Sub}_G(u,v)$, $\tilde\chi^\mathsf{L}_G(u,v)$, $\tilde\chi^\mathsf{LF}_G(u,v)$, $\tilde\chi^\mathsf{F}_G(u,v)$, respectively. We have the following result:

\begin{proposition}
\label{thm:global_aggregation}
    Let $M\in\{\mathsf{Sub},\mathsf{L},\mathsf{LF},\mathsf{F}\}$ be any model. For any graphs $G,H$, $\tilde\chi^M_G(G)=\tilde\chi^M_H(H)$ iff $\chi^M_G(G)=\chi^M_H(H)$. Furthermore, if $G$ and $H$ are connected, then for any vertices $u,v\in V_G$ and $x,y\in V_H$, $\tilde\chi^M_G(u,v)=\tilde\chi^M_H(x,y)$ iff $\chi^M_G(u,v)=\chi^M_H(x,y)$. In other words, the color mapping $\tilde\chi^M$ is as fine as the original one $\chi^M$.
\end{proposition}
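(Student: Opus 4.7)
The proposition has two parts, each formulated as an ``iff'': (a) equality of graph representations (for arbitrary graphs) and (b) equality of pair colorings (on connected graphs). The direction ``$\tilde\chi^M$-agreement $\Rightarrow$ $\chi^M$-agreement'' is handled uniformly by induction on the iteration count $t$. The base case is immediate since both colorings are initialized to the isomorphism type $\atp_G(u,v)$. For the inductive step, every argument consumed by the $\chi^M$ update also appears in the $\tilde\chi^M$ update (alongside the additional global-aggregation terms), so equality of $\tilde\chi^{M,(t+1)}$ colors forces equality of $\chi^{M,(t+1)}$ colors. Composing with the shared node and graph hashes yields the ``$\Rightarrow$'' direction of both (a) and (b). It remains to prove the reverse direction in each case.

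\emph{Reverse direction at the graph level.} Assume $\chi^M_G(G)=\chi^M_H(H)$. The central observation is that the extra global-aggregation terms in $\tilde\chi^M$ are row and column multisets of the current coloring, and these are already encoded by the graph-level pooling present in $\chi^M$. Since $\chi^M_G(G)=\chi^M_H(H)$ gives matching multisets of node features, and each node feature $\chi^M_G(u)$ is the hash of the row multiset $\ldblbrace\chi^M_G(u,v):v\in V_G\rdblbrace$, the collections of row multisets across $G$ and $H$ also match. Because each $\chi^{M,(t+1)}$ is a hash whose first argument is $\chi^{M,(t)}$, the stable $\chi^M$ color determines every intermediate $\chi^{M,(t)}$ color; combining this hash-chain property with the matching row/column multisets lets me induct on $t$ to show that the row/column multisets of $\tilde\chi^{M,(t)}$ also match across $G$ and $H$. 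The inductive conclusion yields $\tilde\chi^M_G(G)=\tilde\chi^M_H(H)$.

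\emph{Reverse direction at the pair level on connected graphs.} Assume $G,H$ are connected and $\chi^M_G(u,v)=\chi^M_H(x,y)$. Connectivity guarantees that local aggregation in $\chi^M$ propagates information from every pair to every other pair after finitely many iterations (formalized via an unfolding-tree argument). Consequently, the stable $\chi^M$ pair color at $(u,v)$ already encodes the row/column multisets that $\tilde\chi^M$'s global aggregation would contribute at each step, so running $\tilde\chi^M$ on top does not refine the pair partition. Formally, I bootstrap from the graph-level argument above, using connectivity to synchronize the row/column-multiset matching at the pair level rather than merely at the graph level.

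\emph{Main obstacle.} The principal difficulty is the reverse direction: showing that the extra intermediate global aggregations of $\tilde\chi^M$ do not introduce distinctions beyond those captured by $\chi^M$ at the targeted level. The subtlety is that $\tilde\chi^M$ may stabilize to a strictly finer pair partition on a single graph; the argument must show that any such refinement is reconciled across $G$ and $H$ whenever the two $\chi^M$ outputs agree. The four models ($\mathsf{Sub}, \mathsf{L}, \mathsf{LF}, \mathsf{F}$) can be handled uniformly because the global-aggregation terms have identical form; only the local update rules differ, and these do not affect the row/column-multiset argument.
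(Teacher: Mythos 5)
The paper's proof here is a one-line citation to \citet[Proposition~4.2 and Theorem~4.4]{zhang2023complete}, and it flags exactly three facts being borrowed: (i) the node-marking initialization of Subgraph GNN is as expressive as the isomorphism-type initialization, (ii) global aggregation adds no power once the corresponding local aggregation is present, and (iii) the closed-neighborhood (single-point) aggregation in Local~2-FGNN adds no power. Your attempt at a self-contained proof is welcome, but as written it has concrete gaps at exactly these three points.

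First, your base case claim is false for one of the four models. You assert that ``both colorings are initialized to the isomorphism type $\atp_G(u,v)$,'' but the original Subgraph GNN starts from $\chi^{\mathsf{Sub},(0)}_G(u,v)=(\ell_G(v),\mathbb I[u=v])$, which omits $\ell_G(u)$ and $\mathbb I[\{u,v\}\in E_G]$, while $\tilde\chi^{\mathsf{Sub},(0)}$ takes the full isomorphism type. That discrepancy is the paper's point~(i); it needs its own argument rather than being immediate, and in fact for this model your ``easy'' forward direction is not a termwise comparison of the same update.

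Second, and more seriously, the hard direction is asserted rather than proved. Your ``central observation'' is that the row/column global-aggregation terms are ``already encoded by the graph-level pooling present in $\chi^M$.'' That conflates two different quantities: graph-level pooling gives you the multiset $\ldblbrace\chi^M_G(u):u\in V_G\rdblbrace$, i.e., the multiset of hashes of row multisets over all rows; it does not, for a fixed pair $(u,v)$ with $\chi^M_G(u,v)=\chi^M_H(x,y)$, identify $\ldblbrace\tilde\chi^{M,(t)}_G(u,w):w\in V_G\rdblbrace$ with $\ldblbrace\tilde\chi^{M,(t)}_H(x,w):w\in V_H\rdblbrace$. That identification is precisely the inductive step you need, and writing ``combining this hash-chain property with the matching row/column multisets lets me induct on $t$'' does not supply it: the global aggregation at round $t$ is applied to $\tilde\chi^{M,(t)}$, not to $\chi^{M,(t)}$, so you must already know the $\tilde\chi$ row/column multisets at every intermediate round, which is the thing you are supposedly constructing. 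The pair-level argument has the same circularity: the sentence ``the stable $\chi^M$ pair color at $(u,v)$ already encodes the row/column multisets that $\tilde\chi^M$'s global aggregation would contribute at each step'' is the content of the proposition, not something that follows from connectivity in one line.

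Third, you do not address the change from open to closed neighborhoods for Local~2-FGNN. In $\tilde\chi^{\mathsf{LF}}$ the inner aggregation runs over $w\in N_G[u]\cup N_G[v]$ rather than $N_G(u)\cup N_G(v)$, which adds the contributions for $w\in\{u,v\}$. This is a genuine extra term, not just a relabeled local update, and its harmlessness (the paper's point~(iii)) requires a separate justification that your uniform treatment glosses over.

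In short, you correctly identify the hard direction, but the key equivalence is stated rather than proved, and two model-specific subtleties that the paper explicitly names as load-bearing are either mishandled (Subgraph GNN initialization) or omitted (Local~2-FGNN closed neighborhood).
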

\begin{proof}
    The proof simply follows from \citet[Proposition 4.2 and Theorem 4.4]{zhang2023complete}, because $(\mathrm{i})$~node marking in the initial color of Subgraph GNN is as expressive as using the isomorphism type, $(\mathrm{ii})$~the global aggregation does not improve the expressivity when the corresponding local aggregation is presented, $(\mathrm{iii})$~the single-point aggregation does not improve the expressivity in Local 2-FGNN.
\end{proof}

\subsection{Part 1: tree decomposition}
\label{sec:proof_main_part1}

We first define the \emph{unfolding tree} of different CR algorithms, which is a standard tool in analyzing GNN expressivity.

\begin{definition}[Unfolding tree of Subgraph GNN]
\label{def:Subgraph GNN-tree unfolding}
    Given a graph $G$, vertices $u,v\in V_G$, and a non-negative integer $D$, the depth-$2D$ Subgraph GNN unfolding tree of graph $G$ at $(u,v)$, denoted as $\left(F^{\mathsf{Sub},(D)}_G(u,v),T^{\mathsf{Sub},(D)}_G(u,v)\right)$, is a tree-decomposed graph $(F,T^r)\in\gS^\mathsf{Sub}$ constructed as follows:
    \begin{enumerate}[topsep=0pt,leftmargin=25pt]
    \setlength{\itemsep}{0pt}
    \item \textbf{Initialization}. At the beginning, $F=G[\{u,v\}]$ (if $u=v$, $F$ only has one vertex), and $T$ only has a root node $r$ with $\beta_T(r)=\ldblbrace u,v\rdblbrace$. Define a mapping $\pi:V_F\to V_G$ as $\pi(u)=u$ and $\pi(v)=v$.
    \item \textbf{Loop for $D$ rounds}. For each leaf node $t$ in $T^r$, do the following procedure:\\
    Let $\beta_T(t)=\ldblbrace u,x\rdblbrace$. For each $w\in V_G$, add a fresh child node $t_w$ to $T^r$ and designate $t$ as its parent. Then, consider the following three cases:
    \begin{enumerate}[label=\alph*),topsep=0pt,leftmargin=25pt]
        \setlength{\itemsep}{0pt}
        \item If $w\neq\pi(u)$ and $w\neq\pi(x)$, then add a fresh vertex $z$ to $F$ and extend $\pi$ with $\pi(z)=w$. The label of $z$ in $F$ is set by $\ell_F(z)=\ell_G(w)$. Define $\beta_T(t_w)=\beta_T(t)\cup\ldblbrace z\rdblbrace$. Then, we add edges between $z$ and $\beta_T(t)$, so that $\pi$ is an isomorphism from $F[\beta_T(t_w)]$ to $G[\pi(\beta_T(t_w))]$.
        \item If $w=\pi(u)$, then we simply set $\beta_T(t_w)=\beta_T(t)\cup\ldblbrace u\rdblbrace$ without modifying graph $F$.
        \item If $w=\pi(x)$, then we simply set $\beta_T(t_w)=\beta_T(t)\cup\ldblbrace x\rdblbrace$ without modifying graph $F$.
    \end{enumerate}
    Finally, add a fresh child node $t_w^{\prime}$ to $T^r$, designate $t_w$ as its parent, and set $\beta_T(t_w^{\prime})$ based on the following three cases:
    \begin{enumerate}[label=\alph*),topsep=0pt,leftmargin=25pt]
        \setlength{\itemsep}{0pt}
        \item If $w\neq\pi(u)$ and $w\neq\pi(x)$, then $\beta_T(t_w^{\prime})=\ldblbrace u,z\rdblbrace$.
        \item If $w=\pi(u)$, then $\beta_T(t_w^{\prime})=\ldblbrace u,u\rdblbrace$.
        \item If $w=\pi(x)$, then $\beta_T(t_w^{\prime})=\ldblbrace u,x\rdblbrace$.
    \end{enumerate}
    \end{enumerate}
    It is easy to see that the depth of tree $T^r$ increases by 2 after each round, $T^r$ is always a canonical tree decomposition of $F$, and $(F,T^r)\in \gS^\mathsf{Sub}$. An illustration of the construction of unfolding tree is given in \cref{fig:unfolding_tree}(a).
\end{definition}

\begin{figure}[t]
    \centering
    \small
    \vspace{-10pt}
    \setlength{\tabcolsep}{15pt}
    \begin{tabular}{@{}cccc}
        (a) & \adjustbox{valign=c}{\includegraphics[height=0.14\textwidth]{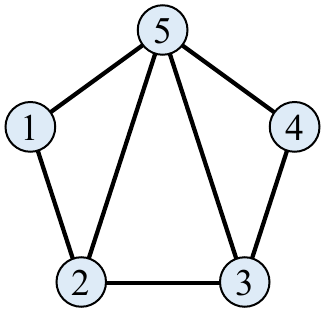}} & \adjustbox{valign=c}{\includegraphics[height=0.19\textwidth]{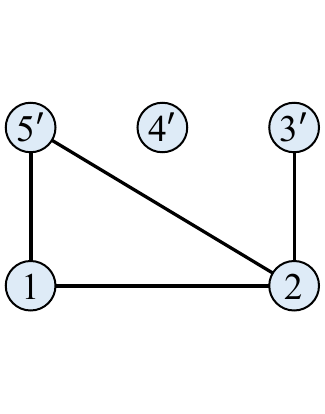}} & \adjustbox{valign=c}{\includegraphics[height=0.19\textwidth]{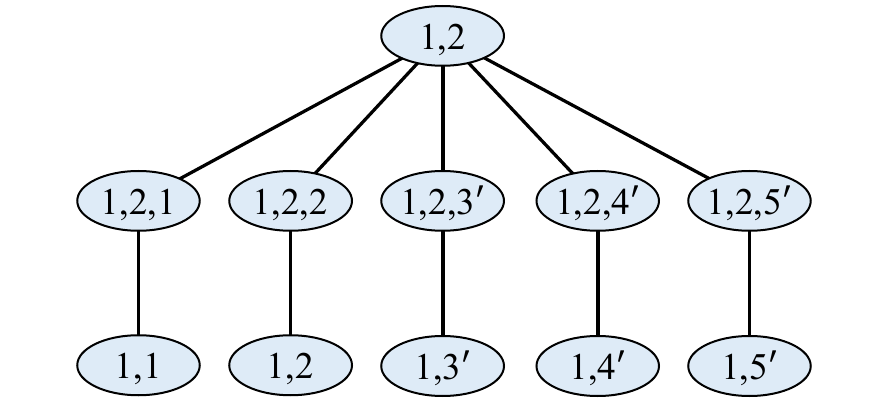}}\\
        & $G$ & $F^{\mathsf{Sub},(1)}_G(1,2)$ & $T^{\mathsf{Sub},(1)}_G(1,2)$\\
        (b) & \adjustbox{valign=c}{\includegraphics[height=0.14\textwidth]{figure/proof_unfolding_original.pdf}} & \adjustbox{valign=c}{\includegraphics[height=0.19\textwidth]{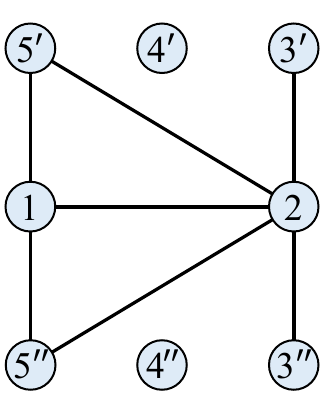}} & \adjustbox{valign=c}{\includegraphics[height=0.19\textwidth]{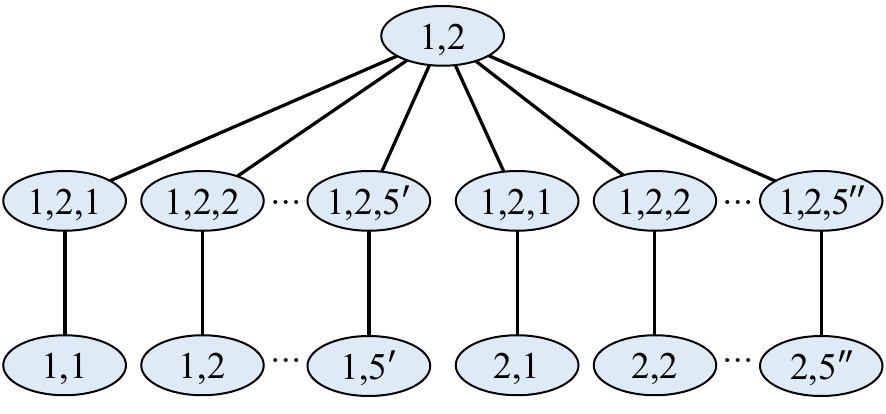}}\\
        & $G$ & $F^{\mathsf{L},(1)}_G(1,2)$ & $T^{\mathsf{L},(1)}_G(1,2)$\\
        (c) & \adjustbox{valign=c}{\includegraphics[height=0.14\textwidth]{figure/proof_unfolding_original.pdf}} & \adjustbox{valign=c}{\includegraphics[height=0.19\textwidth]{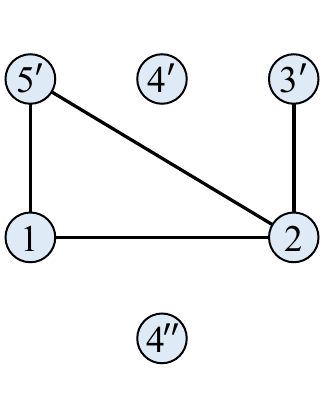}} & \adjustbox{valign=c}{\includegraphics[height=0.19\textwidth]{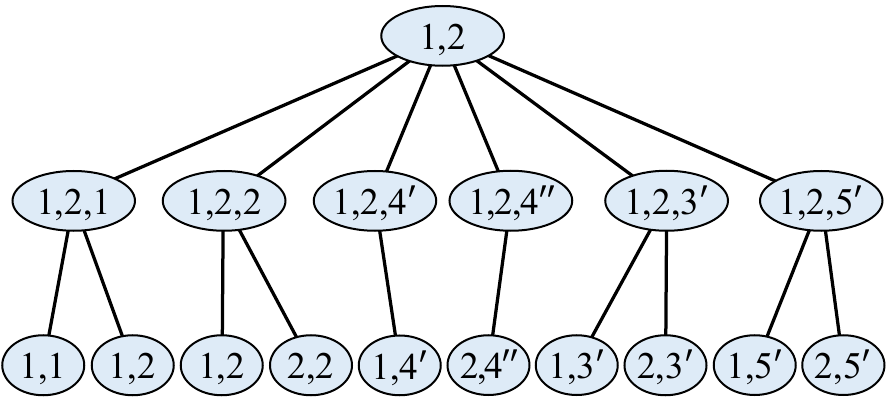}}\\
        & $G$ & $F^{\mathsf{LF},(1)}_G(1,2)$ & $T^{\mathsf{LF},(1)}_G(1,2)$\\
        (d) & \adjustbox{valign=c}{\includegraphics[height=0.14\textwidth]{figure/proof_unfolding_original.pdf}} & \adjustbox{valign=c}{\includegraphics[height=0.19\textwidth]{figure/proof_unfolding_sub.pdf}} & \adjustbox{valign=c}{\includegraphics[height=0.19\textwidth]{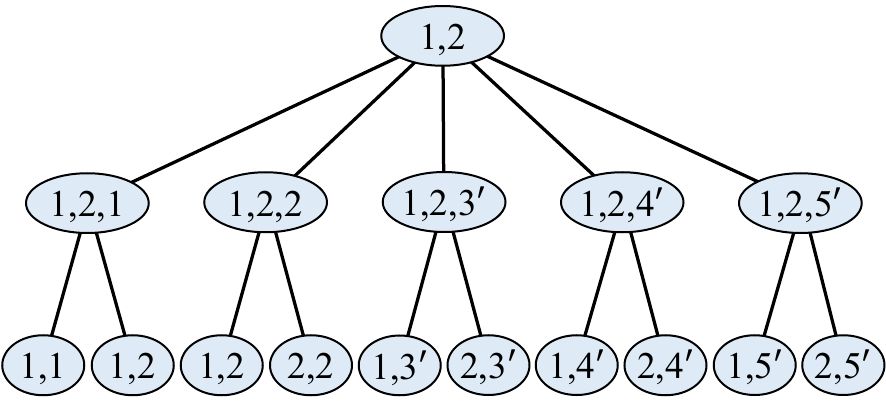}}\\
        & $G$ & $F^{\mathsf{F},(1)}_G(1,2)$ & $T^{\mathsf{F},(1)}_G(1,2)$\\
    \end{tabular}
    \caption{The depth-2 unfolding tree of graph $G$ at vertex pair (1,2) for Subgraph GNN, Local 2-GNN, Local 2-FGNN, and 2-FGNN, respectively.}
    \label{fig:unfolding_tree}
\end{figure}

We next define the unfolding tree of Local 2-GNN, which differs in the loop part such that the bags $\beta_T(t_w)$ and $\beta_T(t_w^\prime)$ now do not necessarily contain $u$.

\begin{definition}[Unfolding tree of Local 2-GNN]
\label{def:Local 2-GNN tree unfolding}
    Given a graph $G$, vertices $u,v\in V_G$, and a non-negative integer $D$, the depth-$2D$ Local 2-GNN unfolding tree of graph $G$ at $(u,v)$, denoted as $\left(F^{\mathsf{L},(D)}_G(u,v),T^{\mathsf{L},(D)}_G(u,v)\right)$, is a tree-decomposed graph $(F,T^r)\in\gS^\mathsf{L}$ constructed as follows:
    \begin{enumerate}[topsep=0pt,leftmargin=25pt]
    \setlength{\itemsep}{0pt}
    \item \textbf{Initialization}. The procedure is exactly the same as Subgraph GNN (\cref{def:Subgraph GNN-tree unfolding}).
    \item \textbf{Loop for $D$ rounds}. For each leaf node $t$ in $T^r$, do the following procedure:\\
    Let $\beta_T(t)=\ldblbrace x,y\rdblbrace$. For each $w\in V_G$, add a fresh child node $t_w$ to $T$ and designate $t$ as its parent. Then, consider the following three cases:
    \begin{enumerate}[label=\alph*),topsep=0pt,leftmargin=25pt]
        \setlength{\itemsep}{0pt}
        \item If $w\neq\pi(x)$ and $w\neq\pi(y)$, then add a fresh vertex $z$ to $F$ and extend $\pi$ with $\pi(z)=w$. The label of $z$ in $F$ is set by $\ell_F(z)=\ell_G(w)$. Define $\beta_T(t_w)=\beta_T(t)\cup\ldblbrace z\rdblbrace$. Then, we add edges between $z$ and $\beta_T(t)$, so that $\pi$ is an isomorphism from $F[\beta_T(t_w)]$ to $G[\pi(\beta_T(t_w))]$.
        \item If $w=\pi(x)$, then we simply set $\beta_T(t_w)=\beta_T(t)\cup\ldblbrace x\rdblbrace$ without modifying graph $F$.
        \item If $w=\pi(y)$, then we simply set $\beta_T(t_w)=\beta_T(t)\cup\ldblbrace y\rdblbrace$ without modifying graph $F$.
    \end{enumerate}
    Next, add a fresh child node $t_w^{\prime}$ in $T^r$, designate $t_w$ as its parent, and set $\beta_T(t_w^{\prime})$ based on the following three cases:
    \begin{enumerate}[label=\alph*),topsep=0pt,leftmargin=25pt]
        \setlength{\itemsep}{0pt}
        \item If $w\neq\pi(x)$ and $w\neq\pi(y)$, then $\beta_T(t_w^{\prime})=\ldblbrace x,z\rdblbrace$.
        \item If $w=\pi(x)$, then $\beta_T(t_w^{\prime})=\ldblbrace x,x\rdblbrace$.
        \item If $w=\pi(y)$, then $\beta_T(t_w^{\prime})=\ldblbrace x,y\rdblbrace$.
    \end{enumerate}
    Finally, we repeat the above procedure (point 2) once more, but this time the bag $\beta_T(t_w^{\prime})$ is replaced by the following three cases (changing $x$ to $y$):
    \begin{enumerate}[label=\alph*),topsep=0pt,leftmargin=25pt]
        \setlength{\itemsep}{0pt}
        \item If $w\neq\pi(x)$ and $w\neq\pi(y)$, then $\beta_T(t_w^{\prime})=\ldblbrace y,z\rdblbrace$.
        \item If $w=\pi(y)$, then $\beta_T(t_w^{\prime})=\ldblbrace y,y\rdblbrace$.
        \item If $w=\pi(x)$, then $\beta_T(t_w^{\prime})=\ldblbrace x,y\rdblbrace$.
    \end{enumerate}
    \end{enumerate}
    An illustration of the construction of unfolding tree is given in \cref{fig:unfolding_tree}(b).
\end{definition}

We next define the unfolding tree of Local 2-FGNN, which differs in the loop part such that the node $t_w$ can have two children under certain conditions.

\begin{definition}[Unfolding tree of Local 2-FGNN]
\label{def:Local 2-FGNN tree unfolding}
    Given a graph $G$, vertices $u,v\in V_G$, and a non-negative integer $D$, the depth-$2D$ Local 2-FGNN unfolding tree of graph $G$ at $(u,v)$, denoted as $\left(F^{\mathsf{LF},(D)}_G(u,v),T^{\mathsf{LF},(D)}_G(u,v)\right)$, is a tree-decomposed graph $(F,T^r)\in\gS^\mathsf{LF}$ constructed as follows:
    \begin{enumerate}[topsep=0pt,leftmargin=25pt]
    \setlength{\itemsep}{0pt}
    \item \textbf{Initialization}. The procedure is exactly the same as Subgraph GNN (\cref{def:Subgraph GNN-tree unfolding}).
    \item \textbf{Loop for $D$ rounds}. For each leaf node $t$ in $T^r$, do the following procedure:\\
    Let $\beta_T(t)=\ldblbrace x,y\rdblbrace$. For each $w\in N_G[\pi(x)]\cup N_G[\pi(y)]$, add a fresh child node $t_w$ to $T$ and designate $t$ as its parent. Then, consider the following three cases:
    \begin{enumerate}[label=\alph*),topsep=0pt,leftmargin=25pt]
        \setlength{\itemsep}{0pt}
        \item If $w\neq\pi(x)$ and $w\neq\pi(y)$, then add a fresh vertex $z$ to $F$ and extend $\pi$ with $\pi(z)=w$. The label of $z$ in $F$ is set by $\ell_F(z)=\ell_G(w)$. Define $\beta_T(t_w)=\beta_T(t)\cup\ldblbrace z\rdblbrace$. Then, we add edges between $z$ and $\beta_T(t)$, so that $\pi$ is an isomorphism from $F[\beta_T(t_w)]$ to $G[\pi(\beta_T(t_w))]$.
        \item If $w=\pi(x)$, then we simply set $\beta_T(t_w)=\beta_T(t)\cup\ldblbrace x\rdblbrace$ without modifying graph $F$.
        \item If $w=\pi(y)$, then we simply set $\beta_T(t_w)=\beta_T(t)\cup\ldblbrace y\rdblbrace$ without modifying graph $F$.
    \end{enumerate}
    Next, add two fresh children $t_w^{\prime}$ and $t_w^{\prime\prime}$ in $T^r$, designate $t_w$ as their parent, and set $\beta_T(t_w^{\prime})$ and $\beta_T(t_w^{\prime})$ based on the following three cases:
    \begin{enumerate}[label=\alph*),topsep=0pt,leftmargin=25pt]
        \setlength{\itemsep}{0pt}
        \item If $w\neq\pi(x)$ and $w\neq\pi(y)$, then $\beta_T(t_w^{\prime})=\ldblbrace x,z\rdblbrace$ and $\beta_T(t_w^{\prime\prime})=\ldblbrace y,z\rdblbrace$.
        \item If $w=\pi(x)$, then $\beta_T(t_w^{\prime})=\ldblbrace x,x\rdblbrace$ and $\beta_T(t_w^{\prime\prime})=\ldblbrace x,y\rdblbrace$.
        \item If $w=\pi(y)$, then $\beta_T(t_w^{\prime})=\ldblbrace x,y\rdblbrace$ and $\beta_T(t_w^{\prime\prime})=\ldblbrace y,y\rdblbrace$.
    \end{enumerate}
    For each $w\notin N_G[\pi(x)]\cup N_G[\pi(y)]$, follow the the same procedure as Local 2-GNN (\cref{def:Local 2-GNN tree unfolding}).
    \end{enumerate}
    An illustration of the construction of unfolding tree is given in \cref{fig:unfolding_tree}(c).
\end{definition}

We finally define the unfolding tree of 2-FGNN, which differs in the loop part such that all nodes $t_w$ have two children.
\begin{definition}[Unfolding tree of 2-FGNN]
\label{def:2-FGNN tree unfolding}
    Given a graph $G$, vertices $u,v\in V_G$, and a non-negative integer $D$, the depth-$2D$ 2-FGNN unfolding tree of graph $G$ at $(u,v)$, denoted as $\left(F^{\mathsf{Sub},(D)}_G(u,v),T^{\mathsf{Sub},(D)}_G(u,v)\right)$, is a tree-decomposed graph $(F,T^r)\in\gS^\mathsf{F}$ constructed as follows:
    \begin{enumerate}[topsep=0pt,leftmargin=25pt]
    \setlength{\itemsep}{0pt}
    \item \textbf{Initialization}. The procedure is exactly the same as Local 2-FGNN (\cref{def:Local 2-FGNN tree unfolding}).
    \item \textbf{Loop for $D$ rounds}. The procedure is similar to Local 2-FGNN  (\cref{def:Local 2-FGNN tree unfolding}) except that the condition $w\in N_G[\pi(x)]\cup N_G[\pi(y)]$ is relaxed to all vertices.
    \end{enumerate}
    An illustration of the construction of unfolding tree is given in \cref{fig:unfolding_tree}(d).
\end{definition}

We are now ready to present the first core result:

\begin{lemma}
\label{thm:proof_mainlemma1}
    Let $M\in\{\mathsf{Sub},\mathsf{L},\mathsf{LF},\mathsf{F}\}$ be any model. For any two graphs $G,H$, any vertices $u,v\in V_G$, $x,y\in V_H$, and any non-negative integer $D$, $\tilde\chi^{M,(D)}_G(u,v)=\tilde\chi^{M,(D)}_H(x,y)$ iff there exists an isomorphism $(\rho,\tau)$ from $\left(F^{M,(D)}_G(u,v),T^{M,(D)}_G(u,v)\right)$ to $\left(F^{M,(D)}_H(x,y),T^{M,(D)}_H(x,y)\right)$ such that $\rho(u)=x,\rho(v)=y$.
\end{lemma}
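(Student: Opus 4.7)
The plan is to induct on $D$ and prove both directions simultaneously. For the base case $D=0$, the unfolding tree consists only of the root with bag $\ldblbrace u,v\rdblbrace$ (respectively $\ldblbrace x,y\rdblbrace$) and the induced subgraph on $\{u,v\}$. The initial color $\tilde\chi^{M,(0)}$ is exactly the isomorphism type $\atp_G(u,v)$ together with labels, so $\tilde\chi^{M,(0)}_G(u,v)=\tilde\chi^{M,(0)}_H(x,y)$ is equivalent to the existence of a pair $(\rho,\tau)$ with $\rho(u)=x$, $\rho(v)=y$ sending the one-node tree to the one-node tree. This gives the base case for every model $M$ in a uniform way.

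\textbf{Inductive step (easy direction).} Assume the result at depth $2D$ and suppose an isomorphism $(\rho,\tau)$ exists between the depth-$2(D{+}1)$ unfolding trees with $\rho(u)=x,\rho(v)=y$. Restricting $\tau$ to the subtree rooted at a grandchild $t_w^\prime$ of the root yields an isomorphism onto the subtree rooted at $\tau(t_w^\prime)$, which is precisely the depth-$2D$ unfolding tree at the corresponding updated pair in $H$. By the inductive hypothesis, the depth-$D$ colors of the respective pairs agree. Collecting these equalities over all grandchildren assembles the multisets of neighbor-colors (and, in the Folklore cases, the multiset of coupled tuples) appearing inside the $\hash$ formulas of \eqref{eq:subgraph_GNN_equal}--\eqref{eq:FGNN_equal}. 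Since $\tilde\chi^{M,(D)}_G(u,v)=\tilde\chi^{M,(D)}_H(x,y)$ by the inductive hypothesis applied to the root itself, we conclude $\tilde\chi^{M,(D+1)}_G(u,v)=\tilde\chi^{M,(D+1)}_H(x,y)$.

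\textbf{Inductive step (hard direction).} Assume $\tilde\chi^{M,(D+1)}_G(u,v)=\tilde\chi^{M,(D+1)}_H(x,y)$. Since $\hash$ is injective, this forces equality of the multisets of (depth-$D$) children colors together with the root's own depth-$D$ color. For each child-slot $w$ on the $G$-side we can thus pick a matching slot $w^\prime$ on the $H$-side with equal color, and by the inductive hypothesis we obtain an isomorphism between the two depth-$2D$ subtrees whose root-bag marking is preserved. The main obstacle is then to \emph{glue} these per-child isomorphisms into a single isomorphism $(\rho,\tau)$ between the full trees: the glued $\rho$ must be well-defined on vertices that appear in several bags (specifically, the vertices $u,v$ of the root bag, which reappear in every grandchild bag by construction), and in the Folklore cases the choice for the two grandchildren $t_w^\prime,t_w^{\prime\prime}$ spawned from a common $t_w$ must be made simultaneously so that the third freshly added vertex $z$ is identified consistently across both subtrees. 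This is where the paired multiset $\ldblbrace(\tilde\chi(w,v),\tilde\chi(u,w)):w\rdblbrace$ appearing in \eqref{eq:local_FGNN_equal} and \eqref{eq:FGNN_equal} is essential: it lets us select $w^\prime$ once so that both grandchild-isomorphisms can be chosen with compatible marking, after which merging them along the shared vertices of $\beta_T(t_w)$ produces a well-defined $\rho$. For Subgraph GNN and Local $2$-GNN, the analogous but easier compatibility is automatic because each $t_w$ has exactly one $t_w^\prime$-child on the subgraph side, and on the local-2-GNN side the two children correspond to disjoint single-aggregation axes.

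\textbf{Wrap-up.} Finally one checks that the isomorphism $\tau$ on the tree can be extended upward from the grandchild level to the root by defining $\tau(t)$ for odd-depth nodes to be the unique $H$-side odd-depth node whose bag is the union of the parent and child bags under $\rho$; condition (c) of \cref{def:canonical_tree_decomposition} guarantees this is consistent, and \cref{thm:tree_disconnected} ensures no hidden clashes between vertices that are not in a common bag. The constructed pair $(\rho,\tau)$ then satisfies all three conditions of \cref{def:tree_decomposed_iso} with $\rho(u)=x,\rho(v)=y$, completing the induction. The anticipated main difficulty throughout is the Folklore gluing step described above; all other parts are routine bookkeeping once the correct child-matching is fixed.
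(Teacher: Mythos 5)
Your proposal is correct and follows essentially the same route as the paper's proof: induction on $D$, base case via $\atp$, the ``if'' direction by restricting $\tau$ to grandchild subtrees and invoking the IH, and the ``only if'' direction by matching children via the multiset equality forced by the injective $\hash$, applying the IH to get per-child isomorphisms, and gluing along the shared root vertices. You correctly identify the one genuinely delicate point in the Folklore cases---the two grandchildren $t_w^\prime, t_w^{\prime\prime}$ of a common $t_w$ share the fresh vertex $z$, so the child-matching must be chosen from the \emph{coupled} multiset $\ldblbrace(\tilde\chi(w,v),\tilde\chi(u,w)):w\rdblbrace$ so that both grandchild isomorphisms send the shared $z$ to the same target---which is exactly the compatibility hidden in the paper's remark that the other models' proofs are ``almost the same'' as the Local $2$-GNN case it writes out. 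Your handling of the root-pair color equality in the ``if'' direction (restricting the depth-$2(D{+}1)$ isomorphism to the depth-$2D$ prefix and invoking the IH directly) is a minor simplification over the paper's more laborious iterative argument via \cref{eq:proof_mainlemma1_2,eq:proof_mainlemma1_3}; the appeal to \cref{thm:tree_disconnected} in the wrap-up is not strictly needed---the gluing is controlled simply by the explicit identification rule (only $u,v$, and in the Folklore case also the fresh $z$ within a common $t_w$, are shared between subtrees)---but it is not incorrect.
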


\begin{proof}
    Here, we only give the proof for Local 2-GNN, and the proofs for Subgraph GNN, Local 2-FGNN and 2-FGNN are almost the same so we omit them for clarity.
    
\emph{Proof for Local 2-GNN}.
    The proof is based on induction over $D$. When $D=0$, the theorem obviously holds. Now assume that the theorem holds for $D\le d$, and consider $D=d+1$. Below, we omit $\mathsf{L}$ in the corner mark for clarity.
    \begin{enumerate}[topsep=0pt,leftmargin=25pt]
    \setlength{\itemsep}{0pt}
        \item We first prove that $\tilde\chi^{(d+1)}_G(u,v)=\tilde\chi^{(d+1)}_H(x,y)$ implies that there exists an isomorphism $(\rho,\tau)$ from $\left(F_G^{(d+1)}(u,v),T_G^{(d+1)}(u,v)\right)$ to $\left(F_H^{(d+1)}(x,y),T_H^{(d+1)}(x,y)\right)$ such that $\rho(u)=x,\rho(v)=y$. If $\tilde\chi_{G}^{(d+1)}(u,v)=\tilde\chi_{H}^{(d+1)}(x,y)$, then
        \begin{align}
            \ldblbrace(\tilde\chi_{G}^{(d)}(u,w),\atp_G(u,v,w)):w\in V_G\rdblbrace=\ldblbrace(\tilde\chi_{H}^{(d)}(x,z),\atp_H(x,y,z)):z\in V_H\rdblbrace,\\
            \ldblbrace(\tilde\chi_{G}^{(d)}(w,v),\atp_G(u,v,w)):w\in V_G\rdblbrace=\ldblbrace(\tilde\chi_{H}^{(d)}(z,y),\atp_H(x,y,z)):z\in V_H\rdblbrace.
        \end{align}
        Let $n=|V_G|=|V_H|$. Thus, we can denote $V_G=\{w_1,\cdots,w_n\}=\{w_1^{\prime},\cdots,w_n^{\prime}\}$ and $V_H=\{z_1,\cdots,z_n\}=\{z_1^{\prime},\cdots,z_n^{\prime}\}$ such that
        \begin{itemize}[topsep=0pt,leftmargin=25pt]
        \setlength{\itemsep}{0pt}
            \item $(\tilde\chi_{G}^{(d)}(u,w_i),\atp_G(u,v,w_i))=(\tilde\chi_{H}^{(d)}(x,z_i),\atp_H(x,y,z_i))$ for all $i\in[n]$;
            \item $(\tilde\chi_{G}^{(d)}(w_i^{\prime},v),\atp_G(u,v,w_i^{\prime}))=(\tilde\chi_{H}^{(d)}(z_i^{\prime},y),\atp_H(x,y,z_i^{\prime}))$ for all $i\in[n]$.
        \end{itemize}
        On the other hand, by definition of tree unfolding, we have
        \begin{align*}
            F^{(d+1)}_G(u,v) &= \left(\bigcup_{w_i} F^{(d)}_G(u,w_i)\right)\cup\left(\bigcup_{w_i^\prime} F^{(d)}_G(w_i^{\prime},v)\right)\cup  F^{(1)}_G(u,v),\\
            F^{(d+1)}_H(x,y) &= \left(\bigcup_{z_i} F^{(d)}_H(x,z_i)\right)\cup\left(\bigcup_{z_i^\prime} F^{(d)}_H(z_i^{\prime},y)\right)\cup  F^{(1)}_H(x,y),
        \end{align*}
        where $\cup$ represents the graph union. Here, all $w_i,w_j'\notin\{u,v\}$ in different graphs are treated as different vertices when taking the union, while $u,v$ in different graphs are shared. See \cref{fig:proof_unfolding} for an illustration of the above equations.

        By induction, there exists an isomorphism $(\rho_i,\tau_i)$ from $\left(F^{(d)}_G(u,w_i),T^{(d)}_G(u,w_i)\right)$ to $\left(F^{(d)}_H(x,z_i),T^{(d)}_H(x,z_i)\right)$ such that $\rho_i(u)=x$, $\rho_i(w_i)=z_i$ ($i\in[n]$), and there exist an isomorphism $\rho_i^{\prime}$ from $\left(F^{(d)}_G(w_i^{\prime},v),T^{(d)}_G(w_i^{\prime},v)\right)$ to $\left(F^{(d)}_H(z_i^{\prime},y),T^{(d)}_H(z_i^{\prime},y)\right)$ such that $\rho_i'(v)=y$, $\rho_i'(w_i^{\prime})=z_i^{\prime}$ ($i\in[n]$). Moreover, we have $\atp_G(u,v,w_i)=\atp_H(x,y,z_i)$ and $\atp_G(u,v,w_i^{\prime})=\atp_H(x,y,z_i^{\prime})$ for $i\in[n]$, which implies that $F^{(1)}_G(u,v)$ is isomorphic to $F^{(1)}_H(x,y)$. Therefore, if we construct $\tilde\rho$ by merging all $\rho_i$ and $\rho_i'$ ($i\in[n]$), and construct $\tilde\tau$ by merging all $\tau_i$ and $\tau_i'$ and further specifying an appropriate mapping between tree nodes of depth no more than 1 in $T_G^{(d+1)}(u,v)$ and $T_H^{(d+1)}(x,y)$, then it is straightforward to see that $(\tilde\rho,\tilde\tau)$ is well-defined and is an isomorphism from $\left(F_G^{(d+1)}(u,v),T_G^{(d+1)}(u,v)\right)$ to $\left(F_H^{(d+1)}(x,y),T_H^{(d+1)}(x,y)\right)$ such that $\tilde \rho(u)=x$, $\tilde \rho(v)=y$.

        \begin{figure}[t]
            \centering
            \includegraphics[height=0.27\textwidth]{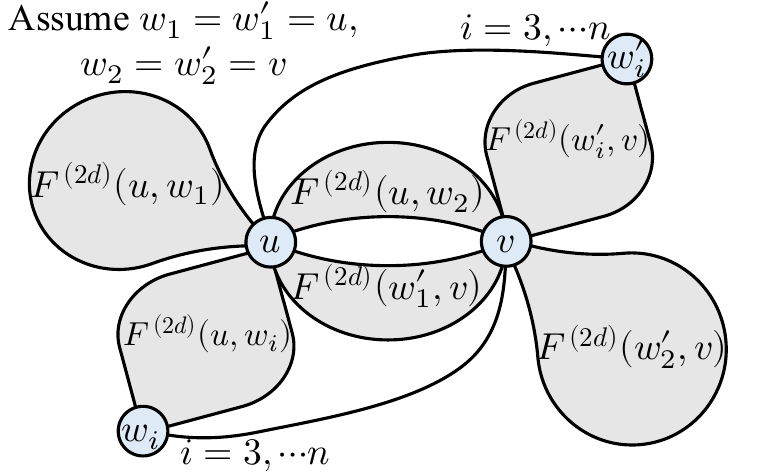}\includegraphics[height=0.27\textwidth]{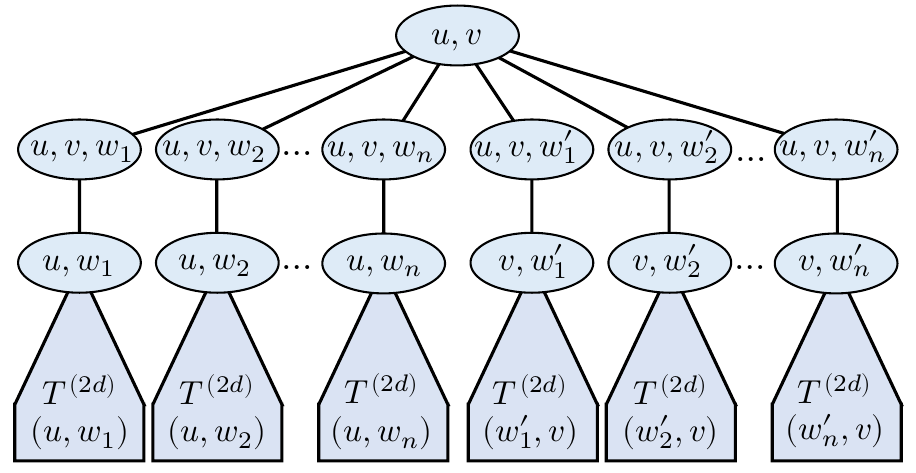}
            \caption{Illustration of the proof of \cref{thm:proof_mainlemma1}.}
            \label{fig:proof_unfolding}
            \vspace{-5pt}
        \end{figure}
        
        \item We next prove that if there exists an isomorphism $(\rho,\tau)$ from the tree-decomposed graph $\left(F_G^{(d+1)}(u,v),T_G^{(d+1)}(u,v)\right)$ to $\left(F_H^{(d+1)}(x,y),T_H^{(d+1)}(x,y)\right)$ such that $\rho(u)=x,\rho(v)=y$, then $\tilde\chi^{(d+1)}_G(u,v)=\tilde\chi^{(d+1)}_H(x,y)$.\\
        Without loss of generality, assume $u\neq v$ and $x\neq y$. since $\tau$ is an isomorphism from $T_G^{(d+1)}(u,v)$ to $T_H^{(d+1)}(x,y)$, $\tau$ maps all tree nodes of depth 1 in $T_G^{(d+1)}(u,v)$ to all tree nodes of depth 1 in $T_H^{(d+1)}(x,y)$. Let $s_1,\cdots,s_n$ be all nodes of depth 2 in $T_G^{(d+1)}(u,v)$ such that $u\in\beta_{T_G^{(d+1)}(u,v)}(s_i)$ (it follows that $n=|V_G|$), and let $s_i^\prime$ be the parent of $s_i$. Similarly, let $t_1,\cdots,t_n$ be all nodes of depth 2 in $T_H^{(d+1)}(x,y)$ such that $x\in\beta_{T_H^{(d+1)}(x,y)}(t_i)$, and let $t_i^\prime$ be the parent of $t_i$. Moreover, we can arrange the order so that the following are satisfied (for each $i\in[n]$):
        \begin{enumerate}[label=\alph*),topsep=0pt,leftmargin=25pt]
        \setlength{\itemsep}{0pt}
            \item $\tau$ is an isomorphism from the subtree $T_G^{(d+1)}(u,v)[s_i]$ to the subtree $T_H^{(d+1)}(x,y)[t_i]$.
            \item For all $s\in \Desc_{T_G^{(d+1)}(u,v)}(s_i)$, $\rho(\beta_{T_G^{(d+1)}(u,v)}(s))=\beta_{T_H^{(d+1)}(x,y)}(\tau(s))$.
            \item By definition of the unfolding tree, $\rho$ is an isomorphism from the induced subgraph $F_G^{(d+1)}(u,v)[T_G^{(d+1)}(u,v)[s_i]]$ to the induced subgraph $F_H^{(d+1)}(x,y)[T_H^{(d+1)}(x,y)[t_i]]$ (see \cref{def:tree_decompose_induced_subgraph}).
            \item Let $\beta_{T_G^{(d+1)}(u,v)}(s_i^\prime)=\ldblbrace u,v,\tilde w_i\rdblbrace$ and $\beta_{T_H^{(d+1)}(x,y)}(t_i^\prime)=\ldblbrace x,y,\tilde z_i\rdblbrace$. Then, $\rho(\tilde w_i)=\tilde z_i$, and thus $\{v,\tilde w_i\}\in E_{F_G^{(d+1)}(u,v)}$ iff  $\{y,\tilde z_i\}\in E_{F_H^{(d+1)}(x,y)}$.
        \end{enumerate}
        By items (a) to (c), $\left(F_G^{(d+1)}(u,v)\left[T_G^{(d+1)}(u,v)[s_i]\right],T_G^{(d+1)}(u,v)[s_i]\right)$ is isomorphic to $\left(F_H^{(d+1)}(x,y)\left[T_H^{(d+1)}(x,y)[t_i]\right],T_H^{(d+1)}(x,y)[t_i]\right)$.
        On the other hand, by definition of the unfolding tree, $\left(F_G^{(d+1)}(u,v)\left[T_G^{(d+1)}(u,v)[s_i]\right],T_G^{(d+1)}(u,v)[s_i]\right)$ is isomorphic to the depth-$2d$ unfolding tree $\left(F_G^{(d)}(u,w_i),T_G^{(d)}(u,w_i)\right)$ for some $w_i\in V_G$ satisfying that $\{w_i,v\}\in E_G$ iff $\{\tilde w_i,v\}\in E_{F_G^{(d+1)}(u,v)}$.\\
        Similarly, $\left(F_H^{(d+1)}(x,y)\left[T_H^{(d+1)}(x,y)[t_i]\right],T_H^{(d+1)}(x,y)[t_i]\right)$ is isomorphic to $\left(F_H^{(d)}(x,z_i),T_H^{(d)}(x,z_i)\right)$ for some $z_i\in V_H$ satisfying that $\{z_i,y\}\in E_H$ iff $\{\tilde z_i,y\}\in E_{F_H^{(d+1)}(x,y)}$. Combining all the above equivalence yields that $\left(F_G^{(d)}(u,w_i),T_G^{(d)}(u,w_i)\right)$ is isomorphic to $\left(F_H^{(d)}(x,z_i),T_H^{(d)}(x,z_i)\right)$, and $\{w_i,v\}\in E_G$ iff $\{z_i,y\}\in E_H$.

        By induction, we have $\tilde\chi_G^{(d)}(u,w_i)=\tilde\chi_H^{(d)}(x,z_i)$. Moreover, we clearly have that $\{u,w_i\}\in E_G$ iff $\{x,z_i\}\in E_H$, and $\{u,v\}\in E_G$ iff $\{x,y\}\in E_H$. Therefore,
        \begin{equation}
            (\tilde\chi_{G}^{(d)}(u,w_i),\atp_G(u,v,w_i))=(\tilde\chi_{H}^{(d)}(x,z_i),\atp_H(x,y,z_i)).
        \end{equation}
        Next, note that $\tilde w_i$ are different from each other for $i\in[n]$ by definition of unfolding tree. Thus, $w_i$ are also different from each other. It follows that
        \begin{equation}
        \label{eq:proof_mainlemma1_0}
            \ldblbrace(\tilde\chi_{G}^{(d)}(u,w),\atp_G(u,v,w)):w\in V_G\rdblbrace=\ldblbrace(\tilde\chi_{H}^{(d)}(x,z),\atp_H(x,y,z)):z\in V_H\rdblbrace.
        \end{equation}
        Again using the same analysis as before, we obtain
        \begin{equation}
        \label{eq:proof_mainlemma1_1}
            \ldblbrace(\tilde\chi_{G}^{(d)}(w,v),\atp_G(u,v,w)):w\in V_G\rdblbrace=\ldblbrace(\tilde\chi_{H}^{(d)}(z,y),\atp_H(x,y,z)):z\in V_H\rdblbrace.
        \end{equation}
        It remains to prove that $\tilde\chi_G^{(d)}(u,v)=\tilde\chi_H^{(d)}(x,y)$. To prove this, note that \cref{eq:proof_mainlemma1_0} implies that
        \begin{equation}
        \label{eq:proof_mainlemma1_2}
            \ldblbrace(\tilde\chi_{G}^{(d')}(u,w),\atp_G(u,v,w)):w\in V_G\rdblbrace=\ldblbrace(\tilde\chi_{H}^{(d')}(x,z),\atp_H(x,y,z)):z\in V_H\rdblbrace
        \end{equation}
        holds for all $0\le d'\le d$, and \cref{eq:proof_mainlemma1_1} implies that
        \begin{equation}
        \label{eq:proof_mainlemma1_3}
            \ldblbrace(\tilde\chi_{G}^{(d')}(w,v),\atp_G(u,v,w)):w\in V_G\rdblbrace=\ldblbrace(\tilde\chi_{H}^{(d)}(z,y),\atp_H(x,y,z)):z\in V_H\rdblbrace.
        \end{equation}
        holds for all $0\le d'\le d$. Combined with \cref{eq:proof_mainlemma1_2,eq:proof_mainlemma1_3} and the fact that $\tilde\chi_G^{(0)}(u,v)=\tilde\chi_H^{(0)}(x,y)$, we can incrementally prove that $\tilde\chi_G^{(d')}(u,v)=\tilde\chi_H^{(d')}(x,y)$ for all $d'\le d+1$.
    \end{enumerate}
    We have thus concluded the proof.
\end{proof}

\begin{definition}
\label{def:cnt}
    Let $M\in\{\mathsf{Sub},\mathsf{L},\mathsf{LF},\mathsf{F}\}$ be any model. Given a graph $G$ and a tree-decomposed graph $(F,T^r)$, define $$\cnt^M\left(\left(F,T^r\right),G\right):=\left|\left\{(u,v)\in V_G^2:\exists D\in\mathbb N_+\text{ s.t. } \left(F_G^{M,(D)}(u,v),T_G^{M,(D)}(u,v)\right)\simeq (F,T^r)\right\}\right|,$$
    where $\left(F_G^{M,(D)}(u,v),T_G^{M,(D)}(u,v)\right)$ is the depth-$2D$ unfolding tree of $G$ at $(u,v)$ for model $M$.
\end{definition}
\begin{corollary}
\label{thm:cnt}
    Let $M\in\{\mathsf{Sub},\mathsf{L},\mathsf{LF},\mathsf{F}\}$ be any model. For any graphs $G,H$, $\chi^M_G(G)=\chi^M_H(H)$ iff $\cnt^M\left(\left(F,T^r\right),G\right)=\cnt^M\left(\left(F,T^r\right),H\right)$ holds for all $(F,T^r)\in \gS^M$. 
\end{corollary}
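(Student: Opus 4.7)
The plan is to reduce the claim to a multiset equality of stable colors over all vertex pairs, and then apply \cref{thm:proof_mainlemma1} pointwise. First I would invoke \cref{thm:global_aggregation} to replace $\chi^M$ with $\tilde\chi^M$ throughout, so it suffices to prove that $\tilde\chi^M_G(G)=\tilde\chi^M_H(H)$ is equivalent to $\cnt^M((F,T^r),G)=\cnt^M((F,T^r),H)$ for all $(F,T^r)\in\gS^M$. I would then argue that graph-level equality of $\tilde\chi^M$ is equivalent to the 2-tuple multiset equality
\[
\mathcal{M}_G := \ldblbrace \tilde\chi^M_G(u,v) : (u,v)\in V_G^2\rdblbrace = \ldblbrace \tilde\chi^M_H(x,y) : (x,y)\in V_H^2\rdblbrace =: \mathcal{M}_H.
\]
One direction is immediate since the graph representation is a function of the 2-tuple colors; for the converse, the global aggregation terms $\tilde\chi^{M,(t)}_G(u,V_G)$ and $\tilde\chi^{M,(t)}_G(V_G,v)$ in each of \eqref{eq:subgraph_GNN_equal}--\eqref{eq:FGNN_equal} guarantee that each stable color $\tilde\chi^M_G(u,v)$ encodes both marginal multisets of the $V_G\times V_G$ color matrix, from which the joint multiset $\mathcal{M}_G$ can be recovered.

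Next, I would fix $D$ large enough that the CR has stabilized on both graphs (a quadratic bound in $\max(|V_G|,|V_H|)$ suffices), so that $\tilde\chi^{M,(D)}_G=\tilde\chi^M_G$ and $\tilde\chi^{M,(D)}_H=\tilde\chi^M_H$. By \cref{thm:proof_mainlemma1}, for any $(u,v)\in V_G^2$ and $(x,y)\in V_H^2$,
\[
\tilde\chi^{M,(D)}_G(u,v)=\tilde\chi^{M,(D)}_H(x,y) \iff \bigl(F^{M,(D)}_G(u,v),T^{M,(D)}_G(u,v)\bigr)\simeq\bigl(F^{M,(D)}_H(x,y),T^{M,(D)}_H(x,y)\bigr),
\]
so $\mathcal{M}_G=\mathcal{M}_H$ is equivalent to multiset equality of the isomorphism types of depth-$2D$ unfolding trees. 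Finally, every such unfolding tree lies in $\gS^M$ by construction (\cref{def:Subgraph GNN-tree unfolding,def:Local 2-GNN tree unfolding,def:Local 2-FGNN tree unfolding,def:2-FGNN tree unfolding}), and for any $(F,T^r)\in\gS^M$ not realized as an unfolding tree in either graph both counts vanish; partitioning $\mathcal{M}_G$ and $\mathcal{M}_H$ by isomorphism class of unfolding tree then yields exactly the desired equivalence with $\cnt^M$.

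The main obstacle I anticipate is the first reduction: carefully verifying that the graph representation $\chi^M_G(G)$, which is built via a nested multiset hash and a separate node-level merging step, carries precisely the same information as the raw 2-tuple multiset $\mathcal{M}_G$. This hinges on \cref{thm:global_aggregation} together with the presence of the explicit global aggregation terms in $\tilde\chi^M$; once this bridge is in place, the rest is a clean translation through \cref{thm:proof_mainlemma1} and the observation that unfolding trees always inhabit $\gS^M$.
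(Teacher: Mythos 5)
Your proof follows essentially the same route as the paper's: translate graph-level color equality to multiset equality of stable pair colors, then apply \cref{thm:proof_mainlemma1} to convert to unfolding-tree isomorphism classes, which lie in $\gS^M$ by construction, with unrealized trees contributing count zero. One small slip: for Subgraph GNN, \eqref{eq:subgraph_GNN_equal} carries only the row aggregation $\tilde\chi^{\mathsf{Sub},(t)}_G(u,V_G)$, not both marginals, but that single marginal already suffices to recover the node-level multiset $\ldblbrace\tilde\chi^M_G(u):u\in V_G\rdblbrace$ from the flattened multiset $\mathcal{M}_G$, so your argument is unaffected.
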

\begin{proof}
    ``$\Longrightarrow$''. If $\chi^M_G(G)=\chi^M_H(H)$, then $\ldblbrace\chi^M_G(u,v):u,v\in V_G\rdblbrace=\ldblbrace\chi^M_H(x,y):x,y\in V_H\rdblbrace$. For each color $c$ in the above multiset, pick $u,v\in V_G$ with $\chi^M_G(u,v)=c$. It follows that if $(F,T^r)\simeq (F_G^{M,(D)}(u,v),T_G^{M,(D)}(u,v))\in\gS^M$ for some $D$, then $\cnt^M\left(\left(F,T^r\right),G\right)=|\ldblbrace (u,v)\in V_G^2:\chi^M_G(u,v)=c\rdblbrace|=|\ldblbrace (x,y)\in V_H^2:\chi^M_H(x,y)=c\rdblbrace|=\cnt^M\left(\left(F,T^r\right),H\right)$ by \cref{thm:proof_mainlemma1}. On the other hand, if $(F,T^r)\not\simeq (F_G^{M,(D)}(u,v),T_G^{M,(D)}(u,v))$ for all $u,v\in V_G$ and all $D$, then clearly $\cnt^M\left(\left(F,T^r\right),G\right)=\cnt^M\left(\left(F,T^r\right),H\right)=0$.

    ``$\Longleftarrow$''. If $\cnt^M\left(\left(F,T^r\right),G\right)=\cnt^M\left(\left(F,T^r\right),H\right)$ holds for all $(F,T^r)\in \gS^M$, it clearly holds for all $(F_M^{(D)}(u,v),T_M^{(D)}(u,v))$ with $u,v\in V_G$ and a sufficiently large $D$. This guarantees that for all color $c$, $|\{(u,v)\in V_G^2:\chi^M_G(u,v)=c\}|=|\{(x,y)\in V_H^2:\chi^M_H(x,y)=c\}|$ by \cref{thm:proof_mainlemma1}. Therefore, $\ldblbrace\chi^M_G(u,v):u,v\in V_G\rdblbrace=\ldblbrace\chi^M_H(x,y):x,y\in V_H\rdblbrace$, concluding the proof.
\end{proof}

We next define an important concept called \emph{bag isomorphism} \citep{dell2018lov}.
\begin{definition}
Given a tree-decomposed graph $(F,T^r)$ and a graph $G$, a bag isomorphism from $(F,T^r)$ to $G$ (abbreviated as ``bIso'') is a homomorphism $f$ from $F$ to $G$ such that, for all $t\in V_T$, $f$ is an isomorphism from $F[\beta_T(t)]$ to $G[f(\beta_T(t))]$. Denote $\BIso((F,T^r),G)$ to be the set of all bag isomorphisms from $(F,T^r)$ to $G$, and denote $\bIso((F,T^r),G)=|\BIso((F,T^r),G)|$.
\end{definition}
\begin{remark}
\label{remark:bIso}
    To prove that a mapping $f$ from $(F,T^r)$ to $G$ is a bIso, it suffices to prove the following conditions:
    \begin{enumerate}[topsep=0pt,leftmargin=25pt]
        \vspace{-2pt}
        \setlength{\itemsep}{-2pt}
        \item For any two different vertices $u,v\in V_G$ in the same bag, $f(u)\neq f(v)$;
        \item For any two vertices $u,v\in V_G$ in the same bag, $\{u,v\}\in E_F$ \emph{iff} $\{f(u),f(v)\}\in E_G$;
        \item For any $u\in V_G$, $\ell_F(u)=\ell_G(f(u))$.
    \end{enumerate}
\end{remark}

The following fact is straightforward from the construction of the unfolding tree:
\begin{fact}
\label{thm:unfolding_tree_bIso}
    Let $M\in\{\mathsf{Sub}, \mathsf{L}, \mathsf{LF}, \mathsf{F}\}$ be any model considered above. For any graph $G$, any vertex pair $(u,v)\in V_G^2$, and any non-negative integer $D$, there is a bIso $\pi$ from $\left(F^{M,(D)}_G(u,v),T^{M,(D)}_G(u,v)\right)$ to $G$.
\end{fact}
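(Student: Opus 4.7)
The plan is to prove the claim by induction on the depth parameter $D$, verifying that the map $\pi$ constructed incrementally in Definitions 3.3--3.6 is precisely the desired bag isomorphism. The base case $D=0$ is immediate: $F=G[\{u,v\}]$ (or $G[\{u\}]$ if $u=v$), the root bag equals $\ldblbrace u,v\rdblbrace$, and the initial $\pi$ is literally the identity restricted to $\{u,v\}$, so all three conditions of Remark 3.11 hold on the single bag. Labels of $u,v$ in $F$ agree with their labels in $G$ by construction.

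For the inductive step, I will verify the conditions of Remark 3.11 locally at every bag introduced during one round of the loop. Whenever the construction adds a fresh child $t_w$ with a new vertex $z$ and $\pi(z)=w$, the definition explicitly inserts exactly those edges between $z$ and $\beta_T(t)$ whose $\pi$-images are edges of $G$, and it assigns $\ell_F(z)=\ell_G(w)$. Hence $\pi$ restricted to $\beta_T(t_w)$ is, by design, an isomorphism from $F[\beta_T(t_w)]$ to $G[\pi(\beta_T(t_w))]$, using the inductive hypothesis that $\pi$ is already an isomorphism on $\beta_T(t)$. In the ``shared'' cases (b),(c) where $w$ equals the image of a vertex already in $\beta_T(t)$, no new vertex or edge is added and $\beta_T(t_w)$ is a multiset whose underlying set equals $\beta_T(t)$, so the iso property on $\beta_T(t_w)$ inherits directly from $\beta_T(t)$. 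The subsequent size-two bags $\beta_T(t_w'), \beta_T(t_w'')$ are subsets of $\beta_T(t_w)$, so the iso property on them follows automatically.

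Finally, I will argue global consistency, namely that $\pi$ is a well-defined homomorphism from all of $F$ to $G$. Every edge of $F$ is created within some single bag of size three during the construction, and such an edge is inserted precisely when the corresponding pair is an edge of $G$; no edges of $F$ ever appear outside such bags, so no ``spurious'' edges arise. Label preservation is likewise built in at the moment each vertex is created. The mild subtlety to check across the four models is that the various loop variants (the forced presence of $u$ in Subgraph GNN, the restricted neighborhoods $N_G[\pi(x)]\cup N_G[\pi(y)]$ in Local 2-FGNN, the two-children expansion in the Folklore variants, etc.) only govern \emph{which} bags and children are created, not how $\pi$ is defined on the vertices they introduce; the local verification above is insensitive to these choices. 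Thus the induction carries through uniformly for every $M\in\{\mathsf{Sub},\mathsf{L},\mathsf{LF},\mathsf{F}\}$, and no step presents any real obstacle — the claim is essentially a bookkeeping consequence of how the unfolding tree is built to mirror $G$ locally.
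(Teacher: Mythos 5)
Your proof is correct and follows essentially the same approach as the paper, which records this as a \emph{Fact} with no written proof on the grounds that it is immediate from the unfolding-tree construction: the map $\pi$ and the edge set within each bag are defined precisely so that, on every newly created bag $\beta_T(t_w)$, $\pi$ restricts to an isomorphism onto $G[\pi(\beta_T(t_w))]$, and every edge of $F$ lives in some bag, so no spurious adjacencies arise. Your induction on $D$ together with the bag-local check of the three conditions in the bIso remark is exactly the natural formalization of that observation.
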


Similarly, we need the following concept to describe the relation between two tree-decomposed graphs. We note that these technical concepts also appeared in \citet{dell2018lov}.

\begin{definition}
\label{def:tree_decomposed_graph_hom}
Given two tree-decomposed graphs $(F,T^r)$ and $(\tilde F,\tilde T^s)$, a pair of mappings $(\rho,\tau)$ is called homomorphism from $(F,T^r)$ to $(\tilde F,\tilde T^s)$ if it satisfies the following conditions:
\begin{enumerate}[label=\alph*),topsep=0pt,leftmargin=25pt]
    \setlength{\itemsep}{-2pt}
    \item $\tau$ is a homomorphism from $T$ to $\tilde T$ (ignording labels $\beta)$ and is depth-preserving, i.e., $\dep_{T^r}(t)=\dep_{\tilde T^s}(\tau(t))$ for all $t\in V_T$;
    \item For all $t\in V_T$, $\rho$ is a homomorphism from $F[\beta_T(t)]$ to $\tilde F[\beta_{\tilde T}(\tau(t))]$. Note that this implies that $\rho$ is a homomorphism from $F$ to $\tilde F$.
    \item The depth of $T^r$ is equal to the depth of $\tilde T^s$.
\end{enumerate}
\end{definition}
\vspace{4pt}

\begin{definition}
\label{def:bIsoHom}
Under \cref{def:tree_decomposed_graph_hom}, $(\rho,\tau)$ is further called a \emph{bag-isomorphism homomorphism} (abbreviated as ``bIsoHom'') from $(F,T^r)$ to $(\tilde F,\tilde T^s)$ if it a homomorphism satisfying that, for all $t\in V_T$, $\rho$ is an isomorphism from $F[\beta_T(t)]$ to $\tilde F[\beta_{\tilde T}(\tau(t))]$.
Furthermore, $(\rho,\tau)$ is called a bIsoSurj if $\tau$ is surjective; and $(\rho,\tau)$ is called a bIsoInj if $\tau$ is injective. We use $\BIsoHom\left((F,T^r),(\tilde F,\tilde T^s)\right)$ to denote the set of bIsoHoms from $(F,T^r)$ to $(\tilde F,\tilde T^s)$, and let $\bIsoHom\left((F,T^r),(\tilde F,\tilde T^s)\right)=\left|\BIsoHom\left((F,T^r),(\tilde F,\tilde T^s)\right)\right|$. The notations $\BIsoSurj\left((F,T^r),(\tilde F,\tilde T^s)\right)$, $\bIsoSurj\left((F,T^r),(\tilde F,\tilde T^s)\right)$, $\BIsoInj\left((F,T^r),(\tilde F,\tilde T^s)\right)$, and $\bIsoInj\left((F,T^r),(\tilde F,\tilde T^s)\right)$ are defined accordingly.

\end{definition}
\begin{remark}
    In the above definition, the depth of a tree $T^r$ is the maximal depth among all tree nodes in $T^r$. Note that we do not require that all leaf nodes have the same depth in $T^r$.
\end{remark}


We are now ready to present the second core result:
\begin{lemma}
\label{thm:bIso}
    Let $M\in\{\mathsf{Sub},\mathsf{L},\mathsf{LF},\mathsf{F}\}$ be any model. For any graph $G$ and tree-decomposed graph $(F,T^r)\in \gS^M$,
    \begin{equation}
        \label{eq:thm_bIso}
        \bIso\left(\left(F,T^r\right),G\right)=\sum_{(\tilde F,\tilde T^s)\in \gS^M}\bIsoHom\left(\left(F,T^r\right),\left(\tilde F,\tilde T^s\right)\right)\cdot \cnt^M\left(\left(\tilde F,\tilde T^s\right),G\right).
    \end{equation}
    Here, the summation ranges over all non-isomorphic (tree-decomposed) graphs in $\gS^M$ and is well-defined as there are only a finite number of graphs making the value in the summation non-zero.
\end{lemma}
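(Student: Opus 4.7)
The strategy is to establish, for each graph $G$ and each $(F,T^r) \in \gS^M$ of depth $2D$, an explicit bijection
\[
\BIso((F,T^r), G) \;\longleftrightarrow\; \bigsqcup_{(u,v)\in V_G^2}\BIsoHom\big((F,T^r),(\hat F_{u,v},\hat T_{u,v}^{\hat r})\big),
\]
where $(\hat F_{u,v},\hat T_{u,v}^{\hat r}) := (F_G^{M,(D)}(u,v),T_G^{M,(D)}(u,v))$ is the depth-$2D$ unfolding tree of $G$ at $(u,v)$. Once this is in hand, grouping the right-hand side by the isomorphism class of the unfolding tree and applying \cref{def:cnt} yields exactly \cref{eq:thm_bIso}, using that \cref{def:tree_decomposed_graph_hom}(c) forces any $(\tilde F,\tilde T^s)$ with $\bIsoHom((F,T^r),(\tilde F,\tilde T^s))>0$ to also have depth $2D$.

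The bijection itself will be a ``factor through the unfolding tree'' map, built from the canonical bIso $\pi: (\hat F_{u,v},\hat T_{u,v}^{\hat r}) \to G$ of \cref{thm:unfolding_tree_bIso}. Given $f\in\BIso((F,T^r),G)$, take $(u,v)$ to be the pair that $f$ sends the root bag $\beta_T(r)$ to. The plan is to construct $(\rho,\tau)$ by induction on depth so that $f = \pi\circ\rho$: at the root set $\tau(r) := \hat r$ and fix $\rho$ on $\beta_T(r)$ by $\pi\circ\rho = f$; for a non-root node $t$ with $\tau(\pa_{T^r}(t))$ already defined, the unfolding tree construction (\cref{def:Subgraph GNN-tree unfolding,def:Local 2-GNN tree unfolding,def:Local 2-FGNN tree unfolding,def:2-FGNN tree unfolding}) creates a distinguished child of $\tau(\pa_{T^r}(t))$ for every admissible continuation ($w \in V_G$ in the $\mathsf{Sub}$, $\mathsf{L}$, $\mathsf{F}$ cases, and $w \in N_G[\pi(\rho(x))]\cup N_G[\pi(\rho(y))]$ in the $\mathsf{LF}$ case, where $\ldblbrace x,y\rdblbrace \subseteq \beta_T(\pa_{T^r}(t))$). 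The bIso property of $f$ sends the unique new vertex in $\beta_T(t)\setminus\beta_T(\pa_{T^r}(t))$ to exactly one such admissible $w$, which uniquely determines $\tau(t)$ and extends $\rho$ on $\beta_T(t)$.

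Conversely, any bIsoHom $(\rho,\tau)$ into $(\hat F_{u,v},\hat T_{u,v}^{\hat r})$, post-composed with $\pi$, is a bIso into $G$ because bag-wise isomorphisms compose. Checking that these two assignments are mutual inverses closes the bijection.

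The main obstacle will be matching the branching rule in each model's unfolding tree construction with the structural restriction that defines $\gS^M$ in \cref{def:canonical_tree_decomposition_restrict}. Concretely, the single-child condition for odd-depth nodes in $\gS^\mathsf{L}$ parallels the single new vertex introduced per unfolding step of Local 2-GNN; the closed-neighborhood condition defining $\gS^\mathsf{LF}$ mirrors the $N_G[u]\cup N_G[v]$ aggregation scope in \cref{eq:local_FGNN_equal}; and the shared-vertex condition defining $\gS^\mathsf{Sub}$ corresponds to the root vertex being preserved in every bag of the Subgraph GNN unfolding tree. Verifying existence and uniqueness of the matching child $\tau(t)$ at each induction step is straightforward but must be carried out model-by-model, and this bookkeeping is where any subtlety lives.
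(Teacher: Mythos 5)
Your strategy is the one the paper uses: factor each $f\in\BIso((F,T^r),G)$ through the depth-$2D$ unfolding tree by precomposing with the canonical bag isomorphism $\pi$ of \cref{thm:unfolding_tree_bIso}, build the factoring $(\rho,\tau)$ by induction on tree depth matching the branching of the relevant unfolding construction, show surjectivity by the analogous inverse construction, and then bucket by isomorphism class of the unfolding tree. The inductive lifting you sketch — the new bag vertex together with the bIso property of $f$ determines a unique admissible child of $\tau(\pa_{T^r}(t))$ — is exactly the content of the paper's surjectivity argument for $\sigma$, done model by model.

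There is one imprecision you should repair and not wave away as ``bookkeeping''. You claim a bijection of $\BIso((F,T^r),G)$ with
$\bigsqcup_{(u,v)\in V_G^2}\BIsoHom\bigl((F,T^r),(\hat F_{u,v},\hat T_{u,v}^{\hat r})\bigr)$,
but for a fixed $(u,v)$ this $\BIsoHom$ contains all $(\rho,\tau)$ whose $\rho$ sends the root bag of $(F,T^r)$ onto the root bag $\ldblbrace y,z\rdblbrace$ of the unfolding in \emph{either} order, whereas the forward map you describe (indexing by the ordered image of an ordered root bag) hits only those with one fixed order. The paper's proof carves out exactly the correct subset, namely
$S_2(y,z)=\{(\rho,\tau)\in\BIsoHom(\cdot,\cdot):\rho(u)=y,\rho(v)=z\}$,
proves $|S_1(y,z)|=|S_2(y,z)|$ via $\sigma(\rho,\tau)=\pi\circ\rho$, and only then sums over $(y,z)\in V_G^2$. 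Your grouping step, which replaces the disjoint union of the $S_2(y,z)$ by the full $\BIsoHom$ counts weighted by $\cnt^M$, is the one place where the student's version and the paper's diverge, and it is precisely the equivalence the paper asserts (without spelling out) when it says the lemma ``is equivalent to'' $\sum|S_1|=\sum|S_2|$. Make the $S_2$-restriction explicit in your bijection statement, then your proof coincides with the paper's.
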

\begin{proof}
    Here, we only give the proof for Local 2-GNN, and the proofs for Subgraph GNN, Local 2-FGNN and 2-FGNN are almost the same so we omit them for clarity.
    
\emph{Proof for Local 2-GNN}.
    We assume that the root bag of $(F,T^r)$ is $\ldblbrace u,v \rdblbrace$, and the depth of $(F,T^r)$ is $2d$. Let $y,z\in V_G$ be any vertices in $G$, and denote $(F_G^{(d)}(y,z),T_G^{(d)}(y,z))$ as the depth-$2d$ Local 2-GNN unfolding tree at $(y,z)$. Define the following two sets:
    \begin{align*}
        S_1(y,z)&=\{g:g\in \BIso\left(\left(F,T^r\right),G\right), g(u)=y,g(v)=z\},\\
        S_2(y,z)&=\left\{(\rho,\tau):(\rho,\tau)\in \BIsoHom\left((F,T^r),(F_G^{(d)}(y,z),T_G^{(d)}(y,z))\right),\rho(u)=y,\rho(v)=z\right\}.
    \end{align*}
    Then, \cref{thm:bIso} is equivalent to the following equation: $$\sum_{y,z\in V_G}|S_1(y,z)|=\sum_{y,z\in V_G}|S_2(y,z)|.$$
    We will prove that $|S_1(y,z)|=|S_2(y,z)|$ for all $y,z\in V_G$.
    
    Given $y,z\in V_G$, according to \cref{thm:unfolding_tree_bIso}, there exists a bIso $\pi$ from $(F_G^{(d)}(y,z),T_G^{(d)}(y,z))$ to graph $G$. Define a mapping $\sigma$ such that $\sigma(\rho,\tau)=\pi\circ \rho$ for all $(\rho,\tau)\in S_2(y,z)$. It suffices to prove that $\sigma$ is a bijection from $S_2(y,z)$ to $S_1(y,z)$.
    \begin{enumerate}[topsep=0pt,leftmargin=25pt]
        \setlength{\itemsep}{0pt}
        \item  We first prove that $\sigma$ is a mapping from $S_2(y,z)$ to $S_1(y,z)$, i.e., $\pi\circ \rho \in S_1(y,z)$ for all $(\rho,\tau)\in S_2(y,z)$. First, we clearly have $(\pi\circ \rho)(u)=\pi(y)=y$, $(\pi\circ \rho)(v)=\pi(z)=z$. We next prove that $\pi\circ \rho \in \BIso\left(\left(F,T^r\right),G\right)$. The proof is based on \cref{remark:bIso}.
        \begin{enumerate}[label=\alph*),topsep=0pt,leftmargin=25pt]
            \setlength{\itemsep}{0pt}
            \item Let $w,x\in V_F$, $w\neq x$ be any vertices in the same bag of $T^r$. Since $(\rho,\tau)$ is a bIsoHom, $\rho(w)\neq \rho(x)$ and $\rho(w)$ and $\rho(x)$ are in the same bag of $T_G^{(d)}(y,z)$. Again, since $\pi$ is a bIso, we have $\pi(\rho(w))\neq \pi(\rho(x))$.
            \item Let $w,x\in V_F$ be any vertices in the same bag of $T^r$. Since $(\rho,\tau)$ is a bIsoHom, $\rho(w)$ and $\rho(x)$ are in the same bag of $T_G^{(d)}(y,z)$, and $\{w,x\}\in E_F$ iff $\{\rho(w),\rho(x)\}\in E_{F_G^{(d)}(y,z)}$. Again, since $\pi$ is a bIso, $\{\rho(w),\rho(x)\}\in E_{F_G^{(d)}(y,z)}$ iff $\{\pi(\rho(w)), \pi(\rho(x))\}\in E_G$. Therefore, $\{w,x\}\in E_F$ iff $\{\pi(\rho(w)), \pi(\rho(x))\}\in E_G$.
            \item We clearly have $\ell_F(w)=\ell_{F_G^{(d)}(y,z)}(\rho(w))=\ell_G(\pi(\rho(w)))$.
        \end{enumerate}
        We have proved that $\pi\circ \rho \in \BIso\left(\left(F,T^r\right),G\right)$.
        
        \item  We then prove that $\sigma$ is a surjection.
        For all $g\in S_1(y,z)$, we define a mapping $(\rho,\tau)$ from $(F,T^r)$ to $(F_G^{(d)}(y,z),T_G^{(d)}(y,z))$ as follows. First define $\rho(u)=y$, $\rho(v)=z$, and set $\tau(r)$ to be the root of $(F_G^{(d)}(y,z),T_G^{(d)}(y,z))$. Let $w_1,\cdots,w_m\in V_F$ and $w_1^\prime,\cdots,w_{m^\prime}^\prime\in V_F$ be vertices such that all $\{u,w_i\}$ and $\{w_i^\prime,v\}$ correspond to bags of $T^r$ associated to all tree nodes of depth 2. Similarly, by definition of the Local 2-GNN unfolding tree, let $x_1,\cdots,x_n\in V_{F_G^{(d)}(y,z)}$ be different vertices and $x_1^\prime,\cdots,x_{n}^\prime\in V_{F_G^{(d)}(y,z)}$ be different vertices such that all $\{y,x_i\}$ and $\{x_i^\prime,z\}$ correspond to bags of $T_G^{(d)}(y,z)$ associated to all tree nodes of depth 2.
        Since $g$ and $\pi$ are bIsos, we have:
        \begin{itemize}[topsep=0pt,leftmargin=25pt]
            \setlength{\itemsep}{0pt}
            \item For every $w_i$ ($i\in[m]$), there exists $x_j$ ($j\in[n]$), such that $g(w_i)=\pi(x_j)=\tilde x_j$ for some $\tilde x_j\in V_G$ and $F[\ldblbrace u,v,w_i\rdblbrace]\simeq F_G^{(d)}(y,z)[\ldblbrace y,z,x_j\rdblbrace]\simeq G[\ldblbrace y,z,\tilde x_j\rdblbrace$;
            \item For every $w_i^{\prime}$ ($i\in[m^{\prime}]$), there exists $x_{j}^\prime$ ($j\in[n]$), such that $g(w_i^{\prime})=\pi(x_j^{\prime})=\tilde x_j^\prime$ for some $\tilde x_j^\prime\in V_G$ and $F[\ldblbrace u,v,w_i^\prime\rdblbrace]\simeq F_G^{(d)}(y,z)[\ldblbrace y,z,x_j^\prime\rdblbrace]\simeq G[\ldblbrace y,z,\tilde x_j^\prime\rdblbrace$.
        \end{itemize}
        We then define $\rho(w_i)=x_j$ for each $i\in[m]$ and $\rho(w_i^\prime)=x_{j}^{\prime}$ for each $i\in[m^\prime]$. Based on the above two items, one can easily define $\tau$ such that each node $s$ in $T^r$ of depth 1 or 2 is mapped by $\tau$ to a node $t$ in $T_G^{(d)}(y,z)$ of the same depth, $\rho(\beta_T(s))=\beta_{T_G^{(d)}}(t)$, and $\rho$ is an isomorphism from $F[\beta_T(s)]$ to $F_G^{(d)}(y,z)[\beta_{T_G^{(d)}(y,z)}(t)]$.
        
        Next, we can recursively define $\rho$'s image on $F[T^r[s]]$ for each tree node $s$ of depth 2 following the same construction above. This is because $g$ is still a bIso from $(F[T^r[s]],T^r[s])$ to $G$, $\pi$ is still a bIso from $\left(F_G^{(d)}(y,z)[T_G^{(d)}(y,z)[\tau(s)]],T_G^{(d)}(y,z)[\tau(s)]\right)$ to $G$, and $g(\beta_T(s))=\pi(\beta_{T_G^{(d)}}(\tau(s)))$. Recursively applying this procedure, we can construct $(\rho,\tau)$ such that it is a bIsoHom from $(F,T^r)$ to $\left(F_G^{(d)}(y,z),T_G^{(d)}(y,z))\right)$. That is to say, we have proved that for all $g\in S_1(u,v)$, there is a preimage $(\rho,\tau)\in S_2(y,z)$ such that $\sigma(\rho,\tau)=g$.
        
        \item We finally prove that $\sigma$ is an injection. Let $(\rho_1,\tau_1),(\rho_2,\tau_2) \in S_2(y,z)$ such that $\pi\circ \rho_1=\pi\circ \rho_2$. Let $w_1,\cdots,w_m\in V_F$, $w_1^\prime,\cdots,w_{m^\prime}^\prime\in V_F$, $x_1,\cdots,x_n\in V_{F_G^{(d)}(y,z)}$, and $x_1^\prime,\cdots,x_{n}^\prime\in V_{F_G^{(d)}(y,z)}$ be defined as in the previous item. For each $i\in[m]$, let $j_1(i)$ and $j_2(i)$ be indices satisfying $\rho_1(w_i)=x_{j_1(i)}$ and $\rho_2(w_i)=x_{j_2(i)}$. It follows that $\pi(x_{j_1(i)})=\pi(x_{j_2(i)})$.
        By definition of the Local 2-GNN unfolding tree, we must have $x_{j_1(i)}=x_{j_2(i)}$, and thus $\rho_1(w_i)=\rho_2(w_i)$.
        Using a similar approach, we can prove that $\rho_1(w_i^{\prime})=\rho_2(w_i^{\prime})$ for each $i\in[m^{\prime}]$. Next, we can recursively apply the above procedure to the subtree $T^r[s]$ for each tree node of depth 2 following the previous item, and finally prove that $\rho_1=\rho_2$. Therefore, $\sigma$ is an injection.
    \end{enumerate}
    Combining the above three items completes the proof.
\end{proof}

\begin{proposition}
\label{thm:surjective_injective}
     Under \cref{def:bIsoHom}, $(\mathrm{i})$~if $(\rho,\tau)$ is a bIsoSurj, then $\rho$ is a surjection from $F$ to $\tilde F$ on both vertices and edges; $(\mathrm{ii})$~if $(\rho,\tau)$ is a bIsoInj, then $\rho$ is an injection from $F$ to $\tilde F$ on both vertices and edges.
\end{proposition}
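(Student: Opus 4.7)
The plan is to establish each item by combining the bag-isomorphism property of $\rho$ with the tree-decomposition axioms. For (i), surjectivity on vertices is almost immediate: any $\tilde u \in V_{\tilde F}$ lies in some bag $\beta_{\tilde T}(\tilde t)$, so by surjectivity of $\tau$ we get $t\in V_T$ with $\tau(t)=\tilde t$, and since $\rho$ restricts to a bijection $F[\beta_T(t)] \to \tilde F[\beta_{\tilde T}(\tilde t)]$, some $u \in \beta_T(t)$ satisfies $\rho(u)=\tilde u$. Edge-surjectivity follows analogously, using the axiom that every edge of $\tilde F$ sits in some bag and that $\rho$ preserves edges bijectively on each bag.

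For (ii), edge-injectivity reduces to vertex-injectivity, because $\rho$ is a graph homomorphism and two distinct edges share at most one endpoint, so their images cannot coincide once $\rho$ is injective on vertices. The real work is vertex-injectivity. Suppose towards a contradiction that $\rho(u)=\rho(v)=\tilde u$ with $u\neq v$. Since $\rho$ is an isomorphism on every bag, no bag of $T^r$ can contain both $u$ and $v$, hence the connected subtrees $B_T(u)$ and $B_T(v)$ of $T$ are disjoint. Pick the unique shortest path $t_0, t_1, \ldots, t_k$ in $T$ with $t_0\in B_T(u)$ and $t_k\in B_T(v)$.

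Because $\tau$ is an injective graph homomorphism of trees, it sends this path to a sequence $\tau(t_0),\ldots,\tau(t_k)$ of pairwise distinct vertices joined by edges of $\tilde T$, which is therefore a simple path in $\tilde T$. Both endpoints lie in the connected subtree $B_{\tilde T}(\tilde u)$, so the whole path does too, yielding $\tilde u \in \beta_{\tilde T}(\tau(t_i))$ for every $i$. By the bag-isomorphism property, for each $i$ there is a unique $w_i \in \beta_T(t_i)$ with $\rho(w_i)=\tilde u$, and by construction $w_0 = u$, $w_k = v$. The remaining step is to propagate: for each tree edge $\{t_i,t_{i+1}\}$ of the canonical tree decomposition (\cref{def:canonical_tree_decomposition}), one of the two bags is contained in the other, so whichever of $w_i, w_{i+1}$ lies in the smaller bag automatically belongs to the larger one, and uniqueness of the preimage of $\tilde u$ within that larger bag forces $w_i = w_{i+1}$. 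Iterating along the path gives $u=w_0=\cdots=w_k=v$, a contradiction.

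The subtle point is exactly the propagation $w_i = w_{i+1}$: for a generic tree decomposition, adjacent bags need not be comparable, and one can easily cook up toy examples (e.g., two isolated vertices decomposed by two singleton bags $\{u\},\{v\}$) where distinct preimages of $\tilde u$ sit on opposite sides of a tree edge. The containment relation built into \cref{def:canonical_tree_decomposition} of canonical tree decompositions, which is part of the ambient setting $\gS^M$ in which bIsoHom is used throughout \cref{sec:proof_main_part1}, is precisely what makes this step go through; this is the only place in the proof that uses more than the bare tree-decomposition axioms.
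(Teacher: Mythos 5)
Your surjectivity argument matches the paper's essentially word for word, and your reduction of edge-injectivity to vertex-injectivity is correct and standard, so only the vertex-injectivity part is worth comparing. There you take a genuinely different route. The paper goes by contradiction through \Cref{thm:tree_disconnected}: assuming no bag contains two distinct preimages of $\tilde u$, it shows that the union $T[\bigcup B_T(v)]$ of their subtrees is disconnected, picks a path whose interior leaves this union, and then derives a contradiction with \Cref{def:tree_decomposition}(c) applied to $B_{\tilde T}(\tilde u)$ in $\tilde T$ (injectivity of $\tau$ transfers the path forward). You instead run a direct forward propagation: you push $\tilde u$ back along the image path using connectivity of $B_{\tilde T}(\tilde u)$, and then use \Cref{def:canonical_tree_decomposition}(c) --- the even/odd bag-nesting, which is exactly the structural fact you flag --- to show that the uniquely determined preimages $w_i$ of $\tilde u$ are locked into agreeing across each tree edge, forcing $u=v$. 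Both arguments ultimately rest on the same two ingredients (connectivity of $B_{\tilde T}(\tilde u)$ and the nested-bag property of canonical decompositions), but yours exposes the latter explicitly rather than routing it through an auxiliary proposition, and it is constructive (showing $w_0=\dots=w_k$) rather than by contradiction. Your closing remark that the bare tree-decomposition axioms do not suffice here is accurate and is in fact the reason \Cref{thm:tree_disconnected} was isolated in the paper. The only small point worth being careful about is that \Cref{def:bIsoHom} is stated for tree-decomposed graphs generally, but the containment you invoke is guaranteed only for the canonical decompositions in $\gS^M$; the paper's proof has the same implicit restriction via \Cref{thm:tree_disconnected}, so this is a shared convention rather than a flaw in your argument.
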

\begin{proof}
    We first prove that $\rho$ is surjective if $(\rho,\tau)$ is a bIsoSurj. We will only prove that $\rho$ is surjective on edges, as proving that $\rho$ is surjective on vertices is almost the same. For any $\{x,y\}\in E_{\tilde F}$, by \cref{def:tree_decomposition}(b) we can pick $\tilde t\in V_{\tilde T}$ such that $\{x,y\}\in\beta_{\tilde T}(\tilde t)$. Since $\tau$ is surjective, there exists $t\in V_T$ such that $\tau(t)=\tilde t$. By definition of bag isomorphism, there exists $u,v\in\beta_T(t)$ such that $\rho(u)=x$, $\rho(v)=y$, and $F[\{u,v\}]\simeq \tilde F[\{x,y\}]$. Therefore, $\{u,v\}\in E_{F}$.
    
    We next prove that $\rho$ is injective if $(\rho,\tau)$ is a bIsoInj. Pick any $u\in V_F$. It suffices to prove that $\rho(u)=\rho(v)$ iff $u=v$ for any $v\in V_F$. If the result does not hold, consider two cases:
    \begin{itemize}[topsep=0pt,leftmargin=25pt]
    \setlength{\itemsep}{0pt}
        \item There exists $v\in V_F$, $v\neq u$ such that $\rho(u)=\rho(v)$ and $\{u,v\}$ are in the same bag of $F$. This contradicts the definition of bag isomorphism.
        \item For all $v\in V_F$ such that $v\neq u$ and $\rho(u)=\rho(v)$, $\{u,v\}$ are not in the same bag of $F$. By \cref{thm:tree_disconnected}, $T\left[\bigcup_{v\in V_F:\rho(u)=\rho(v)}B_T(v)\right]$ is disconnected. We can thus pick a path $P$ in $T$ such that the endpoints $t_1$ and $t_2$ are in different connected components of $T\left[\bigcup_{v\in V_F:\rho(u)=\rho(v)}B_T(v)\right]$. This implies that there is tree node $t_3$ in $P$ such that $\beta_T(t_3)\cap\{v\in V_F:\rho(u)=\rho(v)\}=\emptyset$. Consequently, $\rho(u)\in\rho(\beta_T(t_1))=\beta_{\tilde T}(\tau(t_1))$, $\rho(u)\in\rho(\beta_T(t_2))=\beta_{\tilde T}(\tau(t_2))$, but $\rho(u)\notin\beta_{\tilde T}(\tau(t_3))$. On the other hand, since $\tau$ is injective, $\tilde T\left[\tau(V_P)\right]$ is also a path and $\tau(t_3)$ is on the path between $\tau(t_1)$ and $\tau(t_2)$ in $\tilde T$. This contradicts the definition of tree decomposition (\cref{def:tree_decomposition}(c)).
    \end{itemize}
    Combining the two cases concludes the proof.
\end{proof}

\begin{lemma}
\label{thm:bIsoHom}
    Let $M\in\{\mathsf{Sub},\mathsf{L},\mathsf{LF},\mathsf{F}\}$ be any model. For any tree-decomposed graphs $(F,T^r),(\tilde F,\tilde T^s)\in \gS^M$,
    $$\bIsoHom((F,T^r),(\tilde F,\tilde T^s))=\ \ \sum_{\mathclap{(\widehat F,\widehat T^t)\in \gS^M}}\ \ \frac{\bIsoSurj \left(( F, T^r),(\widehat F,\widehat T^t)\right)\cdot \bIsoInj\left((\widehat F,\widehat T^t),(\tilde F,\tilde T^s)\right)}{\aut(\widehat F,\widehat T^t)},$$ 
    where $\aut(\widehat F,\widehat T^t)$ denotes the number of automorphisms of $(\widehat F,\widehat T^t)$. Here, the summation ranges over all non-isomorphic (tree-decomposed) graphs in $\gS^M$ and is well-defined as there are only a finite number of graphs making the value in the summation non-zero.
\end{lemma}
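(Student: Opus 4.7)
The plan is to prove the identity by the standard surjection--injection factorization of bag-isomorphism homomorphisms, combined with a double-counting argument weighted by the automorphism group of the intermediate tree-decomposed graph. Given any $\phi=(\rho,\tau)\in\BIsoHom((F,T^r),(\tilde F,\tilde T^s))$, I define its \emph{image} $\widehat X_\phi$ as the tree-decomposed graph with vertex set $\rho(V_F)$, edge set $\{\{\rho(u),\rho(v)\}:\{u,v\}\in E_F\}$, tree-node set $\tau(V_T)$ equipped with the subtree structure induced from $\tilde T$ and rooted at $\tau(r)=s$, and bag labels $\beta_{\widehat T}(\tau(t))=\rho(\beta_T(t))$. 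Well-definedness of the bag labels uses that $\rho$ is a bag isomorphism: whenever $\tau(t)=\tau(t')$, both $\rho(\beta_T(t))$ and $\rho(\beta_T(t'))$ coincide as multisets with $\beta_{\tilde T}(\tau(t))$. This yields a canonical factorization $\phi=\iota_\phi\circ\pi_\phi$, where $\pi_\phi:(F,T^r)\to\widehat X_\phi$ is a bIsoSurj and $\iota_\phi:\widehat X_\phi\to(\tilde F,\tilde T^s)$ is a bIsoInj.

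Next I verify $\widehat X_\phi\in\gS^M$. The canonical tree decomposition axioms of \cref{def:canonical_tree_decomposition} (even depth, prescribed bag sizes, and the even/odd bag inclusion) carry over because $\tau$ is depth-preserving, the depths of $T^r$ and $\tilde T^s$ agree by \cref{def:tree_decomposed_graph_hom}(c), and $\rho$ restricts to an isomorphism on each bag so multiset sizes are preserved. The family-specific constraint from \cref{def:canonical_tree_decomposition_restrict} is then inherited: for $\gS^\mathsf{Sub}$, the shared vertex $w$ of $(F,T^r)$ maps to a shared vertex $\rho(w)$ of $\widehat X_\phi$; for $\gS^\mathsf{L}$, each odd-depth node $\tau(t)\in V_{\widehat T}$ has exactly one child, namely $\tau(t')$ where $t'$ is the unique child of $t$ in $T^r$; $\gS^\mathsf{LF}$ is analogous after checking that the neighborhood predicate governing the ``unique child'' clause is preserved by the bag isomorphism $\rho$; and $\gS^\mathsf{F}$ needs no extra verification.

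With Steps 1 and 2 in place, the counting argument runs as follows. For any pair $(\phi_1,\phi_2)$ with $\phi_1\in\BIsoSurj((F,T^r),(\widehat F,\widehat T^t))$ and $\phi_2\in\BIsoInj((\widehat F,\widehat T^t),(\tilde F,\tilde T^s))$, the composition $\phi=\phi_2\circ\phi_1$ is a bIsoHom whose image is exactly $\phi_2(\widehat F,\widehat T^t)=\widehat X_\phi$; in particular $\phi_2$ restricts to an isomorphism $(\widehat F,\widehat T^t)\to\widehat X_\phi$. Conversely, each bIsoHom $\phi$ with $\widehat X_\phi\simeq(\widehat F,\widehat T^t)$ admits exactly $\aut(\widehat F,\widehat T^t)$ such factorizations, one per isomorphism $\sigma:\widehat X_\phi\to(\widehat F,\widehat T^t)$, via $\phi_1=\sigma\circ\pi_\phi$ and $\phi_2=\iota_\phi\circ\sigma^{-1}$. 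Therefore,
\begin{equation*}
\bIsoSurj((F,T^r),(\widehat F,\widehat T^t))\cdot\bIsoInj((\widehat F,\widehat T^t),(\tilde F,\tilde T^s))=\aut(\widehat F,\widehat T^t)\cdot N([\widehat F,\widehat T^t]),
\end{equation*}
where $N([\widehat F,\widehat T^t])$ is the number of bIsoHoms whose image class equals $[\widehat F,\widehat T^t]$. Dividing by $\aut(\widehat F,\widehat T^t)$ and summing over isomorphism classes in $\gS^M$ recovers $\bIsoHom((F,T^r),(\tilde F,\tilde T^s))=\sum_{[\widehat F,\widehat T^t]} N([\widehat F,\widehat T^t])$, which is the stated identity.

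I expect the main obstacle to be Step 2: checking that the family-specific structural constraint descends to the image for each $M$. The $\gS^\mathsf{LF}$ case is the most delicate, because its ``unique child'' axiom is guarded by a neighborhood predicate on the parent bag, and one must argue that this predicate evaluated in $\widehat F$ is consistent with its evaluation in $F$ and $\tilde F$; this is where the \emph{bag-isomorphism} strength of $\rho$ (not merely homomorphism) is essential. A minor but real subtlety is handling bags as multisets when $\tau$ collapses distinct tree nodes to the same node, which is resolved by the observation above that $\rho(\beta_T(t))$ as a multiset depends only on $\tau(t)$.
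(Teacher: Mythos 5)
Your proposal is correct and mirrors the paper's proof: both factor each $\bIsoHom$ through its image, which lies in $\gS^M$, and count factorizations via the automorphism group of the intermediate. (The paper defines the intermediate as the induced subgraph $\tilde F[\rho(V_F)]$ rather than your image-edge graph, but these coincide by the bag-isomorphism condition together with the Helly property of subtrees of $\tilde T^s$, which is also what makes your image a genuine tree decomposition.)
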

\begin{proof}
    We define the following set of three-tuples:
    \begin{align*}
        S=&\left\{\left((\widehat F,\widehat T^t),( \rho^\mathsf{S},\tau^\mathsf{S}),( \rho^\mathsf{I},\tau^\mathsf{I})\right):(\widehat F,\widehat T^t) \in  \gS^M,\right.\\
        &\quad\left.( \rho^\mathsf{S},\tau^\mathsf{S}) \in  \BIsoSurj((F,T^r),(\widehat F,\widehat T^t)),( \rho^\mathsf{I},\tau^\mathsf{I}) \in  \BIsoInj((\widehat F,\widehat T^t),(\tilde F,\tilde T^s)) \right\}.
    \end{align*}
    Define a mappings $\sigma$ such that
    \begin{align*}
         \sigma\left((\widehat F,\widehat T^t),( \rho^\mathsf{S},\tau^\mathsf{S}),( \rho^\mathsf{I},\tau^\mathsf{I}) \right)&=(\rho^\mathsf{I}\circ  \rho^\mathsf{S},\tau^\mathsf{I}\circ \tau^\mathsf{S})
    \end{align*}
    for all $\left((\widehat F,\widehat T^t),( \rho^\mathsf{S},\tau^\mathsf{S}),( \rho^\mathsf{I},\tau^\mathsf{I})\right)\in S$. It suffices to prove the following three statements:
        \begin{enumerate}[topsep=0pt,leftmargin=25pt]
        \setlength{\itemsep}{0pt}
        \item $\sigma$ is a mapping from $S$ to $\BIsoHom((F,T^r),(\tilde F,\tilde T^s))$;
        \item $\sigma$ is surjective;
        \item $\sigma\left((\widehat F_1,\widehat T_1^{t_1}),( \rho_1^\mathsf{S},\tau_1^\mathsf{S}),( \rho_1^\mathsf{I},\tau_1^\mathsf{I}) \right)=\sigma\left((\widehat F_2,\widehat T_2^{t_2}),( \rho_2^\mathsf{S},\tau_2^\mathsf{S}),( \rho_2^\mathsf{I},\tau_2^\mathsf{I}) \right)$ iff there exists an isomorphism $(\widehat\rho,\widehat\tau)$ from $(\widehat F_1,\widehat T_1^{t_1})$ to $(\widehat F_2,\widehat T_2^{t_2})$ such that $\widehat\rho\circ\rho_1^\mathsf{S}=\rho_2^\mathsf{S}$, $\widehat\tau\circ\tau_1^\mathsf{S}=\tau_2^\mathsf{S}$, $\rho_1^\mathsf{I}=\rho_2^\mathsf{I}\circ\widehat\rho$, $\tau_1^\mathsf{I}=\tau_2^\mathsf{I}\circ\widehat\tau$.
        \end{enumerate}
    We will prove these statements one by one.
    \begin{enumerate}[topsep=0pt,leftmargin=25pt]
        \setlength{\itemsep}{0pt}
        \item We first prove that $\sigma$ is a mapping from $S$ to $\BIsoHom((F,T^r),(\tilde F,\tilde T^s))$. This simply follows from the fact that both bIsoSurj and bIsoInj are bIsoHom, and the composition of two bIsoHoms are still a bIsoHom.
        \item We next prove that $\sigma$ is surjective. Given $(\rho^\mathsf{H},\tau^\mathsf{H})\in\BIsoHom((F,T^r),(\tilde F,\tilde T^s))$, we define $(\widehat F,\widehat T^t)$, $( \rho^\mathsf{S},\tau^\mathsf{S})$, and $( \rho^\mathsf{I},\tau^\mathsf{I})$ as follows:
        \begin{enumerate}[label=\alph*),topsep=0pt,leftmargin=25pt]
            \setlength{\itemsep}{0pt}
            \item Let $\widehat F=\tilde F[\rho^\mathsf{H}(V_F)]$ and $\widehat T^t=\tilde T^s[\tau^\mathsf{H}(V_T)]$. We clearly have $(\widehat F,\widehat T^t)\in\gS^M$.
            \item Let $\rho^\mathsf{S}=\rho^\mathsf{H}$ and $\tau^\mathsf{S}=\tau^\mathsf{H}$. Obviously, $( \rho^\mathsf{S},\tau^\mathsf{S})$ is a bIsoSurj from $(F,T^r)$ to $(\widehat F,\widehat T^t)$.
            \item Define identity mappings $\rho^\mathsf{I}(u)=u$ for all $u\in V_{\widehat F}$ and $\tau^\mathsf{I}(t)=t$ for all $t\in V_{\widehat T}$. Obviously, $( \rho^\mathsf{I},\tau^\mathsf{I})$ is a bIsoInj from $(\widehat F,\widehat T^t)$ to $(\tilde F,\tilde T^s)$.
        \end{enumerate}
        We clearly have $\rho^\mathsf{H}=\rho^\mathsf{I}\circ\rho^\mathsf{S}$ and $\tau^\mathsf{H}=\tau^\mathsf{I}\circ\tau^\mathsf{S}$. Thus, $\sigma$ is a surjection.
        \item We finally prove the aforementioned item 3. It suffices to prove only one direction, namely, $\sigma\left((\widehat F_1,\widehat T_1^{t_1}),( \rho_1^\mathsf{S},\tau_1^\mathsf{S}),( \rho_1^\mathsf{I},\tau_1^\mathsf{I}) \right)=\sigma\left((\widehat F_2,\widehat T_2^{t_2}),( \rho_2^\mathsf{S},\tau_2^\mathsf{S}),( \rho_2^\mathsf{I},\tau_2^\mathsf{I}) \right)$ implies that there exists an isomorphism $(\widehat\rho,\widehat\tau)$ from $(\widehat F_1,\widehat T_1^{t_1})$ to $(\widehat F_2,\widehat T_2^{t_2})$ such that $\widehat\rho\circ\rho_1^\mathsf{S}=\rho_2^\mathsf{S}$, $\widehat\tau\circ\tau_1^\mathsf{S}=\tau_2^\mathsf{S}$, $\rho_1^\mathsf{I}=\rho_2^\mathsf{I}\circ\widehat\rho$, $\tau_1^\mathsf{I}=\tau_2^\mathsf{I}\circ\widehat\tau$.
        \begin{enumerate}[label=\alph*),topsep=0pt,leftmargin=25pt]
            \setlength{\itemsep}{0pt}
            \item We first prove that $\widehat F_1\simeq \widehat F_2$ and $\widehat T_1^{t_1}\simeq \widehat T_2^{t_2}$.\\
            For any $u,v\in V_F$, if $\rho_1^\mathsf{S}(u)\neq \rho_1^\mathsf{S}(v)$, then $\rho_1^\mathsf{I}(\rho_1^\mathsf{S}(u))\neq \rho_1^\mathsf{I}(\rho_1^\mathsf{S}(v))$ since $\rho_1^\mathsf{I}$ is an injection (by \cref{thm:surjective_injective}). Therefore, $\rho_2^\mathsf{I}(\rho_2^\mathsf{S}(u))\neq \rho_2^\mathsf{I}(\rho_2^\mathsf{S}(v))$, and thus $\rho_2^\mathsf{S}(u)\neq\rho_2^\mathsf{S}(v)$. By symmetry, we also have that $\rho_2^\mathsf{S}(u)\neq\rho_2^\mathsf{S}(v)$ implies $\rho_1^\mathsf{S}(u)\neq \rho_1^\mathsf{S}(v)$. This proves that $\rho_1^\mathsf{S}(u)= \rho_1^\mathsf{S}(v)$ iff $\rho_2^\mathsf{S}(u)=\rho_2^\mathsf{S}(v)$.\\
            For any $u,v\in V_F$, if $\{\rho_1^\mathsf{S}(u), \rho_1^\mathsf{S}(v)\}\in E_{\widehat F_1}$, then $\{\rho_1^\mathsf{I}(\rho_1^\mathsf{S}(u)), \rho_1^\mathsf{I}(\rho_1^\mathsf{S}(v))\}\in E_{\tilde F}$ since $\rho_1^\mathsf{I}$ is a homomorphism. Therefore, $\{\rho_2^\mathsf{I}(\rho_2^\mathsf{S}(u)), \rho_2^\mathsf{I}(\rho_2^\mathsf{S}(v))\}\in E_{\tilde F}$. This implies that $\rho_2^\mathsf{I}(\rho_2^\mathsf{S}(u))$ and $\rho_2^\mathsf{I}(\rho_2^\mathsf{S}(v))$ are in the same bag of $\tilde T^s$. Therefore, there are vertices $x,y\in V_{\widehat F_2}$ in the same bag of $\widehat T_2^{s_2}$ such that $\rho_2^\mathsf{I}(x)=\rho_2^\mathsf{I}(\rho_2^\mathsf{S}(u))$ and $\rho_2^\mathsf{I}(y)=\rho_2^\mathsf{I}(\rho_2^\mathsf{S}(v))$, and by definition of bag isomorphism we have $\{x,y\}\in E_{\widehat F_2}$. Since $\rho_2$ is injective, $x=\rho_2^\mathsf{S}(u)$ and $y=\rho_2^\mathsf{S}(v)$, namely, $\{\rho_2^\mathsf{S}(u), \rho_2^\mathsf{S}(v)\}\in E_{\widehat F_2}$. By symmetry, we can prove that $\{\rho_1^\mathsf{S}(u), \rho_1^\mathsf{S}(v)\}\in E_{\widehat F_1}$ iff $\{\rho_2^\mathsf{S}(u), \rho_2^\mathsf{S}(v)\}\in E_{\widehat F_2}$.\\
            Finally, noting that $\rho_1^\mathsf{S}$ and $\rho_2^\mathsf{S}$ are surjective (by \cref{thm:surjective_injective}) and $\ell_{\widehat F_1}(\rho_1^\mathsf{S}(u))=\ell_{F}(u)=\ell_{\widehat F_2}(\rho_2^\mathsf{S}(u))$ for all $u\in V_F$, we obtain that $\widehat F_1\simeq \widehat F_2$. Following the same procedure, we can prove that $\widehat T_1^{t_1}\simeq \widehat T_2^{t_2}$.
            \item Consequently, there exist isomorphisms $\widehat \rho$ and $\widehat \tau$ such that $\widehat\rho\circ\rho_1^\mathsf{S}=\rho_2^\mathsf{S}$, $\widehat\tau\circ\tau_1^\mathsf{S}=\tau_2^\mathsf{S}$. For any node $q\in V_T$,
            \begin{equation*}
                \widehat \rho(\beta_{\widehat T_1}(\tau_1^\mathsf{S}(q)))=\widehat \rho(\rho_1^\mathsf{S}(\beta_T(q)))=\rho_2^\mathsf{S}(\beta_T(q)))=\beta_{\widehat T_2}(\tau_2^\mathsf{S}(q))=\beta_{\widehat T_2}(\widehat \tau(\tau_1^\mathsf{S}(q))).
            \end{equation*}
            Since $\tau_1^\mathsf{S}$ is surjective, $\tau_1^\mathsf{S}(q)$ ranges over all nodes in $\widehat T_1^{s_1}$ when $q$ ranges over $V_T$.  We thus conclude that $(\rho,\tau)$ is an isomorphism from $(\widehat F_1,\widehat T_1^{t_1})$ to $(\widehat F_2,\widehat T_2^{t_2})$ (see \cref{def:tree_decomposed_iso}).
            \item We finally prove that $\rho_1^\mathsf{I}=\rho_2^\mathsf{I}\circ\widehat\rho$ and $\tau_1^\mathsf{I}=\tau_2^\mathsf{I}\circ\widehat\tau$. Pick any $u\in V_F$, we have $\rho_2^\mathsf{I}(\widehat \rho(\rho_1^\mathsf{S}(u)))=\rho_2^\mathsf{I}(\rho_2^\mathsf{S}(u))=\rho_1^\mathsf{I}(\rho_1^\mathsf{S}(u))$. Since $\rho_1^\mathsf{S}$ is surjective, $\rho_1^\mathsf{S}(u)$ ranges over all nodes in $\widehat F_1$ when $u$ ranges over $V_F$. This proves that $\rho_1^\mathsf{I}=\rho_2^\mathsf{I}\circ\widehat\rho$. Following the same procedure, we can prove that $\tau_1^\mathsf{I}=\tau_2^\mathsf{I}\circ\widehat\tau$.
        \end{enumerate}
    \end{enumerate}
    Combining the above three items concludes the proof.
\end{proof}

In the following, we further define two technical concepts that will be used to describe the next result. We note that these technical concepts also appeared in \citet{dell2018lov}.

\begin{definition}
A tree-decomposed graph $(\tilde F,\tilde T^s)$ is called a \emph{bag extension} of another tree-decomposed graph $(F,T^r)$ if there is a graph $H$ and a mapping $(\rho,\tau)$ such that $F$ is a subgraph of $H$ and $(\rho,\tau)$ is an isomorphism from the tree-decomposed graph $(H,T^r)$ to $(\tilde F,\tilde T^s)$. Define $\BExt\left((F,T^r),(\tilde F,\tilde T^s)\right)$ to be the set of all mappings $(\rho,\tau)$ that satisfies the above conditions, and define $\bExt\left((F,T^r),(\tilde F,\tilde T^s)\right)=\left|\BExt\left((F,T^r),(\tilde F,\tilde T^s)\right)\right|$.
\end{definition}
\begin{remark}
    In other words, a bag extension of a tree-decomposed graph $(F,T^r)$ can be obtained by adding an arbitrary number of edges to $F$ while ensuring that each added edge is contained in a tree node in $T^r$. A immediate fact is that $(\rho,\tau)$ is a homomorphism from $(F,T^r)$ to $(\tilde F,\tilde T^s)$.
\end{remark}
\begin{definition}
Given a tree-decomposed graph $(F,T^r)$ and a graph $G$, a bag-strong homomorphism from $(F,T^r)$ to $G$ (abbreviated as ``bStrHom'') is a homomorphism $f$ from $F$ to $G$ such that, for all $t\in V_T$, $f$ is a strong homomorphism from $F[\beta_T(t)]$ to $G[f(\beta_T(t))]$, i.e., $\{u,v\}\in E_{F[\beta_T(t)]}$ iff $\{f(u),f(v)\}\in E_{G[f(\beta_T(t))]}$. Denote $\BStrHom((F,T^r),G)$ to be the set of all bStrHom from $(F,T^r)$ to $G$, and denote $\bStrHom((F,T^r),G)=|\BStrHom((F,T^r),G)|$.
\end{definition}

The following equation is straightforward:
\begin{lemma}
\label{thm:bStrHom}
    Let $M\in\{\mathsf{Sub},\mathsf{L},\mathsf{LF},\mathsf{F}\}$ be any model. For any graph $G$ and tree-decomposed graph $(F,T^r)\in \gS^M$,
    $$\hom(F,G)=\sum_{(\tilde F,\tilde T^s)\in \gS^M} \frac{\bExt \left((F, T^r),(\tilde F,\tilde T^s)\right)\cdot \bStrHom\left((\tilde F,\tilde T^s),G\right)}{\aut(\tilde F,\tilde T^s)},$$ 
    where $\aut(\tilde F,\tilde T^s)$ denotes the number of automorphisms of $(\tilde F,\tilde T^s)$. Here, the summation ranges over all non-isomorphic (tree-decomposed) graphs in $\gS^M$ and is well-defined as there are only a finite number of graphs making the value in the summation non-zero.
\end{lemma}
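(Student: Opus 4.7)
The plan is to adapt the triple-counting bijection used in \cref{thm:bIsoHom} to the setting of bag-strong homomorphisms. Define
\[
S=\{((\tilde F,\tilde T^s),(\rho,\tau),g):(\tilde F,\tilde T^s)\in\gS^M,\ (\rho,\tau)\in\BExt((F,T^r),(\tilde F,\tilde T^s)),\ g\in\BStrHom((\tilde F,\tilde T^s),G)\}
\]
and the map $\sigma:S\to\Hom(F,G)$ by $\sigma((\tilde F,\tilde T^s),(\rho,\tau),g)=g\circ\rho$. A bExt is itself a homomorphism from $(F,T^r)$ to $(\tilde F,\tilde T^s)$ (since the witness graph $H$ in the bExt definition contains $F$ as a subgraph and $(\rho,\tau):(H,T^r)\to(\tilde F,\tilde T^s)$ is an isomorphism), and every bStrHom is a homomorphism, so $\sigma$ is well-defined. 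Counting $|S|$ by the outer coordinate immediately yields $|S|=\sum_{(\tilde F,\tilde T^s)\in\gS^M}\bExt((F,T^r),(\tilde F,\tilde T^s))\cdot\bStrHom((\tilde F,\tilde T^s),G)$.

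The heart of the proof is a fiber analysis of $\sigma$. Given $h\in\Hom(F,G)$, I would introduce the canonical bag-extension $\widehat F_h$ with vertex set $V_F$, tree decomposition $T^r$, and edge set
\[
E_{\widehat F_h}=\{\{u,v\}:u,v\in\beta_T(t)\text{ for some }t\in V_T,\ h(u)\neq h(v),\ \{h(u),h(v)\}\in E_G\}.
\]
Because $h$ is a homomorphism we have $E_F\subseteq E_{\widehat F_h}$, and because the constraints of \cref{def:canonical_tree_decomposition_restrict} depend only on tree structure and bag cardinalities (not on which edges lie inside the bags), $(\widehat F_h,T^r)\in\gS^M$.

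Next I would show that for each non-isomorphic representative $(\tilde F,\tilde T^s)\in\gS^M$, the triples in $\sigma^{-1}(h)$ with that outer coordinate are in bijection with tree-decomposed-graph isomorphisms from $(\widehat F_h,T^r)$ to $(\tilde F,\tilde T^s)$. Given such a triple, the bExt provides a witness graph $H$ on $V_F$ with $F\subseteq H$ and $(\rho,\tau):(H,T^r)\xrightarrow{\sim}(\tilde F,\tilde T^s)$; the equation $g\circ\rho=h$ together with $\rho$ being a vertex bijection forces $g=h\circ\rho^{-1}$. Pulling the bStrHom ``iff'' condition on $\tilde F[\beta_{\tilde T}(\tau(t))]$ back through $\rho$ yields, for all $u,v\in\beta_T(t)$, that $\{u,v\}\in E_H$ iff $h(u)\neq h(v)$ and $\{h(u),h(v)\}\in E_G$. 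Since every edge of $H$ must lie inside some bag by \cref{def:tree_decomposition}(b), this pins down $E_H=E_{\widehat F_h}$, so $(\rho,\tau)$ is an isomorphism $(\widehat F_h,T^r)\xrightarrow{\sim}(\tilde F,\tilde T^s)$. The converse direction (turning any such isomorphism into a valid triple via $g=h\circ\rho^{-1}$) is routine. Hence this fiber size equals $\aut(\tilde F,\tilde T^s)$ when $(\tilde F,\tilde T^s)\simeq(\widehat F_h,T^r)$, and $0$ otherwise.

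Equating the two counts of $|S|$ gives $\bExt\cdot\bStrHom=\aut(\tilde F,\tilde T^s)\cdot|\{h\in\Hom(F,G):(\widehat F_h,T^r)\simeq(\tilde F,\tilde T^s)\}|$ for each isomorphism class; dividing by $\aut(\tilde F,\tilde T^s)$ termwise and summing partitions $\Hom(F,G)$ by the isomorphism class of $\widehat F_h$, yielding $\sum\bExt\cdot\bStrHom/\aut=\hom(F,G)$, which is the claimed identity. The main obstacle is the rigidity step forcing $H=\widehat F_h$, in particular when $h$ identifies two vertices of a common bag: the bStrHom definition then forbids any edge between them in $\tilde F$ (since $\{g(u),g(v)\}$ would be a self-loop excluded from $E_G$), and pulling this back through $\rho$ must be carefully reconciled with the fact that $F\subseteq H$ also contributes no such edge, so both sides match the definition of $\widehat F_h$.
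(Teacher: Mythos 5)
Your proposal is correct and follows essentially the same strategy as the paper's proof: both define the set $S$ of triples, the composition map $\sigma = g\circ\rho$, construct for each $h\in\Hom(F,G)$ the canonical bag-extension $\widehat F_h$ (the paper builds this directly inside the surjectivity argument; you name it upfront), and then identify the $\sigma$-fiber over $h$ with the set of tree-decomposed-graph isomorphisms out of $(\widehat F_h, T^r)$, which yields the $\aut$ factor. The notation-level detail you point out (the condition $h(u)\neq h(v)$ being implied by $\{h(u),h(v)\}\in E_G$ in a simple graph) matches the paper's formulation; no genuine gap here.
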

\begin{proof}
    The proof has a similar structure to the previous lemma. We define the following set of three-tuples:
    \begin{align*}
        S\!=\!\left\{\left((\tilde F,\tilde T^s),(\rho,\tau),g\right)\!:\!(\tilde F,\tilde T^s) \!\in\!  \gS^M,(\rho,\tau) \!\in\!  \BExt\!\left((F,T^r),(\tilde F,\tilde T^s)\!\right),g \!\in\!  \BStrHom((\tilde F,\tilde T^s),G) \right\}.
    \end{align*}
    Define a mappings $\sigma$ such that $\sigma\left((\tilde F,\tilde T^s),(\rho,\tau),g\right)= g\circ \rho$ for all $\left((\tilde F,\tilde T^s),(\rho,\tau),g\right)\in S$. It is straightforward to see that $\sigma$ is a mapping from $S$ to $\BIsoHom((F,T^r),(\tilde F,\tilde T^s))$, namely, $g\circ \rho$ is a homomorphism from $F$ to $G$ for all $\left((\tilde F,\tilde T^s),(\rho,\tau),g\right)\in S$.

    We then prove that $\sigma$ is surjective. Given $h\in\Hom(F,G)$, define $(\tilde F,\tilde T^s)$, $(\rho,\tau)$, and $g$ as follows:
    \begin{enumerate}[label=\alph*),topsep=0pt,leftmargin=25pt]
        \setlength{\itemsep}{0pt}
        \item Define $\tilde F$ be the graph obtained from $F$ by adding edges $\{\{u,v\}:\exists t\in V_T\text{ s.t. } u,v\in\beta_T(t),\{h(u),h(v)\}\in E_G\}$, and let $\tilde T^s=T^r$. Clearly, $(\tilde F,\tilde T^s)$ is a bag extension of $(F,T^r)$.
        \item Define identity mappings $\rho(u)=u$ for all $u\in V_{ F}$ and $\tau(t)=t$ for all $t\in V_{T}$. Clearly, $(\rho,\tau)\in \BExt((F,T^r),(\tilde F,\tilde T^s))$.
        \item Let $g=h$. It is easy to see that $g$ is a strong homomorphism from $\tilde F[\beta_{\tilde T}(t)]$ to $G[g(\beta_{\tilde T}(t))]$ for each $t\in V_{\tilde T}$. Thus, $g\in \BStrHom((\tilde F,\tilde T^s),G)$.
    \end{enumerate}
    Noting that $h=g=g\circ \rho$, we have proved that $\sigma$ is a surjection.

    We finally prove that $\sigma\left((\tilde F_1,\tilde T_1^{s_1}),(\rho_1,\tau_1),g_1\right)=\sigma\left((\tilde F_2,\tilde T_2^{s_2}),(\rho_2,\tau_2),g_2\right)$ implies that there exists an isomorphism $(\tilde\rho,\tilde\tau)$ from $(\tilde F_1,\tilde T_1^{s_1})$ to $(\tilde F_2,\tilde T_2^{s_2})$ such that $\tilde\rho\circ\rho_1=\rho_2$, $\tilde\tau\circ\tau_1=\tau_2$, $g_1=g_2\circ\tilde\rho$. We first prove that $\tilde F_1\simeq \tilde F_2$ and $\tilde T_1^{s_1}\simeq \tilde T_2^{s_2}$. $(\mathrm{i})$~For any $u,v\in V_F$, we obviously have $\rho_1(u)= \rho_1(v)$ iff $u=v$ iff $\rho_2(u)=\rho_2(v)$. $(\mathrm{ii})$~Let $i\in\{1,2\}$. For any $u,v\in V_F$, $\{\rho_i(u), \rho_i(v)\}\in E_{\tilde F_i}$ iff $\{u,v\}\in E_F$ or $u,v$ are in the same bag of $T^r$ and $\{g_i(\rho_i(u)),g(\rho_i(v))\}\in E_G$. Since $g_1\circ \rho_1=g_2\circ\rho_2$, we have $\{\rho_1(u), \rho_1(v)\}\in E_{\tilde F_1}$ iff $\{\rho_2(u), \rho_2(v)\}\in E_{\tilde F_2}$. $(\mathrm{iii})$~Finally, noting that $\rho_1$ and $\rho_2$ are bijective and $\ell_{\tilde F_1}(\rho_1(u))=\ell_F(u)=\ell_{\tilde F_2}(\rho_2(u))$ for all $u\in V_F$, we obtain that $\tilde F_1\simeq \tilde F_2$. On the other hand, $\tilde T_1^{s_1}\simeq \tilde T_2^{s_2}$ trivially holds. The remaining procedure is almost the same as in the previous lemma.
\end{proof}

We next show that the mapping bStrHom can be further decomposed as shown in \cref{thm:bstrhom2}. We need an auxiliary concept:
\begin{definition}
Given two tree-decomposed graphs $(F,T^r)$ and $(\tilde F,\tilde T^s)$, a homomorphism $(\rho,\tau)$ from $(F,T^r)$ to $(\tilde F,\tilde T^s)$ is called bag-strong surjective (abbreviated as ``bStrSurj'') if $\rho$ is a bag-strong homomorphism from $(F,T^r)$ to $\tilde F$ and is surjective on both vertices and edges, and $\tau$ is an isomorphism from $T^r$ to $\tilde T^s$.
Denote $\BStrSurj((F,T^r),(\tilde F,\tilde T^s))$ to be the set of all bStrSurj from $(F,T^r)$ to $(\tilde F,\tilde T^s)$, and denote $\bStrSurj((F,T^r),(\tilde F,\tilde T^s))=|\BStrSurj((F,T^r),(\tilde F,\tilde T^s))|$.
\end{definition}

\begin{lemma}
\label{thm:bstrhom2}
    Let $M\in\{\mathsf{Sub},\mathsf{L},\mathsf{LF},\mathsf{F}\}$ be any model. For any graph $G$ and tree-decomposed graph $(F,T^r)\in \gS^M$,
    $$\bStrHom((F,T^r),G)=\sum_{(\tilde F,\tilde T^s)\in \gS^M} \frac{\bStrSurj \left((F, T^r),(\tilde F,\tilde T^s)\right)\cdot \bIso\left((\tilde F,\tilde T^s),G\right)}{\aut(\tilde F,\tilde T^s)},$$ 
    where $\aut(\tilde F,\tilde T^s)$ denotes the number of automorphisms of $(\tilde F,\tilde T^s)$. Here, the summation ranges over all non-isomorphic (tree-decomposed) graphs in $\gS^M$ and is well-defined as there are only a finite number of graphs making the value in the summation non-zero.
\end{lemma}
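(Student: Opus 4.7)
I plan to follow the same double-counting template used for \cref{thm:bIsoHom} and \cref{thm:bStrHom}: introduce an auxiliary set of factorization triples, define a composition map into $\BStrHom((F,T^r),G)$, and count its fibers via automorphisms. Concretely, set
\[
S=\left\{((\tilde F,\tilde T^s),(\rho,\tau),g):(\tilde F,\tilde T^s)\in\gS^M,\,(\rho,\tau)\in\BStrSurj((F,T^r),(\tilde F,\tilde T^s)),\,g\in\BIso((\tilde F,\tilde T^s),G)\right\}
\]
and $\sigma((\tilde F,\tilde T^s),(\rho,\tau),g):=g\circ\rho$. I will show (i) $\sigma$ lands in $\BStrHom((F,T^r),G)$; (ii) $\sigma$ is surjective; and (iii) two triples share an image under $\sigma$ iff they are related by a (unique) isomorphism $(\widehat\rho,\widehat\tau):(\tilde F_1,\tilde T_1^{s_1})\to(\tilde F_2,\tilde T_2^{s_2})$ with $\widehat\rho\circ\rho_1=\rho_2$, $\widehat\tau\circ\tau_1=\tau_2$, and $g_1=g_2\circ\widehat\rho$. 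Then each fiber of $\sigma$ is an $\Aut(\tilde F,\tilde T^s)$-orbit of size $\aut(\tilde F,\tilde T^s)$, and reindexing $|S|$ by isomorphism classes yields the claimed identity.

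Claim (i) is immediate: on each bag $\beta_T(t)$, $\rho$ is a strong homomorphism and $g$ restricted to $\rho(\beta_T(t))\subset\beta_{\tilde T}(\tau(t))$ is an isomorphism onto its image in $G$, so $g\circ\rho$ is a strong homomorphism, i.e.\ a bStrHom. Claim (iii) is a direct adaptation of the uniqueness argument in the proof of \cref{thm:bIsoHom}: bag-injectivity of each $g_i$ combined with bag-surjectivity of each $\rho_i$ forces the fiber equivalence $\rho_1(u)=\rho_1(v)\Leftrightarrow\rho_2(u)=\rho_2(v)$, and the bStrSurj edge condition forces the edge sets of $\tilde F_1$ and $\tilde F_2$ to match after this identification, producing the required intertwining isomorphism.

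The heart of the proof is Claim (ii). The naive ``image'' factorization through $\tilde F:=h(F)\subset G$ may fail to produce a canonical tree decomposition: when $h(u_1)=h(u_2)$ but $B_T(u_1)\cap B_T(u_2)=\emptyset$, the vertex $h(u_1)$ would inhabit a disconnected set of bags of $\tilde T^s$, violating \cref{def:tree_decomposition}(c). I remedy this by factoring through a finer quotient. Let $\sim$ be the transitive closure of the relation $u\sim_0 v\iff h(u)=h(v)$ and $u,v$ lie in a common bag of $T^r$; set $V_{\tilde F}:=V_F/\sim$, take $\rho$ as the quotient map, let $E_{\tilde F}:=\{\{\rho(u),\rho(v)\}:\{u,v\}\in E_F\}$, assign bags $\beta_{\tilde T}(\tau(t)):=\rho(\beta_T(t))$ as multisets of the same cardinality as $\beta_T(t)$, and define $g([u]):=h(u)$. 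Each $\sim_0$-step joins two vertices in a common bag, so $\bigcup_{v\sim u}B_T(v)$ is connected in $T^r$, giving a valid canonical tree decomposition. The bStrHom property of $h$ ensures that distinct $\sim$-classes meeting a common bag have distinct $h$-images (so $g$ is bag-injective), and that any edge of $\tilde F$ witnessed in a bag lifts to an actual edge of $F$ in that bag (so $\rho$ is bag-strong). Surjectivity of $\rho$ on vertices and edges, and $g\circ\rho=h$, are automatic.

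\textbf{Main obstacle.} The most delicate step is verifying that the quotient $(\tilde F,\tilde T^s)$ actually belongs to the restricted family $\gS^M$, i.e.\ that the additional structural restrictions of \cref{def:canonical_tree_decomposition_restrict} are preserved by $\rho$. For $\gS^\mathsf{Sub}$ the shared endpoint transfers because $\rho$ is bag-surjective; for $\gS^\mathsf{L}$ the single-child-at-odd-depth condition is preserved because $\tau$ is a tree isomorphism; but for $\gS^\mathsf{LF}$ the locality condition on the unique new vertex in $\beta_T(t)\backslash\beta_T(s)$ must be rechecked after bag identification, and this case analysis, together with bookkeeping for the multiset structure when $\rho$ collapses vertices inside a single bag, is where the bulk of the technical work lies.
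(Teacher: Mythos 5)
Your proof proposal is correct and follows essentially the same factorization/double-counting template as the paper's proof: introduce the set of factorization triples, show the composition map surjects onto $\BStrHom((F,T^r),G)$ with each fiber an $\Aut(\tilde F,\tilde T^s)$-orbit of size exactly $\aut(\tilde F,\tilde T^s)$ (freeness coming from surjectivity of $\rho$), and establish surjectivity by quotienting $V_F$ along a bag-linked equivalence. Your transitive-closure definition of $\sim$ coincides with the paper's path-based one on canonical tree decompositions (adjacent tree nodes have nested bags, so any witnessing path chains through same-bag, same-$h$-image pairs), and the obstacle you flag about $\gS^M$-restriction preservation is real but mild: the underlying tree $(V_T,E_T)$ and the multiset bag cardinalities are unchanged under $\rho$, so width and the shape conditions for $\gS^\mathsf{L}$ and $\gS^\mathsf{Sub}$ are automatic while the $\gS^\mathsf{LF}$ locality condition transfers because $\rho$ is a graph homomorphism.
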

\begin{proof}
    The proof has a similar structure to the previous lemma. We define the following set of three-tuples:
    \begin{align*}
        S\!=\!\left\{\left((\tilde F,\tilde T^s),(\rho,\tau),g\right)\!:\!(\tilde F,\tilde T^s) \!\in\!  \gS^M,(\rho,\tau) \!\in\!  \BStrSurj\!\left((F,T^r),(\tilde F,\tilde T^s)\!\right),g \!\in\!  \BIso((\tilde F,\tilde T^s),G) \right\}.
    \end{align*}
    Define a mappings $\sigma$ such that $\sigma\left((\tilde F,\tilde T^s),(\rho,\tau),g\right)= g\circ \rho$ for all $\left((\tilde F,\tilde T^s),(\rho,\tau),g\right)\in S$. It suffices to prove the following three statements:
    \begin{enumerate}[topsep=0pt,leftmargin=25pt]
        \setlength{\itemsep}{0pt}
        \item $\sigma$ is a mapping from $S$ to $\BStrHom((F,T^r),G)$;
        \item $\sigma$ is surjective;
        \item $\sigma\left((\tilde F_1,\tilde T_1^{s_1}),( \rho_1,\tau_1),g_1\right)=\sigma\left((\tilde F_2,\tilde T_2^{s_2}),( \rho_2,\tau_2),g_2\right)$ iff there exists an isomorphism $(\tilde\rho,\tilde\tau)$ from $(\tilde F_1,\tilde T_1^{s_1})$ to $(\tilde F_2,\tilde T_2^{s_2})$ such that $\tilde\rho\circ\rho_1=\rho_2$, $\tilde\tau\circ\tau_1=\tau_2$, $g_1=g_2\circ\tilde\rho$.
    \end{enumerate}
    We will prove these statements one by one.
    \begin{enumerate}[topsep=0pt,leftmargin=25pt]
        \setlength{\itemsep}{0pt}
        \item We first prove that $\sigma$ is a mapping from $S$ to $\BStrHom((F,T^r),G)$. Pick any $\left((\tilde F,\tilde T^s),(\rho,\tau),g\right)\in S$. Pick any $t\in V_T$ and $u,v\in\beta_T(t)$. Then, $\{u,v\}\in E_{F}$ iff $\{\rho(u),\rho(v)\}\in E_{\tilde F}$ (since $\rho$ is a strong homomorphism from $F[\beta_T(t)]$ to $\tilde F[\rho(\beta_T(t))]$). Also, $\rho(u),\rho(v)\in\beta_{\tilde T}(\tau(t))$ are in the same bag. Similarly, $\{\rho(u),\rho(v)\}\in E_{\tilde F}$ iff $\{g(\rho(u)),g(\rho(v))\}\in E_{G}$ (since $g$ is a bIso). Thus, $g\circ\rho$ is a bag-strong homomorphism.
        \item We next prove that $\sigma$ is surjective. Given $h\in \BStrHom((F,T^r),G)$, define $(\tilde F,\tilde T^s)$, $(\rho,\tau)$, and $g$ as follows. First define a relation $\sim$ on set $V_F$ such that $u\sim v$ iff the following hold:
        \begin{enumerate}[label=\alph*),topsep=0pt,leftmargin=25pt]
            \setlength{\itemsep}{0pt}
            \item $h(u)=h(v)$;
            \item There exists a path $P$ in $T^r$ with endpoints $t_1,t_2\in V_T$ such that $u\in\beta_T(t_1)$, $v\in\beta_T(t_2)$, and all node $t$ on path $P$ satisfies that $h(u)\in h(\beta_T(t))$.
        \end{enumerate}
        It is easy to see that $\sim$ is an equivalence relation on $V_F$. We can thus define a mapping $\rho$ that respects the relation, i.e., $\rho(u)=\rho(v)$ iff $u\sim v$ for all $u,v\in V_F$. Moreover, for any edge $\{u,v\}\in E_F$, $\rho(u)\neq \rho(v)$ (since $h$ is a homomorphism and $h(u)\neq h(v)$). This implies that we can define $\tilde F$ to be the homomorphic image of $F$ such that $\rho$ is the surjective homomorphism on both vertices and edges.
        
        We then define the mapping $g: V_{\tilde F}\to V_G$ such that $g(\rho(u))=h(u)$ for all $u\in V_F$. Note that $g$ is well-defined since $\rho(u)=\rho(v)$ implies $h(u)=h(v)$ for all $u,v\in V_F$, and $\rho:V_F \to V_{\tilde F}$ is surjective. It follows that $h=g\circ \rho$. To prove that $g$ is a homomorphism, note that for all $\{x,y\}\in E_{\tilde F}$, there exists an edge $\{u,v\}\in E_F$ with $\rho(u)=x$, $\rho(v)=y$, which implies that $\{h(u),h(v)\}\in E_G$ (since $h$ is a homomorphism), namely, $\{g(x),g(y)\}\in E_G$.

        We next define tree $\tilde T^s=(V_T,E_T,\beta_{\tilde T})$, $s=t$, and identity mapping $\tau$ so that $\tau$ is an isomorphism from $T_r$ to $T_s$ (ignoring the labels). Set $\beta_{\tilde T}(t)=\rho(\beta_T(t))$ for all $t\in V_T$. We will prove that $(\tilde F,\tilde T^s)\in \gS^M$ is a valid tree decomposition. It suffices to prove that \cref{def:tree_decomposition}(c) holds.  Pick any vertex $x\in V_{\tilde F}$ and tree node $ t_1, t_2\in B_{\tilde T}(x)$. Then, there exists $u\in\beta_T(t_1),v\in\beta_T(t_2)$ such that $\rho(u)=x$, $\rho(v)=x$. Therefore, $u\sim v$. As such, there exists a path $P$ in $T^r$ such that all node $t$ on $P$ satisfies that there exists $w\in\beta_T(t)$ with $h(w)=h(u)$, namely, $w\sim u$. Consequently, $\rho(u)\in \beta_{\tilde T}(t)$, implying that $\tilde T^s[B_{\tilde T}(x)]$ is connected. This proves that $(\tilde F,\tilde T^s)\in \gS^M$. Also, $(\rho,\tau)$ is clearly a homomorphism from $(F,T^r)$ to $(\tilde F,\tilde T^s)$ according to \cref{def:tree_decomposed_graph_hom}.
        
        It remains to prove that $\rho$ is a bag-strong homomorphism and $g$ is a bIso. Pick any $t\in V_T$ and $u,v\in \beta_T(t)$. If $\{u,v\}\notin E_F$, then $\{h(u),h(v)\}\notin E_G$ (since $h$ is a bag-strong homomorphism). Therefore, $\{\rho(u),\rho(v)\}\notin E_{\tilde F}$ (since $g$ is a homomorphism), namely, $\rho$ is a bag-strong homomorphism. Since $\rho$ is surjective, $\{\rho(u),\rho(v)\}$ ranges over all vertices in the same bag of $\tilde T^s$ when $t\in V_T$ and $u,v\in \beta_T(t)$ are arbitrary. Therefore, $g$ is a bIso because $\{\rho(u),\rho(v)\}\notin E_{\tilde F}$ iff $\{h(u),h(v)\}\notin E_G$.
        
        \item We finally prove that $\sigma\left((\tilde F_1,\tilde T_1^{s_1}),( \rho_1,\tau_1),g_1\right)=\sigma\left((\tilde F_2,\tilde T_2^{s_2}),( \rho_2,\tau_2),g_2\right)$ implies there exists an isomorphism $(\tilde\rho,\tilde\tau)$ from $(\tilde F_1,\tilde T_1^{s_1})$ to $(\tilde F_2,\tilde T_2^{s_2})$ such that $\tilde\rho\circ\rho_1=\rho_2$, $\tilde\tau\circ\tau_1=\tau_2$, $g_1=g_2\circ\tilde\rho$. Let $h=g_1\circ \rho_1=g_2\circ\rho_2$. Here, we will only prove that $\tilde F_1\simeq \tilde F_2$ since the remaining procedure is almost the same as previous proofs. Since both $\tilde F_1$ and $\tilde F_2$ are homomorphic images of $F$, it suffices to prove that, for all $u,v\in V_F$, $\rho_1(u)=\rho_1(v)$ iff the following hold:
        \begin{enumerate}[label=\alph*),topsep=0pt,leftmargin=25pt]
            \setlength{\itemsep}{0pt}
            \item $h(u)=h(v)$;
            \item There exists a path $P$ in $T^r$ with endpoints $t_1,t_2\in V_T$ such that $u\in\beta_T(t_1)$, $v\in\beta_T(t_2)$, and all node $t$ on path $P$ satisfies that $h(u)\in h(\beta_T(t))$.
        \end{enumerate}
        On one hand, if $\rho_1(u)=\rho_1(v)$, we clearly have $h(u)=h(v)$ and there exists $t_1\in B_T(u),t_2\in B_T(v)$ such that $t_1,t_2\in B_{\tilde T}(\rho_1(u))$. Since $\tilde T[B_{\tilde T}(\rho_1(u))]$ is connected, there is a path $P$ with endpoints $t_1,t_2$ such that all node $t$ on $P$ satisfies $\rho_1(u)\in \rho_1(\beta_T(t))$ and thus $h(u)\in h(\beta_T(t))$.\\
        On the other hand, if $\rho_1(u)\neq \rho_1(v)$ but the above items (a) and (b) hold, consider two cases:
        \begin{itemize}[topsep=0pt,leftmargin=25pt]
            \setlength{\itemsep}{0pt}
            \item $u$ and $v$ are in the same bag of $T$. Then, $\rho_1(u)$ and $\rho_1(v)$ are in the same bag of $\tilde T$. Since $g_1$ is a bIso, $g_1(\rho_1(u))\neq g_1(\rho_1(v))$, which contradicts item (a).
            \item $u$ and $v$ are not in the same bag of $T$. Then, there exist two adjacent nodes $t_1,t_2$ on $P$ such that $\rho_1(u)\in\beta_{\tilde T}(t_1)$, $\rho_1(u)\notin\beta_{\tilde T}(t_2)$. By \cref{def:canonical_tree_decomposition}(c), $\beta_{\tilde T}(t_2)\subset \beta_{\tilde T}(t_1)$. Also, item (b) implies that there exists $w\in\beta_T(t_2)$ such that $h(w)=h(u)$. Therefore, $\rho_1(u)$ and $\rho_1(w)$ are two different nodes in $\beta_{\tilde T}(t_1)$ with $g_1(\rho_1(u))=h(u)=h(w)=g_1(\rho_1(w))$. This contradicts the condition that $g_1$ is a bIso.
        \end{itemize}
        This yields the desired result that $\tilde F_1\simeq \tilde F_2$. 
    \end{enumerate}
    Combining the above three items concludes the proof.
\end{proof}

Let $M\in\{\mathsf{Sub},\mathsf{L},\mathsf{LF},\mathsf{F}\}$ be any model. We can list all \emph{non-isomorphic} tree-decomposed (labeled) graphs in $\gS^M$ into an infinite sequence $(F_1,T_1^{r_1}),(F_2,T_2^{r_2}),\cdots$. Consider two types of ordering:
\begin{itemize}[topsep=0pt,leftmargin=25pt]
\setlength{\itemsep}{0pt}
    \item Ordered by the size of the graphs $F_i$. We denote this ordering as $\mathsf{1st}$. This ordering requires that for any $i<j$, either $|V_{F_i}|<|V_{F_{j}}|$ or $|V_{F_i}|=|V_{F_{j}}|$ and $|E_{F_i}|\ge|E_{F_{j}}|$. Note that when two graphs have an equal number of vertices, we place the graph with more edges to the front. When the number of edges is also the same, they can be arranged in any fixed order.
    \item Ordered by the size of the trees $T_i$. We denote this ordering as $\mathsf{2nd}$. This ordering requires that $|V_{T_i}|\le |V_{T_j}|$ for any $i<j$. When the number of tree nodes is the same, they can be arranged in any fixed order.
\end{itemize}
Without loss of generality, we assume that the labels of each graph $F_i$ are integers in the range of $[1,|V_{F_i}|]$. Note that using integer labels of a bounded range is already sufficient to represent all non-isomorphic labeled graphs up to a bijective label transformation. This ensures that the index $i$ is countable.

For the $\mathsf{1st}$ ordering, define the following notations:
\begin{enumerate}[label=\alph*),topsep=0pt,leftmargin=25pt]
\setlength{\itemsep}{0pt}
    \item Let $f:\gS^M\times \gS^M \to \mathbb N$ be any mapping. Define the associated (infinite) matrix $\mA^{f,M,\mathsf{1st}}\in \mathbb N^{\mathbb N_+\times \mathbb N_+}$ such that $A^{f,M,\mathsf{1st}}_{i,j}=f((F_i,T_i^{r_i}),(F_j,T_j^{r_j}))$.
    \item Let $g:\gS^M\times \gG \to \mathbb N$ be any mapping. Given a graph $G\in \gG$, define the (infinite) vector $\vp^{g,M,\mathsf{1st}}_G\in \mathbb N^{\mathbb N_+}$ such that $p^{f,M,\mathsf{1st}}_{G,i}=g((F_i,T_i^{r_i}),G)$.
    \item Let $h:\gG\times \gG \to \mathbb N$ be any mapping. Given a graph $G\in \gG$, define the (infinite) vector $\vp^{h,M,\mathsf{1st}}_G\in \mathbb N^{\mathbb N_+}$ such that $p^{h,M,\mathsf{1st}}_{G,i}=h(F_i,G)$.
\end{enumerate}
We can similarly define $\mA^{f,M,\mathsf{2nd}}$, $\vp^{g,M,\mathsf{2nd}}_G$, $\vp^{h,M,\mathsf{2nd}}_G$ for the ordering $\mathsf{2nd}$.

\begin{corollary}
\label{thm:bIso_hom_equal}
    Let $M\in\{\mathsf{Sub},\mathsf{L},\mathsf{LF},\mathsf{F}\}$ be any model and $G,H$ be two graphs. Then, $\hom(F,G)=\hom(F,H)$ for all $(F,T^r)\in \gS^M$ iff $\bIso((F,T^r),G)=\bIso((F,T^r),H)$ for all $(F,T^r)\in \gS^M$.
\end{corollary}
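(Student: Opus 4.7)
My plan is to reduce the corollary to the invertibility of a single infinite linear map relating $\vp^{\hom, M, \mathsf{1st}}_G$ and $\vp^{\bIso, M, \mathsf{1st}}_G$. Composing \cref{thm:bStrHom} with \cref{thm:bstrhom2} eliminates the intermediate quantity $\bStrHom$ and yields
\begin{equation*}
\hom(F_i, G) = \sum_{j,k} \frac{\bExt\bigl((F_i,T_i^{r_i}),(F_j,T_j^{r_j})\bigr)\cdot \bStrSurj\bigl((F_j,T_j^{r_j}),(F_k,T_k^{r_k})\bigr)}{\aut(F_j,T_j^{r_j})\,\aut(F_k,T_k^{r_k})}\,\bIso\bigl((F_k,T_k^{r_k}),G\bigr).
\end{equation*}
In matrix form this reads $\vp^{\hom, M, \mathsf{1st}}_G = A\,\vp^{\bIso, M, \mathsf{1st}}_G$, where $A := A^{\bExt,M,\mathsf{1st}}\,D^{-1}\,A^{\bStrSurj,M,\mathsf{1st}}\,D^{-1}$ and $D$ is the diagonal matrix with entries $\aut(F_i,T_i^{r_i})$. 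The corollary reduces to showing that $A$ is invertible: indeed $\vp^{\hom}_G = \vp^{\hom}_H$ iff $A\,\vp^{\bIso}_G = A\,\vp^{\bIso}_H$ iff $\vp^{\bIso}_G = \vp^{\bIso}_H$.

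The bulk of the argument is to verify that both $A^{\bExt}$ and $A^{\bStrSurj}$ are lower triangular with strictly positive diagonal in the $\mathsf{1st}$ ordering. For $A^{\bExt}_{i,j}$, any bag extension preserves the vertex set up to isomorphism and can only add edges, so $|V_{F_j}|=|V_{F_i}|$ and $|E_{F_j}|\ge|E_{F_i}|$, placing $j\le i$; if additionally $|E_{F_j}|=|E_{F_i}|$ then no edges are added and $(F_j,T_j^{r_j})\simeq(F_i,T_i^{r_i})$, forcing $i=j$; the diagonal entry equals $\aut(F_i,T_i^{r_i})>0$. For $A^{\bStrSurj}_{j,k}$, vertex surjectivity of $\rho$ gives $|V_{F_k}|\le|V_{F_j}|$, strict inequality meaning $k<j$. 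In the equal-vertex case $\rho$ is a vertex bijection, every edge of $F_j$ lies in some bag (tree decomposition axiom), the bag-strong condition preserves within-bag edges exactly, and edge surjectivity then upgrades $\rho$ to a graph automorphism; combined with the tree isomorphism $\tau$ and the fixed bag cardinalities prescribed by \cref{def:canonical_tree_decomposition}, this forces $\rho(\beta_T(t))=\beta_{\tilde T}(\tau(t))$ as multisets, so $(\rho,\tau)$ is a full tree-decomposed isomorphism and $j=k$. The diagonal entry is again $\aut(F_j,T_j^{r_j})>0$.

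Because each row of $A^{\bExt}$ and $A^{\bStrSurj}$ has only finitely many nonzero entries (vertex and edge counts are bounded by the fixed $F_i$), the lower triangular product $A$ admits a well-defined inverse via forward substitution, which completes the proof. I expect the only delicate point to be the equal-vertex case of $A^{\bStrSurj}$: establishing $\rho(\beta_T(t))=\beta_{\tilde T}(\tau(t))$ rather than mere containment, which is what promotes a bag-strong surjective $\rho$ with a tree isomorphism $\tau$ into a genuine isomorphism of tree-decomposed graphs. I plan to handle this by invoking depth-preservation of $\tau$ together with the uniform bag size constraint of \cref{def:canonical_tree_decomposition}(b), which forces the multiset inclusion $\rho(\beta_T(t))\subseteq \beta_{\tilde T}(\tau(t))$ to be an equality.
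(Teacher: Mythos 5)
Your proposal is correct and follows essentially the same route as the paper: compose \cref{thm:bStrHom} and \cref{thm:bstrhom2} into a single linear relation under the $\mathsf{1st}$ ordering, then argue that $\mA^{\bExt,M,\mathsf{1st}}$ and $\mA^{\bStrSurj,M,\mathsf{1st}}$ are lower triangular with positive diagonal so that the composite is invertible by forward substitution. The paper's justification for lower-triangularity of $\mA^{\bStrSurj,M,\mathsf{1st}}$ stops at the observation that $\bStrSurj>0$ forces $|V_{F_i}|>|V_{F_j}|$ or $(|V_{F_i}|=|V_{F_j}|$ and $|E_{F_i}|=|E_{F_j}|)$, which — since the $\mathsf{1st}$ ordering is arbitrary within an equal-$(|V|,|E|)$ class — does not by itself give triangularity. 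You explicitly close this gap by showing that in the equal-size case the bijective, edge-surjective $\rho$ is a graph isomorphism, and that the depth-preserving tree isomorphism $\tau$ combined with the fixed bag cardinalities of \cref{def:canonical_tree_decomposition}(b) upgrades the multiset inclusion $\rho(\beta_T(t))\subseteq\beta_{\tilde T}(\tau(t))$ to an equality, so the bStrSurj is a genuine isomorphism of tree-decomposed graphs and the entry is diagonal. (Minor terminological slip: at the point where you say $\rho$ becomes a ``graph automorphism'' you mean a graph isomorphism from $F_j$ to $F_k$, not an automorphism.)
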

\begin{proof}
We separately consider each direction.
\begin{enumerate}[topsep=0pt,leftmargin=25pt]
    \setlength{\itemsep}{0pt}
    \item We first prove that if $\bIso((F,T^r),G)=\bIso((F,T^r),H)$ for all $(F,T^r)\in \gS^M$, then $\hom(F,G)=\hom(F,H)$ for all $(F,T^r)\in \gS^M$. According to \cref{thm:bStrHom,thm:bstrhom2}, we can rewrite the corresponding equations into matrix forms for any $F\in\gG$:
    \begin{align}
    \vp_{F}^{\hom,M,\mathsf{1st}}=\mA^{\bExt,M,\mathsf{1st}}(\mA^{\aut,M,\mathsf{1st}})^{-1}\vp_F^{\bStrHom,M,\mathsf{1st}}\\
    \vp_F^{\bStrHom,M,\mathsf{1st}}=\mA^{\bStrSurj,M,\mathsf{1st}}(\mA^{\aut,M,\mathsf{1st}})^{-1} \vp_{F}^{\bIso,M,\mathsf{1st}}.
    \end{align}
    From the above equations, we immediately obtain that $\vp_{G}^{\bIso,M,\mathsf{1st}}=\vp_{H}^{\bIso,M,\mathsf{1st}}$ implies $\vp_{G}^{\hom,M,\mathsf{1st}}=\vp_{H}^{\hom,M,\mathsf{1st}}$.
    \item We next prove that if $\hom(F,G)=\hom(F,H)$ for all $(F,T^r)\in \gS^M$, then $\bIso((F,T^r),G)=\bIso((F,T^r),H)$ for all $(F,T^r)\in \gS^M$. This can be seen from the following facts:
    \begin{enumerate}[label=\alph*),topsep=0pt,leftmargin=25pt]
    \setlength{\itemsep}{0pt}
        \item $\mA^{\aut,M,\mathsf{1st}}$ is a diagonal matrix and all diagonal elements are \emph{positive} integers.
        \item $\mA^{\bExt,M,\mathsf{1st}}$ is a lower triangular matrix and all diagonal elements are \emph{positive} integers. This is because for any two tree-composed graphs $(F_i,T_i^{r_i})$ and $(F_j,T_j^{r_j})$, $\bExt((F_i,T_i^{r_i}),(F_j,T_j^{r_j}))>0$ only if $|V_{F_i}|=|V_{F_j}|$ and $|E_{F_i}|\le|E_{F_j}|$.
        \item $\mA^{\bStrSurj,M,\mathsf{1st}}$ is also a lower triangular matrix and all diagonal elements are \emph{positive} integers. This is because for any two tree-composed graphs $(F_i,T_i^{r_i})$ and $(F_j,T_j^{r_j})$, $\bStrSurj((F_i,T_i^{r_i}),(F_j,T_j^{r_j}))>0$ only if $|V_{F_i}|>|V_{F_j}|$ or ($|V_{F_i}|=|V_{F_j}|$ and $|E_{F_i}|=|E_{F_j}|$).
    \end{enumerate}
    Therefore, the composition $\mA^{\bExt,M,\mathsf{1st}}(\mA^{\aut,M,\mathsf{1st}})^{-1}\mA^{\bStrSurj,M,\mathsf{1st}}(\mA^{\aut,M,\mathsf{1st}})^{-1}$ is lower triangular and is invertible (although is it an infinite matrix). We thus arrive at the desired conclusion that $\vp_{G}^{\hom,M,\mathsf{1st}}=\vp_{H}^{\hom,M,\mathsf{1st}}$ implies $\vp_{G}^{\bIso,M,\mathsf{1st}}=\vp_{H}^{\bIso,M,\mathsf{1st}}$.
\end{enumerate}
\end{proof}

\begin{corollary}
\label{thm:cnt_bIso_equal}
    Let $M\in\{\mathsf{Sub},\mathsf{L},\mathsf{LF},\mathsf{F}\}$ be any model and $G,H$ be two graphs. Then, $\cnt^M((F,T^r),G)=\cnt^M((F,T^r),H)$ for all $(F,T^r)\in \gS^M$ iff $\bIso((F,T^r),G)=\bIso((F,T^r),H)$ for all $(F,T^r)\in \gS^M$.
\end{corollary}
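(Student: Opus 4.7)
The plan is to mirror the structure of the proof of Corollary~\ref{thm:bIso_hom_equal}, handling the two directions separately.

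For the direction $\cnt^M$ same $\Rightarrow$ $\bIso$ same, I appeal directly to Lemma~\ref{thm:bIso}: it expresses each $\bIso((F,T^r),G)$ as a linear combination of $\cnt^M((\tilde F,\tilde T^s),G)$ whose coefficients $\bIsoHom((F,T^r),(\tilde F,\tilde T^s))$ do not depend on $G$. Since the sum is finite (for any fixed $G$, only finitely many $(\tilde F,\tilde T^s)$ of each depth contribute because $\cnt^M(\cdot,G)$ is supported on at most $|V_G|^2$ unfolding trees per depth), matching $\cnt^M$ on all of $\gS^M$ immediately forces matching $\bIso$.

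For the harder direction, I combine Lemmas~\ref{thm:bIso} and~\ref{thm:bIsoHom} into the matrix equation
\begin{equation*}
\vp_G^{\bIso,M,\mathsf{2nd}} \;=\; \mA^{\bIsoSurj,M,\mathsf{2nd}}\,(\mA^{\aut,M,\mathsf{2nd}})^{-1}\,\mA^{\bIsoInj,M,\mathsf{2nd}}\,\vp_G^{\cnt^M,M,\mathsf{2nd}},
\end{equation*}
using the $\mathsf{2nd}$ ordering (by non-decreasing tree size $|V_T|$). Under this ordering I would verify that $\mA^{\bIsoSurj,M,\mathsf{2nd}}$ is lower triangular with positive diagonal: a nonzero $(i,j)$-entry requires a surjective tree homomorphism, forcing $|V_{T_i}|\ge |V_{T_j}|$, while the diagonal entry $\bIsoSurj((F_i,T_i^{r_i}),(F_i,T_i^{r_i}))$ equals $\aut(F_i,T_i^{r_i})>0$. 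By the symmetric argument $\mA^{\bIsoInj,M,\mathsf{2nd}}$ is upper triangular with positive diagonal, and $\mA^{\aut,M,\mathsf{2nd}}$ is diagonal and positive. I also note that all three matrices are block-diagonal when further refined by tree depth, since each of $\bIsoHom$, $\bIsoSurj$, $\bIsoInj$ requires matching depth (condition (c) of \cref{def:tree_decomposed_graph_hom}).

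Given $\vp_G^{\bIso}=\vp_H^{\bIso}$, the conclusion $\vp_G^{\cnt^M}=\vp_H^{\cnt^M}$ then follows by two successive triangular solves. First, $\mA^{\bIsoSurj}(\mA^{\aut})^{-1}$ is lower triangular with diagonal entries equal to $1$, so inducting on the index from the smallest upward gives $\mA^{\bIsoInj}\vp_G^{\cnt^M}=\mA^{\bIsoInj}\vp_H^{\cnt^M}$. Second, within each depth block the vectors $\vp_G^{\cnt^M}$ and $\vp_H^{\cnt^M}$ have \emph{finite support} (again because at depth $2D$ they are supported on the at most $|V_G|^2$ and $|V_H|^2$ unfolding trees of $G,H$), and $\mA^{\bIsoInj}$ restricted to that depth is upper triangular with positive diagonal; inducting downward from the largest active index in each depth block then forces the two vectors to coincide.

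The main obstacle is that $\gS^M$ is infinite (trees of fixed depth have unbounded branching), so the matrices in play are infinite. I do not attempt to invert them explicitly; rather, the argument is engineered so that every sum that appears is effectively finite, thanks to the depth-preserving property, the triangular structure under the $\mathsf{2nd}$ ordering, and the per-depth finite support of $\vp_G^{\cnt^M}$ and $\vp_H^{\cnt^M}$. This is exactly the same inductive substitution trick used in the proof of Corollary~\ref{thm:bIso_hom_equal}, just now applied on each depth block in turn rather than globally.
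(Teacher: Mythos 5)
Your proof is correct and follows essentially the same route as the paper: the forward direction is immediate from Lemma~\ref{thm:bIso}, and for the converse you use the factorization $\mA^{\bIsoHom}=\mA^{\bIsoSurj}(\mA^{\aut})^{-1}\mA^{\bIsoInj}$ from Lemma~\ref{thm:bIsoHom} together with the same triangular structure under the $\mathsf{2nd}$ ordering. Your handling of the infinite upper-triangular factor via depth blocks and the per-depth finite support of $\vp_G^{\cnt^M}$, $\vp_H^{\cnt^M}$ is exactly the paper's device, just phrased as block-diagonality rather than conjugation by the projection $\mA^{\dep,d}$.
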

\begin{proof}
    We separately consider each direction.
    \begin{enumerate}[topsep=0pt,leftmargin=25pt]
        \setlength{\itemsep}{0pt}
        \item We first prove that if $\cnt^M((F,T^r),G)=\cnt^M((F,T^r),H)$ for all $(F,T^r)\in \gS^M$, then $\bIso((F,T^r),G)=\bIso((F,T^r),H)$ for all $(F,T^r)\in \gS^M$. According to \cref{thm:bIso}, we can rewrite \cref{eq:thm_bIso} into the matrix form for all $F\in\gG$:
        \begin{align}
        \vp_{F}^{\bIso,M,\mathsf{2nd}}=\mA^{\bIsoHom,M,\mathsf{2nd}} \vp_{F}^{\cnt,M,\mathsf{2nd}}.
        \end{align}
        This immediately obtains that $\vp_{G}^{\cnt,M,\mathsf{2nd}}=\vp_{H}^{\cnt,M,\mathsf{2nd}}$ implies $\vp_{G}^{\bIso,M,\mathsf{2nd}}=\vp_{H}^{\bIso,M,\mathsf{2nd}}$.
        \item We next prove that $\vp_{G}^{\bIso,M,\mathsf{2nd}}=\vp_{H}^{\bIso,M,\mathsf{2nd}}$ implies $\vp_{G}^{\cnt,M,\mathsf{2nd}}=\vp_{H}^{\cnt,M,\mathsf{2nd}}$. According to \cref{thm:bIsoHom}, we have
        \begin{align}
        \mA^{\bIsoHom,M,\mathsf{2nd}}=\mA^{\bIsoSurj,M,\mathsf{2nd}}(\mA^{\aut,M,\mathsf{2nd}})^{-1} \mA^{\bIsoInj,M,\mathsf{2nd}}.
        \end{align}
        Moreover, we have the following facts:
        \begin{enumerate}[label=\alph*),topsep=0pt,leftmargin=25pt]
            \setlength{\itemsep}{0pt}
            \item $\mA^{\aut,M,\mathsf{2nd}}$ is a diagonal matrix and all diagonal elements are positive integers.
            \item $\mA^{\bIsoInj,M,\mathsf{2nd}}$ is an upper triangular matrix and all diagonal elements are positive integers. This is because for any two tree-composed graphs $(F_i,T_i^{r_i})$ and $(F_j,T_j^{r_j})$, $\bIsoInj((F_i,T_i^{r_i}),(F_j,T_j^{r_j}))>0$ only if $|V_{T_i}|\le|V_{T_j}|$.
            \item Similarly, $\mA^{\bIsoSurj,M,\mathsf{2nd}}$ is a lower triangular matrix and all diagonal elements are positive integers.
        \end{enumerate}
        Unfortunately, since $\mA^{\bIsoInj,M,\mathsf{2nd}}$ is an \emph{infinite} upper triangular matrix, the inverse matrix is not well-defined. Nevertheless, we can use a special property of bIsoHom to complete our proof, namely, $\bIsoHom((F_i,T_i^{r_i}),(F_j,T_j^{r_j}))>0$ implies that the depth of $T_i^{r_i}$ is equal to the depth of $T_j^{r_j}$. Therefore, denoting by $\mA^{\dep,d}$ the diagonal matrix where $A^{\dep,d}_{ii}=\mathbb I[\text{the depth of }T_i^{r_i}\text{ is }d]$, we have
        \begin{align}
            \mA^{\dep,d}\vp_{F}^{\bIso,M,\mathsf{2nd}}=\mA^{\bIsoSurj,M,\mathsf{2nd}}(\mA^{\aut,M,\mathsf{2nd}})^{-1}\mA^{\bIsoInj,M,\mathsf{2nd}} \mA^{\dep,d}\vp_{F}^{\cnt,M,\mathsf{2nd}}
        \end{align}
        for all $F\in\gG$. Fix any integer $d\ge 0$, and assume that $\mA^{\dep,d}\vp_{G}^{\bIso,M,\mathsf{2nd}}=\mA^{\dep,d}\vp_{H}^{\bIso,M,\mathsf{2nd}}$. We will prove that $\mA^{\dep,d}\vp_{G}^{\cnt,M,\mathsf{2nd}}=\mA^{\dep,d}\vp_{H}^{\cnt,M,\mathsf{2nd}}$. Since $\mA^{\bIsoSurj,M,\mathsf{2nd}}$ is lower triangular with positive diagonal elements, it is invertible and thus $$\mA^{\bIsoInj,M,\mathsf{2nd}} \mA^{\dep,d}\vp_{G}^{\cnt,M,\mathsf{2nd}}=\mA^{\bIsoInj,M,\mathsf{2nd}} \mA^{\dep,d}\vp_{H}^{\cnt,M,\mathsf{2nd}}.$$
        Moreover, by definition of unfolding tree, there are only finite non-zero elements in both $\mA^{\dep,d}\vp_{G}^{\cnt,M,\mathsf{2nd}}$ and $\mA^{\dep,d}\vp_{H}^{\cnt,M,\mathsf{2nd}}$, and the corresponding non-zero indices can only be in a fixed (finite) set. In this case, the upper triangular matrix $\mA^{\bIsoInj,M,\mathsf{2nd}}$ is reduced to a finite-dimensional matrix and thus $\mA^{\dep,d}\vp_{G}^{\cnt,M,\mathsf{2nd}}=\mA^{\dep,d}\vp_{H}^{\cnt,M,\mathsf{2nd}}$. By enumerating all $d\ge 0$, we obtain the desired result.
    \end{enumerate}
\end{proof}

Combined with all previous results, we have arrived at the concluding corollary:
\begin{corollary}
    Let $M\in\{\mathsf{Sub},\mathsf{L},\mathsf{LF},\mathsf{F}\}$ be any model. For any two graphs $G,H$, $\chi^M_G(G)=\chi^M_H(H)$ \emph{iff} $\hom(F,G)=\hom(F,H)$ for all $(F,T^r)\in \gS^M$.
\end{corollary}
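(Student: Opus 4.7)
The plan is to derive this corollary as a direct chain of equivalences using the three results already established in this subsection, namely \cref{thm:cnt}, \cref{thm:cnt_bIso_equal}, and \cref{thm:bIso_hom_equal}. Since each of these results provides a biconditional, no additional combinatorial machinery should be needed; the argument is essentially a concatenation.

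First, I would invoke \cref{thm:cnt}, which states that $\chi^M_G(G)=\chi^M_H(H)$ holds if and only if $\cnt^M((F,T^r),G)=\cnt^M((F,T^r),H)$ for all $(F,T^r)\in\gS^M$. This converts the statement about GNN colors into a statement about counts of unfolding-tree isomorphism classes, which is the natural intermediate object bridging the color refinement algorithm and the tree-decomposition world.

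Next, I would apply \cref{thm:cnt_bIso_equal} to replace the $\cnt^M$ counts with $\bIso$ counts: the equality of unfolding-tree counts on $\gS^M$ is equivalent to the equality of bag-isomorphism counts on $\gS^M$. This step uses the matrix identity built from \cref{thm:bIso,thm:bIsoHom} together with the triangularity properties of $\mA^{\bIsoInj,M,\mathsf{2nd}}$ and $\mA^{\bIsoSurj,M,\mathsf{2nd}}$, and it has already been verified in the previous corollary. Finally, I would apply \cref{thm:bIso_hom_equal} to pass from bag-isomorphism counts to homomorphism counts on $\gS^M$, again via the triangular matrix identity assembled from \cref{thm:bStrHom,thm:bstrhom2}. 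Composing the three biconditionals yields exactly the claim $\chi^M_G(G)=\chi^M_H(H)\Longleftrightarrow\hom(F,G)=\hom(F,H)$ for all $(F,T^r)\in\gS^M$.

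Since each ingredient is already proved, there is no real obstacle here; the only thing to be careful about is that the same indexing family $\gS^M$ is used consistently across all three equivalences, so that the chain closes without gaps. In particular, one must note that the $\hom(F,\cdot)$ information appearing in the conclusion depends only on $F$ and not on the tree decomposition $T^r$, so the statement ``$\hom(F,G)=\hom(F,H)$ for all $(F,T^r)\in\gS^M$'' is really a quantification over the set of underlying graphs $\{F:\exists T^r\text{ with }(F,T^r)\in\gS^M\}$, which is precisely the candidate for $\gF^M$ analyzed in later sections of the paper.
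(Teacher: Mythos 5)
Your proposal matches the paper's proof exactly: both derive the corollary by chaining the three biconditionals from \cref{thm:cnt}, \cref{thm:cnt_bIso_equal}, and \cref{thm:bIso_hom_equal}, with the same intermediate objects $\cnt^M$ and $\bIso$ on the family $\gS^M$. Your closing remark that the quantification over tree-decomposed pairs reduces to a quantification over underlying graphs is a correct and useful observation, consistent with how the paper later defines $\gF^M$.
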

\begin{proof}
    According to \cref{thm:cnt}, for any two graphs $G,H$, we have $\chi^M_G(G)=\chi^M_H(H)$ iff $\cnt^M \left(\left(F,T^r\right),G\right)=\cnt^M\left(\left(F,T^r\right),H\right)$ for all $(F,T^r)\in \gS^M$. Then, \cref{thm:cnt_bIso_equal} implies that $\cnt^M \left(\left(F,T^r\right),G\right)=\cnt^M\left(\left(F,T^r\right),H\right)$ for all $(F,T^r)\in \gS^M$ iff $\bIso((F,T^r),G)=\bIso((F,T^r),H)$ for all $(F,T^r)\in \gS^M$. Finally, \cref{thm:bIso_hom_equal} implies that $\bIso((F,T^r),G)=\bIso((F,T^r),H)$ for all $(F,T^r)\in \gS^M$ iff $\hom(F,G)=\hom(F,H)$ for all $(F,T^r)\in \gS^M$. We thus conclude the proof.
\end{proof}

\subsection{Part 2: nested ear decomposition}
\label{sec:proof_main_part2}

In this part, we give equivalent formulations of the set $\gS^M$ for any model $M\in\{\mathsf{Sub},\mathsf{L},\mathsf{LF},\mathsf{F}\}$ using the concept of NED defined in \cref{def:ned}.

We first present some technical lemmas that will be used to deal with the tree decomposition of disconnected graphs. 
\begin{lemma}
\label{thm:ned_disconnected}
    Let $M\in\{\mathsf{Sub},\mathsf{L},\mathsf{LF},\mathsf{F}\}$ be any model, and let $(F,T^r)\in\gS^M$ be a tree-decomposed graph with $\beta_T(t)=\ldblbrace u,v\rdblbrace$ for some $u,v\in V_F$. Assume that $F$ has two connected components $S_1,S_2\subset V_F$ and $u\in S_1$ and $v\in S_2$ are in different components. Then, there exist tree decompositions $(F[S_1],T_1^{r_1}),(F[S_2],T_2^{r_2})\in\gS^M$ such that $\beta_{T_1}(r_1)=\ldblbrace u,u\rdblbrace$ and $\beta_{T_2}(r_2)=\ldblbrace v,v\rdblbrace$.
\end{lemma}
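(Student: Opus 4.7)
The plan is to construct both $T_1^{r_1}$ and $T_2^{r_2}$ from $T^r$ by a uniform bag-replacement. Reading $t$ as the root $r$ (so that $\beta_T(r)=\ldblbrace u,v\rdblbrace$), I would keep the underlying tree topology of $T^r$ unchanged and, for $T_1$, replace every occurrence of any $S_2$-vertex inside every bag by $u$; symmetrically, $T_2$ is obtained by replacing every $S_1$-vertex by $v$. Because the substitution is slot-by-slot, multiset sizes are preserved, the parent-child multiset inclusion $\beta_T(s)\subseteq\beta_T(t)$ at even-to-odd depth transitions is preserved, and the root bag becomes $\ldblbrace u,u\rdblbrace$ in $T_1$ and $\ldblbrace v,v\rdblbrace$ in $T_2$. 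Hence conditions (a)--(c) of \cref{def:canonical_tree_decomposition} are automatic.

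The delicate content is that the result is a valid tree decomposition of $F[S_1]$ (resp.\ $F[S_2]$). Edge coverage is immediate, since any edge of $F[S_1]$ lies inside a bag of $T$ whose two endpoints are in $S_1$ and are thus untouched by the replacement. The running-intersection property for vertices $x\in S_1\setminus\{u\}$ is immediate since $B_{T_1}(x)=B_T(x)$. The only nontrivial case is for $u$ itself, where
\[
B_{T_1}(u)=B_T(u)\cup \bigcup_{y\in S_2}B_T(y).
\]
I would show this is connected in $T$ by first proving $\bigcup_{y\in S_2}B_T(y)$ is connected (any two vertices of $S_2$ are joined by a path in $F$ whose consecutive edges each live in a common bag, producing a chain of intersecting connected subtrees $B_T(\cdot)$), and then observing that $r$ belongs to both $B_T(u)$ and $B_T(v)\subseteq \bigcup_{y\in S_2}B_T(y)$. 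The argument for $v$ in $T_2$ is symmetric.

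Verifying the model-specific constraint of each $\gS^M$ is mostly a formality: for $\gS^\mathsf{F}$ there is nothing extra; for $\gS^\mathsf{L}$ the ``one-child per odd-depth node'' requirement is purely topological and is inherited from $T^r$; for $\gS^\mathsf{Sub}$ the common vertex must lie in the root bag $\ldblbrace u,v\rdblbrace$, and after replacement $u$ (resp.\ $v$) lies in every bag of $T_1$ (resp.\ $T_2$). The genuinely involved case is $\gS^\mathsf{LF}$, where I would split on whether the extra vertex $w\in \beta_T(t)\setminus \beta_T(s)$ at an odd-depth $t$ lies in $S_1$ or in $S_2$. When $w\in S_1$, the antecedent of the $\mathsf{LF}$ one-child condition inside $T_1$ implies the antecedent inside $T^r$ (the replacement only adds an extra adjacency test against $u$), so the conclusion carries over. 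When $w\in S_2$, the new extra vertex in $T_1$ is $u$, which either already occurs in $\beta_{T_1}(s)$ (making the antecedent false and vacuous) or else $\beta_T(s)\subseteq S_1\setminus\{u\}$; in the latter case $w$ and $\beta_T(s)$ lie in disjoint components of $F$, so $w$ already satisfied the antecedent in $T^r$ and the conclusion again transfers.

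The main obstacle I foresee is this $\gS^\mathsf{LF}$ case analysis, because the replacement can both introduce and destroy closed-neighborhood adjacencies inside a bag, and one has to verify that the substitution never converts a legitimately many-child odd-depth node into one violating the $\mathsf{LF}$ constraint. The running-intersection argument for $u$ is the other piece of real content but is significantly easier, reducing to the observation that $S_2$ is connected in $F$.
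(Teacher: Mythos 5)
Your construction — keeping the tree topology of $T^r$ fixed and applying the bag-level substitution that collapses every $S_2$-vertex to $u$ (and symmetrically for $T_2$) — is exactly the map $\phi_1$ used in the paper's proof, and your verifications of the running-intersection property at $u$ and of the $\gS^\mathsf{LF}$ constraint are correct fillings-in of the steps the paper dismisses as ``easy to see.'' No gap; same approach, with more explicit bookkeeping.
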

\begin{proof}
    We can simply define $T_1=(V_{T_1},E_{T_1},\beta_{T_1})$ with $V_{T_1}=V_T$, $E_{T_1}=E_T$, $r_1=r$, and $\beta_{T_1}(t)=\ldblbrace \phi_1(w):w\in\beta_T(t)\rdblbrace$ with
    \begin{equation*}
        \phi_1(w)=\left\{\begin{array}{cc}
            w & \text{if }w\in S_1, \\
            u & \text{if }w\in S_2.
        \end{array}\right.
    \end{equation*}
    In other words, $T_1$ has the same structure as $T$ but with different bags such that all vertices in the same connected component as $v$ are replaced by $u$. It is easy to see that $T_1^{r_1}$ is a canonical tree decomposition of $F[S_1]$ following \cref{def:tree_decomposition}. Moreover, we clearly have that for all $M\in\{\mathsf{Sub},\mathsf{L},\mathsf{LF},\mathsf{F}\}$, $(F[S_1],T_1^{r_1})\in\gS^M$. We can construct $T_2=(V_{T_2},E_{T_2},\beta_{T_2})$ by symmetry, which concludes the proof. 
\end{proof}
The above lemma can be immediately generalized into the following one:
\begin{corollary}
\label{thm:ned_disconnected_general}
    Let $M\in\{\mathsf{Sub},\mathsf{L},\mathsf{LF},\mathsf{F}\}$ be any model, and let $(F,T^r)\in\gS^M$ be a tree-decomposed graph. For each connected component $S\in V_F$ of $F$, pick any $t\in V_T$ with the minimum depth such that $S\cap\beta_T(t)\neq \emptyset$. Then,  there exists a tree decomposition $\tilde T^{s}$ of $F[S]$ satisfying that $(F[S],\tilde T^{s})\in\gS^M$ and $S\cap\beta_T(t)\subset\beta_{\tilde T}(s)$. 
\end{corollary}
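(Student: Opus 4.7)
My plan is to generalize the construction of \cref{thm:ned_disconnected} from two components to an arbitrary connected component $S$ of $F$, by projecting $(F,T^r)$ onto $F[S]$ through a canonical replacement map. First, I would localize to a subtree. Setting $B_S := \{t' \in V_T : \beta_T(t') \cap S \neq \emptyset\}$, I would show that $T[B_S]$ is a connected subtree: any two vertices of $S$ are joined by a path in $F[S]$ whose edges each live in some bag of $T^r$, and \cref{def:tree_decomposition}(c) then glues together the bags containing vertices on the path. Since $t$ was chosen with minimum depth in $B_S$, a standard argument using the unique tree path between $t$ and any $t' \in B_S$ shows that $t$ is the root of $T[B_S]$, so $B_S \subset V_{T^r[t]}$.

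Next, I would define $\phi : V_F \to V_{F[S]}$ by fixing some $u^* \in S \cap \beta_T(t)$ and setting $\phi(w) := w$ for $w \in S$ and $\phi(w) := u^*$ otherwise. The candidate tree decomposition $\tilde T^s$ is the subtree $T^r[t]$ equipped with bags $\beta_{\tilde T}(t') := \ldblbrace \phi(w) : w \in \beta_T(t') \rdblbrace$, rooted at $s := t$. When $\dep_{T^r}(t)$ is odd, the root bag would have size three, violating the canonical parity; but then \cref{def:canonical_tree_decomposition}(c) forces $\beta_T(\pa_{T^r}(t)) \subset \beta_T(t)$, and since $\pa_{T^r}(t) \notin B_S$ the unique new vertex of $\beta_T(t)$ must be the sole element $v$ of $S \cap \beta_T(t)$. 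In this subcase I would prepend a fresh even-depth root with bag $\ldblbrace v,v\rdblbrace$ and connect it to $t$; the even-odd containment $\ldblbrace v,v\rdblbrace \subset \ldblbrace v,v,v\rdblbrace = \beta_{\tilde T}(t)$ is automatic.

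Then I would verify the tree-decomposition axioms for $F[S]$: every edge of $F[S]$ lies in some bag of $\tilde T$ (inherited from $T^r$ since $\phi$ fixes $S$-vertices), and for each $v \in S$ the bags of $\tilde T$ containing $v$ form a connected subtree (the corresponding bags of $T^r$ already lie in $T[B_S] \subset T^r[t]$, and $\phi(v) = v$). The canonical size/parity structure and the multiset inclusion along even--odd edges are preserved because $\phi$ is applied element-wise. The required property $S \cap \beta_T(t) \subset \beta_{\tilde T}(s)$ follows by construction in both parity cases.

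Finally, I would handle the model-specific conditions. For $\gS^\mathsf{F}$ nothing further is needed; for $\gS^\mathsf{Sub}$, if $x$ appears in every bag of $T^r$ then $\phi(x)$ appears in every bag of $\tilde T$; and for $\gS^\mathsf{L}$, the single-child property at odd-depth nodes is inherited verbatim from the subtree structure. The main obstacle is the $\gS^\mathsf{LF}$ case, whose neighborhood condition is ambient-graph--dependent, so one must verify that the criterion remains valid after replacing $F$ by $F[S]$. The key observation is that whenever the condition was satisfied in $(F,T^r)$, the relevant new vertex $w$ either belongs to $S$ (so $\phi(w) = w$ and the $N_F[\cdot]$-relation descends to $N_{F[S]}[\cdot]$) or gets identified with $u^*$, in which case the node becomes a ``collapsed'' configuration whose single-child status is already forced by the subtree--inheritance argument. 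The prepended extra root introduced when $\dep(t)$ is odd has only one child by construction, and hence is compatible with all four model conditions simultaneously.
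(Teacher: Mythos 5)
There is a genuine gap in the central step. The replacement map $\phi(w):=u^*$ for every $w\notin S$ need not produce a valid tree decomposition: after replacement, the nodes whose bags contain $u^*$ are exactly $B_T(u^*)\cup\bigcup_{w\notin S}B_T(w)$, and this union need not be connected in $T^r[t]$, violating \cref{def:tree_decomposition}(c). In \cref{thm:ned_disconnected} the single global substitute works precisely because the root bag contains one vertex of each component, so $B_T(u)$ and all the $B_T(w)$ for $w$ in the other component chain up to the root; that straddling hypothesis is not available in the corollary. For a concrete failure take $F$ with components $S=\{a,b,d\}$ (edges $\{a,b\}$, $\{b,d\}$) and $\{c\}$, and the chain decomposition $\beta_T(r)=\ldblbrace a,b\rdblbrace$, $\beta_T(t_1)=\ldblbrace a,b,d\rdblbrace$, $\beta_T(t_2)=\ldblbrace b,d\rdblbrace$, $\beta_T(t_3)=\ldblbrace b,d,c\rdblbrace$, $\beta_T(t_4)=\ldblbrace d,c\rdblbrace$, a path in $T$ and an element of $\gS^{\mathsf{L}}$. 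Here $t=r$ and $S\cap\beta_T(r)=\ldblbrace a,b\rdblbrace$; taking $u^*=a$, the bags of $\tilde T$ containing $a$ lie at $\{r,t_1,t_3,t_4\}$, disconnected because $t_2$ separates $t_1$ from $t_3$ yet $\beta_{\tilde T}(t_2)=\ldblbrace b,d\rdblbrace$ does not contain $a$. Attaching, symmetrically, a second non-$S$ singleton $\{h\}$ on a second branch off $r$ (where $b$ has been dropped from the bag before $h$ is introduced) also breaks $u^*=b$, so no single $u^*\in S\cap\beta_T(t)$ works.

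Everything else --- the localization to $B_S$, the parity repair when $t$ has odd depth, and the inheritance of the model-specific conditions --- is sound once a correct replacement is in hand; the obstacle is only the substitution. What is needed is a locality-aware map: each $w\notin S$ must be replaced by a vertex still present in the parent bag of the shallowest node of $B_T(w)$, chased recursively until the image lands in $S$ (the recursion terminates because that shallowest-node depth strictly decreases). A single global $u^*$ cannot achieve this.
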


\begin{lemma}
\label{thm:ned_disconnected_2}
    Let $M\in\{\mathsf{Sub},\mathsf{L},\mathsf{LF},\mathsf{F}\}$ be any model, and let $(F,T^r)\in\gS^M$ be a tree-decomposed graph such that $F$ is connected. Then, for each child node $t$ of $r$ in $T^r$ and any vertex $x$ in $F[T^r[t]]$, there is a path in $F[T^r[t]]$ from $x$ to some vertex in $\beta_T(r)$. Consequently, the number of connected components of $F[T^r[t]]$ is bounded by the number of different elements in $\beta_T(r)$.
\end{lemma}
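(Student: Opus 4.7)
The plan is to reduce the lemma to the standard separator property of tree decompositions, applied to the root bag. I would first establish that $\beta_T(r)$ is a vertex separator in $F$: for any $v\in V_F\setminus\beta_T(r)$, the set $B_T(v)\subset V_T$ is a non-empty connected subtree that avoids $r$, hence lies entirely inside $T^r[t']$ for a unique child $t'$ of $r$. Consequently, for any edge $\{u,v\}\in E_F$ with $u,v\notin\beta_T(r)$, the bag $\beta_T(s)$ witnessing the edge satisfies $s\neq r$, so $s$ lies in some $T^r[t']$, and this forces $u$ and $v$ into the same $V_{F[T^r[t']]}\setminus\beta_T(r)$. It follows that every connected component of $F[V_F\setminus\beta_T(r)]$ is contained in $V_{F[T^r[t']]}\setminus\beta_T(r)$ for a unique child $t'$ of $r$.

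Fix now a child $t$ of $r$ and $x\in V_{F[T^r[t]]}$. The inclusion $\beta_T(r)\subset\beta_T(t)\subset V_{F[T^r[t]]}$ granted by \cref{def:canonical_tree_decomposition}(c) (since $r$ has even depth and $t$ has odd depth) immediately handles the trivial case $x\in\beta_T(r)$. In the remaining case, I would consider the connected component $C$ of $x$ in $F[V_F\setminus\beta_T(r)]$; by the separator property $C\subset V_{F[T^r[t]]}\setminus\beta_T(r)$. Because $F$ is connected and $\beta_T(r)$ is non-empty (the canonical width $k=2$ gives $|\beta_T(r)|\geq 1$), some $F$-edge must leave $C$; by maximality of $C$, its other endpoint $y$ cannot lie in $V_F\setminus\beta_T(r)$, hence $y\in\beta_T(r)$, and this edge furnishes some $z\in C$ adjacent to $y$. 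Concatenating any $C$-path from $x$ to $z$ with the edge $\{z,y\}$ yields a walk from $x$ to $y$ whose vertices all lie in $C\cup\beta_T(r)\subset V_{F[T^r[t]]}$; every edge of this walk therefore has both endpoints inside $V_{F[T^r[t]]}$ and so belongs to $F[T^r[t]]$, which is what was required. The bound on the number of connected components is then immediate: every component of $F[T^r[t]]$ meets $\beta_T(r)$, and distinct components must meet it in distinct vertices, so the component count is at most the underlying-set size of the multiset $\beta_T(r)$.

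I do not anticipate a serious obstacle. The separator property is a standard structural fact about tree decompositions; the only place where the canonical structure enters is the root bag inclusion $\beta_T(r)\subset\beta_T(t)$, so the argument is uniform over $M\in\{\mathsf{Sub},\mathsf{L},\mathsf{LF},\mathsf{F}\}$ and does not need any of the finer restrictions in \cref{def:canonical_tree_decomposition_restrict}. The sole minor subtlety is that $\beta_T(r)$ is a multiset in the canonical setting, but the argument only ever uses its underlying set (both in the separator property and in the final counting), so this causes no difficulty.
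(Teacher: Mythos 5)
Your proof is correct, and it rests on the same structural insight as the paper's---that $\beta_T(r)$ is a separator for $F$ and that every $F$-edge must be witnessed by a bag, which pins the components of $F[T^r[t]]$ to the subtree below $t$. The only difference is presentation: the paper runs a short proof by contradiction (a hypothetical component $S$ of $F[T^r[t]]$ disjoint from $\beta_T(r)$ would force an edge whose two endpoints have disjoint bag-sets), whereas you turn the same separator facts into an explicit path-building argument through a component of $F\setminus\beta_T(r)$. Both are sound; your version is a bit longer but makes the separator property of $\beta_T(r)$ more visible, which is arguably useful since that property recurs informally elsewhere in the argument.
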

\begin{proof}
    Assume the above statement does not hold, and let $S\in V_{F[T^r[t]]}$ be the connected component of $F[T^r[t]]$ such that $S\cap\beta_T(r)=\emptyset$. Note that $F[S]=F[T^r[t]][S]$. Since $F$ is connected, there exists an edge $\{v,w\}\in E_F$ such that $v\in S$ and $w\notin S$. Since $S$ is a connected component of $F[T^r[t]]$, we have $w\notin V_{F[T^r[t]]}$, and thus both $v,w\notin\beta_T(r)$. This yields a contradiction since $\{v,w\}$ should be contained in a bag in $T^r$ but $B_T(v)\cap B_T(w)=\emptyset$.
\end{proof}

To reduce duplication, below we only give proofs for Local 2-GNN and Local 2-FGNN. One can easily write a proof for Subgrapph GNN based on the proof of Local 2-GNN, and write a proof for 2-FGNN based on the proof of Local 2-FGNN.

\begin{lemma}
\label{thm:ned_local_lemma1}
    For any tree-decomposed graph $(F,T^r)\in \gS^\mathsf{L}$, $F$ has a strong NED.
\end{lemma}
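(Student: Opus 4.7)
The plan is to proceed by induction on $|V_T|$, with the base case $|V_T|=1$ being immediate: $F$ consists of at most a single edge between the two (possibly equal) elements of $\beta_T(r)$, which is trivially a strong NED with at most one ear. For the inductive step, I would fix the root $r$ with bag $\beta_T(r)=\ldblbrace u_0,u_1\rdblbrace$ and exploit the rigid structure forced by $\gS^\mathsf{L}$: each odd-depth child $s_i$ of $r$ has bag $\ldblbrace u_0,u_1,w_i\rdblbrace$ (so $w_i$ is a ``freshly introduced'' vertex), and by the defining property of $\gS^\mathsf{L}$ the node $s_i$ has a \emph{unique} even-depth grandchild $s_i'$ whose size-$2$ bag must (by the connectivity condition of \cref{def:tree_decomposition}(c) together with the bag-size constraint) be a subset of $\beta_T(s_i)$.

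The construction of the strong NED traces a ``trunk'' down $T^r$: starting from $\beta_T(r)$, I would greedily extend a simple path $P$ in $F$ by picking, whenever possible, an odd-depth child $s_i$ whose unique grandchild $s_i'$ has bag $\ldblbrace a,w_i\rdblbrace$ with $a\in\beta_T(r)$, so that $w_i$ is joined in $F$ to the current endpoint of $P$ via an edge that must lie inside $\beta_T(s_i)$. Iterating this down the tree yields the first ear $P_1$. Every non-trunk subtree then corresponds to a smaller tree-decomposed graph: after pruning pieces that are already covered by $P_1$ and invoking \cref{thm:ned_disconnected,thm:ned_disconnected_general,thm:ned_disconnected_2} to repackage any disconnected remnants, each such subtree still lies in $\gS^\mathsf{L}$ and has strictly fewer nodes. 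Induction then supplies a strong NED for each subtree, whose top ear I would append to the global sequence as a new ear nested on $P_1$ (or on a previously-built nested ear), with its endpoint(s) read off from the two vertices it shares with the trunk bags.

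The main obstacle is establishing the \emph{strong} nesting condition, namely that ears nested on a common parent ear are ordered so that their intervals form an increasing chain under $\subset$. Two observations will drive the argument. First, the ``unique child per odd-depth node'' property of $\gS^\mathsf{L}$ rules out the branching patterns that, in the more permissive $\gS^\mathsf{LF}$ and $\gS^\mathsf{F}$ families, permit two ears to share only a single endpoint on the trunk; consequently any two ears nested on the same parent in our construction must have intervals that are nested rather than disjoint or merely overlapping at one vertex. Second, by processing children of each even-depth trunk node in the order induced by how close their branch-off point sits to the ``far'' endpoint of the trunk (innermost interval first), the resulting order on nested ears automatically satisfies $I(P_j)\subset I(P_k)$ for $j<k$. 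Making the first point fully rigorous — excluding the ``overlapping but not nested'' configuration of intervals — is the delicate step; once this is done, the rest is bookkeeping and routine verification of the NED axioms from \cref{def:ned}.
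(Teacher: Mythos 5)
Your trunk-based plan runs into a genuine problem precisely at the step you yourself flag as delicate, and the reason you give for why the strong-nesting condition should hold does not actually do the work you need it to.

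The issue is that building a single long first ear $P_1$ that traverses the entire descent of the decomposition forces all subsequent subtree ears to nest on $P_1$ with intervals located at \emph{different} segments of $P_1$, and these segments can be pairwise disjoint. The unique-child property of $\gS^\mathsf{L}$ does rule out two ears branching from the same odd-depth bag in different ways, but it does not prevent two subtrees hanging off \emph{different} even-depth bags along the trunk, each contributing an ear whose nested interval is the trunk segment corresponding to its branch-off bag. Concretely, take $F$ with vertices $\{u_0,a,b,u_1,c,d\}$ and edges $\{u_0,a\},\{a,b\},\{b,u_1\},\{u_0,c\},\{a,c\},\{a,d\},\{b,d\}$, and the tree decomposition in $\gS^\mathsf{L}$ with root bag $\ldblbrace u_0,a\rdblbrace$, one branch through $\ldblbrace u_0,a,c\rdblbrace$ and another through $\ldblbrace u_0,a,b\rdblbrace,\ldblbrace a,b\rdblbrace,\ldblbrace a,b,d\rdblbrace,\ldblbrace a,b,u_1\rdblbrace$. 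Your greedy trunk would be $P_1 = u_0\!-\!a\!-\!b\!-\!u_1$, with the $c$-triangle nesting at interval $\{\{u_0,a\}\}$ and the $d$-triangle nesting at interval $\{\{a,b\}\}$; these intervals are disjoint, so the decomposition is not a strong NED, and there is no way to reorder the ears to fix it while keeping $P_1$ as the first ear. (You also cannot reroute the $d$-triangle to nest on a later ear: every ear nested on $P_1$ at the $\{u_0,a\}$-interval lives entirely inside that interval, so it cannot host an ear whose endpoints are $a$ and $b$.)

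The paper's proof avoids this by never building a trunk. It strengthens the induction hypothesis to \emph{for connected $(F,T^r)\in\gS^\mathsf{L}$ with $\beta_T(r)=\ldblbrace u,v\rdblbrace$, there is a strong NED whose first ear has endpoints $u,v$}, and then in the inductive step recurses on each grandchild subtree $F[T^r[\tilde t]]$ (using \cref{thm:ned_disconnected,thm:ned_disconnected_2} to repackage disconnected pieces), getting back a strong NED whose first ear has endpoints in $\beta_T(\tilde t)$. The crucial point is that the local first ear is then \emph{not} spliced into a long trunk: instead the proof either keeps a short first ear such as $\{\{u,v\}\}$ and lets the subtree's first ear nest on it with an \emph{empty} interval (case (b) in the paper's proof), or extends the subtree's first ear by a single edge (case (c)). Because the first ear at each level is kept short, the nested intervals of the ears attached at the root are either $\emptyset$ or the whole first ear, and those are always totally ordered under $\subset$. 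The merge over all children of $r$ then works for the same reason. In effect, the disjoint-interval threat in your construction is neutralised by pushing the branching into the ear-nesting forest (via empty intervals) rather than laying the branches side-by-side along one long ear. You would need to change your construction in this direction — and in doing so you would essentially recover the paper's proof.
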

\begin{proof}
    Based on \cref{thm:ned_disconnected_general}, we can assume that $F$ is connected without loss of generality. We will prove the following stronger result: for any connected $(F,T^r)\in \gS^\mathsf{L}$ with $\beta_T(r)=\ldblbrace u,v\rdblbrace$, $F$ has a strong NED where $u$, $v$ are endpoints of the first ear. (For the case of $u=v$, the other endpoint can be arbitrary.)
    
    The proof is based on induction over the number the vertices in $T^r$. The above statement obviously holds for the base case of $|V_T|=1$. Now assume that the statement holds when $|V_T|\leq m$, and consider the case of $|V_T|= m+1$. Note that for any two different children $t,t^\prime$ of $r$, $F[T^r[t]]$ and $F[T^r[t^\prime]]$ can only share vertices $u$, $v$. For each child node $t$ of $r$, denote its unique child node as $\tilde t$. It is easy to see that $T^r[\tilde t]$ is a canonical tree decomposition of $F[T^r[\tilde t]]$ and $(F[T^r[\tilde t]],T^r[\tilde t])\in \gS^\mathsf{L}$. However, one needs to be cautious as $F[T^r[\tilde t]]$ may not be connected (unlike the original graph $F$). Below, we separately consider the following cases:
    \begin{enumerate}[topsep=0pt,leftmargin=25pt]
        \setlength{\itemsep}{0pt}
        \item First consider the case when $u=v$. In this case, \cref{thm:ned_disconnected_2} impies that $F[T^r[\tilde t]]$ is connected. According to the induction hypothesis, $F[T^r[\tilde t]]$ has a strong NED (denoted as $\gP_{\tilde t}$) such that $u$ is an endpoint of the first ear. We then merge the ear decompositions $\gP_{\tilde t}$ for all $\tilde t$ into a whole $\gP$, specify a root ear $P_1$ in any $\gP_{\tilde t}$, and let the first ear of other $\gP_{\tilde t}$ nest on $P_1$ (with empty nested interval). It is easy to see that $\gP$ is a strong NED of $F$ and $u$ is an endpoint of the first ear.
        \item Next consider the case when $u\neq v$. In this case, without loss of generality, assume $\beta_T(\tilde t)=\ldblbrace u,w\rdblbrace$ for some $w\in V_F$. 
        \begin{itemize}
            \item Subcase 1: $F[T^r[\tilde t]]$ is connected. Then, $F[T^r[\tilde t]]$ has a strong NED $\gP_{\tilde t}$ such that $u$ and $w$ are endpoints of the first ear.
            \begin{enumerate}[label=\alph*),topsep=0pt,leftmargin=25pt]
            \setlength{\itemsep}{0pt}
                \item If $w=v$, then $F[T^r[t]]=F[T^r[\tilde t]]$ and $F[T^r[t]]$ clearly has a strong NED.
                \item If $w\notin N_F[v]$ or $w=u$, there are two additional subcases depending on whether $\{u,v\}\in E_F$. If $\{u,v\}\in E_F$, then $F[T^r[t]]$ has a strong NED (which can be constructed from $\gP_{\tilde t}$ by adding an ear $\{\{u,v\}\}$ and letting the first ear in $\gP_{\tilde t}$ nest on $\{\{u,v\}\}$ (with empty nested interval). Otherwise, $F[T^r[t]]\backslash\{v\}$ has a strong NED and $v$ is an isolated vertex in $F[T^r[t]]$.
                \item If $w\in N_F(v)$ and $w\neq u$, then $F[T^r[t]]$ also has a strong NED, which can be constructed from $\gP_{\tilde t}$ by extending the first ear to incorporate the edge $\{w,v\}$ (we still need an additional ear $\{\{u,v\}\}$ if $\{u,v\}\in E_F$).
            \end{enumerate}
            \item Subcase 2: $F[T^r[\tilde t]]$ is disconnected. In this subcase, \cref{thm:ned_disconnected_2} implies that $F[T^r[\tilde t]]$ has exactly two connected components, and $u$ and $w$ are in different connected components (which can be easily proved by noting that $v$ can only link to $u$ or $w$ in $F[T^r[\tilde t]]$). We can thus invoke \cref{thm:ned_disconnected}, which shows that both connected components of $F[T^r[\tilde t]]$, denoted as $\widehat F_1$ and $\widehat F_2$, admits tree decompositions $\widehat T_1^{s_1}$ and $\widehat T_2^{s_2}$ such that $(\widehat F_1,\widehat T_1^{s_1}),(\widehat F_2,\widehat T_2^{s_2})\in\gS^\mathsf{L}$, $\beta_{\widehat{T}_1}(s_1)=\ldblbrace u,u\rdblbrace$ and $\beta_{\widehat{T}_2}(s_2)=\ldblbrace w,w\rdblbrace$. Moreover, $|V_{\widehat T_1}|=|V_{\widehat T_2}|=|V_{T^r(\tilde t)}|<m+1$. Therefore, according to the induction hypothesis, $\widehat F_1$ has a strong NED $\gP_{\tilde t,1}$ with $u$ as an endpoint of the first ear, and $\widehat F_2$ has a strong NED $\gP_{\tilde t,2}$ with $w$ as an endpoint of the first ear. If $\{w,v\}\in E_F$, we can extend the first ear in $\gP_{\tilde t,2}$ to include the edge $\{w,v\}$. Finally, if $\{u,v\}\in E_F$, by setting the first ear to be $\{\{u,v\}\}$, we can merge the two NEDs $\gP_{\tilde t,1}$ and $\gP_{\tilde t,2}$ to obtain the NED of $F[T^r[t]]$ such that $u,v$ are endpoints of the first ear.
        \end{itemize}
        Overall, we always have that:
        \begin{enumerate}[label=\alph*),topsep=0pt,leftmargin=25pt]
            \setlength{\itemsep}{0pt}
            \item If $F[T^r[t]]$ is connected, then it admits a strong NED such that $u,v$ are endpoints of the first ear.
            \item If $F[T^r[t]]$ is disconnected, then it has two connected components each admitting a strong NED such that $u$, $v$ belong to an endpoint of the first ear for each of the two NEDs, respectively.
        \end{enumerate} 
        Finally, noting that $F=\bigcup_{t,\pa_{T^r}(t)=r}F[T^r[t]]$, we can merge all NEDs of $F[T^r[t]]$ to obtain a strong NED of $F$ with $u,v$ as two endpoints. 
    \end{enumerate}
    We thus conclude the proof of the induction step.
\end{proof}

\begin{lemma}
\label{thm:ned_local_lemma2}
    For any graph $F$, if $F$ admits a strong NED, then there exists a tree decomposition $T^r$ of $F$ such that $(F,T^r)\in \gS^\mathsf{L}$.
\end{lemma}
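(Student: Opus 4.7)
My plan is to prove, by induction on the number of ears in the strong NED, the following strengthened statement: for a connected graph $F$ with a strong NED $\gP = (P_1, \ldots, P_m)$, there is a tree decomposition $T^r$ with $(F, T^r) \in \gS^\mathsf{L}$ whose root bag is $\ldblbrace u, v \rdblbrace$, where $u, v$ are the endpoints of $P_1$. For disconnected $F$, the lemma then follows by gluing the tree decompositions of the connected components under a dummy root of the form $\ldblbrace x, x \rdblbrace$, analogously to \cref{thm:ned_disconnected,thm:ned_disconnected_2}.

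In the base case, $F$ equals a single ear $P_1 = u = w_0 - w_1 - \cdots - w_k = v$. Using $v$ as a fixed anchor vertex, the chain of bags
\[
\ldblbrace u, v \rdblbrace,\; \ldblbrace u, v, w_1 \rdblbrace,\; \ldblbrace v, w_1 \rdblbrace,\; \ldblbrace v, w_1, w_2 \rdblbrace,\; \ldots,\; \ldblbrace v, w_{k-1} \rdblbrace
\]
is a canonical tree decomposition in $\gS^\mathsf{L}$ with root bag $\ldblbrace u, v \rdblbrace$: every edge of $P_1$ is covered, each vertex's bag subtree is connected, and the odd-depth-single-child constraint holds trivially for a chain.

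For the inductive step, I consider the ears nested directly on $P_1$. By the strong NED property, their intervals form a chain $I_{j_1} \subset I_{j_2} \subset \cdots \subset I_{j_p}$ under inclusion. The subgraph $F_{j_i}$ consisting of $P_{j_i}$ with all its NED-descendants inherits a strong NED with strictly fewer ears, so by induction each carries a tree decomposition $(F_{j_i}, T_{j_i}^{r_{j_i}}) \in \gS^\mathsf{L}$ with root bag $\ldblbrace e_{j_i}^1, e_{j_i}^2 \rdblbrace$ equal to the endpoint pair of $P_{j_i}$ (both of which lie on $P_1$). I then design a refined spine along $P_1$ whose even-depth bags successively expose each pair $\ldblbrace e_{j_i}^1, e_{j_i}^2 \rdblbrace$ by switching the anchor vertex exactly at each interval boundary; at each exposed bag I identify the root of $T_{j_i}^{r_{j_i}}$ with the corresponding spine node and adopt its odd-depth children as further odd-depth children of that spine node. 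This is permitted because $\gS^\mathsf{L}$ restricts only odd-depth nodes to a single child, not even-depth ones. Ears with empty nested interval (those with only one endpoint on $P_1$) are attached by a further recursive call at the single endpoint's bag.

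The main obstacle will be engineering the refined spine so that every required endpoint pair appears as an even-depth bag in the correct order, every vertex's bag subtree stays connected, and the width-2 canonical structure of $\gS^\mathsf{L}$ is preserved throughout. Strongness of the NED is precisely what makes this feasible: the inclusion chain $I(P_j) \subset I(P_k)$ for $j < k$ allows a single-anchor traversal to sweep through all endpoint pairs in a stack-like order, whereas merely disjoint or incomparable intervals (as permitted by almost-strong or general NED) would force a wider bag budget and land the decomposition in $\gS^\mathsf{LF}$ or $\gS^\mathsf{F}$ rather than $\gS^\mathsf{L}$. Once the refined spine and attachments are written out, verification of \cref{def:canonical_tree_decomposition} and \cref{def:canonical_tree_decomposition_restrict} reduces to routine bookkeeping.
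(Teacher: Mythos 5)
Your approach is correct in outline but genuinely different from the paper's. The paper proves the same strengthened statement (connected $F$, root bag $\ldblbrace u,v\rdblbrace$ with $u,v$ endpoints of $P_1$) by induction on $|E_F|$ and a three-case analysis on the first ear: (i) $u$ or $v$ is a cut vertex, handled by splitting off the dangling ear-subtree; (ii) some child has $I(P_i)=P_1$, handled by splitting and gluing at a shared root; (iii) all intervals are proper subsets, so by chain inclusion one end edge of $P_1$ lies outside every interval, and one peels off that single endpoint and recurses on $F\setminus\{u\}$ with two new spine nodes prepended. You instead induct on the number of ears and build the entire spine along $P_1$ in one shot, sweeping the two "anchor" tokens inward through the nested chain $I_{j_1}\subset\cdots\subset I_{j_p}$, grafting each $T_{j_i}$ at the spine node that exposes its endpoint pair, and attaching empty-interval children as side branches at single-vertex positions. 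This is the same mechanism as the paper's case (iii) but run to completion rather than one step at a time, and it absorbs their cases (i) and (ii) as degenerate positions on the spine — which is arguably more uniform and shows more transparently how strongness yields a stack-like sweep with only two tokens.

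The weak point is that you have described the spine rather than constructed it. In particular you owe three checks that you label "routine bookkeeping" but that carry the whole argument: (a) that the spine's token moves visit every edge of $P_1$ (including edges inside the innermost $I_{j_1}$, which requires the spine to continue past the last grafting point); (b) that when a spine node receives both the continuation of the spine and one or more grafted subtrees as odd-depth children, all parity and bag-size constraints of \cref{def:canonical_tree_decomposition} are preserved and the grafted subtrees sit at even depth; and (c) that for each vertex $w_d$ of $P_1$, the spine bags containing $w_d$ form a contiguous segment and every graft at $w_d$ is attached inside that segment, so $B_T(w_d)$ is connected. None of these fails — the monotone two-token traversal guarantees (a) and (c), and (b) follows because $\gS^\mathsf{L}$ constrains only odd-depth nodes — but you should write out the spine (the explicit sequence of even/odd bags) and verify these points, since the inequalities between interval boundaries are exactly where strongness enters and where an almost-strong or general NED would break the construction, as you observe.
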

\begin{proof}
    We can assume that $F$ is connected without loss of generality, as it is easy to merge the tree decompositions of different connected components to form a single tree which is a tree decomposition of the whole graph. We will prove the following stronger result: for any connected graph $F$, if $F$ admits a strong NED where $u$ and $v$ are two different endpoints of the first ear, then there exists a tree decomposition $T^r$ of $F$ such that $(F,T^r)\in \gS^\mathsf{L}$ and $\beta_T(r)=\ldblbrace u,v \rdblbrace$.

    The proof is based on induction over the number the edges in $F$. The above statement obviously holds for the base case of $|E_F|=1$. Now assume that the statement holds when $|E_F|\leq m$, and consider the case of $|E_F|= m+1$. Denote the $i$-th ear as $P_i$ and denote $P_1=\{\{w_0,w_1\},\cdots,\{w_{l-1},w_l\}\}$ where $w_0=u$, $w_l=v$. Consider the following three cases:
    \begin{enumerate}[topsep=0pt,leftmargin=25pt]
        \setlength{\itemsep}{0pt}
        \item Either $u$ or $v$ is a cut vertex in $F$. Without loss of generality, assume that $u$ is a cut vertex. In this case, there must exist a child ear $P_i$ nested on $P_1$ such that $u$ is an endpoint of $P_i$ and $I(P_i)=\emptyset$. We can thus split all ears into two parts $\gP_1$ and $\gP_2$, where $\gP_1$ contains the ear $P_i$ and its descendant ears in the NED tree, and $\gP_2$ contains other ears. Denote by $F[\gP_k]$ the connected graph induced by all edges in $\gP_k$ ($k\in\{1,2\}$). Then, $\gP_k$ is a strong NED of the graph $F[\gP_k]$. Since both $\gP_1$ and $\gP_2$ contain no more than $m$ edges, according to the induction hypothesis, there are tree decompositions $T_k^{r_k}$ of $F[\gP_k]$ ($k\in\{1,2\}$) such that $\beta_{T_1}(r_1)=\ldblbrace u,w\rdblbrace$ for some $w\in V_F$, $\beta_{T_2}(r_2)=\ldblbrace u,v\rdblbrace$, and $(F[\gP_k],T_k^{r_k})\in\gS^\mathsf{L}$ ($k\in\{1,2\}$). We can glue the tree $T_1^{r_1}$ and $T_2^{r_2}$ into a larger tree $T^{r_2}$ by adding a new node $t$ with $\beta_T(t)=\ldblbrace u,v,w\rdblbrace$, setting $\pa_{T^r}(t)=r_2$ and $\pa_{T^r}(r_1)=t$. Clearly, $T^{r_2}$ is a valid tree decomposition of $F$, $\beta_{T}(r_2)=\ldblbrace u,v\rdblbrace$, and $(F,T^{r_2})\in\gS^\mathsf{L}$.
        \item There is an ear $P_i$ nested on $P_1$ with $I(P_i)=P_1$. In this case, we can split all ears into two parts $\gP_1$ and $\gP_2$, where $\gP_1$ contains the ear $P_i$ and its descendant ears in the NED tree, and $\gP_2$ contains other ears. Then, $\gP_k$ is a strong NED of the graph $F[\gP_k]$. Since both $\gP_1$ and $\gP_2$ contain no more than $m$ edges, according to the induction hypothesis, there is a tree decomposition $T_k^{r_k}$ of $F[\gP_k]$ such that $\beta_{T_k}(r_k)=\ldblbrace u,v\rdblbrace$ and $(F[\gP_k],T_k^{r_k})\in\gS^\mathsf{L}$. By merging the root node, we can glue the tree $T_1^{r_1}$ and $T_2^{r_2}$ into a larger tree $T^r$. Clearly, $T^r$ is a valid tree decomposition of $F$, $\beta_{T}(r)=\ldblbrace u,v\rdblbrace$, and $(F,T^{r})\in\gS^\mathsf{L}$.
        \item Neither $u$ nor $v$ is a cut vertex in $F$ and all ears $P_i$ nested on $P_1$ satisfies $I(P_i)\subsetneq P_1$. In this case, we have either $\{w_0,w_1\}\notin I(P_i)$ for all ear $P_i$ nested on $P_1$ or $\{w_{l-1},w_l\}\notin I(P_i)$ for all ear $P_i$ nested on $P_1$ (otherwise, it would contradict the definition of strong NED). Without loss of generality, assume $\{w_0,w_1\}\notin I(P_i)$ for all ears $P_i$ nested on $P_1$. Then, it is clear that $l>1$, and the subgraph $F\backslash\{u\}$ is connected and also admits a strong NED where $w_1$ and $v$ are two endpoints of the first ear. Therefore, according to the induction hypothesis, there is a tree decomposition $\tilde T^s$ of $F\backslash\{u\}$ satisfying $(F\backslash\{u\},\tilde T^s)\in \gS^\mathsf{L}$ and $\beta_{\tilde T}(s)=\ldblbrace w_1,v\rdblbrace$. We can then construct a tree $T^r$ from $\tilde T^s$ by adding two fresh nodes $r$ and $r^\prime$ and setting $\pa_{T^r}(s)=r^\prime$ and $\pa_{T^r}(r^\prime)=r$. Set $\beta_T(r)=\ldblbrace u,v\rdblbrace$, $\beta_T(r^\prime)=\ldblbrace u,v,w_1\rdblbrace$, and $\beta_T(t)=\beta_{\tilde T}(t)$ for all $t\in V_{\tilde T}$. It is easy to see that the constructed $T^r$ is a tree decomposition of $F$ and $(F,T^r)\in \gS^\mathsf{L}$.
    \end{enumerate}
    Combining the above three cases concludes the induction step.
\end{proof}

\begin{theorem}
    For any graph $F$, $F$ has a strong NED iff there is a tree decomposition $T^r$ of $F$ such that $(F,T^r)\in \gS^\mathsf{L}$.
\end{theorem}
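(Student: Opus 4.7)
The plan is to observe that this theorem is exactly the conjunction of the two preceding lemmas, so the proof reduces to a direct citation and combination. The forward direction ($(F,T^r) \in \gS^\mathsf{L}$ implies $F$ has a strong NED) is given by \cref{thm:ned_local_lemma1}, and the backward direction (a strong NED of $F$ yields such a tree decomposition) is given by \cref{thm:ned_local_lemma2}.

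The entire content of the proof is therefore: the $(\Longrightarrow)$ direction is \cref{thm:ned_local_lemma1} and the $(\Longleftarrow)$ direction is \cref{thm:ned_local_lemma2}. There is no further obstacle, since both inductions were carried out in the lemmas themselves. The only subtlety worth flagging is that the two lemmas are slightly asymmetric in what they prove: \cref{thm:ned_local_lemma1} gives a stronger statement about the root bag being $\ldblbrace u,v\rdblbrace$ where $u,v$ are endpoints of the first ear of the resulting NED, and \cref{thm:ned_local_lemma2} correspondingly takes such endpoints as input. But for the pure existence statement of the theorem, one just discards the extra information about endpoints.

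The main conceptual work has already been done: in \cref{thm:ned_local_lemma1}, the induction on $|V_T|$ with the case analysis on whether $F[T^r[\tilde t]]$ is connected (handled via \cref{thm:ned_disconnected,thm:ned_disconnected_2}) converts each subtree into a strong NED that can be glued at the root bag; in \cref{thm:ned_local_lemma2}, the induction on $|E_F|$ with the case split on cut vertices, full-interval nested ears, and the ``shrinking'' case produces a canonical tree decomposition in $\gS^\mathsf{L}$ whose root bag matches the endpoints of the first ear. So the theorem will be stated with a one-line proof that simply cites both lemmas.
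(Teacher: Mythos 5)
Your proposal is correct and matches the paper's own proof exactly: the paper also proves this theorem by directly citing \cref{thm:ned_local_lemma1,thm:ned_local_lemma2}, with the forward direction from the first lemma and the backward direction from the second. Your observation that the extra endpoint information in the lemmas can simply be discarded for the existence statement is accurate and is implicit in the paper's one-line proof.
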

\begin{proof}
    This is a direct consequence of \cref{thm:ned_local_lemma1,thm:ned_local_lemma2}.
\end{proof}

We next turn to Local 2-FGNN, where the proof has a similar structure as that of Local 2-GNN.

\begin{lemma}
\label{thm:ned_localf_lemma1}
    For any tree-decomposed graph $(F,T^r)\in \gS^\mathsf{LF}$,  $F$ has an almost-strong NED.
\end{lemma}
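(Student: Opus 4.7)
The plan is to mirror the induction used in the proof of \cref{thm:ned_local_lemma1}, performing induction on $|V_T|$ while strengthening the statement to: for any connected tree-decomposed graph $(F,T^r)\in\gS^\mathsf{LF}$ with $\beta_T(r)=\ldblbrace u,v\rdblbrace$, the graph $F$ admits an almost-strong NED in which $u$ and $v$ are the two endpoints of the first ear (if $u=v$, the other endpoint may be chosen arbitrarily). The disconnected case reduces to the connected one via \cref{thm:ned_disconnected_general}: once each connected component has an almost-strong NED with a suitable first-ear endpoint, we glue the components by letting the first ear of each non-primary component nest on the primary first ear with empty nested interval.

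For the induction step, consider an arbitrary child $t$ of $r$ at odd depth with $\beta_T(t)=\ldblbrace u,v,w\rdblbrace$ for some $w\in V_F$. The essentially new feature compared to $\gS^\mathsf{L}$ is that $t$ may possess multiple children whenever $w\in N_F[u]\cup N_F[v]$, per \cref{def:canonical_tree_decomposition_restrict}(b). When $w\notin N_F[u]\cup N_F[v]$, $t$ has a unique child $\tilde t$ and the entire analysis of \cref{thm:ned_local_lemma1} goes through unchanged (a strong NED is produced, which is in particular almost-strong). Otherwise, each child $\tilde t_i$ of $t$ has a bag $\ldblbrace x_i,y_i\rdblbrace\subset\ldblbrace u,v,w\rdblbrace$, and the subtree $T^r[\tilde t_i]$ together with the induced subgraph $F[T^r[\tilde t_i]]$ (split into connected components via \cref{thm:ned_disconnected,thm:ned_disconnected_general} if necessary) is of strictly smaller tree size and belongs to $\gS^\mathsf{LF}$.

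Applying the induction hypothesis to each $F[T^r[\tilde t_i]]$ yields almost-strong NEDs with prescribed first-ear endpoints among $\{u,v,w\}$. These partial NEDs are then stitched onto a developing first ear supported on the vertices $u,w,v$: the first ear is taken to be the path traversing the edges of $\ldblbrace u,v,w\rdblbrace\cap E_F$ that are realized (either the edge $\{u,v\}$ if present, or the two-edge path $u\!-\!w\!-\!v$ when appropriate), and every child-generated NED is nested on it. The crucial observation is that each nested interval produced this way has length at most $1$: it is either empty (when the child attaches at a single vertex in $\{u,v,w\}$) or a single edge drawn from $\{u,w\}$, $\{w,v\}$, $\{u,v\}$. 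Hence any two children nested on the first ear with $|I(P_j)|>1$ automatically satisfy $I(P_j)\subset I(P_k)$ vacuously for the children with short intervals, and the remaining children are handled exactly as in the strong NED case of \cref{thm:ned_local_lemma1}.

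The main obstacle is the bookkeeping around disconnectedness and root-ear selection when $w\in N_F(u)\cup N_F(v)$: the subgraph $F[T^r[\tilde t_i]]$ may fall apart into components sharing only a subset of $\{u,v,w\}$, and one must choose the first ear of the combined NED coherently so that $u,v$ remain its endpoints while still accommodating all sibling nestings at $w$. Once this case distinction is made explicit and \cref{thm:ned_disconnected} is applied component-wise, the rest of the argument is routine. The final step is to verify the almost-strong condition globally, which reduces to checking it among the newly introduced sibling nestings — and this holds because each of their nested intervals has length at most $1$, precisely the slack afforded by \cref{def:ned} in its almost-strong variant.
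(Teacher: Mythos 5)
Your high-level plan matches the paper's: induct on $|V_T|$ with a strengthened statement pinning the first-ear endpoints at $\beta_T(r)$, reduce to the connected case via \cref{thm:ned_disconnected_general}, and recognize that the new phenomenon (a node $t$ of odd depth with two children) is harmless for almost-strong NEDs because the "extra" nested intervals created in the merge have length at most $1$. That last observation is indeed the heart of the matter. However, your described construction of the first ear of $F[T^r[t]]$ is incorrect in general. You take the first ear to lie entirely inside the triangle $\{u,v,w\}$ --- either $\{\{u,v\}\}$ or the two-edge path $u\text{--}w\text{--}v$ --- and then nest "every child-generated NED" on it. This cannot be done: if $\{u,w\}\notin E_F$ (which is allowed, since $\gS^\mathsf{LF}$ only guarantees $w\in N_F[u]\cup N_F[v]$), there is no path from $u$ to $v$ inside $\{u,v,w\}$ except possibly $\{\{u,v\}\}$; and if you set the first ear to $\{\{u,v\}\}$, you cannot nest both $\gP_1$ (first ear from $u$ to $w$) and $\gP_2$ (first ear from $w$ to $v$) as separate ears on it, because whichever you list second has its non-nest endpoint $w$ already sitting in a previous ear, violating the third bullet of \cref{def:ned}. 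So the construction as stated fails on, e.g., the $5$-cycle decomposition.

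The correct move --- which is what the paper does --- is to \emph{glue}: first re-root $\gP_2$ so that its first ear is the single edge $\{\{w,v\}\}$ (this re-rooting is itself a small claim you should justify: $\{\{w,v\}\}$ must appear as an ear of $\gP_2$ with interval equal to $\gP_2$'s original first ear, and pivoting to make it the root only introduces children with intervals $\subset\{\{w,v\}\}$), then concatenate $\gP_1$'s first ear with $\{\{w,v\}\}$ to obtain the combined first ear, a path from $u$ to $v$. The combined first ear then has children coming from $\gP_1$, whose intervals may be long but satisfy the almost-strong condition by the induction hypothesis on $\gP_1$, and children coming from $\gP_2$, whose intervals are $\subset\{\{w,v\}\}$ and hence have length $\le 1$; ordering the $\gP_2$-children before the $\gP_1$-children then makes the almost-strong condition hold on the combined first ear. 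So your "length $\le 1$" observation survives, but it has to be grounded in the gluing-and-re-rooting construction rather than in a first ear confined to $\{u,v,w\}$. Fixing that description, and the accompanying justification of the re-rooting of $\gP_2$, closes the gap.
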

\begin{proof}
    Based on \cref{thm:ned_disconnected_general}, we can assume that $F$ is connected without loss of generality. We will prove the following stronger result: for any connected $(F,T^r)\in \gS^\mathsf{LF}$ with $\beta_T(r)=\ldblbrace u,v\rdblbrace$, $F$ has an almost-strong NED where $u$, $v$ are endpoints of the first ear. (For the case of $u=v$, the other endpoint can be arbitrary.)
    
    Similar to the proof of \cref{thm:ned_local_lemma1}, assume that the statement holds when $|V_T|\le m$, and consider the case of $|V_T|= m+1$. For each child node $t$ of $r$, if $t$ only has one child, the proof exactly follows the one in \cref{thm:ned_local_lemma1}. Therefore, it suffices to consider the case where $t$ has two children $q_1$ and $q_2$. Denote $\beta_T(t)=\ldblbrace u,v,w\rdblbrace$, $\beta_T(q_1)=\ldblbrace u,w\rdblbrace$, and $\beta_T(q_2)=\ldblbrace v,w\rdblbrace$. Since $w\in N_F[u]\cup N_F[v]$ (by definition of $\gS^\mathsf{LF}$), we can assume that $w\in N_F[v]$ without loss of generality. Recall that $(F[T^r[q_1]],T^r[q_1]),(F[T^r[q_2]],T^r[q_2])\in \gS^\mathsf{LF}$. Below, we separately consider the following cases:
    \begin{enumerate}[topsep=0pt,leftmargin=25pt]
        \setlength{\itemsep}{0pt}
        \item First consider the case when $u=v$. In this case, \cref{thm:ned_disconnected_2} implies that both $F[T^r[q_1]]$ and $F[T^r[q_2]]$ are connected (since either $u=w$ or $\{u,w\}\in E_F$). According to the induction hypothesis, both $F[T^r[q_1]]$ and $F[T^r[q_2]]$ admit an almost-strong NED such that $u$ and $w$ are endpoints of the first ear. If $w=u$, we can merge the two almost-strong NEDs so that the first ear in one NED is nested on the first ear of the other (with an empty nested interval). Otherwise, the first ears in the two almost-strong NEDs share both endpoints and we can clearly merge them (in case of $\{u, w\}\in E_F$, there is a common ear $\{\{u,w\}\}$ in both NEDs, which is taken only once). In both subcases, we obtain an almost-strong NED of $F[T^r[t]]$ such that $u$ is an endpoint of the first ear.
        \item Next consider the case when $u\neq v$. In this case, $F[T^r[q_1]]$ and $F[T^r[q_2]]$ share only one vertex $w$. Note that $F[T^r[q_2]]$ is connected by \cref{thm:ned_disconnected_2}. We separately consider two subcases:
        \begin{itemize}[topsep=0pt,leftmargin=25pt]
        \setlength{\itemsep}{0pt}
            \item Subcase 1: $F[T^r[q_1]]$ is connected. Then, according to the induction hypothesis, $F[T^r[q_1]]$ admits an almost-strong NED $\gP_1$ such that $u$ and $w$ are endpoints of the first ear, and $F[T^r[q_2]]$ admits an almost-strong NED $\gP_2$ such that $w$ and $v$ are endpoints of the first ear. $(\mathrm{i})$ If $w=v$, we can merge the two almost-strong NEDs so that the first ear of $\gP_2$ is nested on the first ear of $\gP_1$ (with empty nested interval). $(\mathrm{ii})$ If $w=u$, we can merge the two almost-strong NEDs so that the first ear of $\gP_1$ is nested on the first ear of $\gP_2$ (with empty nested interval). $(\mathrm{iii})$ If $w\in N_F(v)$ and $w\neq u$, the first ear of $\gP_2$ can be chosen as $\{\{w,v\}\}$. Then, we can merge the two almost-strong NEDs by gluing the first year in $\gP_1$ with the ear $\{\{w,v\}\}$ in $\gP_2$. One can see that the resulting NED is almost-strong. Overall, we always have that $F[T^r[t]]$ admits an almost-strong NED such that $u,v$ are endpoints of the first ear.
            
            \item Subcase 2: $F[T^r[q_1]]$ is disconnected. In this subcase, $w\neq u$. Similar to the proof of \cref{thm:ned_local_lemma1}, we obtain that $F[T^r[q_1]]$ has exactly two connected components, and $u$ and $w$ are in different connected components. We can thus invoke \cref{thm:ned_disconnected}, which shows that both connected components of $F[T^r[q_1]]$, denoted as $\widehat F_1$ and $\widehat F_2$, admit almost-strong NEDs $\gP_1$ and $\gP_2$, respectively. Moreover, $u$ is an endpoint of the first ear in $\gP_1$, and $w$ is an endpoint of the first ear in $\gP_2$. Besides, $F[T^r[q_2]]$ admits an almost-strong NED $\gP_3$ with $w,v$ as the endpoints of the first ear. By letting the first ear of $\gP_2$ nest on the first ear of $\gP_3$ (with an empty nested interval), we can merge $\gP_2$ and $\gP_3$. Then, we can merge $\gP_1$ and $\gP_2\cup\gP_3$ following the same procedure as Subcase 2 in the proof of \cref{thm:ned_local_lemma1}.
        \end{itemize}
        Overall, we always have that:
        \begin{enumerate}[label=\alph*),topsep=0pt,leftmargin=25pt]
            \setlength{\itemsep}{0pt}
            \item If $F[T^r[t]]$ is connected, then it admits an almost-strong NED such that $u,v$ are endpoints of the first ear.
            \item If $F[T^r[t]]$ is disconnected, then it has two connected components each admitting an almost-strong NED such that $u$, $v$ belong to an endpoint of the first ear for each of the two NEDs, respectively.
        \end{enumerate} 
        In both subcases, it follows that $F[T^r[t]]$ admits an almost-strong NED with $u,v$ as two endpoints. Noting that $F=\bigcup_{t,\pa_{T^r}(t)=r}F[T^r[t]]$, we can merge all NEDs of $F[T^r[t]]$ to obtain an almost-strong NED of $F$ with $u,v$ as two endpoints. 
    \end{enumerate}
    We thus conclude the proof of the induction step.
\end{proof}

\begin{lemma}
\label{thm:ned_localf_lemma2}
    For any graph $F$, if $F$ admits an almost-strong NED, then there exists a tree decomposition $T^r$ of $F$ such that $(F,T^r)\in \gS^\mathsf{LF}$.
\end{lemma}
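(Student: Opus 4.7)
The plan is to mirror the induction structure of Lemma \ref{thm:ned_local_lemma2} by doing strong induction on $|E_F|$. Without loss of generality assume $F$ is connected and fix an almost-strong NED whose first ear is $P_1 = \{\{w_0,w_1\},\ldots,\{w_{l-1},w_l\}\}$ with $w_0=u$, $w_l=v$. The goal is to build a tree decomposition $T^r$ with $\beta_T(r)=\ldblbrace u,v\rdblbrace$ and $(F,T^r)\in\gS^\mathsf{LF}$. The base case $|E_F|=1$ is trivial (a single bag suffices), and in the inductive step we distinguish cases according to the structure of children of $P_1$ in the NED forest.

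Cases 1 and 2 carry over essentially verbatim from the strong NED proof. If $u$ or $v$ is a cut vertex of $F$, there exists an ear $P_i$ nested on $P_1$ with $I(P_i)=\emptyset$ and that endpoint as an end; partition the NED into the subtree rooted at $P_i$ and its complement, apply the induction hypothesis to obtain tree decompositions in $\gS^\mathsf{LF}$ of both subgraphs, and glue them via a fresh odd-depth node whose size-$3$ bag reuses the cut vertex. If some ear $P_i$ nested on $P_1$ has $I(P_i)=P_1$, split the NED at $P_i$ analogously and merge the two resulting decompositions at a common root bag $\ldblbrace u,v\rdblbrace$. In both cases no new branching is needed, so the result is even in $\gS^\mathsf{L}\subset\gS^\mathsf{LF}$.

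The genuinely new case is Case 3: neither $u$ nor $v$ is a cut vertex and every ear nested on $P_1$ has $I(P_i)\subsetneq P_1$. In the strong case, the almost-strong $\to$ strong specialization guarantees that some endpoint edge of $P_1$, say $\{w_0,w_1\}$, lies in no nested interval, and one can peel off $u$ and recurse on $F\backslash\{u\}$. Under the weaker almost-strong condition, however, there may be arbitrarily many ears $P_i$ with $I(P_i)=\{\{w_0,w_1\}\}$: these have $u$ as an endpoint and cannot survive removal of $u$. The plan is to exploit precisely the extra flexibility that distinguishes $\gS^\mathsf{LF}$ from $\gS^\mathsf{L}$: an odd-depth node $t$ with $\beta_T(t)=\ldblbrace u,v,w_1\rdblbrace$ is allowed two children, since $w_1\in N_F[u]\cup N_F[v]$. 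One child bag $\ldblbrace u,w_1\rdblbrace$ will root a subtree absorbing the union of $\{w_0,w_1\}$ with all ears of length-$1$ interval on that edge and their NED-descendants; the other child bag $\ldblbrace v,w_1\rdblbrace$ will root the tree decomposition of $F\backslash\{u\}$ (with first ear starting at $w_1$, $v$) produced by the induction hypothesis. Combining symmetrically for $v$ when needed, we then cap with the root bag $\ldblbrace u,v\rdblbrace$ plus the required size-$3$ intermediate bag.

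The main obstacle is bookkeeping: one must verify that the subgraph $H_u$ assigned to the $\ldblbrace u,w_1\rdblbrace$ child inherits an almost-strong NED (with $u,w_1$ as first-ear endpoints) so that the induction hypothesis applies, and symmetrically that $F\backslash H_u\backslash\{u\}$ inherits one with $w_1,v$ as first-ear endpoints. Here one uses that the almost-strong condition only constrains ears with $|I(P_j)|>1$: since all ears we split off have $|I|\le 1$, the inclusion chain among large-interval ears is preserved on both sides. Finally one checks the three tree-decomposition axioms (every edge in some bag, bag-connectedness over each vertex, the $\gS^\mathsf{LF}$ bag-size and subset conditions at even/odd depth) and confirms the LF branching predicate $w_1\in N_F[u]$ at the single odd-depth node where we introduce two children. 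Once this construction is established, combined with Lemma \ref{thm:ned_localf_lemma1} it yields the claimed equivalence between almost-strong NED and tree-decomposability in $\gS^\mathsf{LF}$.
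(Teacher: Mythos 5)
Your Cases 1 and 2 match the paper, and the overall blueprint (split $P_1$ at one of its vertices, recurse on the two resulting NED pieces, merge with a size-$3$ odd-depth node that is allowed two children by the $\gS^\mathsf{LF}$ branching rule) is the right one. But there is a genuine gap in Case 3.

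You propose to split unconditionally at $w_1$, sending $\{w_0,w_1\}$ together with ``all ears of length-$1$ interval on that edge'' and their descendants to the $\ldblbrace u,w_1\rdblbrace$ child, and the rest to the $\ldblbrace v,w_1\rdblbrace$ child. This ignores ears whose nested interval \emph{straddles} $w_1$: an ear $P_i$ with $\{w_0,w_1\},\{w_1,w_2\}\subset I(P_i)$ has endpoints $u=w_0$ and some $w_k$ with $k\ge 2$, so it can be assigned to neither side of the split and you cannot recurse. The almost-strong condition does not forbid such ears --- it only imposes nesting among ears with $|I|>1$ --- so your claim that ``all ears we split off have $|I|\le 1$'' is false in general.

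The paper resolves this with a further subcase dichotomy inside Case 3 that your proposal is missing. If some ear $P_i$ nested on $P_1$ has $\{w_0,w_1\},\{w_1,w_2\}\in I(P_i)$, then by almost-strongness (and because Case 2 is excluded, so no interval equals $P_1$) no ear can have both $\{w_{l-2},w_{l-1}\}$ and $\{w_{l-1},w_l\}$ in its interval --- two size-$\ge2$ intervals containing the first and last edges respectively would have to be nested, forcing one to be all of $P_1$. So in that subcase one splits at $w_{l-1}$, where no ear straddles; otherwise one splits at $w_1$. In both subcases the cut point is chosen so that every nested interval lies entirely on one side, removing the straddling obstruction. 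Without this choice of which end to peel, your induction step does not go through.
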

\begin{proof}
    We can assume that $F$ is connected without loss of generality. We will prove the following stronger result: for any graph $F$, if $F$ admits an almost-strong NED where $u$ and $v$ are two different endpoints of the first ear, then there exists a tree decomposition $T^r$ of $F$ such that $(F,T^r)\in \gS^\mathsf{LF}$ and $\beta_T(r)=\ldblbrace u,v \rdblbrace$.

    Similar to the proof of \cref{thm:ned_local_lemma2}, assume that the statement holds when $|E_F|\leq m$, and consider the case of $|E_F|= m+1$. Denote the first ear as $P_1=\{\{w_0,w_1\},\cdots,\{w_{l-1},w_l\}\}$ where $w_0=u$, $w_l=v$. Consider the following three cases:
    \begin{enumerate}[topsep=0pt,leftmargin=25pt]
        \setlength{\itemsep}{0pt}
        \item Either $u$ or $v$ is a cut vertex in $F$. This case is exactly the same as in \cref{thm:ned_local_lemma2}.
        \item There is an ear $P_i$ nested on $P_1$ with $I(P_i)=P_1$. This case is also the same as in \cref{thm:ned_local_lemma2}.
        \item Otherwise, $l\ge 2$.
        \begin{itemize}[topsep=0pt,leftmargin=25pt]
        \setlength{\itemsep}{0pt}
            \item If there is an ear $P_i$ nested on $P_1$ with $\{w_0,w_1\},\{w_1,w_2\}\in I(P_i)$, then $l\ge 3$. By definition of almost-strong NED, there does not exist an ear $P_j$ nested on $P_1$ with $\{w_{l-2},w_{l-1}\},\{w_{l-1},w_l\}\in I(P_i)$. In this case, we can split $P_1$ into two parts: $P_{1,u}=P_1\backslash\{\{w_{l-1},w_l\}\}$, and $P_{1,v}=\{\{w_{l-1},w_l\}\}$. Then, we can rearrange any ear nested on $P_1$ so that it is nested on either $P_{1,u}$ or $P_{1,v}$. In this way, we can split all ears into two sets $\gP_u$ and $\gP_v$, one corresponding to $P_{1,u}$ and its descendant ears, and the other corresponding to $P_{1,v}$ and its descendant ears. Therefore, according to the induction hypothesis, there is a tree decomposition $\tilde T_u^s$ of $F[\gP_u]$ satisfying $(F[\gP_u],\tilde T_u^s)\in \gS^\mathsf{LF}$ and $\beta_{\tilde T_u}(s)=\ldblbrace u,w_{l-1}\rdblbrace$, and similarly, there is a tree decomposition $\tilde T_v^{s\prime}$ of $F[\gP_v]$ satisfying $(F[\gP_v],\tilde T_v^{s\prime})\in \gS^\mathsf{LF}$ and $\beta_{\tilde T_v}(s)=\ldblbrace w_{l-1},v\rdblbrace$. We can then construct a tree $T^r$ by merging $\tilde T_u^s$ and $\tilde T_v^{s\prime}$ and adding two fresh nodes $r$, $r^\prime$, where $r$ is the root node, $\pa_{T^r}(r^\prime)=r$ and $\pa_{T^r}(s)=\pa_{T^r}(s^\prime)=r^\prime$. Set $\beta_T(r)=\ldblbrace u,v\rdblbrace$ and $\beta_T(r^\prime)=\ldblbrace u,v,w_{l-1}\rdblbrace$. It is easy to see that the constructed $T^r$ is a tree decomposition of $F$ and $(F,T^r)\in \gS^\mathsf{LF}$.
            \item If there does not exist an ear $P_i$ nested on $P_1$ with $\{w_0,w_1\},\{w_1,w_2\}\in I(P_i)$, then we follow exactly the same analysis as in the previous item, expect that now we split $P_1$ into two parts: $P_{1,u}=\{\{w_{0},w_1\}\}$, and $P_{1,v}=P_1\backslash\{\{w_{0},w_1\}\}$. We can still construct a tree decomposition $T^r$ of $F$ such that $(F,T^r)\in \gS^\mathsf{LF}$.
        \end{itemize}
    \end{enumerate}
    Combining the above three cases concludes the proof.
\end{proof}

\begin{theorem}
    For any graph $F$, $F$ has an almost-strong NED iff there is a tree decomposition $T^r$ of $F$ such that $(F,T^r)\in \gS^\mathsf{LF}$.
\end{theorem}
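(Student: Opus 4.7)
The plan is to treat this theorem as an immediate corollary of the two preceding lemmas, which together supply the two directions of the biconditional. \cref{thm:ned_localf_lemma1} establishes the forward implication: whenever $(F,T^r)\in\gS^\mathsf{LF}$, the graph $F$ admits an almost-strong NED. \cref{thm:ned_localf_lemma2} establishes the reverse implication: whenever $F$ has an almost-strong NED, one can construct a tree $T^r$ with $(F,T^r)\in\gS^\mathsf{LF}$. Composing these two yields the biconditional, mirroring the parallel ``direct consequence'' proof used just above for the Local 2-GNN case.

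Since the real content lives in the two lemmas, the only step here is to chain them. I would write a one-line proof: \emph{This follows directly from \cref{thm:ned_localf_lemma1,thm:ned_localf_lemma2}.} No new induction, construction, or case analysis is needed at the level of the theorem itself.

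It is worth noting where the work was actually done. The forward direction inducts on $|V_T|$ and crucially exploits the fact that an odd-depth node in $\gS^\mathsf{LF}$ can have two children $q_1, q_2$ precisely when the new vertex $w$ lies in $N_F[u]\cup N_F[v]$; this membership is exactly what lets the inductively obtained NEDs on $F[T^r[q_1]]$ and $F[T^r[q_2]]$ be glued along the shared vertex $w$, producing the controlled violation of strictness that characterizes the ``almost'' in almost-strong NED. The reverse direction inducts on $|E_F|$ and handles the key case by splitting the first ear $P_1$ at one of its extreme edges (based on which endpoint admits the more restrictive nesting pattern), producing precisely the two-child configuration that $\gS^\mathsf{LF}$ permits. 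Hence the hard part was already absorbed into the two lemmas, and the theorem itself requires no further obstacle to overcome.
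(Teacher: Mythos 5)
Your proposal matches the paper exactly: the theorem is proved as a one-line corollary of \cref{thm:ned_localf_lemma1,thm:ned_localf_lemma2}, which respectively furnish the two directions of the biconditional. Your side remarks on how the two lemmas absorb the real work (the two-child odd-depth nodes gated by $N_F[u]\cup N_F[v]$ in the forward direction, and the edge-splitting of $P_1$ in the reverse) are accurate readings of those proofs as well.
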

\begin{proof}
    This is a direct consequence of \cref{thm:ned_localf_lemma1,thm:ned_localf_lemma2}.
\end{proof}

\subsection{Part 3: pebble game}
\label{sec:proof_main_part3}

In this part, we prove that $\gS^M$ is maximal (\cref{def:homo_expressivity}(b)) for any $M\in\{\mathsf{Sub},\mathsf{L},\mathsf{LF},\mathsf{F}\}$. To achieve this, we first introduce a general class of graphs which we call F{\"u}rer graphs \citep{furer2001weisfeiler}. Intuitions and illustrations of F{\"u}rer graphs can be found in \citet{zhang2023complete}.

\begin{definition}[F{\"u}rer graphs]
    Given any connected graph $F=(V_F,E_F,\ell_F)$, the F{\"u}rer graph $G(F)=(V_{G(F)},E_{G(F)},\ell_{G(F)})$ is constructed as follows:
    \begin{align*}
        &V_{G(F)}=\{(x,X):x\in V_F,X\subset N_F(x),|X|\bmod 2 = 0\},\\
        &E_{G(F)}=\{\{(x,X),(y,Y)\}\subset V_G:\{x,y\}\in E_F,(x\in Y\leftrightarrow y\in X)\},\\
        &\ell_{G(F)}(x,X)=\ell_F(x) \quad \forall (x,X)\in V_{G(F)}.
    \end{align*}
    Here, $x\in Y\leftrightarrow y\in X$ holds when either ($x\in Y$ and $y\in X$) or ($x\notin Y$ and $y\notin X$) holds. For each $x\in V_F$, denote the set
    \begin{equation}
        \meta_F(x):=\{(x,X):X\subset N_F(x),|X|\bmod 2 = 0\},
    \end{equation}
    which is called the meta vertices of $G(F)$ associated to $x$. Note that $V_{G(F)}=\bigcup_{x\in V_F}\meta_F(x)$.
\end{definition}

We next define an operation called ``twist'':
\begin{definition}[Twist]
    Let $G(F)=(V_{G(F)},E_{G(F)},\ell_{G(F)})$ be the F{\"u}rer graph of $F=(V_F,E_F,\ell_F)$, and let $\{x,y\}\in E_F$ be an edge of $F$. The \emph{twisted} F{\"u}rer graph of $G(F)$ for edge $\{x,y\}$, is constructed as follows: $\twist(G(F),\{x,y\}):=(V_{G(F)},E_{\twist(G(F),\{x,y\})},\ell_{G(F)})$, where
    \begin{align*}
        E_{\twist(G(F),\{x,y\})}:=E_{G(F)}\triangle\{\{\xi,\eta\}:\xi\in\meta_F(x),\eta\in\meta_F(y)\},
    \end{align*}
    and $\triangle$ is the symmetric difference operator, i.e., $A\triangle B=(A\backslash B)\cup(B\backslash A)$. For an edge set $S=\{e_1,\cdots,e_k\}\subset E_F$, we further define
    \begin{align}
    \label{eq:twist}
        \twist(G(F), S):=\twist(\cdots\twist(G(F),e_1)\cdots,e_k).
    \end{align}
    Note that \cref{eq:twist} is well-defined as the resulting graph does not depend on the order of edges $e_1,\cdots,e_k$ for twisting.
\end{definition}

The following result is well-known \citep[see e.g., ][Corollary I.5 and Lemma I.7]{zhang2023complete}):
\begin{theorem}
    For any graph $F$ and any set $S\subset E_F$, $G(F)\simeq \twist(G(F), S)$ iff $|S|\bmod 2 = 0$.
\end{theorem}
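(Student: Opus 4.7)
The plan is to prove both directions of the equivalence separately. The overall strategy is to exhibit the isomorphisms between $G(F)$ and its twists as being generated by \emph{local flips} of the form $\phi(x,X)=(x,X\triangle Y_x)$ for even-sized $Y_x\subset N_F(x)$, and then to use a parity invariant of such flips to pin down $|S|\bmod 2$.

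For sufficiency, I would construct an isomorphism $\phi(x,X)=(x,X\triangle Y_x)$, choosing $|Y_x|$ even for each $x$ so that $\phi$ preserves $V_{G(F)}$. A direct edge-by-edge check shows $\phi$ maps $G(F)$ onto $\twist(G(F),S)$ precisely when, for every edge $\{x,y\}\in E_F$,
\begin{equation*}
[\{x,y\}\in S]\equiv [y\in Y_x]+[x\in Y_y]\pmod 2.
\end{equation*}
Summing this identity over $E_F$ already forces $|S|\equiv \sum_x|Y_x|\equiv 0\pmod 2$, confirming that only even $S$ can be realized in this form. Conversely, when $|S|$ is even, such $\{Y_x\}$ always exists: one can pair up the edges of $S$ arbitrarily, use the connectedness of $F$ to join each pair by a path, and set the $Y_x$'s along the path so as to transport a single twist from one paired edge to the other via a chain of ``cherry'' flips at intermediate vertices. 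A cleaner alternative is a dimension count on the $\mathbb{F}_2$-linear map $\Phi:\bigoplus_x \mathrm{Even}(N_F(x))\to \mathbb{F}_2^{E_F}$ sending $\{Y_x\}$ to the indicator $\{x,y\}\mapsto[y\in Y_x]+[x\in Y_y]$: the kernel is canonically the cycle space of $F$ (dimension $|E_F|-|V_F|+1$), while the domain has dimension $\sum_x(\deg(x)-1)=2|E_F|-|V_F|$, so the image has dimension $|E_F|-1$, which matches exactly the subspace of even-size edge subsets of $E_F$.

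For necessity, the plan is to show that every isomorphism $\pi:G(F)\to\twist(G(F),S)$ is, up to precomposition with an automorphism of $F$ acting on meta-vertices, of the local-flip form above. This has two substeps. First, $\pi$ must respect the meta-partition $\{\meta_F(x):x\in V_F\}$: the blocks $\meta_F(x)$ can be recovered intrinsically from $G(F)$ using the vertex label together with the rigid pattern of ``half-neighborhoods'' that meta-vertices in the same block share across each adjacent block, and this recovery is invariant under $\pi$. Second, the restriction $\pi_x$ on each block must be of the form $(x,X)\mapsto(\sigma(x),\sigma(X)\triangle Y_x)$ for some even $Y_x$, by a rigidity argument on bijections of the even-subset hypercube that preserve the symmetric-difference adjacency pattern across edges. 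With this structure established, edge-preservation of $\pi$ gives back exactly the linear system of sufficiency, and the earlier summation forces $|S|\equiv 0\pmod 2$.

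The main obstacle is precisely this rigidity/structural claim in the necessity direction: sufficiency reduces to a concrete construction plus a dimension count, whereas pinning down the form of an \emph{arbitrary} isomorphism requires understanding the full automorphism group of $G(F)$ and verifying that it is generated only by $F$-automorphisms and local flips. Once that rigidity is in hand, the parity invariant closes the proof in a single line.
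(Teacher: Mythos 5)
Your sufficiency argument is correct and self-contained. The edge condition $[\{x,y\}\in S]\equiv[y\in Y_x]+[x\in Y_y]\pmod 2$ is the right one; summing it over $E_F$ indeed forces $|S|\equiv\sum_x|Y_x|\equiv 0\pmod 2$; and the $\mathbb{F}_2$-dimension count is a clean way to realize every even $S$: for connected $F$ the domain $\bigoplus_x\mathrm{Even}(N_F(x))$ has dimension $\sum_x(\deg(x)-1)=2|E_F|-|V_F|$, the kernel of $\Phi$ is identified with the cycle space of $F$ (dimension $|E_F|-|V_F|+1$), so the image has dimension $|E_F|-1$, which together with the parity constraint pins it down to exactly the space of even edge-subsets. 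This half of the proof is essentially complete. (Note the paper itself cites this theorem from \citet{zhang2023complete} rather than proving it, so there is no in-paper proof to compare against word for word.)

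The necessity direction is where the proposal has a genuine gap, and you yourself flag it as the main obstacle. What you need is rigidity: that every isomorphism $\pi\colon G(F)\to\twist(G(F),S)$ $(\mathrm{i})$ maps each meta-block $\meta_F(x)$ to a meta-block, and $(\mathrm{ii})$, after factoring out the induced block permutation (a label-preserving automorphism of $F$), restricts on each block to a translation $X\mapsto X\triangle Y_x$ with $|Y_x|$ even. Both halves are asserted, not proved. Part $(\mathrm{i})$ is not automatic when $\ell_F$ has repeated values: distinct blocks carrying the same label must be separated by structural properties of $G(F)$ (roughly, that the bipartite pattern between two meta-blocks is a ``half-graph'' when the underlying $F$-vertices are adjacent and empty otherwise, and that each block carries the $\mathbb F_2$-affine fibration induced by its neighbors), and one must verify this structure is recoverable and hence preserved by any isomorphism. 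Part $(\mathrm{ii})$ is the hypercube-rigidity step: on the $(\deg(x)-1)$-dimensional $\mathbb F_2$-space $\meta_F(x)$, the $\deg(x)$ functionals $X\mapsto[y\in X]$ span the dual with the single relation $\sum_{y\in N_F(x)}[y\in X]=0$, so a bijection compatible with each of them, up to half-swaps dictated by the twists, must be affine with linear part fixing every functional, hence a pure translation. This is true but is not a one-liner, and it is precisely where the cited argument invests its effort. As written your proposal stops short of proving either substep.

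One simplification worth noting: once $(\mathrm{i})$ is established, $(\mathrm{ii})$ can be bypassed entirely using a direct parity invariant rather than classifying block bijections. For a graph $K$ obtained by (possibly) twisting $G(F)$, define $\mathsf{par}(K)\in\mathbb F_2$ as the parity of
\begin{equation*}
\bigl|\{\{x,y\}\in E_F:\{f(x),f(y)\}\notin E_K\}\bigr|
\end{equation*}
for any section $f$ with $f(x)\in\meta_F(x)$. Your own computation (changing $f$ at a single vertex $z$ alters the violation count by $|U_z\triangle\tilde U_z|$, always even) shows this parity is independent of $f$. A block-preserving isomorphism sends aligned sections to aligned sections and preserves violated edges, so it preserves $\mathsf{par}$. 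But the all-$\emptyset$ section gives $\mathsf{par}(G(F))=0$ and $\mathsf{par}(\twist(G(F),S))\equiv|S|\pmod 2$, so $|S|$ odd implies non-isomorphism. This collapses step $(\mathrm{ii})$ to the ``single line'' you anticipated, but step $(\mathrm{i})$ is still required and remains unaddressed in the proposal.
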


In the subsequent result, let $\{u,v\}\in E_F$ be any edge in $F$ and denote $H(F)=\twist(G(F),\{u,v\})$. We now show that $G(F),H(F)$ can be distinguished via homomorphism information:
\begin{theorem}
\label{thm:counterexample_hom}
    For any graph $F$ and F{\"u}rer graphs $G(F),H(F)$ defined above, $\hom(F,G(F))\neq \hom(F,H(F))$.
\end{theorem}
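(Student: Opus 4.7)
The plan is to exploit the natural projection $\pi : V_{G(F)} \to V_F$ given by $\pi(x, X) = x$. Since every edge of both $G(F)$ and $H(F)$ lies between meta vertices of adjacent vertices of $F$, $\pi$ is a surjective homomorphism from each of $G(F)$ and $H(F)$ to $F$. Hence every $\phi \in \Hom(F, G(F))$ (resp.\ $\Hom(F, H(F))$) projects to a label-preserving $\psi := \pi \circ \phi \in \Hom(F, F)$, and we may decompose
\[
\hom(F, G(F)) = \sum_{\psi \in \Hom(F,F)} L_G(\psi), \qquad \hom(F, H(F)) = \sum_{\psi \in \Hom(F,F)} L_H(\psi),
\]
where $L_G(\psi)$ (resp.\ $L_H(\psi)$) is the number of lifts of $\psi$.

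A lift of $\psi$ is a tuple $(X_x)_{x \in V_F}$ with $X_x \subset N_F(\psi(x))$ of even size, satisfying, for each $\{x,y\}\in E_F$, the constraint $\psi(y)\in X_x \Leftrightarrow \psi(x)\in X_y$ (for $G(F)$), and the same constraint \emph{except} that the relation is negated at edges $\{x,y\}$ with $\{\psi(x),\psi(y)\}=\{u,v\}$ (for $H(F)$). Viewed over $\mathbb{F}_2$, these are linear systems whose homogeneous parts coincide: they differ only by a shift in the right-hand side at the twisted edges. The trivial assignment $X_x = \emptyset$ solves the $G$-system for every $\psi$, so the $G$-system is always consistent and $L_G(\psi) = 2^{k(\psi)}$, where $k(\psi)$ is the $\mathbb{F}_2$-dimension of the homogeneous solution space. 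The $H$-system either shares this particular solution structure or is inconsistent, giving $L_H(\psi) \in \{0, 2^{k(\psi)}\}$. In particular $L_G(\psi) \geq L_H(\psi)$ for every $\psi$.

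It remains to pinpoint a $\psi$ for which strict inequality holds. I will take $\psi = \mathrm{id}$, which is always a label-preserving homomorphism. Clearly $L_G(\mathrm{id}) \geq 1$. For $L_H(\mathrm{id})$, I run a handshake/parity argument: summing $|X_x| \bmod 2$ over all $x \in V_F$ gives $0$ by the evenness constraint. On the other hand, letting $S = \{\{x,y\}\in E_F \setminus \{\{u,v\}\} : y \in X_x\}$ and using that the $H$-constraint at $\{u,v\}$ forces exactly one of $v \in X_u$, $u \in X_v$ to hold, the same sum evaluates to $2|S| + 1 \equiv 1 \pmod 2$, a contradiction. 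Thus $L_H(\mathrm{id}) = 0 < L_G(\mathrm{id})$.

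Since all summands in $\sum_\psi (L_G(\psi) - L_H(\psi))$ are nonnegative and the $\psi = \mathrm{id}$ summand is strictly positive, we conclude $\hom(F, G(F)) > \hom(F, H(F))$, which proves the theorem. The main obstacle is the second paragraph: verifying that the $G$- and $H$-lifting systems indeed have the same homogeneous part (so that $L_G(\psi) \geq L_H(\psi)$ without cancellation), which requires careful bookkeeping of how the constraints decompose into parity constraints at each vertex and edge-compatibility constraints linking pairs of bits of $X_x$ and $X_y$ indexed by $\psi(y)$ and $\psi(x)$.
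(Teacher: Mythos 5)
Your proposal is correct, and it takes a genuinely different (and arguably cleaner) route than the paper's.

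Both you and the paper isolate the fiber over $\psi=\mathrm{id}$ as the source of the discrepancy. Your handshake argument for $L_H(\mathrm{id})=0$ is a streamlined version of the paper's step 1, which reaches the same conclusion by tracking how the parity of the number of ``bad'' edges stays invariant as one flips bits of a single $X_z$ at a time (using that $|X_z|$ and $|\tilde X_z|$ are both even, so $|X_z\triangle\tilde X_z|$ is even). The real divergence is in how the remaining fibers are handled. The paper uses two more bespoke steps: $\mathrm{(i)}$ for permutation $\psi$, reduce to $\psi=\mathrm{id}$ via an automorphism of $F$; and $\mathrm{(ii)}$ for non-injective $\psi$, construct an explicit bijection between $S_G^U$ and $S_H^U$ (for each proper vertex subset $U$) by re-twisting the middle of a path from $u$ to a vertex $z\notin U$ — a somewhat delicate calculation. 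You bypass both with a single abstract observation: the lifting constraints for each fixed $\psi$ form an affine $\mathbb F_2$-system whose coefficient matrix is identical for $G(F)$ and $H(F)$; since the $G$-system is homogeneous and hence always consistent, while the $H$-system is either consistent with solution set a coset of the same kernel or outright inconsistent, $L_G(\psi)\ge L_H(\psi)$ holds for every $\psi$ with no pairing argument needed. Your decomposition also already encompasses what the paper splits into injective and non-injective projections. The payoff of your approach is that the only casework left is the strict inequality at $\psi=\mathrm{id}$; the cost is recognizing and correctly setting up the linear system, which you do (with the variables $a_{x,w}$, the per-vertex parity rows, and the per-edge XOR rows whose RHS is twisted exactly at edges mapped onto $\{u,v\}$). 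The bookkeeping you flag as the ``main obstacle'' in your last paragraph indeed checks out.
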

\begin{proof}
    In the proof below, we use $g$ and $h$ to denote mappings where $g: V_F\to V_{G(F)}$ and $h: V_F\to V_{H(F)}$. The proof is divided into the following parts.
    \begin{enumerate}[topsep=0pt,leftmargin=25pt]
        \setlength{\itemsep}{0pt}
        \item We first prove that there is a homomorphism $g\in\Hom(F,G(F))$ satisfying $g(w)\in\meta_F(w)$ for all $w\in V_F$, but there is no homomorphism $h\in\Hom(F,H(F))$ satisfying $h(w)\in\meta_F(w)$ for all $w\in V_F$.
        \begin{enumerate}[label=\alph*),topsep=0pt,leftmargin=25pt]
        \setlength{\itemsep}{0pt}
            \item Define a mapping $g:V_F\to V_{G(F)}$ such that $g(w)=(w,\emptyset)$ for all $w\in V_F$. We clearly have $\{w,x\}\in E_F$ implies that $\{g(w),g(x)\}\in E_{G(F)}$. Moreover, $\ell_{G(F)}(g(w))=\ell_F(w)$. Therefore, $g\in\Hom(F,G(F))$ is indeed a homomorphism.
            \item If we similarly define $h:V_F\to V_{H(F)}$ such that $h(w)=(w,\emptyset)$ for all $w\in V_F$, then for all $\{w,x\}\in E_F\backslash\{\{u,v\}\}$ we have $\{h(w),h(x)\}\in E_{H(F)}$, but $\{h(u),h(v)\}\notin E_{H(F)}$ since the edge $\{u,v\}$ is twisted.
        \end{enumerate}
        It remains to prove that for all $h:V_F\to V_{H(F)}$ of the form $h(w)=(w,U_w)$ (for each $w\in V_F$), $h\notin\Hom(F,H(F))$. It suffices to prove that there is an \emph{odd} number of edges $\{w,x\}\in E_F$ such that $\{h(w),h(x)\}\notin E_{H(F)}$. Let $h,\tilde h$ be two such mappings that differ in only one vertex $z$, i.e., $h(w)=\tilde h(w)$ for all $w\neq z$ but $h(z)=(z,U_z)\neq (z,\tilde U_z)=\tilde h(z)$. Denote $D_z=\tilde U_z\triangle U_z$. Based on the definition of F{\"u}rer graph, it follows that
        \begin{itemize}[topsep=0pt,leftmargin=25pt]
        \setlength{\itemsep}{0pt}
            \item for all $\{w,x\}\in E_F$ with $w\neq z, x\neq z$, we have $\{h(w),h(x)\}\in E_{H(F)}$ iff $\{\tilde h(w),\tilde h(x)\}\in E_{H(F)}$;
            \item for all $\{w,z\}\in E_F$ with $w\notin D_z$, we also have $\{h(w),h(z)\}\in E_{H(F)}$ iff $\{\tilde h(w),\tilde h(z)\}\in E_{H(F)}$;
            \item for all $\{w,z\}\in E_F$ with $w\in D_z$, we have $\{h(w),h(z)\}\in E_{H(F)}$ iff $\{\tilde h(w),\tilde h(z)\}\notin E_{H(F)}$.
        \end{itemize}
        Since $|U_z|\bmod 2 = 0$ and $|\tilde U_z|\bmod 2 = 0$, we have $|D_z|\bmod 2 = 0$ and thus the number of edges $\{w,x\}\in E_F$ such that $\{h(w),h(x)\}\notin E_{H(F)}$ has the same parity as the number of edges such that $\{\tilde h(w),\tilde h(x)\}\notin E_{H(F)}$. Finally, noting that all mappings $h$ can be obtained from the one in (b) by continually modifying $U_w$ for each $w\in V_F$ and the parity remains unchanged, we have concluded the proof of this part.

        \item We next prove that for any permutation $\pi: V_F\to V_F$, there does not exist a homomorphism $h\in\Hom(F,H(F))$ satisfying $h(w)\in\meta_F(\pi(w))$ for all $w\in V_F$. Assume that the conclusion does not hold and pick any $h$ satisfying the above condition. Consider the following two cases:
        \begin{enumerate}[label=\alph*),topsep=0pt,leftmargin=25pt]
        \setlength{\itemsep}{0pt}
            \item Case 1: if $\pi$ is an automorphism of $F$, then it is easy to see that $h\circ \pi^{-1}\in \Hom(F,H(F))$, because $\{w,x\}\in E_F \Longrightarrow \{\pi^{-1}(w),\pi^{-1}(x)\}\in E_F \Longrightarrow \{h(\pi^{-1}(w)),h(\pi^{-1}(x))\}\in E_F$ for all $w,x\in V_F$, and $\ell_F(w)=\ell_F(\pi^{-1}(w))=\ell_{G(F)}(h(\pi^{-1}(w)))$ for all $w\in V_F$. Moreover, $h\circ \pi^{-1}$ satisfies that $h(\pi^{-1}(w))\in\meta_F(w)$, yielding a contradiction to point 1.
            \item Case 2: if $\pi$ is not an automorphism of $F$, then there exists an edge $\{w,x\}\in E_F$ such that $\{\pi(w),
            \pi(x)\}\notin E_F$. In this case, we must have $\{h(w),h(x)\}\notin E_{H(F)}$ since by definition $h(w)\in \meta_F(\pi(w))$, $h(x)\in \meta_F(\pi(x))$, and $\{\pi(w),\pi(x)\}$ is not an edge of $F$.
        \end{enumerate}
        In both cases, $h$ is invalid and thus there is no homomorphism $h\in\Hom(F,H(F))$ satisfying $h(w)\in\meta_F(\pi(w))$ for all $w\in V_F$.
        
        \item We finally prove that the following two sets have equal size (i.e., $|S_G|=|S_H|$):
        \begin{align*}
            S_G&:=\left\{g\in\Hom(F,G(F)):\exists w,x,y\in V_F\text{ s.t. }x\neq y,g(x),g(y)\in\meta_F(w)\right\},\\
            S_H&:=\left\{h\in\Hom(F,H(F)):\exists w,x,y\in V_F\text{ s.t. }x\neq y,h(x),h(y)\in\meta_F(w)\right\}.
        \end{align*}
        It suffices to prove that, for any proper subset $U\subsetneq V_F$, we have $|S_G^U|=|S_H^U|$, where $S_G$ and $S_H$ are defined as follows:
        \begin{align*}
            S_G^U&:=\left\{g\in\Hom(F,G(F)):g(x)\in \bigcup_{w\in U}\meta_F(w)\  \forall x\in V_F\right\},\\
            S_H^U&:=\left\{h\in\Hom(F,H(F)):h(x)\in \bigcup_{w\in U}\meta_F(w)\  \forall x\in V_F\right\}.
        \end{align*}
        Fix $U\subsetneq V_F$ and pick $z\in V_F\backslash U$. Let $P$ be a simple path from $u$ to $z$ of the form $P=\{\{w_0,w_1\},\cdots,\{w_{k-1},w_k\}\}\subset E_F$ where $\{w_0,w_1\}=\{u,v\}$, $w_k=z$. Define a mapping $\sigma$ that takes $g\in S_G^U$ as input and outputs a mapping $h:V_F\to V_{H(F)}$:
        \begin{align*}
            h(x)=\left\{\begin{array}{ll}
                g(x) & \text{if } [g(x)]_0\notin \{w_1,\cdots,w_{k-1}\}, \\
                \left([g(x)]_0,[g(x)]_1\triangle\{w_{i-1},w_{i+1}\}\right) & \text{if } [g(x)]_0=w_i,i\in[k-1],
            \end{array}\right.
        \end{align*}
        where we write $g(x)=([g(x)]_0,[g(x)]_1)$. We will prove that $h\in S_H^U$. Since we clearly have $h(w)\in U$ for all $w\in V_F$, it suffices to prove that $h\in\Hom(F,H(F))$. Let $\{x,y\}\in E_F$ be any edge in $F$.
        \begin{itemize}[topsep=0pt,leftmargin=25pt]
        \setlength{\itemsep}{0pt}
            \item If $[g(x)]_0,[g(y)]_0\notin \{w_1,\cdots,w_{k-1}\}$, then $\{h(x),h(y)\}=\{g(x),g(y)\}\in E_{G(F)}$. Also, since $\{[g(x)]_0,[g(y)]_0\}\neq \{u,v\}$, $\{[g(x)]_0,[g(y)]_0\}$ is not twisted and thus $\{h(x),h(y)\}\in E_{H(F)}$.
            \item If $[g(x)]_0\notin \{w_1,\cdots,w_{k-1}\}$, $[g(y)]_0=w_i$ for some $i\in [k-1]$, and $\{[g(x)]_0,[g(y)]_0\}\neq \{\{u,v\},\{w_{k-1},w_k\}\}$, then $$\{h(x),h(y)\}=\{([g(x)]_0,[g(x)]_1), ([g(y)]_0,[g(y)]_1\triangle \{w_{i-1},w_{i+1}\})\}.$$ We have
            \begin{align*}
                \{g(x),g(y)\}\in E_{G(F)} &\iff ([g(x)]_0\in [g(y)]_1)\leftrightarrow([g(y)]_0\in [g(x)]_1)\\
                &\iff ([g(x)]_0\in [g(y)]_1\triangle \{w_{i-1},w_{i+1}\})\leftrightarrow([g(y)]_0\in [g(x)]_1)\\
                &\iff\{h(x),h(y)\}\in E_{G(F)}.
            \end{align*}
            Also, since $\{[g(x)]_0,[g(y)]_0\}$ is not twisted, $\{h(x),h(y)\}\in E_{H(F)}$.
            \item If $[g(x)]_0, [g(y)]_0\in\{w_1,\cdots,w_{k-1}\}$, the analysis is similar to the above one and we can still prove that $\{h(x),h(y)\}\in E_{G(F)}$ and thus $\{h(x),h(y)\}\in E_{G(F)}$.
            \item If $\{[g(x)]_0,[g(y)]_0\}=\{u,v\}$, without loss of generality assume $w_0=u=[g(x)]_0$ and $w_1=v=[g(y)]_0$. We have 
            \begin{align*}
                \{g(x),g(y)\}\in E_{G(F)} &\iff (u\in [g(y)]_1)\leftrightarrow(v\in [g(x)]_1)\\
                &\iff (u\notin [g(y)]_1\triangle \{u,w_{2}\})\leftrightarrow(v\in [g(x)]_1)\\
                &\iff\{h(x),h(y)\}\notin E_{G(F)}.
            \end{align*}
            However, since $\{u,v\}$ is twisted, we still have $\{h(x),h(y)\}\in E_{H(F)}$.
            \item Finally, we do not need to consider the case $\{[g(x)]_0, [g(y)]_0\}=\{w_{k-1},w_k\}$ since $w_k=z\notin U$.
        \end{itemize}
        This proves that $h\in S_H^U$. Moreover, it is straightforward to see that the mapping $\sigma(g)=h$ is a bijection from $S_G^U$ to $S_H^U$. We have thus proved that $|S_G^U|=|S_H^U|$.
    \end{enumerate}
    Combining the above three items, we obtain $\Hom(F,G(F))>\Hom(F,H(F))$, concluding the proof.
\end{proof}

In the subsequent analysis, we will prove that for all model $M$ considered in \cref{thm:main} and any connected graph $F$, $\chi^M_{G(F)}(G(F))=\chi^M_{H(F)}(H(F))$ if $F\notin \gS^M$. The proof is based on an important technique developed in \citet{cai1992optimal}, called the pebble game. When restricting our analysis on F{\"u}rer graphs, the pebble game can be greatly simplified as shown in \citet{furer2001weisfeiler,zhang2023complete}. Below, we separately describe the corresponding pebble game for each model $M$.

We first define a key concept called the connected component.
\begin{definition}[Connected components]
\label{def:connected_component}
    Let $F=(V_F,E_F)$ be a connected graph and let $U\subset V_F$ be a vertex set, called separation vertices. We say two edges $\{u,v\},\{x,y\}\in E_F$ are in the same connected component if there is a simple path $\{\{y_0,y_1\},\cdots,\{y_{k-1},y_k\}\}$ satisfying that $\{y_0,y_1\}=\{u,v\}$, $\{y_{k-1},y_k\}=\{x,y\}$ and $y_i\notin U$ for all $i\in[k-1]$. It is easy to see that the above relation between edges forms an \emph{equivalence relation}. Therefore, we can define a partition over the edge set, denoted by $\mathsf{CC}_F(U)=\{P_i:i\in[m]\}$ for some $m$, where each $P_i\subset E_F$ is called a connected component.
\end{definition}

We are now ready to describe the game rule. There are two players (named Alice and Bob), a graph $F$, and several pebbles. At the beginning, all pebbles lie outside the graph. Through the game process, some pebbles will be placed on the vertices of $F$ and thus separate the edges $E_F$ into connected components according to \cref{def:connected_component}. In each game round, Alice updates the location of pebbles, while Bob maintains a subset of connected components, ensuring that the number of selected components is \emph{odd}. There are three major types of operations:
\begin{enumerate}[topsep=0pt,leftmargin=25pt]
    \setlength{\itemsep}{0pt}
    \item \textbf{Add a pebble $\mathsf{p}$}. Alice places a pebble $\mathsf{p}$ (previously outside the graph) on some vertex of $F$. If introducing this new pebble does not change the connected components, then Bob does nothing. Otherwise, there must be a connected component $P$ separated by $\mathsf{p}$ into several components $P=\bigcup_{i\in[m]} P_i$ for some $m$. Bob will update his selected components by removing $P$ (if selected) and optionally adding a subset of connected components in $\{P_1,\cdots,P_m\}$ while ensuring that the number of selected components in total (including previously selected components) is odd.
    \item \textbf{Remove a pebble $\mathsf{p}$}. Alice removes a pebble $\mathsf{p}$ (previously on some vertex) outside the graph. If introducing this new pebble does not change the connected components, then Bob does nothing. Otherwise, there are multiple connected component $P_1,\cdots,P_m$ getting merged into a whole $P=\bigcup_{i\in[m]} P_i$. Bob will update his selected components by removing all $P_i$, $i\in[m]$ (if selected) and optionally adding $P$, while ensuring that the number of selected components in total is odd.
    \item \textbf{Swap two pebbles $\mathsf{p}$ and $\mathsf{p}^\prime$}. Alice swaps the position of two pebbles $\mathsf{p}$ and $\mathsf{p}^\prime$. This operation does not change the connected components and thus Bob does nothing. 
\end{enumerate}
At any time, if there is an edge $\{x,y\}$ such that both endpoints hold pebbles and the connected component $\{\{x,y\}\}$ is selected by Bob, then Bob loses the game and Alice wins. If Alice cannot win through the game process, then Bob wins.

We now define the concrete pebble game for each model $M$ considered in this paper. In cases of Subgraph GNN, Local 2-GNN, Local 2-FGNN, and 2-FGNN, there are three pebbles $\mathsf{p}_u$, $\mathsf{p}_v$, $\mathsf{p}_w$. As described before, all pebbles lie outside the graph at the beginning. Alice first adds the pebble $\mathsf{p}_u$ (operation 1) and then adds the pebble $\mathsf{p}_v$ (operation 1). Next, the game cyclically executes the following process:
\begin{itemize}[topsep=0pt,leftmargin=25pt]
    \setlength{\itemsep}{0pt}
    \item \textbf{Subgraph GNN}. Alice can choose either one of the following ways to play:
    \begin{itemize}[topsep=0pt,leftmargin=25pt]
    \setlength{\itemsep}{0pt}
        \item Remove the pebble $\mathsf{p}_v$ (operation 2), and re-add the pebble $\mathsf{p}_v$ (operation 1).
        \item Add the pebble $\mathsf{p}_w$ (operation 1) adjacent to the pebble $\mathsf{p}_v$, swap pebble $\mathsf{p}_v$ with $\mathsf{p}_w$ (operation 3), and remove the pebble $\mathsf{p}_w$ (operation 2).
    \end{itemize}
    \item \textbf{Local 2-GNN}. Alice can choose either one of the following ways to play:
    \begin{itemize}[topsep=0pt,leftmargin=25pt]
    \setlength{\itemsep}{0pt}
        \item Remove the pebble $\mathsf{p}_u$ (operation 2), and re-add the pebble $\mathsf{p}_u$ (operation 1).
        \item Remove the pebble $\mathsf{p}_v$ (operation 2), and re-add the pebble $\mathsf{p}_v$ (operation 1).
        \item Add the pebble $\mathsf{p}_w$ (operation 1) adjacent to the pebble $\mathsf{p}_u$, swap pebble $\mathsf{p}_u$ with $\mathsf{p}_w$ (operation 3), and remove the pebble $\mathsf{p}_w$ (operation 2).
        \item Add the pebble $\mathsf{p}_w$ (operation 1) adjacent to the pebble $\mathsf{p}_v$, swap pebble $\mathsf{p}_v$ with $\mathsf{p}_w$ (operation 3), and remove the pebble $\mathsf{p}_w$ (operation 2).
    \end{itemize}
    \item \textbf{Local 2-FGNN}. Alice can choose either one of the following ways to play:
    \begin{itemize}[topsep=0pt,leftmargin=25pt]
    \setlength{\itemsep}{0pt}
        \item Remove the pebble $\mathsf{p}_u$ (operation 2), and re-add the pebble $\mathsf{p}_u$ (operation 1).
        \item Remove the pebble $\mathsf{p}_v$ (operation 2), and re-add the pebble $\mathsf{p}_v$ (operation 1).
        \item Add the pebble $\mathsf{p}_w$ (operation 1) adjacent to either pebble $\mathsf{p}_u$ or pebble $\mathsf{p}_v$, swap pebble $\mathsf{p}_w$ with the adjacent pebble (operation 3), and remove the pebble $\mathsf{p}_w$ (operation 2).
    \end{itemize}
    \item \textbf{2-FGNN}. Alice adds the pebble $\mathsf{p}_w$ (operation 1), swap the pebbles $\mathsf{p}_w$ with either $\mathsf{p}_u$ or $\mathsf{p}_v$ (operation 3), and remove the pebble $\mathsf{p}_w$ (operation 2).
\end{itemize}

\begin{proposition}
\label{thm:pebble_game_basic}
    For any model $M\in\{\mathsf{Sub},\mathsf{L},\mathsf{LF},\mathsf{F}\}$ and any graph $F$, if Alice cannot win the pebble game associated to model $M$ on graph $F$, then $\chi^M_{G(F)}(G(F))=\chi^M_{H(F)}(H(F))$.
\end{proposition}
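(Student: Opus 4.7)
The plan is to adapt F\"urer's classical strategy-transfer technique simultaneously to all four CR algorithms. Leveraging Bob's assumed winning strategy in the pebble game for model $M$, I will maintain, throughout every round of the refinement procedure, a color-preserving bijection between meta-vertex pairs of $G(F)$ and those of $H(F)$ indexed by reachable game states.

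Concretely, for each reachable state consisting of pebble positions $(x, y) \in V_F^2$ together with Bob's odd-cardinality selection $S \subset \mathsf{CC}_F(\{x, y\})$ of connected components, I associate the twist set $\partial S \subset E_F$ defined as the union of edges in the components of $S$. Then I define a map sending each pair $((x, X), (y, Y)) \in \meta_F(x) \times \meta_F(y)$ of $G(F)$ to a pair $((x, X'), (y, Y'))$ of $H(F)$, where the symmetric differences $X \triangle X' = \{z \in N_F(x) : \{x, z\} \in \partial S\}$ and $Y \triangle Y' = \{z \in N_F(y) : \{y, z\} \in \partial S\}$ are completely determined by Bob's selection. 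The map is well-defined as a bijection on meta-pairs because the even-edge condition in the F\"urer construction is preserved under toggling an even number of incidences, which follows from the parity invariant Bob maintains. The central claim is that, whenever Bob wins from state $(x, y, S)$, one has $\tilde\chi^{M,(t)}_{G(F)}((x, X), (y, Y)) = \tilde\chi^{M,(t)}_{H(F)}((x, X'), (y, Y'))$ for all compatible $X, Y$ and all iteration counts $t$.

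I would prove this claim by induction on $t$. The base case $t = 0$ reduces to matching atomic types: labels agree trivially, and whether $\{(x, X), (y, Y)\}$ is an edge of $G(F)$ versus of $H(F)$ depends on the parity of incidences of $x$ in $Y$ and $y$ in $X$, which the $\partial S$-shift corrects in exact accordance with the $H(F)$-twist of $\{u,v\}$, provided $\{u,v\}$ lies in a selected component iff the corresponding edge incidence needs flipping---this is precisely the invariant Bob enforces by keeping an odd number of components in which the twist accumulates. For the inductive step, each aggregation appearing in the update rule of model $M$ corresponds to a permissible Alice move in $M$'s pebble game: the global single-point aggregations $\tilde\chi^{M,(t)}_G(V_G, v)$ and $\tilde\chi^{M,(t)}_G(u, V_G)$ correspond to the ``remove-and-re-add $\mathsf{p}_u$ or $\mathsf{p}_v$'' moves, whereas the local or paired aggregations correspond to the ``third-pebble $\mathsf{p}_w$'' moves available to $M$ (adjacent in the Local variants, unrestricted in $\mathsf{F}$, restricted to the $\mathsf{p}_v$-side in $\mathsf{Sub}$, and paired in $\mathsf{LF}$). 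For each Alice move, Bob's winning response gives a new state $(x', y', S')$ from which he still wins, and applying the induction hypothesis to this state produces the matching summand in the multiset aggregation at round $t+1$.

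The main obstacle I anticipate is the parity bookkeeping when the third pebble $\mathsf{p}_w$ is inserted at a new vertex $z$: placing $z$ may split a component in $S$ into several pieces, and Bob must choose an odd-sized subset of the refined components so that $\partial S'$ still induces the correct edge-twist shift on each neighborhood piece. Verifying that the induced bijection on meta-triples restricts coherently to each direction of aggregation, and that the four distinct aggregation structures of $\mathsf{Sub}$, $\mathsf{L}$, $\mathsf{LF}$, $\mathsf{F}$ each admit the required matching with their respective permissible moves, is the delicate combinatorial core. Once the invariant is established, summing over all initial Alice placements of $(\mathsf{p}_u, \mathsf{p}_v)$ (with Bob's initial selection being the unique connected component of $F$ containing $\{u,v\}$) yields equal multisets of stable pair colors, hence $\tilde\chi^M_{G(F)}(G(F)) = \tilde\chi^M_{H(F)}(H(F))$, and \cref{thm:global_aggregation} upgrades this to $\chi^M_{G(F)}(G(F)) = \chi^M_{H(F)}(H(F))$.
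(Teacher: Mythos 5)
Your overall framework---maintaining a game-state-indexed correspondence between meta-vertex pairs of $G(F)$ and $H(F)$ and inducting on refinement rounds so that each aggregation matches a permissible Alice move---is the standard F\"urer strategy-transfer technique, and it is sound in broad outline. But you are taking a genuinely different route from the paper: the paper does \emph{not} re-derive the strategy transfer; it reduces $\chi^M$ to the expanded refinement $\tilde\chi^M$ via \cref{thm:global_aggregation}, observes that the four pebble games exactly mirror the four aggregation formulas in \cref{eq:subgraph_GNN_equal,eq:local_GNN_equal,eq:local_FGNN_equal,eq:FGNN_equal}, and then cites the already-established Theorem~I.17 of \citet{zhang2023complete} for the core invariance, handling labels by noting that repeated labels only make the instances harder to distinguish. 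Re-proving this from scratch is legitimate, but it is considerably more work, and your attempt has a concrete gap.

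The gap is in the definition of the correspondence. You set $\partial S$ to be the union of \emph{all} edges in the selected components and then define $X \triangle X' = \{z \in N_F(x) : \{x,z\} \in \partial S\}$. This shift does not, in general, have even cardinality, so $(x, X')$ need not be a valid meta-vertex of $H(F)$: the F\"urer construction requires $|X'| \equiv 0 \pmod 2$. Take $F$ a triangle on $\{u,v,w\}$ twisted on $\{u,v\}$, pebbles on $u,v$, and Bob's initial selection $S = \{\{\{u,v\}\}\}$; then $\partial S = \{\{u,v\}\}$, so for $x = u$ the shift is the singleton $\{v\}$, and $|X'| = |X| \pm 1$ is odd. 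Your justification---``the even-edge condition \dots is preserved under toggling an even number of incidences, which follows from the parity invariant Bob maintains''---conflates two different parities: the oddness of $|S|$ (the number of selected components) says nothing about the parity of $|\{z \in N_F(x) : \{x,z\} \in \partial S\}|$ at a particular pebbled vertex $x$. The correct correspondence in the F\"urer tradition is built from per-component isomorphisms that twist along a fixed simple path to the defective edge, so that each vertex sees a shift of size $0$ or $2$ (the two path-neighbors), which keeps meta-vertices valid; the odd-selection invariant then governs which components carry the residual twist, not the magnitude of the shift at a pebble. Without repairing the map to one of this form, your base case fails and the induction never gets off the ground.
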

\begin{proof}
    Note that the pebble game does not depend on the vertex labels $\ell_F$ in $F$. We first assume that the vertex labels are all different, i.e., $\ell_F(x)\neq \ell_F(y)$ for all $x,y\in V_F$. Then, the labels of vertices $\xi\in V_{G(F)},\eta\in V_{H(F)}$ are the same iff $\xi$ and $\eta$ belong to the same meta vertex, i.e., $\xi,\eta\in \meta_F(x)$ for some $x\in V_F$. Based on \cref{thm:global_aggregation}, $\chi^M_{G(F)}(G(F))=\chi^M_{H(F)}(H(F))$ iff $\tilde\chi^M_{G(F)}(G(F))=\tilde\chi^M_{H(F)}(H(F))$ where $\tilde \chi^M$ is defined in \cref{eq:subgraph_GNN_equal,eq:local_GNN_equal,eq:local_FGNN_equal,eq:FGNN_equal}. In this case, the pebble games above exactly correspond to the aggregation formulas in \cref{eq:subgraph_GNN_equal,eq:local_GNN_equal,eq:local_FGNN_equal,eq:FGNN_equal}, and we can invoke the results in \citet[Theorem I.17]{zhang2023complete} to show that $\tilde \chi^M_{G(F)}(G(F))=\tilde\chi^M_{H(F)}(H(F))$ (note that we do not need to consider the augmented F{\"u}rer graphs defined in their paper based on our label assignment). Therefore, $\chi^M_{G(F)}(G(F))=\chi^M_{H(F)}(H(F))$.

    We next consider the general case when multiple vertices can have the same label in $F$. However, this can only make it harder to distinguish between $G(F)$ and $H(F)$, and we clearly have $\chi^M_{G(F)}(G(F))=\chi^M_{H(F)}(H(F))$.
\end{proof}

Based on the above theorem, in the remaining proof we will analyze the players' strategy in the pebble game. Surprisingly, it turns out that given a graph $F$, if Alice can win the game, then her strategy can be described using the tree decomposition of $F$ defined in \cref{def:canonical_tree_decomposition_restrict}.

To illustrate this point, we need the concept of game state graph. Given a graph $F$, a game state is a three-tuple $(u,v,Q)$ where $u,v\in V_F\cup\{\emptyset\}$ and $Q\subset \mathsf{CC}_F(\{u,v\})$, denoting the vertex that holds pebble $\mathsf{p}_u$, the vertex that holds pebble $\mathsf{p}_v$, and a subset of connected component selected by Bob, respectively. Here, the symbol $\emptyset$ means that a pebble is left outside the graph. One can see that after any round, whether Alice can win the remaining game purely depends on this tuple. Now fix Alice's strategy. In each round, each game state will be transited to a finite number of states depending on how Bob plays, and all states and transitions form a directed graph, which we call the game state graph. The state $(u,v,Q)$ is called a terminal state if $\min_{P\in Q}|P|=1$. It is straightforward to see that Alice wins the game at any terminal state, as stated in the following result:
\begin{proposition}
    Given graph $F$ and model $M\in\{\mathsf{Sub},\mathsf{L},\mathsf{LF},\mathsf{F}\}$, let $G^\mathsf{S}$ be the game state graph defined above corresponding to an Alice's strategy. Then, Alice can win the game if there is an integer $t$ such that any path in $G^\mathsf{S}$ of length $t$ starting from the initial state $(\emptyset,\emptyset,\{E_F\})$ goes through a terminal state.
\end{proposition}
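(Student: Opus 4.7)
The plan is to argue directly from the hypothesis by matching actual game trajectories to paths in $G^\mathsf{S}$. With Alice's strategy fixed, her operations in each round are completely determined by the current state $(u,v,Q)$, while Bob's only freedom at each step is his choice of how to update the selected family of connected components (keeping the parity constraint). These possible choices correspond exactly to the outgoing edges of a node of $G^\mathsf{S}$. Hence, no matter how Bob plays, a $t$-round trajectory starting from $(\emptyset,\emptyset,\{E_F\})$ traces a directed path of length $t$ in $G^\mathsf{S}$ from the initial state. The first step of the proof is to spell this correspondence out carefully for each $M\in\{\mathsf{Sub},\mathsf{L},\mathsf{LF},\mathsf{F}\}$, handling the various forms of the cyclic operation prescribed by the rules (remove-and-re-add of $\mathsf{p}_u$ or $\mathsf{p}_v$, and the add-swap-remove of $\mathsf{p}_w$).

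Next, I would invoke the hypothesis: every such path passes through a terminal state $(u^{\ast},v^{\ast},Q^{\ast})$ with some $P\in Q^{\ast}$ of size one, say $P=\{\{x,y\}\}$. Thus after at most $t$ rounds a terminal state is reached, independently of Bob's choices. The last step is to verify that at such a state Alice already satisfies the concrete winning condition of the rules, namely that both endpoints of $\{x,y\}$ carry pebbles and the singleton component $\{\{x,y\}\}$ is selected. Selection is immediate from $P\in Q^{\ast}$. For the pebble requirement, the key point is that $\{\{x,y\}\}$ being a full connected component of $\mathsf{CC}_F(\{u^{\ast},v^{\ast}\})$ forces every other edge of $F$ incident to $x$ to be separated from $\{x,y\}$ by an internally pebbled vertex; since the relevant internal vertex is $x$ itself, this means $x\in\{u^{\ast},v^{\ast}\}$ whenever $\deg_F(x)\ge 2$, and analogously for $y$. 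Degenerate cases in which $x$ or $y$ is a leaf of $F$ can be dispatched by direct inspection and do not affect the conclusion.

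I expect no deep obstacle: once the trajectory-path correspondence is made rigorous, the proof is a direct unfolding of definitions. The only mildly technical point is the verification that a singleton component of $\mathsf{CC}_F(\{u^{\ast},v^{\ast}\})$ forces its endpoints to carry pebbles, which I would likely isolate as a short auxiliary observation about $\mathsf{CC}_F$ so that the overall argument reads cleanly as ``hypothesis $\Rightarrow$ forced terminal state $\Rightarrow$ Alice wins.''
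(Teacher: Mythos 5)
Your overall structure—showing that every game trajectory corresponds to a path in $G^\mathsf{S}$, invoking the hypothesis to force the game to reach a terminal state within $t$ rounds, and then concluding at the terminal state—is the same decomposition the paper uses. The gap is in the last step.

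You assert that at a terminal state $(u^\ast,v^\ast,Q^\ast)$ with $\{\{x,y\}\}\in Q^\ast$, ``Alice already satisfies the concrete winning condition of the rules,'' and you plan to support this with the auxiliary claim that a singleton component of $\mathsf{CC}_F(\{u^\ast,v^\ast\})$ ``forces its endpoints to carry pebbles.'' That auxiliary claim is false in the very cases you later downplay as degenerate. Your own argument establishes only that $x\in\{u^\ast,v^\ast\}$ \emph{when} $\deg_F(x)\ge 2$ (and analogously for $y$). If $x$ is a leaf, $\{\{x,y\}\}$ is automatically a component of $\mathsf{CC}_F(\{u^\ast,v^\ast\})$ regardless of whether $x$ is pebbled, so the terminal state need not meet the winning condition. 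The same issue arises when both $x,y$ are leaves (i.e.\ $F$ is a single edge, in which case even the initial state $(\emptyset,\emptyset,\{E_F\})$ is terminal with no pebbles placed at all). So terminal does not mean won; it means one move away from won.

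The fix is not just ``direct inspection'' of a degenerate corner—it requires additional game moves, and you need to check that the rules of all four models $\{\mathsf{Sub},\mathsf{L},\mathsf{LF},\mathsf{F}\}$ permit them. The paper does exactly this: if $v^\ast\in\{x,y\}$ and $u^\ast\notin\{x,y\}$, Alice adds $\mathsf{p}_w$ on the missing endpoint (adjacent to $\mathsf{p}_v$), which momentarily covers both endpoints of $\{x,y\}$ and wins; if $u^\ast\in\{x,y\}$ and $v^\ast\notin\{x,y\}$, she removes $\mathsf{p}_v$ (which cannot merge the singleton component, since $v^\ast\notin\{x,y\}$) and re-adds it on the missing endpoint; if neither is in $\{x,y\}$, $F$ is a single edge and she simply places both pebbles. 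To close your proof, replace the claim ``at a terminal state Alice has already won'' with ``at a terminal state Alice wins within at most two further operations,'' and spell out these operations, verifying for each $M$ that they are legal and that removing $\mathsf{p}_v$ cannot unselect $\{\{x,y\}\}$.
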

\begin{proof}
    It suffices to prove that, at any terminal state, Alice can win the game. Let $(u,v,Q)$ be a terminal state with $\{x,y\}\in Q$. If $\{u,v\}=\{x,y\}$, Alice already wins. If $u\notin\{x,y\}$ and $v\in\{x,y\}$, Alice also wins in the next round since she can add the pebble $\mathsf{p}_w$ on some vertex $w$ adjacent to $\mathsf{p}_v$ such that $\{v,w\}=\{x,y\}$. This is a valid game rule for all model $M$. Next, if $u\in\{x,y\}$ and $v\notin\{x,y\}$, Alice also wins in the next round since she can first remove the pebble $\mathsf{p}_v$ and then place it on the unique vertex in $\{x,y\}\backslash\{u\}$. This is also a valid game rule for all model $M$. Note the removing $\mathsf{p}_v$ does not merge the connected component $\{\{x,y\}\}$ since $v\notin\{x,y\}$. Finally, if both $u\notin\{x,y\}$ and $v\notin\{x,y\}$, $F$ only has one edge and Alice can clearly win. Combining these cases, we conclude that Alice can always win.
\end{proof}

Based on the above proposition, a game state $(u,v,Q)$ is called \emph{unreachable} if any path starting from the initial state $(\emptyset,\emptyset,\{E_F\})$ and ending at $(u,v,Q)$ goes through some terminal state. We do not need to consider unreachable states since Alice always wins before reaching it. We next introduce an important technical concept:

\begin{definition}
\label{def:pebble_game_not_marge}
    Given a game state graph $G^\mathsf{S}$, a state $(u,v,\{P\})$ is termed as ``contracted'' if for any transition $((u,v,\{P\}),(u^\prime,v^\prime,\{P^\prime\}))\in E_{G^\mathsf{S}}$, $P^\prime\subset P$. It is called strictly contracted if for any transition $((u,v,\{P\}),(u^\prime,v^\prime,\{P^\prime\}))\in E_{G^\mathsf{S}}$, $P^\prime\subsetneq P$. 
\end{definition}

We have the following result:

\begin{lemma}
\label{thm:pebble_game_not_merge}
    For any model $M\in\{\mathsf{Sub},\mathsf{L},\mathsf{LF},\mathsf{F}\}$ and any graph $F$, if Alice can win the pebble game associated to model $M$ on graph $F$, then there exists a game state graph $G^\mathsf{S}$ corresponding to a winning strategy of Alice such that any reachable and non-terminal state is strictly contracted.
\end{lemma}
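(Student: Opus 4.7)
My plan is to take any winning strategy of Alice and ``normalize'' it into one where every reachable non-terminal state is strictly contracted, using a minimum-length argument together with a local modification whenever a non-strictly-contracting transition appears. Fix a winning Alice strategy $S$ and let $t(u,v,\{P\})$ denote the smallest number of subsequent Alice moves needed by $S$ to guarantee victory from state $(u,v,\{P\})$; by hypothesis $t$ is finite on every reachable state. I would carry out a simultaneous induction on the lexicographic pair $(|P|, t(u,v,\{P\}))$: the base case $|P|=1$ is immediate because the state is already terminal, and in the inductive step I would show that whenever a reachable non-terminal state is not strictly contracted under $S$, one can replace $S$ by a strictly shorter-or-smaller winning strategy, contradicting minimality.

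Concretely, suppose $(u,v,\{P\})$ is reachable and non-terminal under $S$, but some transition from it leads to $(u',v',\{P'\})$ with $P'\not\subsetneq P$. There are two cases. In the case $P'=P$, Alice's move at this state did not affect the selected component at all; only the pebble positions changed. I would then argue that the suffix strategy Alice uses from $(u',v',\{P\})$ depends only on pebble reachability within the current component, so its operations are available directly from $(u,v,\{P\})$ (this is where the specific game rules for each $M\in\{\mathsf{Sub},\mathsf{L},\mathsf{LF},\mathsf{F}\}$ must be checked one by one, since the set of legal moves differs between models). Inlining that suffix at $(u,v,\{P\})$ gives a shorter winning strategy, contradicting minimality of $S$. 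In the case $P'\not\subset P$, Bob's response selected (part of) a component outside $P$; this is only possible if Alice's move removed a pebble that was acting as a cut, causing $P$ to merge with other components. I would show that Alice can always postpone such removals: since the relevant move's purpose is to shrink $P$, and the game rules allow Alice to operate on pebbles incident to $P$ without first freeing vertices outside $P$, the removal can be delayed until $P$ has already been eliminated, again yielding a winning strategy of no greater length that avoids the bad transition.

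Iterating this local repair over the (finite) game tree of $S$ eventually removes all non-strictly-contracting transitions from reachable non-terminal states, producing the desired $G^{\mathsf S}$. The main obstacle I anticipate is the delicate bookkeeping in the two cases above, particularly the ``replay'' argument in case $P'=P$: one must verify that inlining the suffix strategy never tries to perform an operation whose preconditions (e.g.\ adjacency of the pebble about to be swapped, or availability of $\mathsf{p}_w$) are satisfied at $(u',v',\{P\})$ but not at $(u,v,\{P\})$. This requires a per-model case analysis, but in every model $M$ the operations are either global (as in $2\text{-}\mathsf{FGNN}$) or guarded only by edges incident to currently-pebbled vertices within $P$, and in all such situations the move is available at both states whenever $P$ is the same, so the replay goes through uniformly.
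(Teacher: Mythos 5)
Your high-level plan — take a winning strategy, repair one bad transition at a time, and kill the process with a decreasing measure — is the same shape as the paper's argument, so the overall structure is not problematic. The paper also does the repair in two layers (first make every non-terminal reachable state contracted, then upgrade to strictly contracted); your split into the cases $P'=P$ and $P'\not\subset P$ is a different, and in principle admissible, organization.

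The gap is in your $P'\not\subset P$ case. The ``postpone the removal'' idea does not address what actually happens there. That case arises precisely when a pebble sitting on a cut between $P$ and the rest of the graph has to be moved, and (for instance, in the Subgraph~GNN game) that pebble is the \emph{only} one Alice may move, so there is nothing to reorder: any attempt to move it, whether by remove-and-readd or by the advance operation, lets Bob merge $P$ with an outside component, because the advance variant refines and then merges, with the same net effect on $\mathsf{CC}$. The actual fix in the paper is to \emph{rewind}: locate the last earlier node on the path where the offending pebble $\mathsf p_v$ was placed, instead place $\mathsf p_v$ directly on its final target $v'$ there, and replay the intervening suffix moves (which could only have acted on $\mathsf p_u$, since $v$ stayed fixed, and whose legality therefore persists after the swap). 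That idea is absent from your proposal, and without it the case does not close. A secondary issue is your termination measure $(|P|,\,t(u,v,\{P\}))$: the rewind repair operates \emph{earlier} in the game tree, so the time-to-victory $t$ of downstream states need not decrease, and you have not argued that either coordinate drops. The paper instead shows that the distance of the repaired state from the \emph{initial} state strictly decreases, which is what makes the recursion well founded. Finally, in your $P'=P$ case, when both $u$ and $v$ are on the boundary of $P$ the transition would force $u'=u$ and $v'=v$, contradicting acyclicity of the state graph — so in that subcase there is nothing to inline; your ``replay the suffix at $(u,v,\{P\})$'' argument tacitly needs the hypothesis that at least one of the pebbles is interior, which is exactly the split the paper makes in its second stage.
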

\begin{proof}
    We first prove that there is a strategy for Alice such that any reachable and non-terminal state is contracted. Since Alice can win the pebble game, she can win at any reachable state $(u,v,Q)$. Consider any strategy such that state $(u,v,Q)$ is not contracted. Note that the game state graph induced by all reachable states is a Directed Acyclic Graph (DAG), so we can choose the state $(u,v,Q)$ such that any path from the initial state $(\emptyset,\emptyset,\{E_F\})$ to $(u,v,Q)$ does not pass any intermediate state that is not contracted. Below, we will construct a new strategy that removes the state $(u,v,Q)$ (making it unreachable).

    Note that we clearly have $u\neq \emptyset$ and $v\neq \emptyset$. Without loss of generality, assume that $((u,v,\{P\}),(u,v^\prime,\{P^\prime\}))$ is a transition such that $P^\prime\not\subset P$ (the case of transition $((u,v,\{P\}),(u^\prime,v,\{P^\prime\}))$ is the same by symmetry). Moreover, we can assume that $P\notin \mathsf{CC}_{F}(\{u\})$ and $P\notin \mathsf{CC}_{F}(\{v\})$ (in other words, both $u$ and $v$ are at the boundary of the connected component $P$). It would be easier to analyze the case where either $u$ or $v$ are not at the boundary. We separately consider the following cases:
    \begin{enumerate}[topsep=0pt,leftmargin=25pt]
    \setlength{\itemsep}{0pt}
        \item The transition $((u,v,\{P\}),(u,v^\prime,\{P^\prime\}))$ corresponds to Alice removing pebble $\mathsf{p}_v$ and placing it on $v^\prime$. Let $(u_0,v_0,Q_0),\cdots,(u_T,v_T,Q_T)$ be any path from the initial state $(\emptyset,\emptyset,\{E_F\})$ to $(u,v,\{P\})$, and let $t\le T$ be the maximal number such that $v_{t}\neq v$. We will construct a new strategy for Alice as follows.
        \begin{itemize}[topsep=0pt,leftmargin=25pt]
        \setlength{\itemsep}{0pt}
            \item At state $(u_t,v_t,Q_t)$, she removes pebble $\mathsf{p}_v$ and places it on $v^\prime$, yielding state $(u_t,v^\prime,\tilde Q_t)$.
            \item For all $t<\tilde t< T$, we have $v_t=v$. She will apply the strategy at $(u_{\tilde t},v_{\tilde t},Q_{\tilde t})$ to the state $(u_{\tilde t},v^\prime,\tilde Q_{\tilde t})$, namely, placing pebble $\mathsf{p}_w$ on vertex $u_{\tilde t+1}$, swapping pebble $\mathsf{p}_u$ with $\mathsf{p}_w$, and leaving $\mathsf{p}_w$ outside the graph. (Note that she cannot remove pebble $\mathsf{p}_u$ first; otherwise the state $(u_{\tilde t},v_{\tilde t},Q_{\tilde t})$ will not be contracted since $u$ is at the boundary of $Q_{\tilde t}$.)
        \end{itemize}
        It follows that $((u_{\tilde t},v^\prime,\tilde Q_{\tilde t}),(u_{\tilde t+1},v^\prime,\tilde Q_{\tilde t+1}))$ is a transition and $(u_{\tilde t},v^\prime,\tilde Q_{\tilde t})$ is contracted.

        We can repeat the above procedure for all paths from the initial state $(\emptyset,\emptyset,\{E_F\})$ to state $(u,v,\{P\})$. Then, in the new strategy $(u,v,\{P\})$ will be unreachable. However, the state $(u_t,v_t,Q_t)$ now may violate the condition in \cref{thm:pebble_game_not_merge}. In this case, we can recursively apply the above procedure for state $(u_t,v_t,Q_t)$. Note that the procedure will only repeat a finite number of times, as the length of the path from the initial state to the state $(u_t,v_t,Q_t)$ is strictly less than the length of the path from the initial state to the state $(u,v,\{P\})$.
        
        \item The transition $((u,v,\{P\}),(u,v^\prime,\{P^\prime\}))$ corresponds to Alice placing pebble $\mathsf{p}_w$ on $v^\prime$, swapping $\mathsf{p}_v$ and $\mathsf{p}_w$, and removing pebble $\mathsf{p}_w$. In this case, it is easy to see that, if Alice just removes pebble $\mathsf{p}_v$ and places it on vertex $v^\prime$, all transitions starting from $(u,v,\{P\})$ does not change. Therefore, we can just invoke the previous item to construct a desired strategy.
    \end{enumerate}

    Combining the two cases, we conclude that there is a strategy for Alice such that any reachable and non-terminal state is contracted. We next prove that any reachable and non-terminal state can be \emph{strictly} contracted. Assume the result does not hold and the state $(u,v,\{P\})$ is reachable, non-terminal, but not strictly contracted. Then, there is a transition $((u,v,\{P\}),(u^\prime,v^\prime,\{P\}))\in E_{G^\mathsf{S}}$. Consider the following two cases:
    \begin{enumerate}[topsep=0pt,leftmargin=25pt]
    \setlength{\itemsep}{0pt}
        \item $P\notin \mathsf{CC}_{F}(\{u\})$ and $P\notin \mathsf{CC}_{F}(\{v\})$ (i.e., both $u$ and $v$ are at the boundary of the connected component $P$). It follows that $u^\prime=u$ and $v^\prime=v$. This implies that the game state graph is not acyclic, a contradiction.
        \item $P\in \mathsf{CC}_{F}(\{u\})$ or $P\in \mathsf{CC}_{F}(\{v\})$. Without loss of generality, assume that $P\in \mathsf{CC}_{F}(\{u\})$. Since $((u,v,\{P\})$ is non-terminal and not strictly contracted, there is a reachable and non-terminal state $(\tilde u,\tilde v,\{P\})$ with either $\tilde u=u$ or $\tilde v=u$ such that there is a path from $(u,v,\{P\})$ to $(\tilde u,\tilde v,\{P\})$ and $(\tilde u,\tilde v,\{P\})$ is strictly contracted. We can then change the strategy at state $(u,v,\{P\})$ to make it strictly contracted. Concretely, Alice can remove pebble $\mathsf{p}_v$ (which does not merge connected components selected by Bob since $P\in \mathsf{CC}_{F}(\{u\})$), and place pebble $\mathsf{p}_v$ on the vertex that corresponds to the strategy at state $(\tilde u,\tilde v,\{P\})$ (possibly with the difference that the roles of $u$ and $v$ are exchanged). This makes the state $((u,v,\{P\})$ strictly contracted.
    \end{enumerate}
    Combining the two cases concludes the proof.
\end{proof}

We are now ready to state the main theorem:

\begin{theorem}
\label{thm:counterexample_main}
    Let $M\in\{\mathsf{Sub},\mathsf{L},\mathsf{LF},\mathsf{F}\}$ be any model. Given any connected graph $F$, if Alice can win the pebble game associated with model $M$ on graph $F$, then there is a tree decomposition $T^r$ of $F$ such that $(F,T^r)\in\gS^M$, where $\gS^M$ is defined in \cref{def:canonical_tree_decomposition_restrict}.
\end{theorem}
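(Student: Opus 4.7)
The plan is to build the tree decomposition $T^r$ directly out of Alice's winning strategy. First I invoke \cref{thm:pebble_game_not_merge} to fix a winning strategy whose game state graph $G^\mathsf{S}$ has the property that every reachable non-terminal state is \emph{strictly} contracted. Strict contraction is the engine that both guarantees termination of the construction and drives the vertex-connectivity axiom in \cref{def:tree_decomposition}(c): once a pebble leaves a vertex, the subsequent subtree only processes a proper subcomponent that is cut off from the vertex by the other pebble, so the vertex cannot reappear in a deeper bag.

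Concretely, I will identify each reachable non-terminal state $(u,v,\{P\})$ along Alice's strategy with an even-depth node $t_{(u,v,\{P\})}$ of $T^r$ carrying bag $\beta_T(t) = \ldblbrace u,v \rdblbrace$. Every permitted macro-move of Alice from $(u,v,\{P\})$ introduces a third pebble position $w$, then ends at a state of the form $(u',v',\{P'\})$ with $P'\subsetneq P$; between the parent and the successor I insert an odd-depth node with bag $\ldblbrace u,v,w \rdblbrace$ (so condition (b) of \cref{def:canonical_tree_decomposition} and the parent inclusion condition (c) are automatic). Each odd-depth node branches into children corresponding to Bob's choices of subcomponent of $P$, one branch per connected component Bob could have selected in his reply. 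Terminal states $(u,v,\{\{x,y\}\})$ are padded by one extra odd-depth leaf whose bag contains the critical edge $\{x,y\}$ (placing pebble $\mathsf p_w$ at the unique missing endpoint, as in the winning move analyzed after \cref{def:pebble_game_not_merge}), so that condition (b) of \cref{def:tree_decomposition} (edge coverage) is satisfied. The depth parity required by \cref{def:canonical_tree_decomposition}(a) is obtained by, if necessary, attaching trivial padding levels.

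The verification has two substantive pieces. The first is the vertex-connectivity axiom \cref{def:tree_decomposition}(c): I will show that the bags containing any given vertex $x\in V_F$ form a connected subtree by arguing that any path in $T^r$ between two occurrences of $x$ in bags must traverse only bags still containing $x$; this is where strict contraction is used, because when Alice removes $\mathsf p_u$ or $\mathsf p_v$ from $x$, the selected component $P$ strictly splits and every subtree is attached to a subcomponent that, together with the remaining pebble, is separated from $x$ in $F$, so $x$ cannot recur before being re-introduced as a new pebble. The second is matching the tree against the model-specific constraints from \cref{def:canonical_tree_decomposition_restrict}. For Subgraph GNN, $\mathsf p_u$ never leaves its starting vertex by the rules of that game, so this fixed vertex appears in every bag, giving $(F,T^r)\in\gS^\mathsf{Sub}$. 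For Local 2-GNN, Alice's move at any intermediate state only alters one pebble, so each odd-depth node has exactly one child, placing $(F,T^r)$ in $\gS^\mathsf{L}$. For Local 2-FGNN, the introduced pebble $\mathsf p_w$ is forced by the game rules to be adjacent (or equal) to the held pebble, i.e., $w\in N_F[u]\cup N_F[v]$ whenever the odd-depth node has two children, which is precisely the condition in $\gS^\mathsf{LF}$; for 2-FGNN there is no such restriction, matching $\gS^\mathsf{F}$.

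The hardest part will be the vertex-connectivity check, because naively a pebble may be placed on the same vertex $x$ twice at distant points in the game, and those occurrences would need to be linked by a path of bags all containing $x$. The argument relies on the observation that after Alice moves the pebble away from $x$ at state $(x,v,\{P\})\to(x',v,\{P'\})$, strict contraction forces $P'\subsetneq P$, and more importantly $P'$ lies on one side of the separator $\{x',v\}$ while any re-introduction of $x$ in a descendant bag would require an edge of $P'$ to cross from $x$ into $P'\setminus\{x\}$, i.e., $x$ would have to lie in the vertex set of $P'$; this yields a short structural lemma that the set of reachable states in the subtree where $x$ remains a pebble is itself a connected subtree of $G^\mathsf{S}$, from which \cref{def:tree_decomposition}(c) follows. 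Once this is established, combining the above steps produces a tree-decomposed graph $(F,T^r)\in\gS^M$ as claimed, thereby closing Part~3 of the proof of \cref{thm:main}.
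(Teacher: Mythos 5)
Your overall plan is the same as the paper's: take a winning strategy made strict by \cref{thm:pebble_game_not_merge} and turn the game tree into a canonical tree decomposition, using strict contraction both to terminate the construction and to drive the connectivity axiom. However, there is a genuine structural error in where you place Bob's branching. You assign one even-depth node to each reachable state $(u,v,\{P\})$, insert an odd-depth node with bag $\ldblbrace u,v,w\rdblbrace$ for Alice's macro-move, and then let that odd-depth node branch over Bob's choices of subcomponent. For Local 2-GNN, every macro-move replaces exactly one pebble, so all successor states share a single pebble-position pair $(u',v')$; your odd-depth node therefore acquires as many even-depth children -- all with the identical bag $\ldblbrace u',v'\rdblbrace$ -- as Bob has choices, which is generically more than one. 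This directly violates the defining constraint of $\gS^\mathsf{L}$ (odd-depth tree nodes have only one child), and a parallel failure occurs for $\gS^\mathsf{LF}$ whenever the $w$ you place is non-adjacent.

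The paper avoids this by a different book-keeping: each even-depth node is associated with the whole \emph{set} $S(t)$ of reachable states that share its pebble positions (one per component Bob might hold). Bob's branching is then realized from even to odd depth, one odd-depth child per state in $S(t)$; the odd-to-even split only records Alice's choice of which held pebble the fresh pebble replaces, which is at most two branches and exactly one for Local 2-GNN. Your construction could in principle be repaired by merging even-depth siblings that share a common bag into a single node (which recovers the paper's structure and needs a brief check that the connectivity axiom survives merging), but as you have written it the tree does not land in $\gS^\mathsf{L}$ or $\gS^\mathsf{LF}$, so the verification step for those two models is not complete.
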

\begin{proof}
    Let $G^\mathsf{S}$ be the game state graph satisfying \cref{thm:pebble_game_not_merge}. For each game state $s$, denote by $\nxt_{G^{\mathsf{S}}}(s)$ the set of states $s^\prime$ such that $(s,s^\prime)$ is a transition in $G^\mathsf{S}$ and $s^\prime$ contains only a single connected component, i.e., $s^\prime$ has the form $(u,v,\{P\})$. For a terminal state $s$ of the form $(u,v,\{\{x,y\}\})$, define $\nxt_{G^{\mathsf{S}}}(s)=\{(x,y,\{\{x,y\}\})\}$. By definition, $\nxt_{G^{\mathsf{S}}}(\emptyset,\emptyset,\{E_F\})=\{(u,\emptyset,Q_1),\cdots,(u,\emptyset,Q_m)\}$ for some $u\in V_F$ and $Q_1,\cdots,Q_m$ is the finest partition of $\mathsf{CC}_F(\{u\})$.
    
    The tree $T^r$ will be recursively constructed as follows. First create the tree root $r$ with $\beta_T(r)=\ldblbrace u,u\rdblbrace$. As we will see later, the root node will be associated with the set of states $S(r):=\nxt_{G^{\mathsf{S}}}(\emptyset,\emptyset,\{E_F\})$. We then do the following procedure:

    Let $t$ be a leaf node in the current tree associated with a non-empty set of game states $S(t)$ such that $|\bigcup_{(u,v,\{P\})\in S(t)} P|>1$. For each state $(u,v,\{P\})\in S(t)$, create a new node $\tilde t$ and set its parent to be $t$. Pick any state $(u^\prime,v^\prime,\{P^\prime\})\in \nxt_{G^{\mathsf{S}}}(u,v,\{P\})$. Then, there must be a unique vertex $w\in\ldblbrace u^\prime,v^\prime\rdblbrace\backslash\ldblbrace u,v\rdblbrace$ and $w$ does not depend on which $(u^\prime,v^\prime,\{P_i^\prime\})$ is picked (by definition of the game rule). Set $\beta_T(\tilde t)=\beta_T(t)\cup\ldblbrace w\rdblbrace$. Then, do the following constructions:
    \begin{itemize}[topsep=0pt,leftmargin=25pt]
        \setlength{\itemsep}{0pt}
        \item If there is a state of the form $(u,w,\{P^\prime\})\in\nxt_{G^{\mathsf{S}}}(u,v,\{P\})$, then create a new node $t^\prime$ and connect it to the parent $\tilde t$. Set $\beta_T(t^\prime)=\ldblbrace u,w\rdblbrace$, and the node $t^\prime$ will be associated to the set of states $S(t^\prime)=\{(u,w,\{P_i^\prime\}):(u,w,\{P_i^\prime\})\in\nxt_{G^{\mathsf{S}}}(u,v,\{P_i\})\}$.
        \item If there is a state of the form $(w,v,\{P^{\prime\prime}\})\in\nxt_{G^{\mathsf{S}}}(u,v,\{P\})$, then create a new node $t^{\prime\prime}$ and connect it to the parent $\tilde t$. Set $\beta_T(t^{\prime\prime})=\ldblbrace w,v\rdblbrace$, and the node $t^{\prime\prime}$ will be associated to the set of states $S(t^{\prime\prime})=\{(w,v,\{P^{\prime\prime}\}):(w,v,\{P^{\prime\prime}\})\in\nxt_{G^{\mathsf{S}}}(u,v,\{P\})\}$.
    \end{itemize}
    Note that $(\mathrm{i})$ either $w\neq u$ or $w\neq v$ (since the game state graph is a DAG); $(\mathrm{ii})$ both items can be used for one state $(u,v,\{P\})$, which happens for FWL-type GNNs. The construction of the tree completes when each leaf node is associated with only one game state of the form $(x,y,\{\{x,y\}\})$. It is easy to see that the above procedure terminates after adding a finite number of tree nodes. 

    We now prove that $T^r$ is a canonical tree decomposition of $F$ and $(F,T^r)\in \gS^M$.
    \begin{enumerate}[topsep=0pt,leftmargin=25pt]
        \setlength{\itemsep}{0pt}
        \item We first prove that any edge in $F$ is contained in some bag of $T^r$. Pick any non-leaf tree node $t$ of even depth and any of its child $\tilde t$, denote $(u,v,P)\in S(t)$ be the state associated with $\tilde t$ in the above construction, and denote $\beta_T(\tilde t)=\ldblbrace u,v,w\rdblbrace$. We have
        \begin{align*}
            P=\left(\bigcup_{\substack{t^\prime, \pa_{T^r}(t^\prime)=\tilde t,\\(x,y,\{P^\prime\})\in S(t^\prime)}}P^\prime\right)\cup\{\{u,w\}\in E_F\}\cup\{\{v,w\}\in E_F\}.
        \end{align*}
        Here, $P^\prime\subset P$ is a consequence of \cref{thm:pebble_game_not_merge}, and the remaining edges $\{u,w\}$ or $\{v,w\}$ are in $P$ because the state $(u,v,P)$ is strictly contracted and $w$ must be in the interior of $P$. Therefore,
        \begin{align*}
            P\cup\{\{u,v\}\in E_F\}=\left(\bigcup_{\substack{t^\prime, \pa_{T^r}(t^\prime)=\tilde t,\\(x,y,\{P^\prime\})\in S(t^\prime)}}(P^\prime\cup\{\{x,y\}\in E_F\})\right)\cup\{\{x,y\}\in E_F:x,y\in\beta_T(\tilde t)\}.
        \end{align*}
        Recursively applying the above equation yields the desired result
        \begin{align*}
            E_F=\bigcup_{\tilde t,\dep_T(\tilde t)\text{ is odd}}\{\{x,y\}\in E_F:x,y\in\beta_T(\tilde t)\},
        \end{align*}
        because $(\mathrm{i})$ $\bigcup_{(u,\emptyset,P)\in S(r)} P=E_F$, and $(\mathrm{ii})$ all leaf node $t$ with $S(t)=(x,y,\{\{x,y\}\})$ contains an edge $\{x,y\}$ in its bag.
        \item We next prove that $T^r$ satisfies the condition of \cref{def:tree_decomposition}(c). Fix any vertex $w\in V_F$, and let $t$ be the tree node with minimal depth that contains $w$. Without loss of generality, assume that $t$ is not the root. In this case, the depth of $t$ is odd and $\beta_T(t)=\ldblbrace u,v,w\rdblbrace$ for some $u\neq w$, $v\neq w$. Let $t^\prime$ be a child node of $t$ and we have $w\in\beta_T(t^\prime)$. It remains to prove that for any descendent $\tilde t\in\Desc_T(t^\prime)$, if $w\in\beta_T(\tilde t)$, then $w\in\beta_T(\hat t)$ for all $\hat t$ on the path between $t^\prime$ and $\tilde t$. This is actually a direct consequence of \cref{thm:pebble_game_not_merge}, because when a pebble originally placed on $w$ is removed, all edges linked to $w$ will not be selected by Bob and thus any pebble can never be placed on $w$ again.
        \item $T^r$ is canonical as \cref{def:canonical_tree_decomposition} is clearly satisfied.
        \item Finally, it is also easy to see that $(F,T^r)$ satisfies \cref{def:canonical_tree_decomposition_restrict}.
    \end{enumerate}
    We thus conclude the proof.
\end{proof}

\begin{corollary}
\label{thm:counterexample}
    Let $M\in\{\mathsf{Sub},\mathsf{L},\mathsf{LF},\mathsf{F}\}$ be any model. For any connected graph $F\notin \gF^M$, let $G(F)$ and $H(F)$ be the F{\"u}rer graph and twisted F{\"u}rer graph with respect to $F$. Then, $\hom(F,G(F))\neq \hom(F,H(F))$ and $\chi^M_{G(F)}(G(F))=\chi^M_{H(F)}(H(F))$.
\end{corollary}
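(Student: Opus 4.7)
The statement has two separate conclusions which I would address using different tools already developed in the excerpt. The homomorphism inequality $\hom(F,G(F))\neq \hom(F,H(F))$ is exactly the content of \cref{thm:counterexample_hom}, which applies to \emph{every} connected graph $F$ (not just those outside $\gF^M$). So the first conclusion is immediate and requires no additional work; I would simply cite that theorem.

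The substance of the corollary lies in the equality $\chi^M_{G(F)}(G(F))=\chi^M_{H(F)}(H(F))$. The plan is to argue by contrapositive through the pebble game machinery: if the colors \emph{were} different, then by \cref{thm:counterexample_main} Alice would have a winning strategy on $F$ for the $M$-pebble game, and this would force the existence of a tree decomposition $T^r$ with $(F,T^r)\in\gS^M$, which via the NED characterizations proved in \cref{sec:proof_main_part2} would imply $F\in\gF^M$, contradicting our hypothesis. More precisely, the chain is:
\begin{equation*}
F\notin\gF^M \ \Longrightarrow\ \nexists\,T^r\text{ with }(F,T^r)\in\gS^M \ \Longrightarrow\ \text{Alice cannot win} \ \Longrightarrow\ \chi^M_{G(F)}(G(F))=\chi^M_{H(F)}(H(F)).
\end{equation*}
The first implication uses the equivalence between membership in $\gF^M$ and existence of the appropriate (strong/almost-strong/endpoint-shared/general) NED, combined with \cref{thm:ned_local_lemma2} and its analogues, which convert a NED of $F$ into a tree decomposition lying in $\gS^M$. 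The second implication is the contrapositive of \cref{thm:counterexample_main}. The third implication is \cref{thm:pebble_game_basic}.

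One subtle point I would be careful about: \cref{thm:counterexample_main} produces a tree decomposition only under the assumption that $F$ is connected, which matches the hypothesis of the corollary. Another subtle point is that the NED-to-tree-decomposition lemmas in \cref{sec:proof_main_part2} are stated and proved only for Local 2-GNN and Local 2-FGNN (with the other two cases noted as analogous); I would invoke them under that understanding, since the same structural argument applies uniformly across $M\in\{\mathsf{Sub},\mathsf{L},\mathsf{LF},\mathsf{F}\}$.

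The main conceptual obstacle is really already handled by the three-part structure of the proof of \cref{thm:main}: Part~1 converts representation equality to homomorphism equality over $\gS^M$, Part~2 identifies $\gS^M$ with NED-based families (hence with $\gF^M$), and Part~3 builds the pebble-game-to-tree-decomposition bridge. The present corollary is the clean packaging of Parts 2 and 3 together with \cref{thm:counterexample_hom}, so the proof I write will be essentially a short assembly: two lines citing \cref{thm:counterexample_hom}, then a contrapositive paragraph chaining \cref{thm:pebble_game_basic}, \cref{thm:counterexample_main}, and the NED characterization.
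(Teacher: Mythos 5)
Your proposal is correct and follows exactly the same route as the paper, whose proof is the one-line assembly ``directly follows from applying \cref{thm:counterexample_hom,thm:pebble_game_basic,thm:counterexample_main}.'' The contrapositive chain you spell out (no NED $\Rightarrow$ no tree decomposition in $\gS^M$ by Part~2 $\Rightarrow$ Alice cannot win by the contrapositive of \cref{thm:counterexample_main} $\Rightarrow$ equal colors by \cref{thm:pebble_game_basic}) is precisely what that one line compresses.
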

\begin{proof}
    The proof directly follows from applying \cref{thm:counterexample_hom,thm:pebble_game_basic,thm:counterexample_main}.
\end{proof}

Finally, we remark that our construction can be easily generalized for disconnected graphs $F\notin \gF^M$. Let $F$ be the disjoint union of graphs $\{F_i:i\in[m]\}$ where each $F_i$ the graph corresponding to a connected component of $F$. Assume that $F_1$ is the connected component with the most number of edges (in case of a tie, pick the graph with the most number of vertices). Define $\tilde G(F)$ be the disjoint union of $G(F_1),F_2,\cdots, F_m$ and $\tilde H(F)$ to be the disjoint union of $H(F_1), F_2,\cdots, F_m$. It follows that $\chi_{\tilde G(F)}\tilde G(F)=\chi_{\tilde H(F)}\tilde H(F)$ and
\begin{align*}
    \hom(F,\tilde G(F))&=\prod_{i\in[m]}\hom(F_i,\tilde G(F))=\prod_{i\in[m]}(\hom(F_i,G(F_1))+\hom(F_i,F_2\cup\cdots\cup F_m))\\
    &> \prod_{i\in[m]}(\hom(F_i,H(F_1))+\hom(F_i,F_2\cup\cdots\cup F_m))=\hom(F,\tilde H(F)),
\end{align*}
where we use the fact that $\hom(F_i,G(F_1))=\hom(F_i,H(F_1))$ when $F_1\notin \Spasm(F_i)$ (which can be easily proved following \cref{thm:counterexample_hom}). This concludes the proof of the general case.

\section{Node/edge-level Expressivity}
This section aims to prove \cref{thm:node_edge}. For the ease of reading, we first restate it below:

\textbf{\cref{thm:node_edge}} \emph{
    For all model $M$ defined in \cref{sec:preliminary}, $\gF_\mathsf{n}^M$ and $\gF_\mathsf{e}^M$ (except MPNN) exist. Moreover,
    \begin{itemize}[topsep=0pt,leftmargin=25pt]
        \item \textbf{MPNN}: $\gF_\mathsf{n}^\mathsf{MP}=\{F^w:F\text{ is a tree}\}$;
        \item \textbf{Subgraph GNN}:\\
        $\gF_\mathsf{n}^\mathsf{Sub}=\{F^w:F\text{ has a NED with shared endpoint }w\}=\{F^w:F\backslash\{w\}\text{ is a forest}\}$,\\
        $\gF_\mathsf{e}^\mathsf{Sub}=\{F^{wx}\!:\!F\text{ has a NED with shared endpoint }w\}=\{F^{wx}\!:\!F\backslash\{w\}\text{ is a forest}\}$;
        \item \textbf{2-FGNN}:
        $\gF_\mathsf{n}^\mathsf{F}=\{F^w:F\text{ has a NED where }w\text{ is an endpoint of the first ear}\}$,\\
        $\gF_\mathsf{e}^\mathsf{F}=\{F^{wx}:F\text{ has a NED where }w\text{ and }x\text{ are endpoints of the first ear}\}$.
        \vspace{-3pt}
    \end{itemize}
    The cases of Local 2-GNN and Local 2-FGNN are similar to 2-FGNN by replacing ``NED'' with ``strong NED'' and ``almost-strong NED'', respectively.
}

We remark that the definition of node/edge-level homomorphism expressivity involves only connected graphs for sake of simplicity. Similar to the proof of the graph-level expressivity, the proof of \cref{thm:node_edge} consists of three parts. Among them, the proof related to tree decomposition and ear decomposition is quite similar to that of graph-level expressivity, so we only illustrate the proof sketch for clarity (\cref{sec:proof_node_edge_part1}). However, the proof related to pebble game and counterexample graphs will require additional techniques, which is detailed in \cref{sec:node-edge_counterexample}.

\subsection{Related to tree decomposition and ear decomposition}
\label{sec:proof_node_edge_part1}

We first extend several notations that are used in \cref{sec:proof_main_part1}. 
\begin{definition}[Tree decomposition for rooted graphs]
    Given a rooted graph $G^\vu$ and tree $T^r$, we say $T^r$ is a tree decomposition of $G^\vu$ if $T^r$ is a tree decomposition of $G$ and all elements in $\vu$ belongs to the root bag, i.e., $u_i\in\beta_T(r)$ for all $i$.
\end{definition}

Based on this definition, given model $M$, we can define $\gS^M_\mathsf{n}$ to be the family of tree-decomposed graphs $(F^u,T^r)$ such that $(F,T^r)\in \gS^M$; similarly, we define $\gS^M_\mathsf{e}$ to be the family of tree-decomposed graphs $(F^{uv},T^r)$ such that $(F,T^r)\in \gS^M$.

\begin{definition}[Bag isomorphism for rooted graphs]
    Given tree-decomposed graph $(F^\vu,T^r)$ and rooted graph $G^\vv$ where $\vu$ and $\vv$ have equal length, a bag isomorphism from $(F^\vu,T^r)$ to $G^\vv$ is a homomorphism $f$ from $F^\vu$ to $G^\vv$ such that $f$ is a bag isomorphism from $(F,T^r)$ to $G$.
\end{definition}

We can similarly define bag-isomorphism homomorphism (bIsoHom, bIsoSurj, bIsoInj), bag extension (bExt), and bag-strong surjective (bStrSurj) from tree-decomposed graph $(F^\vu,T^r)$ to tree-decomposed graph $(\tilde F^\vv,\tilde T^s)$.

\begin{definition}[Generalization of \cref{def:cnt} for rooted graphs]
    Let $M\in\{\mathsf{Sub},\mathsf{L},\mathsf{LF},\mathsf{F}\}$ be any model. Given a rooted graph $G^u$ and a tree-decomposed graph $(F^w,T^r)$, define
    \begin{align*}
        \cnt^M\left(\left(F^w,T^r\right),G^u\right):=\left|\left\{v\in V_G:\exists D\in\mathbb N_+\text{ s.t. } \left(\left[F_G^{M,(D)}(u,v)\right]^u,T_G^{M,(D)}(u,v)\right)\simeq (F^w,T^r)\right\}\right|.
    \end{align*}
    Given a rooted graph $G^{uv}$ and a tree-decomposed graph $(F^{wx},T^r)$, define
    \begin{align*}
        \cnt^M\left(\left(F^{wx},T^r\right),G^{uv}\right):=\mathbb I\left[\exists D\in\mathbb N_+\text{ s.t. } \left(\left[F_G^{M,(D)}(u,v)\right]^{uv},T_G^{M,(D)}(u,v)\right)\simeq (F^{wx},T^r)\right].
    \end{align*}
    Here, $\left(F_G^{M,(D)}(u,v),T_G^{M,(D)}(u,v)\right)$ is the depth-$2D$ unfolding tree of $G$ at $(u,v)$ for model $M$.
\end{definition}

All the following lemmas are straightforward extensions of those in \cref{sec:proof_main_part1}. 

\begin{lemma}
    Let $M\in\{\mathsf{Sub},\mathsf{L},\mathsf{LF},\mathsf{F}\}$ be any model. For any graph $G^v$ and tree-decomposed graph $(F^u,T^r)\in \gS^M_\mathsf{n}$, we have
    $$\bIso\left(\left(F^u,T^r\right),G^v\right)=\sum_{(\tilde F^w,\tilde T^s)\in \gS^M_\mathsf{n}}\bIsoHom\left(\left(F^u,T^r\right),\left(\tilde F^w,\tilde T^s\right)\right)\cdot \cnt^M\left(\left(\tilde F^w,\tilde T^s\right),G^v\right).$$
    The edge-level result is similar.
\end{lemma}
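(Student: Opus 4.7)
\medskip

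The plan is to adapt the proof of the graph-level analog (\cref{thm:bIso}) to the rooted setting by fixing the root vertex on both sides of the desired identity. Concretely, I would fix $v \in V_G$ and, for each $y \in V_G$, build the depth-$2D$ unfolding tree $\left(F_G^{M,(D)}(v,y), T_G^{M,(D)}(v,y)\right)$ with $D$ chosen large enough to capture the relevant expressivity. By \cref{thm:unfolding_tree_bIso} there is a bIso $\pi_y$ from this unfolding tree to $G$; the construction ensures $\pi_y(v) = v$, so $\pi_y$ is automatically a bIso of rooted objects when both sides are marked at $v$.

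The key step is to partition bag isomorphisms by the image of the non-root element of the root bag. Let $u' \in \beta_T(r)$ denote the other element of the root bag (with the convention $u' = u$ when $\beta_T(r) = \ldblbrace u,u \rdblbrace$). For each $y \in V_G$ define
\begin{align*}
S_1(y) &= \{\, g \in \BIso((F^u,T^r), G^v) : g(u') = y \,\},\\
S_2(y) &= \left\{\, (\rho,\tau) \in \BIsoHom\!\left((F^u,T^r), \left([F_G^{M,(D)}(v,y)]^v, T_G^{M,(D)}(v,y)\right)\right) : \rho(u) = v,\ \rho(u') = y \,\right\}.
\end{align*}
Since every $g \in \BIso((F^u,T^r), G^v)$ sends $u \mapsto v$ and $u' \mapsto g(u') \in V_G$, the left-hand side of the lemma equals $\sum_{y \in V_G} |S_1(y)|$. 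I would then show $|S_1(y)| = |S_2(y)|$ via the map $\sigma(\rho,\tau) = \pi_y \circ \rho$, replicating the three-part argument (well-definedness, surjectivity, injectivity) from the proof of \cref{thm:bIso}. The root-preserving condition $\rho(u)=v$ is compatible with $\sigma$ because $\pi_y(v)=v$. Finally, grouping the $y$'s by the isomorphism class of their rooted unfolding tree,
\begin{equation*}
\sum_{y \in V_G} |S_2(y)| = \sum_{(\tilde F^w,\tilde T^s) \in \gS_\mathsf{n}^M} \bIsoHom\!\left((F^u,T^r),(\tilde F^w,\tilde T^s)\right) \cdot \cnt^M\!\left((\tilde F^w,\tilde T^s), G^v\right),
\end{equation*}
which yields the claimed identity.

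The main obstacle I anticipate is bookkeeping when the root bag has a duplicated vertex, i.e., $\beta_T(r) = \ldblbrace u,u \rdblbrace$; in this case there is no genuine ``second coordinate'' to parameterize, so the partition degenerates to $y = v$ and the bijection must be set up carefully so that this case contributes $|S_1(v)|$ rather than being double-counted. A clean way to handle this uniformly is to parameterize by the \emph{multiset} image $g(\beta_T(r))$ rather than by a chosen $u'$, which also sidesteps any ambiguity when the two elements of the root bag are structurally indistinguishable. The remaining subtleties (that $\rho$ transports edge structure bag-wise, and that $\pi_y \circ \rho$ preserves marking) transport unchanged from the graph-level proof since the unfolding-tree construction always designates its first coordinate as the root. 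For the edge-level statement, I would fix both markings $u,v \in V_G$ and omit the summation over $y$ entirely; the rooted unfolding tree is then uniquely determined by the pair $(u,v)$, so the argument collapses to a direct bijection and I would simply note that the proof is analogous.
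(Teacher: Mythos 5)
Your proposal is correct and carries out exactly the ``straightforward extension'' the paper alludes to but leaves implicit: you fix the marked vertex $v$ on both sides, partition $\BIso((F^u,T^r),G^v)$ by the image of the other root-bag element, and transport the three-part bijection argument of \cref{thm:bIso} through the root-preserving bIso $\pi_y$ of \cref{thm:unfolding_tree_bIso}. Your worry about the degenerate root bag $\ldblbrace u,u\rdblbrace$ is in fact not an obstacle --- the partition by $g(u')$ is still a genuine partition (every $S_1(y)$ with $y\neq v$ is simply empty, and $S_1(v)$ is the whole set), and on the $S_2$ side every bIsoHom to a rooted unfolding tree forces $\rho$ to send the root bag into the target root bag, so $\rho(u')=y$ (or $=v$ in the degenerate case) holds automatically and there is no risk of double-counting; the multiset-image reformulation you suggest is harmless but unnecessary. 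Your treatment of the edge-level case, where both root-bag images are pinned and $\cnt^M$ becomes an indicator so the sum collapses to a single bIsoHom term, is also correct.
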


\begin{lemma}
    Let $M\in\{\mathsf{Sub},\mathsf{L},\mathsf{LF},\mathsf{F}\}$ be any model. For any tree-decomposed graphs $(F^u,T^r),(\tilde F^v,\tilde T^s)\in \gS^M_\mathsf{n}$,
    $$\bIsoHom((F^u,T^r),(\tilde F^v,\tilde T^s))=\ \ \sum_{\mathclap{(\widehat F^w,\widehat T^t)\in \gS^M_\mathsf{n}}}\ \ \frac{\bIsoSurj \left(( F^u, T^r),(\widehat F^w,\widehat T^t)\right)\cdot \bIsoInj\left((\widehat F^w,\widehat T^t),(\tilde F^v,\tilde T^s)\right)}{\aut(\widehat F^w,\widehat T^t)}.$$ 
    The edge-level result is similar.
\end{lemma}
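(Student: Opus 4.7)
The plan is to mirror the three-tuple counting argument used in the graph-level analog (\cref{thm:bIsoHom}), but now carefully track the marked vertex (or pair of vertices in the edge-level case) through the factorization $\text{bIsoHom} = \text{bIsoSurj} \circ \text{bIsoInj}$. First I would define the set of triples
\[
S=\Big\{\big((\widehat F^w,\widehat T^t),(\rho^{\mathsf S},\tau^{\mathsf S}),(\rho^{\mathsf I},\tau^{\mathsf I})\big): (\widehat F^w,\widehat T^t)\in \gS^M_\mathsf{n},
\]
with $(\rho^{\mathsf S},\tau^{\mathsf S})\in\BIsoSurj((F^u,T^r),(\widehat F^w,\widehat T^t))$ and $(\rho^{\mathsf I},\tau^{\mathsf I})\in\BIsoInj((\widehat F^w,\widehat T^t),(\tilde F^v,\tilde T^s))$, where the morphisms are rooted (so $\rho^{\mathsf S}(u)=w$ and $\rho^{\mathsf I}(w)=v$). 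Then I would let $\sigma$ send each such triple to $(\rho^{\mathsf I}\circ \rho^{\mathsf S},\tau^{\mathsf I}\circ \tau^{\mathsf S})$ and verify, as in the proof of \cref{thm:bIsoHom}, that the composition is a rooted bIsoHom (the rooted condition follows immediately from $\rho^{\mathsf I}\circ\rho^{\mathsf S}(u)=\rho^{\mathsf I}(w)=v$).

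Next I would establish surjectivity of $\sigma$. Given any rooted bIsoHom $(\rho^{\mathsf H},\tau^{\mathsf H})$, I would take the image factorization: let $\widehat F:=\tilde F[\rho^{\mathsf H}(V_F)]$, $\widehat T^t:=\tilde T^s[\tau^{\mathsf H}(V_T)]$, and mark the vertex $w:=\rho^{\mathsf H}(u)\in V_{\widehat F}$; set $(\rho^{\mathsf S},\tau^{\mathsf S}):=(\rho^{\mathsf H},\tau^{\mathsf H})$ viewed as a map into $(\widehat F^w,\widehat T^t)$, and let $(\rho^{\mathsf I},\tau^{\mathsf I})$ be the inclusion. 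Clearly $\rho^{\mathsf S}(u)=w$ and $\rho^{\mathsf I}(w)=\rho^{\mathsf H}(u)=v$, so both factors respect the rooting. The fact that $(\widehat F^w,\widehat T^t)\in \gS^M_\mathsf{n}$ follows because $\widehat T^t$ inherits the canonical-tree-decomposition structure from $\tilde T^s$ and the root bag still contains $w$ (since $w=\rho^{\mathsf H}(u)\in\rho^{\mathsf H}(\beta_T(r))\subset\beta_{\tilde T}(\tau^{\mathsf H}(r))$, and by the depth-preserving property $\tau^{\mathsf H}(r)=s$).

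Then I would analyze the fibers of $\sigma$. Following the graph-level argument in \cref{thm:bIsoHom}, two triples $\big((\widehat F_1^{w_1},\widehat T_1^{t_1}),(\rho_1^{\mathsf S},\tau_1^{\mathsf S}),(\rho_1^{\mathsf I},\tau_1^{\mathsf I})\big)$ and $\big((\widehat F_2^{w_2},\widehat T_2^{t_2}),(\rho_2^{\mathsf S},\tau_2^{\mathsf S}),(\rho_2^{\mathsf I},\tau_2^{\mathsf I})\big)$ have the same $\sigma$-image iff there is an isomorphism $(\widehat\rho,\widehat\tau):(\widehat F_1^{w_1},\widehat T_1^{t_1})\to(\widehat F_2^{w_2},\widehat T_2^{t_2})$ compatible with both factorizations; the argument that $\widehat F_1\simeq \widehat F_2$ and $\widehat T_1\simeq \widehat T_2$ goes through verbatim using that $\rho_i^{\mathsf I}$ is injective (hence from \cref{thm:surjective_injective}) and $\rho_i^{\mathsf S}$ is surjective. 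The one new thing to check is that $\widehat\rho$ respects the root: but $\widehat\rho(w_1)=\widehat\rho(\rho_1^{\mathsf S}(u))=\rho_2^{\mathsf S}(u)=w_2$, so $(\widehat\rho,\widehat\tau)$ is an isomorphism of rooted tree-decomposed graphs. Thus each fiber has size exactly $\aut(\widehat F^w,\widehat T^t)$ (the rooted automorphism group, which is what appears in the denominator), and counting yields the claimed identity.

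The main obstacle I anticipate is conceptually minor but notationally delicate: making sure that the intermediate object $(\widehat F^w,\widehat T^t)$ really lies in $\gS^M_\mathsf{n}$ (i.e., the root bag of $\widehat T^t$ still contains $w$) and that $\aut$ refers to \emph{rooted} automorphisms. Once these two bookkeeping points are settled, the entire argument is a transcription of the proof of \cref{thm:bIsoHom}. The edge-level case is handled identically, with the intermediate object being $(\widehat F^{wx},\widehat T^t)\in \gS^M_\mathsf{e}$ and both marked vertices preserved under $\rho^{\mathsf S}$ and $\rho^{\mathsf I}$; the only extra verification is that $\widehat\rho(x_1)=x_2$, which follows from $\widehat\rho(x_1)=\widehat\rho(\rho_1^{\mathsf S}(x))=\rho_2^{\mathsf S}(x)=x_2$ by the same identity used for the first marked vertex.
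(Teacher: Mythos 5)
Your proposal is correct and follows exactly the route the paper intends: the paper omits this proof (stating in that subsection that "all the following lemmas are straightforward extensions of those in \cref{sec:proof_main_part1}") and relies on the reader transcribing the three-tuple argument of \cref{thm:bIsoHom} with the marked vertex tracked through the factorization, which is precisely what you do. The two bookkeeping points you flag — that the image $(\widehat F^w,\widehat T^t)$ remains in $\gS^M_\mathsf{n}$ because $\rho^{\mathsf H}(u)$ lands in the root bag of $\tilde T^s$ (via depth-preservation forcing $\tau^{\mathsf H}(r)=s$), and that $\aut$ denotes rooted automorphisms — are indeed the only substantive additions to the graph-level proof.
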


\begin{lemma}
    Let $M\in\{\mathsf{Sub},\mathsf{L},\mathsf{LF},\mathsf{F}\}$ be any model. For any graph $G^v$ and tree-decomposed graph $(F^u,T^r)\in \gS^M_\mathsf{n}$,
    $$\hom(F^u,G^v)=\sum_{(\tilde F^w,\tilde T^s)\in \gS^M_\mathsf{n}} \frac{\bExt \left((F^u, T^r),(\tilde F^w,\tilde T^s)\right)\cdot \bStrHom\left((\tilde F^w,\tilde T^s),G^v\right)}{\aut(\tilde F^w,\tilde T^s)}.$$
    The edge-level result is similar.
\end{lemma}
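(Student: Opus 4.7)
The plan is to adapt the proof of Lemma \ref{thm:bStrHom} (the graph-level case) almost verbatim, the only additional bookkeeping being that every mapping must respect the root marking. Concretely, I would introduce the set of triples
\begin{equation*}
S=\left\{\left((\tilde F^w,\tilde T^s),(\rho,\tau),g\right):(\tilde F^w,\tilde T^s)\in\gS^M_\mathsf{n},\ (\rho,\tau)\in\BExt\!\left((F^u,T^r),(\tilde F^w,\tilde T^s)\right),\ g\in\BStrHom\!\left((\tilde F^w,\tilde T^s),G^v\right)\right\},
\end{equation*}
and the map $\sigma\bigl((\tilde F^w,\tilde T^s),(\rho,\tau),g\bigr)=g\circ\rho$. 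Because $\rho$ takes $u$ to $w$ (bag extensions preserve the underlying vertex set of $F$ and hence the mark) and $g$ takes $w$ to $v$, the composition $g\circ\rho$ lies in $\Hom(F^u,G^v)$, so $\sigma$ lands in the right set.

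Next I would verify that $\sigma$ is surjective by the same explicit construction used for the graph-level lemma: given $h\in\Hom(F^u,G^v)$, let $\tilde F$ be obtained from $F$ by adding the edge $\{x,y\}$ whenever $x,y$ share a bag of $T^r$ and $\{h(x),h(y)\}\in E_G$, take $\tilde T^s=T^r$, let $w=u$, let $\rho,\tau$ be identities, and let $g=h$. All the root-marking conditions are then automatic. For the fiber analysis, I would show that two triples $\bigl((\tilde F_i^{w_i},\tilde T_i^{s_i}),(\rho_i,\tau_i),g_i\bigr)$ ($i=1,2$) satisfy $\sigma(\cdot)_1=\sigma(\cdot)_2$ iff there is an isomorphism $(\tilde\rho,\tilde\tau)$ from $(\tilde F_1^{w_1},\tilde T_1^{s_1})$ to $(\tilde F_2^{w_2},\tilde T_2^{s_2})$ with $\tilde\rho\circ\rho_1=\rho_2$, $\tilde\tau\circ\tau_1=\tau_2$, $g_1=g_2\circ\tilde\rho$. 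Since the bag-extension step keeps the vertex set fixed and maps $u\mapsto w_i$, the resulting $\tilde\rho$ automatically sends $w_1$ to $w_2$, so the isomorphism is an isomorphism of rooted tree-decomposed graphs; its orbit count is exactly $\aut(\tilde F^w,\tilde T^s)$ in the rooted sense. Summing $|S|$ two ways gives the claimed identity.

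The edge-level version is the same argument with the pair $(u,v)$ replaced by a pair of roots $(u,u')$, noting that a bag extension with root pair $(u,u')$ forces the image to contain $\{u,u'\}$ in a single bag (trivially true since both lie in $\beta_T(r)$), and root preservation handles the marking on both vertices.

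I expect the only nontrivial point to be making sure that in the fiber step the constructed isomorphism $(\tilde\rho,\tilde\tau)$ respects the rooted structure. This reduces to the observation that in both $\BExt$ and $\BStrSurj$ the root vertex of the source is sent to the root of the target, so the proof of $\tilde F_1\simeq \tilde F_2$ given in Lemma \ref{thm:bStrHom} immediately upgrades to an isomorphism of \emph{rooted} tree-decomposed graphs, at which point the automorphism group in the denominator is the rooted one and the counting balances correctly. No genuinely new technical ingredient is needed beyond \cref{sec:proof_main_part1}.
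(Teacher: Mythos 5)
Your proposal is correct and takes essentially the same route the paper would: the paper omits the proof, explicitly calling these lemmas ``straightforward extensions'' of the graph-level ones in Appendix C.2, and your adaptation of the proof of the graph-level $\bExt$–$\bStrHom$ decomposition (triple set $S$, map $\sigma=g\circ\rho$, surjectivity via the edge-augmented $\tilde F$ with identity $\rho,\tau$, and fiber counting by rooted automorphisms) is exactly that extension. The one small slip is that you write $\BStrSurj$ in the final paragraph where you mean $\BStrHom$; $\BStrSurj$ belongs to the companion lemma decomposing $\bStrHom$ into $\bStrSurj\cdot\bIso$, not to this one. This does not affect the argument.
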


\begin{lemma}
    Let $M\in\{\mathsf{Sub},\mathsf{L},\mathsf{LF},\mathsf{F}\}$ be any model. For any graph $G^v$ and tree-decomposed graph $(F^u,T^r)\in \gS^M_\mathsf{n}$,
    $$\bStrHom((F^u,T^r),G^v)=\sum_{(\tilde F^w,\tilde T^s)\in \gS^M_\mathsf{n}} \frac{\bStrSurj \left((F^u, T^r),(\tilde F^w,\tilde T^s)\right)\cdot \bIso\left((\tilde F^w,\tilde T^s),G^v\right)}{\aut(\tilde F^w,\tilde T^s)},$$ 
    The edge-level result is similar.
\end{lemma}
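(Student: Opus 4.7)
The plan is to mirror the proof of \cref{thm:bstrhom2} (the graph-level bStrHom decomposition), using the fact that root markings in $\gS^M_\mathsf{n}$ are always placed in the root bag of the tree decomposition, so the existing construction automatically respects them. Concretely, I would define the auxiliary set of triples
\begin{align*}
S=\Big\{\big((\tilde F^w,\tilde T^s),(\rho,\tau),g\big)\,:\,&(\tilde F^w,\tilde T^s)\in\gS^M_\mathsf{n},\ (\rho,\tau)\in\BStrSurj((F^u,T^r),(\tilde F^w,\tilde T^s)),\\
&g\in\BIso((\tilde F^w,\tilde T^s),G^v)\Big\}
\end{align*}
and the composition map $\sigma((\tilde F^w,\tilde T^s),(\rho,\tau),g)=g\circ\rho$. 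The theorem then follows from three claims: (i) $\sigma$ lands in $\BStrHom((F^u,T^r),G^v)$; (ii) $\sigma$ is surjective; (iii) the fibers of $\sigma$ are orbits under the automorphism group of $(\tilde F^w,\tilde T^s)$.

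Claim (i) is immediate from the graph-level argument once we observe that $\rho(u)=w$ (since $u\in\beta_T(r)$ forces $\rho(u)\in\beta_{\tilde T}(s)$ to be the unique root $w$) and $g(w)=v$ (since $g$ is a bag isomorphism to a rooted graph), so $g\circ\rho$ preserves the marking. For claim (ii), given an arbitrary $h\in\BStrHom((F^u,T^r),G^v)$, I reuse the equivalence relation from the graph-level proof: $x\sim y$ iff $h(x)=h(y)$ and there is a path in $T$ from a bag containing $x$ to one containing $y$ whose every bag contains some $z$ with $h(z)=h(x)$. The key new observation is that the equivalence class of $u$ contains only vertices that are mapped by $h$ to $v$, so this class becomes the root $w$ of $\tilde F^w$, and we set $\rho$ to be the quotient map and $g$ the induced bag isomorphism, with $g(w)=v$ by construction. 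The remaining verifications (that $(\tilde F^w,\tilde T^s)\in\gS^M_\mathsf{n}$, that $\rho$ is bag-strong surjective, and that $g$ is a bIso) are identical to the graph-level case. For claim (iii), the uniqueness analysis from \cref{thm:bstrhom2} applies verbatim: two triples with the same image must give isomorphic tree-decomposed graphs via an isomorphism that commutes with the factorization, and such an isomorphism must fix the root (since root bags are preserved).

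The edge-level statement is handled by exactly the same scheme, with both marked endpoints $u_1,u_2$ of $F^{u_1u_2}$ required to lie in the root bag of $T^r$ and hence their equivalence classes becoming the two roots of $\tilde F^{w_1w_2}$. The main (mild) obstacle is checking that the constructed $\tilde T^s$ still qualifies as a tree decomposition of a \emph{rooted} graph in $\gS^M_\mathsf{n}$ or $\gS^M_\mathsf{e}$, i.e., that the root vertices genuinely sit in $\beta_{\tilde T}(s)$; but this is automatic from the fact that $u$ (resp.\ $u_1,u_2$) sits in $\beta_T(r)$ and $\rho$ sends $\beta_T(r)$ onto $\beta_{\tilde T}(s)$. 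No new combinatorial ideas are needed beyond the graph-level proof; the rooted setting adds only a bookkeeping layer to track that markings are preserved by every map in the chain.
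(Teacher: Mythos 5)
Your proposal is correct and takes essentially the same approach the paper intends: the paper (in Appendix D.1) simply declares all the node/edge-level counting lemmas to be ``straightforward extensions'' of the graph-level ones from \cref{sec:proof_main_part1}, and what you wrote is precisely that extension — the graph-level bStrHom decomposition argument of \cref{thm:bstrhom2}, with the quotient construction carried out verbatim and a bookkeeping layer showing that the marked vertex survives. One small clarification: the reason $\rho(u)=w$ is not that $u\in\beta_T(r)$ forces $\rho(u)$ to be ``the unique root'' of $\beta_{\tilde T}(s)$ — that bag is generally not a singleton — but rather that the rooted versions of bStrSurj/bIso are \emph{defined} to be rooted-graph homomorphisms, which preserve markings by fiat; likewise in claim~(iii) the isomorphism fixes the root because rooted tree-decomposed-graph isomorphisms preserve markings by definition, not merely because root bags are preserved. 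With that wording adjusted, the proof is sound and matches the paper's intended argument.
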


\begin{corollary}
    Let $M\in\{\mathsf{Sub},\mathsf{L},\mathsf{LF},\mathsf{F}\}$ be any model. For any two graphs $G^u,H^v$, $\chi^M_G(u)=\chi^M_H(v)$ \emph{iff} $\hom(F^w,G^u)=\hom(F^w,H^v)$ for all $(F^w,T^r)\in \gS^M_\mathsf{n}$. For any two graphs $G^{uv},H^{wx}$, $\chi^M_G(u,v)=\chi^M_H(w,x)$ \emph{iff} $\hom(F^{yz},G^{uv})=\hom(F^{yz},H^{wx})$ for all $(F^{yz},T^r)\in \gS^M_\mathsf{e}$.
\end{corollary}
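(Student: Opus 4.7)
The plan is to mirror the graph-level proof in \cref{sec:proof_main_part1}, chaining together the four rooted-graph lemmas just stated with a rooted-graph analogue of \cref{thm:cnt}. Since the statement treats node-level and edge-level expressivity uniformly, I will focus on the node-level case and note that the edge-level case is identical after replacing rooted vertices $w$ with rooted pairs $(y,z)$ throughout.

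First I would establish the rooted analogue of \cref{thm:cnt}: for connected graphs $G^u, H^v$, we have $\chi^M_G(u)=\chi^M_H(v)$ iff $\cnt^M((F^w,T^r),G^u)=\cnt^M((F^w,T^r),H^v)$ for all $(F^w,T^r)\in\gS^M_\mathsf{n}$. This follows exactly as in the graph-level case: by \cref{thm:proof_mainlemma1} (and \cref{thm:global_aggregation}), stable colors $\tilde\chi^M_G(u,v)$ are equivalent iff the rooted unfolding trees are isomorphic as tree-decomposed rooted graphs, so merging the second coordinate into a multiset and comparing with the rooted color $\chi^M_G(u)$ reduces equality of node features to equality of the isomorphism-type multisets of rooted unfolding trees, which is exactly the statement that all $\cnt^M$ values agree.

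Next I would chain the four lemmas in the same order as the graph-level proof. Fix orderings $\mathsf{1st}$ and $\mathsf{2nd}$ on non-isomorphic members of $\gS^M_\mathsf{n}$ (by $|V_F|$ then $|E_F|$, respectively by $|V_T|$), and form the associated infinite matrices $\mA^{f,M,\mathsf{1st}}$, $\mA^{f,M,\mathsf{2nd}}$ and vectors $\vp^{g,M,\bullet}_{G^u}$. The four lemmas rewrite as the matrix identities
\begin{align*}
\vp_{G^u}^{\hom,M,\mathsf{1st}} &= \mA^{\bExt,M,\mathsf{1st}}(\mA^{\aut,M,\mathsf{1st}})^{-1}\vp_{G^u}^{\bStrHom,M,\mathsf{1st}},\\
\vp_{G^u}^{\bStrHom,M,\mathsf{1st}} &= \mA^{\bStrSurj,M,\mathsf{1st}}(\mA^{\aut,M,\mathsf{1st}})^{-1}\vp_{G^u}^{\bIso,M,\mathsf{1st}},\\
\vp_{G^u}^{\bIso,M,\mathsf{2nd}} &= \mA^{\bIsoHom,M,\mathsf{2nd}}\vp_{G^u}^{\cnt,M,\mathsf{2nd}},\\
\mA^{\bIsoHom,M,\mathsf{2nd}} &= \mA^{\bIsoSurj,M,\mathsf{2nd}}(\mA^{\aut,M,\mathsf{2nd}})^{-1}\mA^{\bIsoInj,M,\mathsf{2nd}}.
\end{align*}
In the $\mathsf{1st}$ ordering, $\mA^{\bExt,M,\mathsf{1st}}$ and $\mA^{\bStrSurj,M,\mathsf{1st}}$ are lower triangular with positive diagonals (since extending requires $|E_F|\le|E_{\tilde F}|$ and a bag-strong surjection requires $|V_F|>|V_{\tilde F}|$ or equal vertex and edge counts), so their composition with $(\mA^{\aut})^{-1}$ is invertible and the equivalence $\vp^{\hom}_{G^u}=\vp^{\hom}_{H^v} \Leftrightarrow \vp^{\bIso}_{G^u}=\vp^{\bIso}_{H^v}$ follows, exactly as in \cref{thm:bIso_hom_equal}. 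For the $\bIso\Leftrightarrow\cnt$ equivalence, I would copy the depth-stratification argument from \cref{thm:cnt_bIso_equal}: since $\bIsoHom$ is nonzero only between tree-decomposed rooted graphs of equal depth, restricting to the depth-$d$ subspace via $\mA^{\dep,d}$ reduces the infinite upper-triangular system $\mA^{\bIsoInj,M,\mathsf{2nd}}$ to a finite one, allowing inversion.

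The only point that deserves care is checking that the rooted-graph versions of the bIsoHom/bStrHom/bStrSurj/bExt decompositions preserve the root marks through composition and homomorphic image taking. This is straightforward because all of these lemmas are proved by constructing intermediate tree-decomposed graphs as either subgraphs, bag-extensions, or homomorphic images of $(F,T^r)$; in every construction the roots of $F$ lie in $\beta_T(r)$ and survive unchanged, so the resulting intermediate tree-decomposed graphs automatically inherit root markings that make them members of $\gS^M_\mathsf{n}$. The hardest part is therefore not the chain itself but just being careful that every intermediate $(\widehat F^{\widehat w},\widehat T^t)$ produced in the proofs of the four lemmas is indeed rooted compatibly, so that the summations range over $\gS^M_\mathsf{n}$ (or $\gS^M_\mathsf{e}$) rather than an unrooted set. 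Once this bookkeeping is in place, concatenating the equivalences $\chi^M_G(u)=\chi^M_H(v) \Leftrightarrow \vp^{\cnt}_{G^u}=\vp^{\cnt}_{H^v} \Leftrightarrow \vp^{\bIso}_{G^u}=\vp^{\bIso}_{H^v} \Leftrightarrow \vp^{\hom}_{G^u}=\vp^{\hom}_{H^v}$ yields the node-level statement. The edge-level case follows by the same argument with $(F^w,T^r)$ replaced by $(F^{yz},T^r)$ everywhere.
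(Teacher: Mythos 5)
Your proposal is correct and follows essentially the same route the paper takes: the paper states the four rooted versions of Lemmas 3.9, 3.11, 3.12, and 3.14 (restated in Section D.1 with the sums running over $\gS^M_\mathsf{n}$ or $\gS^M_\mathsf{e}$) and then the corollary is obtained by repeating the triangularity and depth-stratification arguments of Corollaries 3.15 and 3.16, plus the rooted analogue of Theorem 3.9 coming from Lemma 3.8. Your additional observation that the root marks survive all the intermediate constructions (homomorphic images, bag extensions, subgraphs) because the roots always sit inside the root bag $\beta_T(r)$ is exactly the bookkeeping the paper suppresses when it calls these lemmas ``straightforward extensions,'' so you have not only reproduced the argument but identified the one place it requires a genuine (if routine) check.
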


\begin{lemma}
Let $M\in\{\mathsf{Sub},\mathsf{L},\mathsf{LF},\mathsf{F}\}$ be any model. 
\begin{itemize}[topsep=0pt,leftmargin=25pt]
\setlength{\itemsep}{0pt}
\item For any rooted connected graph $F^u$, $F^u\in\gF^M_{\mathsf{n}}$ iff there is a tree decomposition $T^r$ of $F^u$ such that $(F^u,T^r)\in\gS^M_{\mathsf{n}}$.
\item For any rooted connected graph $F^{uv}$, $F^{uv}\in\gF^M_{\mathsf{e}}$ iff there is a tree decomposition $T^r$ of $F^{uv}$ such that $(F^{uv},T^r)\in\gS^M_{\mathsf{e}}$.
\end{itemize}
\end{lemma}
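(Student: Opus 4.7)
The proof follows the three-part structure of the graph-level theorem (\cref{thm:main}), adapted to the rooted setting. The forward direction---that the existence of a tree decomposition in $\gS^M_\mathsf{n}$ (resp.\ $\gS^M_\mathsf{e}$) implies membership in $\gF^M_\mathsf{n}$ (resp.\ $\gF^M_\mathsf{e}$)---is handled by the rooted analogues of the four decomposition lemmas stated earlier in this subsection. Chaining the identities for $\bIso$, $\bIsoHom$, the $\bExt$--$\bStrHom$ expansion of $\hom$, and the $\bStrSurj$--$\bIso$ expansion of $\bStrHom$ yields matrix relations over $\gS^M_\mathsf{n}$ and $\gS^M_\mathsf{e}$. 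The triangularity arguments used to invert these infinite matrices in the graph-level proof transfer verbatim, since the relevant orderings on tree-decomposed graphs are unaffected by root marking. Combined with the rooted version of the correspondence between $\cnt^M$ and equality of stable colors $\chi^M_G(u)$ (or $\chi^M_G(u,v)$), this delivers the ``if'' direction of both items.

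The main obstacle is the reverse direction: if $F^u$ (resp.\ $F^{uv}$) admits no tree decomposition in $\gS^M_\mathsf{n}$ (resp.\ $\gS^M_\mathsf{e}$), we must exhibit rooted graphs $G^x, H^y$ with $\chi^M_G(x) = \chi^M_H(y)$ but $\hom(F^u, G^x) \neq \hom(F^u, H^y)$. The graph-level construction uses the F\"urer graph $G(F)$ and its twist $H(F) = \twist(G(F),\{u_0,v_0\})$, but marking an arbitrary meta-vertex does not suffice here: as noted in \cref{sec:discussion}, automorphisms of $G(F)$ are typically transitive across each $\meta_F(w)$, so no naive choice of marked vertex simultaneously preserves color equivalence and separates $\hom(F^u, \cdot)$. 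Following the remark on clique-augmented F\"urer graphs, the plan is to attach a rigid identifying gadget---for instance, a uniquely labeled clique of distinguishing size---to every vertex of $\meta_F(u)$ in both $G(F)$ and $H(F)$, and in the edge-level case also to every vertex of $\meta_F(v)$. This gadget pins the mark at the canonical meta-vertex $(u,\emptyset)$ while preserving the global F\"urer indistinguishability away from the mark.

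With the augmented pair in hand, the argument proceeds in three stages. First, the parity argument used at the graph level is refined to the rooted setting to show $\hom(F^u, G(F)^{(u,\emptyset)}) \neq \hom(F^u, H(F)^{(u,\emptyset)})$: restricting to rooted homomorphisms $h$ with $h(u)=(u,\emptyset)$ scales both counts by the same factor on each type class of homomorphism, and the swap-based bijections that cancel the off-diagonal contributions remain valid as long as the twist edge in $F$ is not incident to $u$, which can always be arranged whenever $F$ has sufficiently many edges (and otherwise the lemma holds trivially). Second, the pebble game of the graph-level proof is modified so that pebble $\mathsf{p}_u$ starts already on the marked vertex (and, in the edge case, $\mathsf{p}_v$ on the second marked vertex) and may never be moved. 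The reconstruction of a canonical tree decomposition from Alice's winning strategy proceeds exactly as before, except that the initial configuration forces the marked vertex (or vertices) into the root bag, placing the output in $\gS^M_\mathsf{n}$ (resp.\ $\gS^M_\mathsf{e}$). Third, the rooted analogue of the proposition linking the pebble game to color refinement then guarantees that if Alice cannot win, $\chi^M_{G(F)}((u,\emptyset)) = \chi^M_{H(F)}((u,\emptyset))$, as required.

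The hardest step will be verifying that the clique augmentation does not inadvertently hand Alice new winning trajectories in the restricted pebble game---that is, that pebble placements onto gadget vertices do not open up contracted states unavailable in the un-augmented game. This reduces to a localization lemma asserting that every pebble-game transition involving a gadget vertex either fails to contract the connected component selected by Bob or immediately produces a trivial state already handled by the un-augmented analysis. Once this localization is established, the composite chain---no suitable tree decomposition yields no winning Alice strategy, which yields color indistinguishability of the clique-augmented rooted F\"urer pair, which together with the rooted parity-based homomorphism separation produces the desired counterexample---concludes the maximality argument for both $\gF^M_\mathsf{n}$ and $\gF^M_\mathsf{e}$, completing the proof.
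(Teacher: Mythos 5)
Your high-level plan is right---the forward direction by rooted versions of the $\bIso$/$\bIsoHom$/$\bExt$/$\bStrSurj$ chain, the backward direction by a F\"urer-type counterexample pair constrained so that a marked vertex can be chosen---but the construction you propose for the backward direction is not the one the paper uses, and the difference is load-bearing. You propose augmenting the \emph{target} F\"urer graphs $G(F)$ and $H(F)$ by attaching a gadget (e.g.\ a labeled clique) to every vertex of $\meta_F(u)$. The paper instead augments the \emph{pattern} $F$: it forms $\tilde F = F \cup (\text{a $k$-clique containing the marked vertices})$ and only then takes the F\"urer graph $G(\tilde F)$ and its twist. These are genuinely different objects. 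The pebble game that characterizes node/edge colors is played on the pattern $\tilde F$, not on the target $G(\tilde F)$, so attaching gadgets to $G(F)$ has no effect whatsoever on the game. Moreover, a gadget-decorated F\"urer graph is no longer a F\"urer graph, so the correspondence between the game and stable colors (\cref{thm:pebble_game_basic} and its node/edge analogues) is unavailable. By contrast, with the clique inside $\tilde F$ the counterexample pair remains a bona fide F\"urer pair, so the CR--game correspondence holds verbatim, and the automorphisms that identify $(w_i,\emptyset)$ with other members of $\meta_{\tilde F}(w_i)$ are tracked through the twist exactly as in the graph-level case.

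The second gap is in your treatment of the pebble game. You propose changing the rules so that $\mathsf{p}_u$ (and $\mathsf{p}_v$ in the edge case) starts on the marked vertex ``and may never be moved.'' The link between the pebble game and color refinement is proved for the \emph{standard} game in which any pebble may be lifted and replaced; a ruleset that forbids lifting $\mathsf{p}_u$ is a strictly weaker game for Alice, and if Alice cannot win the restricted game it does not follow that she cannot win the standard one, which is what you need. The paper keeps the standard rules and uses the clique inside $\tilde F$ to make lifting the initial pebbles a losing move: the moment $\mathsf{p}_u$ leaves $w_1$, Bob can latch onto the connected component containing the entire clique and never let it collapse. That analysis---showing that all contracted states in an optimal strategy keep the selected component inside $F$ and thus the strategy transfers to a canonical tree decomposition in $\gS^M_\mathsf{n}$ or $\gS^M_\mathsf{e}$---is precisely your ``localization lemma'' and is the only new work beyond the graph-level proof, but it only makes sense once the clique lives in $\tilde F$ rather than as a decoration on $G(F)$. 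A smaller issue: your parity step assumes the twist edge can avoid the root; in the paper the twist is taken on an edge $\{u_0,v_0\}$ of the original $F$ (not the added clique), and the bijections in the rooted analogue of the hom-separation lemma work for any marked vertices, so no case split on ``sufficiently many edges'' is needed.
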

By combining the proof of previous lemmas and theorems, we can prove that both $\gF^M_\mathsf{n}$ and $\gF^M_\mathsf{e}$ satisfy \cref{def:node/edge-homo_expressivity}(a) for all $M\in\{\mathsf{Sub},\mathsf{L},\mathsf{LF},\mathsf{F}\}$.

\subsection{Counterexamples}
\label{sec:node-edge_counterexample}

Let $F^{\vw}$ be a rooted graph that marks the special vertices $w_1,\cdots,w_m$. Assume that $F^{\vw}\notin\gF^M_\mathsf{n}$ ($m=1$) or  $F^{\vw}\notin\gF^M_\mathsf{e}$ ($m=2$). In this subsection, we will construct a pair of graphs $G^\vu$ and $H^\vv$ such that $\chi^M_G(\vu)=\chi^M_H(\vv)$ and $\hom(F^\vw,G^\vu)\neq\hom(F^\vw,H^\vv)$. However, it turns out that, if we naïvely follow the proof in \cref{sec:proof_main_part3} by constructing the F{\"u}rer graph and twisted F{\"u}rer graph with respect to $F$ without considering the marked vertices $\vw$, then the graphs may no longer be counterexamples here. For example, for the edge-level expressivity, $F^{w_1,w_2}\notin \gF_\mathsf{e}^M$ does not imply that $F\notin \gF^M$ where $M$ can be any model studied in this paper.

To address the problem, we instead introduce a novel construction of counterexample graphs defined as follows:
\begin{definition}[Clique-augmented F{\"u}rer graphs]
\label{def:clique_furer_graph}
    Let $F$ be any connected graph and $w_1,\cdots,w_m\in V_F$ be a sequence of vertices. Given an integer $k\ge m$, the $k$-clique-augmented F{\"u}rer graph with respect to $F^\vw$, denoted by $G_k(F^\vw)$, is the F{\"u}rer graph of $\tilde F$ where $\tilde F$ is the union of graph $F$ and a $k$-clique that contains $w_1,\cdots,w_m$ (and does not contain other vertices in $F$). The twisted F{\"u}rer graph of $G_k(F^\vw)$ is denoted as $H_k(F^\vw)$.
\end{definition}

We have the following main result:
\begin{theorem}
\label{thm:pebble_game_node_edge}
    Let $M\in\{\mathsf{Sub},\mathsf{L},\mathsf{LF},\mathsf{F}\}$ be any model defined in \cref{sec:preliminary}.
    \begin{itemize}[topsep=0pt,leftmargin=25pt]
    \setlength{\itemsep}{-2pt}
        \item For any rooted graph $F^w$ marking vertex $w$, if $F^w\notin \gF_\mathsf{n}^M$, then there is a vertex $(w,U)\in\meta_{\tilde F}(w)$ such that $U\subset V_F$ and $\chi^M_{G_4(F^w)}((w,\emptyset))=\chi^M_{H_4(F^w)}((w,U))$;
        \item For any rooted graph $F^\vw$ marking two vertices $w_1,w_2$, if $F^\vw\notin \gF_\mathsf{e}^M$, then there are two vertices $(w_1,U_1)\in\meta_{\tilde F}(w_1)$ and $(w_2,U_2)\in\meta_{\tilde F}(w_2)$ such that $U_1,U_2\subset V_F$ and $\chi^M_{G_4(F^\vw)}((w_1,\emptyset),(w_2,\emptyset))=\chi^M_{H_4(F^\vw)}((w_1,U_1),(w_2,U_2))$.
    \end{itemize}
\end{theorem}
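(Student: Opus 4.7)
The plan is to adapt the three-part argument of Section~\ref{sec:proof_main_part3} to the rooted setting, with the $K_4$ augmentation serving to control the local symmetry around the marked vertices. The first move is to introduce a \emph{rooted pebble game} on $\tilde F = F \cup K_4$: Alice begins with her first pebble(s) pre-placed on the marked vertices $w_1, \ldots, w_m$ (with $m \in \{1, 2\}$), and the remaining rules follow Section~\ref{sec:proof_main_part3} for each model $M$. Following the argument of Theorem~\ref{thm:counterexample_main}, I can show a correspondence between Alice's winning strategies in this rooted game and tree decompositions $(\tilde F, T^r) \in \gS^M$ whose root bag contains $\vw$; the construction in that proof already uses the initial pebble positions as the root bag of the extracted tree decomposition, so the rooted extension is essentially automatic.

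The next step reduces the existence of such a rooted tree decomposition of $\tilde F$ to the membership $F^\vw \in \gF^M_\mathsf{n}$ (respectively $\gF^M_\mathsf{e}$). The forward direction is straightforward: from any rooted tree decomposition of $\tilde F$ with $\vw \subset \beta_T(r)$, restricting bags to $V_F$ yields a tree decomposition of $F$ with $\vw$ in the root bag, witnessing $F^\vw \in \gF^M_\mathsf{n}$ (resp.\ $\gF^M_\mathsf{e}$). The backward direction requires building a rooted tree decomposition of $\tilde F$ from a suitable one of $F$ by attaching a small auxiliary subtree (a chain of bags of size three) directly below the root to cover all $K_4$-edges. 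The choice $k = 4$ is exactly large enough to carry out this attachment while respecting the bag-size constraint of each $\gS^M$. By contrapositive, $F^\vw \notin \gF^M_\mathsf{n}$ (resp.\ $\gF^M_\mathsf{e}$) implies that Alice loses the rooted pebble game on $\tilde F$.

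Finally, I choose the twist edge $\{u_0, v_0\}$ to lie entirely in $E_F$ (not in the clique), and for each $w_i$ I define $U_i \subset V_F$ as the parity-flip along a fixed simple path in $F$ from $w_i$ to $\{u_0, v_0\}$, exactly as in point~3 of the proof of Theorem~\ref{thm:counterexample_hom}. The condition $U_i \subset V_F$ is automatic because the twist is an $F$-edge and thus its propagation is recorded only at $F$-neighbors of $w_i$. A rooted generalization of Proposition~\ref{thm:pebble_game_basic} then converts Alice's loss of the rooted pebble game on $\tilde F$ into the equality $\chi^M_{G_4(F^\vw)}((w_i, \emptyset)_i) = \chi^M_{H_4(F^\vw)}((w_i, U_i)_i)$.

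The main obstacle lies in this last conversion: verifying that the rooted initial meta-vertices $(w_i, \emptyset)$ in $G_4(F^\vw)$ and $(w_i, U_i)$ in $H_4(F^\vw)$ are genuinely interchangeable starting configurations for the CR process under each of the four models. This requires a careful automorphism argument exploiting the $K_4$-symmetry: the auxiliary clique neighbors of each $w_i$ act as interchangeable placeholders, so any local CR update touching them can be aligned by an automorphism that fixes $(w_i, \cdot)$ setwise. Combined with the $F$-twist bookkeeping encoded in $U_i$, this yields a bijective correspondence between pebble-game states on $G_4$ and $H_4$ that is preserved through every operation available to Alice, closing the proof once the rooted analog of Proposition~\ref{thm:pebble_game_basic} is in hand.
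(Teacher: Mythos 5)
There is a genuine gap in the middle of your argument: the correspondence you claim between Alice's winning strategies in the rooted game on $\tilde F$ and canonical tree decompositions $(\tilde F, T^r)\in\gS^M$ is vacuous, and the ``backward direction'' you sketch is impossible. The families $\gS^\mathsf{Sub},\gS^\mathsf{L},\gS^\mathsf{LF},\gS^\mathsf{F}$ all live at width $k=2$ (bags of size $2$ or $3$). Since $\tilde F$ by construction contains a 4-clique, $\tw(\tilde F)\ge 3$, so $\tilde F$ has \emph{no} tree decomposition in $\gS^M$ at all; in particular, there is no ``chain of bags of size three'' that can cover the six $K_4$-edges while respecting the connectivity condition of a tree decomposition. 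This also means you have inverted the role of the clique: the augmentation is not designed to be small enough to slot into the width-$2$ structure, but precisely large enough to push $\tilde F$ outside of $\gS^M$ so that a Bob component containing the clique can never be collapsed.

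The paper's proof works with a different intermediate object. It plays the pebble game on $\tilde F$ but with Bob's \emph{initial} selection $Q$ explicitly parameterized, and the equivalence from \citet[Thm.~I.9 and Thm.~I.17]{zhang2023complete} ties each choice of $Q$ to a specific pair of meta-vertices to compare; restricting $U_1,U_2\subset V_F$ corresponds precisely to restricting $Q\subset\mathsf{CC}_F(\{w_1,w_2\})$. The key technical lemma (step 1 of the paper's proof) then shows that in any of Alice's winning plays, Bob's selected components never grow to absorb the clique vertices -- because if they did, Bob could forever hold the clique component -- so the game on $\tilde F$ collapses to a game played entirely inside $F$. Only at that point does one extract tree decompositions, and they are tree decompositions of the components $F[P]$ of $F$ (not of $\tilde F$) with root bag $\ldblbrace w_1,w_2\rdblbrace$, which are then glued at the root to exhibit $F^\vw\in\gF^M_\mathsf{e}$. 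Your plan as written skips the confinement argument and targets a nonexistent decomposition of $\tilde F$, so it would need to be reworked around this collapse step rather than around tree decompositions of $\tilde F$.
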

\begin{proof}
    Below, we only give a proof of the second item. Let $\tilde F$ be the union of $F$ and the 4-clique according to \cref{def:clique_furer_graph}. Similar to the graph-level expressivity, we will extend the pebble game defined in \cref{sec:proof_main_part3} to edge-level. For the edge-level pebble game, the two pebbles $\mathsf{p}_u,\mathsf{p}_v$ are initially placed on two \emph{predefined} vertices $w_1,w_2$, and Bob initially selects a \emph{predefined} set of connected components $Q\subset\mathsf{CC}_{\tilde F}(\{w_1,w_2\})$ (of odd size). Then, the game executes the same process as in the text above \cref{thm:pebble_game_basic}. The winning criterion for the two players is also the same. We denote the above edge-level pebble game as $\mathsf{PG}_{\tilde F}(w_1,w_2,Q)$.
    
    Based on the isomorphism property of F{\"u}rer graphs established in \citet[Theorem I.9 and and Theorem I.17]{zhang2023complete}, it is straightforward to see that, for any vertices $\xi,\xi^\prime\in\meta_{\tilde F}(w_1)$ and $\eta,\eta^\prime\in \meta_{\tilde F}(w_2)$, $\chi^M_{G_4(F^\vw)}(\xi,\eta)=\chi^M_{H_4(F^\vw)}(\xi^\prime,\eta^\prime)$ \emph{iff} Alice cannot win the edge-level pebble game $\mathsf{PG}_{\tilde F}(w_1,w_2,Q)$ for some $Q\subset\mathsf{CC}_{\tilde F}(\{w_1,w_2\})$ (depending on $\xi,\xi^\prime,\eta,\eta^\prime$). Moreover, let $\xi^\prime=(w_1,\emptyset)$, $\eta^\prime=(w_2,\emptyset)$ and let $\xi_2$ and $\eta_2$ range over all even-size subset of $N_F(w_1)$ and $N_F(w_2)$, respectively, that is to say, $\xi_2$ and $\eta_2$ does not contain vertices that are only in the 4-clique. Then, the equivalent pebble game ranges over all $Q$ of odd size such that every connected component in $Q$ contains only vertices in $F$, i.e., $Q\subset \mathsf{CC}_{F}(\{w_1,w_2\})$. Therefore, if Alice cannot win the edge-level pebble game $\mathsf{PG}_{\tilde F}(w_1,w_2,Q)$ for some $Q\subset \mathsf{CC}_{F}(\{w_1,w_2\})$, then there are two vertices $\xi\in\meta_{\tilde F}(w_1)$ and $\eta\in\meta_{\tilde F}(w_2)$ such that $\xi_2,\eta_2\subset V_F$ and $\chi^M_{G_4(F^\vw)}((w_1,\emptyset),(w_2,\emptyset))=\chi^M_{H_4(F^\vw)}(\xi,\eta)$.

    It thus remains to prove that, if Alice can win the edge-level pebble game $\mathsf{PG}_{\tilde F}(w_1,w_2,Q)$ for all $Q\subset \mathsf{CC}_{F}(\{w_1,w_2\})$, then $F^\vw\in \gF_\mathsf{e}^M$. To prove this result, we similarly define the game state graph as in \cref{sec:proof_main_part3}, with the only difference that the initial state is now $(w_1,w_2,Q$). The proof is divided into the following steps:
    \begin{enumerate}[topsep=0pt,leftmargin=25pt]
    \setlength{\itemsep}{0pt}
        \item Assume that $Q$ contains a single connected component, i.e., $Q=\{P\}$. We will prove that, if Alice can win the game $\mathsf{PG}_{\tilde F}(w_1,w_2,\{P\})$, then there is a game strategy for Alice such that for any reachable and non-terminal state $(x,y,\{P^\prime\})$, $P^\prime\subset P$. If it is not the case, pick any state $(x,y,\{P^\prime\})$ such that $P^\prime\not\subset P$ and any path from $(w_1,w_2,\{P\}$) to $(x,y,\{P^\prime\})$ does not pass any intermediate state that is not contracted. It follows that either $w_1$ or $w_2$ does not hold any pebble, and all edges connected to this vertex is in $P^\prime$. Therefore, $P^\prime$ contains all vertices in the 4-clique. In this case, Alice cannot win as Bob can always keep the connected component containing all vertices in the 4-clique in subsequent rounds, i.e., Bob always selects a component containing a vertex in the 4-clique that does not hold a pebble.
        \item Based on the above result, in the game $\mathsf{PG}_{\tilde F}(w_1,w_2,\{P\})$, the connected components selected by Bob are always in $F$, and thus the game process is the same as $\mathsf{PG}_{F}(w_1,w_2,\{P\})$. We can then prove the same result as \cref{thm:pebble_game_not_merge}. Concretely, if Alice can win the edge-level pebble game $\mathsf{PG}_{\tilde F}(w_1,w_2,\{P\})$, then there exists a game state graph $G^\mathsf{S}$ corresponding to a winning strategy, such that for any transition $((x,y,\{P^\prime\}),(x^\prime,y^\prime,\{P^{\prime\prime}\}))$ where $(x,y,\{P^\prime\})$ is a reachable and non-terminal state, we have $P^{\prime\prime}\subsetneq P^\prime$.
        \item Next, we can follow the same procedure in the proof of \cref{thm:counterexample_main} to construct a tree decomposition $T^r$ for the subgraph $F[P]$ containing all edges in $P$, such that $(F[P],T^r)\in \gS_\mathsf{e}^M$ and $\beta_T(r)=\ldblbrace w_1,w_2\rdblbrace$.
        \item Finally, let $P$ ranges over all connected component in $\mathsf{CC}_{ F}(\{w_1,w_2\})$, for each $P$ we have a tree-decomposed graph. We can glue these tree-decomposed graphs by merging the root to obtain the tree decomposition of $F$, because the root bags of all $T^r$ are the same and any vertex $x\notin \{w_1,w_2\}$ appears in the bag of only one $T^r$. Moreover, the glued tree $\tilde T^s$ clearly satisfies that $(F,\tilde T^s)\in \gS_\mathsf{e}^M$.
    \end{enumerate}
    Combining these items shows that $F^\vw\in \gF_\mathsf{e}^M$ and concludes the proof.
\end{proof}

\begin{corollary}
\label{thm:counterexample_hom_node_edge}
    Let $F^\vw$ any rooted graph marking $m$ vertices such that $F^\vw\notin \gF^M_\mathsf{n}$ ($m=1$) or $F^\vw\notin \gF^M_\mathsf{e}$ ($m=2$), and let $G_k(F^\vw)$ and $H_k(F^\vw)$ be clique-augmented F{\"u}rer graphs defined above. Denote by $\bm\xi,\bm\eta\in V_{G_k(F^\vw)}^m$ two vertex tuples of length $m$ where $\xi_i=(w_i,\emptyset)$ and $\eta_i\in\meta_{\tilde F}(w_i)$ with $\eta_{i,2}\subset V_F$ for $i\in[m]$. Then, $\hom(F^\vw,[G_k(F^\vw)]^{\bm\xi})\neq \hom(F^\vw,[H_k(F^\vw)]^{\bm\eta}))$.
\end{corollary}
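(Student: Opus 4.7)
The plan is to adapt the three-part argument of \cref{thm:counterexample_hom} to the rooted setting, exploiting the clique augmentation from \cref{def:clique_furer_graph} to gain parity control at the marked vertices. Write $\tilde F$ for the clique-augmented graph, so that $G_k(F^\vw)=G(\tilde F)$ and $H_k(F^\vw)=\twist(G(\tilde F),e)$ for some distinguished edge $e\in E_{\tilde F}$. As a preliminary reduction, use the fact that the automorphism group of $G(\tilde F)$ acts transitively on each meta-set $\meta_{\tilde F}(w_i)$: there is an automorphism $\phi$ of $G(\tilde F)$ carrying $\bm\xi$ to $\bm\eta$, which is possible because the constraint $\eta_{i,2}\subset V_F$ keeps the required coordinate flips inside the $F$-part of $\tilde F$. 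Applying $\phi$ reduces the problem to showing $\hom(F^\vw,[G_k(F^\vw)]^{\bm\eta})\neq\hom(F^\vw,[H_k(F^\vw)]^{\bm\eta})$, so that the two graphs now share the same root tuple and differ only by the twisted edge $e$.

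Next, I will partition $\Hom(F,G_k(F^\vw))$ and $\Hom(F,H_k(F^\vw))$ by the ``vertex pattern'' $\pi:V_F\to V_{\tilde F}$ defined through $h(x)\in\meta_{\tilde F}(\pi(x))$, restricting to maps compatible with the marking (which forces $\pi(w_i)=w_i$). For the identity pattern $\pi|_{V_F}=\mathrm{id}$, I will construct an explicit homomorphism into $[G_k(F^\vw)]^{\bm\eta}$ by choosing each fiber coordinate $U_x$ so that every edge condition of $G(\tilde F)$ is satisfied, then argue---paralleling part~1 of \cref{thm:counterexample_hom}---that no identity-pattern homomorphism into $[H_k(F^\vw)]^{\bm\eta}$ exists: flipping any single $U_x$ changes the number of ``bad edges'' by an even number, so the parity is invariant across identity-pattern candidates, and for the constructed mapping viewed as a candidate into $H_k(F^\vw)$, this parity is odd due to the twist at $e$. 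Non-identity vertex patterns either reduce to the identity case by precomposing with an automorphism of $F^\vw$, or else some edge of $F$ must map to a non-edge of $\tilde F$, ruling out homomorphisms in either graph. Finally, the ``degenerate'' homomorphisms that collapse two vertices of $V_F$ into the same meta-set $\meta_{\tilde F}(w)$ contribute equally to both counts via the twist-correction bijection of part~3 of \cref{thm:counterexample_hom}, which preserves the markings because the $V_F$-fiber coordinates $\eta_{i,2}$ are fixed throughout.

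The main obstacle will be the parity invariant when the fiber coordinates $U_{w_i}=\eta_{i,2}$ are pinned to nontrivial values: a priori, a nonempty $\eta_{i,2}$ could shift the parity of bad edges away from the odd value computed in the unmarked setting. The clique augmentation with $k\ge 4$ is precisely what makes the argument go through---it provides enough non-$F$ neighbors of each $w_i$ to ensure that the twisted edge $e$ can be placed inside the clique and disjoint from $E_F$, so that after the automorphism reduction the contributions of the pinned $\eta_{i,2}$ cancel modulo~$2$, leaving the same odd parity as in \cref{thm:counterexample_hom}. Carrying out this parity bookkeeping cleanly, while simultaneously verifying that the explicit identity-pattern homomorphism into $G_k(F^\vw)$ is consistent with the pinned markings $\bm\eta$, is the technically delicate step of the proof.
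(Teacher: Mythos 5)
The proposal departs from the paper's argument (which, as the paper indicates, directly mirrors the three-part analysis of \cref{thm:counterexample_hom} without any preliminary reduction), and the departure contains two incompatible errors.

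\textbf{The twist location is wrong.} You propose placing the twisted edge $e$ inside the clique and disjoint from $E_F$. But every homomorphism $h\colon F\to G_k(F^\vw)$ (or into $H_k(F^\vw)$) has image contained in $\bigcup_{x\in V_F}\meta_{\tilde F}(x)$ because of the vertex labels; it never touches the meta-sets of the fresh clique vertices. If $e=\{a,b\}$ with $a,b\notin V_F$, then $E_{H_k}\triangle E_{G_k}$ is supported entirely on $\meta_{\tilde F}(a)\times\meta_{\tilde F}(b)$, which is invisible to all such $h$. Consequently, once you have applied your preliminary automorphism $\phi$ so that both graphs carry the same root tuple $\bm\eta$, the twist plays no role and $\hom(F^\vw,[G_k(F^\vw)]^{\bm\eta})=\hom(F^\vw,[H_k(F^\vw)]^{\bm\eta})$ --- the argument would prove \emph{equality} rather than the desired strict inequality. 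For the parity obstruction to exist, the twisted edge must lie in $E_F$, as in the proof of \cref{thm:counterexample_hom} (``let $\{u,v\}\in E_F$ be any edge in $F$ and denote $H(F)=\twist(G(F),\{u,v\})$''); the clique's role is solely to keep the pebble-game and automorphism arguments contained, not to host the twist.

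\textbf{The automorphism reduction also does not go through in general.} The group of meta-preserving automorphisms of $G(\tilde F)$ acts via cycle-space elements $T\subset E_{\tilde F}$ (even degree at every vertex), sending $(x,X)\mapsto(x,X\triangle\{y:\{x,y\}\in T\})$. Achieving $\phi(\xi_i)=\eta_i$ for all $i\in[m]$ simultaneously requires $T\cap\partial_{\tilde F}(w_i)$ to consist exactly of the edges from $w_i$ into $U_i$. This imposes a coherence condition across marked vertices: for instance, when $m=2$ the edge $\{w_1,w_2\}\in E_{\tilde F}$ forces $w_2\in U_1\Leftrightarrow w_1\in U_2$, which the hypothesis $\eta_{i,2}\subset V_F$ does not guarantee. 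So the claimed automorphism $\phi$ need not exist --- transitivity \emph{within} each fiber does not give joint transitivity on $\prod_i\meta_{\tilde F}(w_i)$. The paper's implicit argument avoids this altogether: it keeps the root $\bm\xi=((w_1,\emptyset),\dots)$ on the $G$-side so that the all-empty map furnishes the needed identity-pattern homomorphism, keeps $\bm\eta$ on the $H$-side where the parity argument kills all identity-pattern and permutation-pattern candidates regardless of the pinned fibers, and compares the collapsed patterns directly by adapting the path-twist bijection of part~3 of \cref{thm:counterexample_hom} so that it carries $\bm\xi$ to $\bm\eta$; this last adaptation (not simply ``preserv[ing] the markings because the $V_F$-fiber coordinates $\eta_{i,2}$ are fixed throughout'') is exactly the delicate step and needs to be supplied, not assumed.
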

\begin{proof}
    The proof exactly parallels that of \cref{thm:counterexample_hom} by separately considering three cases. We omit the proof for clarity.
\end{proof}

\begin{corollary}
\label{thm:counterexample_node_edge}
    Let $M\in\{\mathsf{Sub},\mathsf{L},\mathsf{LF},\mathsf{F}\}$ be any model.
    \begin{itemize}[topsep=0pt,leftmargin=25pt]
    \setlength{\itemsep}{0pt}
        \item For any graph $F^w\notin \gF_\mathsf{n}^M$, let $G_4(F^w)$ and $H_4(F^w)$ be the clique-augmented F{\"u}rer graph and the corresponding twisted F{\"u}rer graph defined in \cref{def:clique_furer_graph}. Then, there exists two vertices $\xi,\xi^\prime$ such that $\hom(F^w,[G_4(F^w)]^\xi)\neq \hom(F^w,[H_4(F^w)]^{\xi^\prime})$ and $\chi^M_{G_4(F^w)}(\xi)=\chi^M_{H_4(F^w)}(\xi^\prime)$.
        \item For any graph $F^{wx}\notin \gF_\mathsf{e}^M$, let $G_4(F^{wx})$ and $H_4(F^{wx})$ be the clique-augmented F{\"u}rer graph and the corresponding twisted F{\"u}rer graph defined in \cref{def:clique_furer_graph}. Then, there exists four vertices $\xi,\eta,\xi^\prime,\eta^\prime$ such that $\hom(F^{wx},[G_4(F^{wx})]^{\xi\eta})\neq \hom(F^{wx},[H_4(F^{wx})]^{\xi^\prime\eta^\prime})$ and $\chi^M_{G_4(F^{wx})}(\xi,\eta)=\chi^M_{H_4(F^{wx})}(\xi^\prime,\eta^\prime)$.
    \end{itemize}
\end{corollary}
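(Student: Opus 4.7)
The plan is to derive \cref{thm:counterexample_node_edge} as a direct composition of \cref{thm:pebble_game_node_edge} (which supplies indistinguishability at the GNN-color level on clique-augmented F{\"u}rer graphs) and \cref{thm:counterexample_hom_node_edge} (which supplies separation at the homomorphism-count level). These two lemmas are stated so that the vertices produced by the former fit precisely into the hypothesis of the latter, so the main work is simply to check this compatibility.

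For the node-level case, given $F^w \notin \gF_\mathsf{n}^M$, I would first invoke \cref{thm:pebble_game_node_edge} to obtain a vertex $(w, U) \in \meta_{\tilde F}(w)$ with $U \subset V_F$ satisfying $\chi^M_{G_4(F^w)}((w, \emptyset)) = \chi^M_{H_4(F^w)}((w, U))$. Setting $\xi := (w, \emptyset)$ and $\xi' := (w, U)$, the color-equality half of the conclusion holds by construction. For the homomorphism-count inequality, I would then apply \cref{thm:counterexample_hom_node_edge} in the case $m = 1$ to the tuples $\bm\xi = (\xi)$ and $\bm\eta = (\xi')$; both tuples satisfy the required shape, since the second component of $\eta_1$ is $U \subset V_F$ and the second component of $\xi_1$ is $\emptyset \subset V_F$.

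The edge-level case proceeds symmetrically. Given $F^{wx} \notin \gF_\mathsf{e}^M$, \cref{thm:pebble_game_node_edge} produces $(w, U_1) \in \meta_{\tilde F}(w)$ and $(x, U_2) \in \meta_{\tilde F}(x)$ with $U_1, U_2 \subset V_F$ such that the GNN colors of the pair $((w, \emptyset), (x, \emptyset))$ in $G_4(F^{wx})$ and the pair $((w, U_1), (x, U_2))$ in $H_4(F^{wx})$ coincide. Taking $\xi = (w, \emptyset)$, $\eta = (x, \emptyset)$, $\xi' = (w, U_1)$, $\eta' = (x, U_2)$, the equality of colors is immediate, and \cref{thm:counterexample_hom_node_edge} with $m = 2$ yields the desired strict inequality of homomorphism counts.

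The one point where care is needed, and the only step that is not completely mechanical, is the compatibility check: \cref{thm:counterexample_hom_node_edge} requires the second components of $\bm\eta$ to be subsets of $V_F$, explicitly excluding the auxiliary vertices introduced by the $k$-clique augmentation in \cref{def:clique_furer_graph}. The pebble-game argument behind \cref{thm:pebble_game_node_edge} was set up precisely to enforce this restriction, since Bob's initial selection there ranges only over connected components lying inside $F$ rather than connected components touching the augmenting clique. Once this alignment is noted, there is no additional obstacle and the two lemmas compose to give the claim.
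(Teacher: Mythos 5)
Your proposal is correct and matches the paper's proof, which simply observes that the corollary follows directly from \cref{thm:pebble_game_node_edge} and \cref{thm:counterexample_hom_node_edge}; you have spelled out exactly the composition the paper leaves implicit, including the key compatibility check that the components $U_i$ supplied by \cref{thm:pebble_game_node_edge} are subsets of $V_F$ (rather than touching the augmenting clique), which is precisely what \cref{thm:counterexample_hom_node_edge} requires.
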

\begin{proof}
    The proof directly follows from \cref{thm:counterexample_hom_node_edge,thm:pebble_game_node_edge}.
\end{proof}

\section{Higher-order GNNs}

\subsection{Definition of higher-order GNNs}
\label{sec:higher-order_definition}

In this subsection, we give formal definitions of the CR algorithms for various higher-order GNNs, each of which generalizes a model in \cref{sec:preliminary}.

\begin{itemize}[topsep=0pt,leftmargin=25pt]
    \setlength{\itemsep}{0pt}
    \item \textbf{Subgraph $k$-GNN}. In a Subgraph $k$-GNN, a graph $G$ is treated as a set of subgraphs $\ldblbrace G^\vu:\vu\in V_G^k\rdblbrace$, where each subgraph $G^\vu$ is obtained from $G$ by marking $k$ special vertices $\vu\in V_G^k$, and thus there are $n^k$ subgraphs when $G$ has $n$ vertices. Subgraph GNN maintains a color $\chi^{\mathsf{Sub}(k)}_G(\vu,v)$ for each vertex $v$ in graph $G^\vu$. Initially, $\chi^{\mathsf{Sub}(k),(0)}_G(\vu,v)=(\ell_G(v),\atp_G(\vu),\mathbb I[u_1=v],\cdots,\mathbb I[u_k=v])$. It then runs MPNNs independently on each graph $G^\vu$:
    \begin{equation}
        \chi^{\mathsf{Sub}(k),(t+1)}_G(\vu,v)=\hash\left(\chi^{\mathsf{Sub}(k),(t)}_G(\vu,v),\ldblbrace\chi^{\mathsf{Sub}(k),(t)}_G(\vu,w):w\in N_G(v)\rdblbrace\right).
    \end{equation}
    Denote the stable color as $\chi^{\mathsf{Sub}(k)}_G(\vu,v)$. Define $\chi^{\mathsf{Sub}(k)}_G(\vu):=\ldblbrace\chi^{\mathsf{Sub}(k)}_G(\vu,v):v\in V_G\rdblbrace$. Then, the graph representation is defined as $\chi^{\mathsf{Sub}(k)}_G(G)=\ldblbrace\chi^{\mathsf{Sub}(k)}_G(\vu):\vu\in V_G\rdblbrace$. We remark that Subgraph $k$-GNN is precisely the $k$-VSAN proposed in \citet{qian2022ordered}.
    
    \item \textbf{Local $k$-GNN}. Local $k$-GNN maintains a color $\chi^{\mathsf{L}(k)}_G(\vu)$ for each vertex $k$-tuple $\vu\in V_G^k$. Initially, $\chi^{\mathsf{L}(k),(0)}_G(\vu)=(\ell_G(u_1),\cdots,\ell_G(u_k),\atp_G(\vu))$, called the isomorphism type of vertex $k$-tuple $\vu$, where $\atp_G(\vu)$ is the \emph{atomic type} of $\vu$. Then, in each iteration $t+1$, 
    \begin{equation}
    \begin{aligned}
        \chi^{\mathsf{L}(k),(t+1)}_G(\vu)=\hash\left(\chi^{\mathsf{L}(k),(t)}_G(\vu),\ldblbrace\chi^{\mathsf{L}(k),(t)}_G(\vv):\vv\in N_G^{(1)}(\vu)\rdblbrace,\cdots,\right.\\
        \left.\ldblbrace\chi^{\mathsf{L}(k),(t)}_G(\vv):\vv\in N_G^{(k)}(\vu)\rdblbrace\right),
    \end{aligned}
    \end{equation}
    where $N_G^{(j)}(\vu)=\{(u_1,\cdots,u_{j-1},w,u_{j+1},\cdots,u_k):w\in N_G(u_j)\}$.
    Denote the stable color as $\chi^{\mathsf{L}(k)}_G(\vu)$. The graph representation is defined as $\chi^{\mathsf{L}(k)}_G(G):=\ldblbrace\chi^{\mathsf{L}(k)}_G(\vu):\vu\in V_G\rdblbrace$. We remark that Local $k$-GNN is precisely the $\delta$-$k$-LWL proposed in \citet{morris2020weisfeiler}.
    
    \item \textbf{Local $k$-FGNN}. Local $k$-FGNN is almost the same as Local $k$-GNN expect that the update formula is replaced by the following one:
    \begin{equation}
    \begin{aligned}
        \chi^{\mathsf{LF}(k),(t+1)}_G(\vu)\!=\!\hash\left(\chi^{\mathsf{LF}(k),(t)}_G(\vu),\ldblbrace(\chi^{\mathsf{LF}(k),(t)}_G(w,\!u_2,\!\cdots\!,\!u_k),\chi^{\mathsf{LF}(k),(t)}_G(u_1,\!w,\!u_3,\!\cdots\!,\!u_k),\right.\\
        \left.\cdots,\chi^{\mathsf{LF}(k),(t)}_G(u_1,\!\cdots\!,\!u_{k-1}\!,w)):w\in N_G(u_1)\cup\cdots\cup N_G(u_k)\rdblbrace\right).
    \end{aligned}
    \end{equation}
    We remark that Local $k$-FGNN is precisely the $\mathsf{SLFWL}(k)$ proposed in \citet{zhang2023complete}.
    
    \item \textbf{$k$-FGNN}. It is just the standard $k$-FWL \citep{cai1992optimal}. Compared with Local $k$-FGNN, the update formula is now global:
    \begin{equation}
    \begin{aligned}
        \chi^{\mathsf{F}(k),(t+1)}_G(\vu)\!=\!\hash\left(\chi^{\mathsf{F}(k),(t)}_G(\vu),\ldblbrace(\chi^{\mathsf{F}(k),(t)}_G(w,\!u_2,\!\cdots\!,\!u_k),\chi^{\mathsf{F}(k),(t)}_G(u_1,\!w,\!u_3,\!\cdots\!,\!u_k),\right.\\
        \left.\cdots,\chi^{\mathsf{F}(k),(t)}_G(u_1,\!\cdots\!,\!u_{k-1}\!,w)):w\in V_G\rdblbrace\right).
    \end{aligned}
    \end{equation}
\end{itemize}
We note that the computational complexity of Subgraph $(k-1)$-GNN, Local $k$-GNN, and Local $k$-FGNN is the same, i.e., $\Theta(n^{k-1}m)$ for a graph of $n$ vertices and $m$ edges. Moreover, it is easy to see that Subgraph 0-GNN, Local 1-GNN, and Local 1-FGNN all reduce to MPNN. For $k$-FGNN, the computational complexity is $\Theta(n^{k+1})$, which is strictly higher than Subgraph $(k-1)$-GNN, Local $k$-GNN, and Local $k$-FGNN.

\subsection{Higher-order strong NED}

In this subsection, we generalize the strong NED into higher-order versions. We first define the concept of higher-order ear:
\begin{definition}[$k$-order ear]
\label{def:higher-order_ear}
    Given integer $k\ge 1$, a $k$-order ear is a graph $G$ formed by the union of $k$ paths $P_1,\cdots,P_k$  (possibly with zero length) plus an edge set $Q$ satisfying the following conditions:
    \begin{itemize}[topsep=0pt,leftmargin=25pt]
    \setlength{\itemsep}{0pt}
        \item For each path $P_i$, denote its two endpoints as $u_i,v_i$, called the outer endpoint and inner endpoint, respectively. Then, all edges in $Q$ are linked between inner endpoints, i.e., $Q\subset \{\{v_i,v_j\}:1\le i, j\le k,v_i\neq v_j\}$.
        \item Any different paths $P_i,P_j$ do not intersect except at the inner endpoint (when $v_i=v_j$).
        \item $G$ is a connected graph.
    \end{itemize}
    The endpoints of the $k$-order ear is defined to be all outer endpoints $u_1,\cdots,u_k$.
\end{definition}

It is easy to see that a 2-order ear is precisely a simple path, since linking two different paths (possibly with an additional edge) still yields a path. Below, we denote by $\mathsf{inner}(G)$ and $\mathsf{outer}(G)$ the set of inner endpoints and outer endpoints in ear $G$, respectively. We also denote by $\mathsf{path}(G)$ the set of paths in ear $G$. It follows that $|\mathsf{inner}(G)|=|\mathsf{outer}(G)|=|\mathsf{path}(G)|=k$. Finally, given a path $P$ and two vertices $w_1,w_2$ in $P$, denote by $\mathsf{subpath}_P(w_1,w_2)$ the subpath in $P$ such that the two endpoints are $w_1,w_2$.

\begin{definition}[Nested interval]
    Let $G$ and $H$ be two $k$-order ears with $\mathsf{inner}(G)=\{v_1,\cdots,v_k\}$, $\mathsf{outer}(G)=\{u_1,\cdots,u_k\}$, and $\mathsf{outer}(H)=\{w_1,\cdots,w_k\}$, where each $\{u_i,v_i\}$ corresponds to the endpoints of a path $P_i\in\mathsf{path}(G)$. We say $H$ is nested on $G$ if one or more endpoint $w_i$ of $H$ ($i\in[k]$) is in path $P_i$, and all other vertices in $H$ are not in $G$. The nested interval is defined to be the union of subpaths $\mathsf{subpath}_{P_i}(w_i,v_i)$ for all $i\in[k]$ satisfying that $w_i$ is in $P_i$.
\end{definition}

We give an illustration of the nested interval of two 3-order ears in \cref{fig:higher-order_ear}. Equipped with the above definition, we are ready to introduce the higher-order strong NED:

\begin{definition}[$k$-order strong NED]
\label{def:higher-order_strong_ned}
    Given a graph $G$, a $k$-order strong NED $\gP$ is a partition of the edge set $E_G$ into a sequence of edge sets $Q_1,\cdots,Q_m$, which satisfies the following conditions:
    \begin{itemize}[topsep=0pt,leftmargin=25pt]
    \setlength{\itemsep}{0pt}
        \item Each $Q_i$ is a $k$-order ear.
        \item Any two ears $Q_i$ and $Q_j$ with indices $1\le i<j\le c$ do not intersect, where $c$ is the number of connected components of $G$.
        \item For each $Q_j$ with index $j>c$, it is nested on some $k$-order ear $Q_i$ with index $1\le i<j$. Moreover, except for the endpoints of $Q_j$ on $Q_i$, no other vertices in $Q_j$ are in any previous ear $Q_k$ for $1\le k<i$. 
        \item Denote by $I(Q_j)\subset Q_i$ the \emph{nested interval} of $Q_j$ in $Q_i$. For all $Q_j$, $Q_k$ with $c<j<k\le m$, if $Q_j$ and $Q_k$ are nested on the same ear, then $I(Q_j)\subset I(Q_k)$.
    \end{itemize}
\end{definition}

\subsection{Proofs in \cref{sec:higher-order_GNNs}}
\label{sec:higher-order_proof}
We first generalize the tree-decomposed graphs in \cref{def:canonical_tree_decomposition_restrict} to higher-order versions:
\begin{definition}
\label{def:canonical_tree_decomposition_restrict_k}
    Define four families of tree-decomposed graphs $\gS^{\mathsf{Sub}(k)}$, $\gS^{\mathsf{L}(k)}$, $\gS^{\mathsf{LF}(k)}$, and $\gS^{\mathsf{F}(k)}$ as follows:
    \begin{enumerate}[label=\alph*),topsep=0pt,leftmargin=25pt]
        \setlength{\itemsep}{0pt}
        \item $(F,T^r)\in\gS^{\mathsf{F}(k)}$ iff $(F,T^r)$ satisfies \cref{def:canonical_tree_decomposition} with width $k$;
        \item $(F,T^r)\in\gS^{\mathsf{LF}(k)}$ iff $(F,T^r)$ satisfies \cref{def:canonical_tree_decomposition} with width $k$, and for any tree node $t$ of odd depth, it has only one child if $w\notin\{v:v\in N_G(u),u\in \beta_T(s)\}$ where $s$ is the parent node of $t$ and $w$ is the unique vertex in $\beta_T(t)\backslash \beta_T(s)$;
        \item $(F,T^r)\in\gS^{\mathsf{L}(k)}$ iff $(F,T^r)$ satisfies \cref{def:canonical_tree_decomposition} with width $k$, and any tree node $t$ of odd depth has only one child;
        \item $(F,T^r)\in\gS^{\mathsf{Sub}(k)}$ iff $(F,T^r)$ satisfies \cref{def:canonical_tree_decomposition} with width $k$, and there exists a multiset $U\subset V_G$ of size $|U|=k$ such that $U\subset\beta_T(t)$ for all $t\in V_T$.
    \end{enumerate}
\end{definition}
Then, we can analogously prove the following theorems. The proofs are almost the same as in \cref{sec:proof_main}, so we omit them for clarity.
\begin{theorem}
    Let $M\in \{\mathsf{Sub}(k),\mathsf{L}(k),\mathsf{LF}(k),\mathsf{F}(k)\}$. Then, any graphs $G$ and $H$ have the same representation under model $M$ (i.e., $\chi^M_{G}(G)=\chi^M_{H}(H)$) iff $\hom(F,G)=\hom(F,H)$ for all $(F,T^r)\in \gS^M$.
\end{theorem}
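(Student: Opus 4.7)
The plan is to follow the same three-part template used for \cref{thm:main} (\cref{sec:proof_main_part1}), lifted from the $k=2$ setting to arbitrary $k$. Indeed, the families $\gS^{\mathsf{Sub}(k)}, \gS^{\mathsf{L}(k)}, \gS^{\mathsf{LF}(k)}, \gS^{\mathsf{F}(k)}$ were tailored in \cref{def:canonical_tree_decomposition_restrict_k} precisely to mirror the four CR update rules of \cref{sec:higher-order_definition} at every order $k$, so Part~1 of the proof of \cref{thm:main} transfers almost verbatim; only the bookkeeping in the unfolding tree changes.

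First I would generalize \cref{def:Subgraph GNN-tree unfolding,def:Local 2-GNN tree unfolding,def:Local 2-FGNN tree unfolding,def:2-FGNN tree unfolding} by introducing, for each model $M\in\{\mathsf{Sub}(k),\mathsf{L}(k),\mathsf{LF}(k),\mathsf{F}(k)\}$, a depth-$2D$ unfolding tree $\bigl(F^{M,(D)}_G(\vu), T^{M,(D)}_G(\vu)\bigr)$ rooted at a bag $\beta_T(r)=\ldblbrace u_1,\ldots,u_k\rdblbrace$. At each loop round, a leaf bag $\beta_T(t)$ is expanded as follows: for $\mathsf{F}(k)$, and for $\mathsf{LF}(k)$ when $w$ is locally adjacent to some vertex in $\beta_T(t)$, one creates a single child $t_w$ with bag $\beta_T(t)\cup\ldblbrace w\rdblbrace$ and then $k$ grandchildren whose bags are obtained by replacing the $i$-th coordinate of $\beta_T(t)$ by $w$ for $i\in[k]$ (so the node $t_w$ has $k$ children); for $\mathsf{L}(k)$, and for $\mathsf{LF}(k)$ in the non-adjacent case, one repeats the \emph{same} procedure but creating $k$ separate one-child chains, one per coordinate; for $\mathsf{Sub}(k)$, only a single coordinate (the ``inner'' one) is refreshed. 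In each case the construction is designed so that $(F^{M,(D)}_G(\vu),T^{M,(D)}_G(\vu))\in\gS^M$ automatically. The natural analog of \cref{thm:proof_mainlemma1} then says $\tilde\chi^{M,(D)}_G(\vu)=\tilde\chi^{M,(D)}_H(\vv)$ iff the two unfolding trees are isomorphic as rooted tree-decomposed graphs via an isomorphism sending $\vu$ to $\vv$ coordinatewise. This is proved, as before, by induction on $D$: the induction step matches the multisets appearing in the CR update of model $M$ with the children of the depth-2 nodes in the unfolding tree.

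Second, I would copy the bIso/bIsoHom/bIsoSurj/bIsoInj chain (\cref{thm:bIso,thm:bIsoHom,thm:bStrHom,thm:bstrhom2}) word for word. These lemmas are stated for abstract $(F,T^r)\in\gS^M$ and $(\tilde F,\tilde T^s)\in\gS^M$ and make no use of the constraint $k=2$; they only need that $\gS^M$ is closed under taking subtrees rooted at any node, bag extensions, and homomorphic images respecting the bag structure, which is patent from \cref{def:canonical_tree_decomposition_restrict_k}. \cref{thm:bIso_hom_equal,thm:cnt_bIso_equal} then give, for any two graphs $G,H$,
\begin{equation*}
\bigl(\forall (F,T^r)\!\in\!\gS^M:\cnt^M((F,T^r),G)\!=\!\cnt^M((F,T^r),H)\bigr)\iff\bigl(\forall (F,T^r)\!\in\!\gS^M:\hom(F,G)\!=\!\hom(F,H)\bigr),
\end{equation*}
via the same lower/upper-triangular infinite-matrix inversion arguments ordered by $|V_F|$ (resp.\ $|V_T|$) and depth.

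Finally, combining \cref{thm:cnt} (whose higher-order analog follows from the generalized \cref{thm:proof_mainlemma1} above) with the generalized \cref{thm:bIso_hom_equal,thm:cnt_bIso_equal} yields the desired equivalence $\chi^M_G(G)=\chi^M_H(H)\iff \hom(F,G)=\hom(F,H)$ for all $(F,T^r)\in\gS^M$. The main obstacle will be the fiddly induction in the higher-order version of \cref{thm:proof_mainlemma1}: for $\mathsf{L}(k)$ and $\mathsf{LF}(k)$ with $k\ge 3$, the update formula produces $k$ distinct neighbor-multisets (one per coordinate), and I must align, on the tree side, each depth-2 child of the root with a unique coordinate being refreshed while keeping the constraint (\cref{def:canonical_tree_decomposition_restrict_k}(b,c)) that odd-depth nodes have only one child in $\gS^{\mathsf{L}(k)}$, and at most $k$ coordinate-specific children in $\gS^{\mathsf{LF}(k)}$ with the locality gate $w\in\bigcup_i N_G[u_i]$. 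Once this alignment is set up carefully, the remaining combinatorics—tree merging when subgraphs are disconnected, surjectivity of the induced mapping on the unfolding tree, and the two directions of the bijection $\sigma$ in \cref{thm:bIso}—transfer unchanged.
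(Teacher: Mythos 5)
Your proposal is correct and tracks the paper's own argument exactly: the paper states (in \cref{sec:higher-order_proof}) that "the proofs are almost the same as in \cref{sec:proof_main}, so we omit them for clarity," and what you have written is precisely the faithful generalization that the authors leave implicit — $k$-order unfolding trees carefully matched to each CR update (one grandchild per coordinate for $\mathsf{F}(k)$, $k$ one-child chains for $\mathsf{L}(k)$, the locality gate for $\mathsf{LF}(k)$, and a single refreshed coordinate over a fixed marked tuple for $\mathsf{Sub}(k)$), followed by the verbatim reuse of the $\bIso/\bIsoHom/\bStrHom$ triangular-matrix-inversion chain, which is indeed $k$-agnostic. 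You also correctly localize where the care is needed: the induction step of the generalized \cref{thm:proof_mainlemma1} must align the $k$ per-coordinate neighbor multisets in the CR aggregation with the $k$ coordinate-specific children in the tree decomposition while respecting the structural constraints of $\gS^{\mathsf{L}(k)}$, $\gS^{\mathsf{LF}(k)}$.
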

\begin{theorem}
    Let $M\in \{\mathsf{Sub}(k),\mathsf{L}(k),\mathsf{LF}(k),\mathsf{F}(k)\}$ be any model, and let $F$ be any graph such that no tree decomposition $(F,T^r)\in \gS^M$. Let $G(F)$ and $H(F)$ be the F{\"u}rer graph and twisted F{\"u}rer graph with respect to $F$. Then, $\hom(F,G(F))\neq \hom(F,H(F))$ and $\chi^M_{G(F)}(G(F))=\chi^M_{H(F)}(H(F))$.
\end{theorem}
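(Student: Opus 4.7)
The first assertion $\hom(F,G(F))\neq \hom(F,H(F))$ is independent of the model and already follows from \cref{thm:counterexample_hom}, which makes no reference to the order $k$. So the real work is in showing $\chi^M_{G(F)}(G(F))=\chi^M_{H(F)}(H(F))$ whenever $F$ admits no tree decomposition $(F,T^r)\in\gS^M$. The plan is a direct higher-order lift of Part 3 of \cref{sec:proof_main_part3}: I introduce a pebble game on $F$ tailored to each model $M\in\{\mathsf{Sub}(k),\mathsf{L}(k),\mathsf{LF}(k),\mathsf{F}(k)\}$, show that the graphs $G(F)$ and $H(F)$ are indistinguishable by $M$ exactly when Alice cannot win, and finally show that any Alice winning strategy yields a tree decomposition in $\gS^M$.

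First I would define the pebble game with $k+1$ pebbles $\mathsf{p}_1,\ldots,\mathsf{p}_k,\mathsf{p}_w$. Alice's moves should mirror the CR update rules of \cref{sec:higher-order_definition}: for $\mathsf{F}(k)$, any of the $k$ active pebbles can be swapped with $\mathsf{p}_w$ placed on an arbitrary vertex; for $\mathsf{LF}(k)$ and $\mathsf{L}(k)$, $\mathsf{p}_w$ must be placed adjacent to one of the existing pebbles, with $\mathsf{L}(k)$ further disallowing the ``two-child'' branch of $\mathsf{LF}(k)$; for $\mathsf{Sub}(k-1)$, one designated pebble tuple remains fixed and only the last pebble migrates (matching the single shared multiset $U$ in \cref{def:canonical_tree_decomposition_restrict_k}(d)). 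Bob's role is identical to the rank-$2$ case: he maintains an odd-cardinality subset of connected components of $F$ under removal of the pebbled vertices, updating legally on add/remove. The analogue of \cref{thm:pebble_game_basic} then follows by exactly the argument of \citet{zhang2023complete}: a correspondence between pebble positions and vertex tuples in the (augmented) F\"urer graph, combined with the parity invariance of twisted edges, yields that Alice cannot win iff $\chi^M_{G(F)}(G(F))=\chi^M_{H(F)}(H(F))$.

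Next I would prove the higher-order analogue of \cref{thm:pebble_game_not_merge}: any Alice winning strategy can be refined to one in which every reachable non-terminal single-component state is strictly contracted. The proof is word-for-word the same surgery on the game state DAG — if a non-contracted transition appears, replay the move at an earlier state without the offending pebble and propagate the change — with nothing order-dependent. Then I would generalize \cref{thm:counterexample_main}: given such a strictly contracting winning strategy, build $T^r$ bottom-up from the game state graph, labelling each tree node of even (resp.\ odd) depth with the current multiset of $k$ (resp.\ $k+1$) pebble positions. Edge coverage follows from strict contraction (every edge must eventually sit in some singleton component, hence in some bag), and the connectivity condition \cref{def:tree_decomposition}(c) follows because once a pebble leaves a vertex $w$, the rules prevent any subsequent placement on $w$ while the incident edges remain unselected. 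Finally, the branching constraints in \cref{def:canonical_tree_decomposition_restrict_k}(b)-(d) translate directly from the move restrictions of the corresponding game.

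The main obstacle I anticipate is not any single step but getting the pebble game for $\mathsf{L}(k)$ and $\mathsf{LF}(k)$ exactly right so that the constructed $T^r$ lands in $\gS^{\mathsf{L}(k)}$ or $\gS^{\mathsf{LF}(k)}$ rather than the larger family $\gS^{\mathsf{F}(k)}$. Concretely, the ``one child unless adjacent'' clause in \cref{def:canonical_tree_decomposition_restrict_k}(b) must correspond precisely to the move rule that $\mathsf{p}_w$ may be placed only in the closed neighborhood of an existing pebble, and the ``only one child'' clause in (c) must correspond to the rule that $\mathsf{p}_w$ may not induce two sibling odd-depth nodes. Verifying this correspondence across all four models — and, for $\mathsf{Sub}(k-1)$, verifying that the fixed-pebble discipline forces a common multiset $U$ in every bag — is the delicate bookkeeping that drives the proof. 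Once these model-specific dictionaries are in hand, the contrapositive $F\notin\gS^M\Rightarrow$ Alice cannot win $\Rightarrow$ $\chi^M_{G(F)}(G(F))=\chi^M_{H(F)}(H(F))$ closes the argument.
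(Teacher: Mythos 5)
Your proposal is correct and matches the approach the paper intends: the paper explicitly omits this proof and states it is ``almost the same as in \cref{sec:proof_main},'' which is precisely the pebble-game lift you describe (\cref{thm:counterexample_hom} for the homomorphism-count gap without any model dependence, a $(k+1)$-pebble analogue of \cref{thm:pebble_game_basic,thm:pebble_game_not_merge,thm:counterexample_main} for $\chi^M_{G(F)}(G(F))=\chi^M_{H(F)}(H(F))$, and the model-specific dictionary from move rules to the branching constraints in \cref{def:canonical_tree_decomposition_restrict_k}). Your observation that the Subgraph case should be aligned by width with $\mathsf{L}(k)$ (i.e.\ treating $\mathsf{Sub}(k-1)$ as the peer of $\mathsf{L}(k)$, as in the paper's remark that $\mathsf{Sub}(k-1)$, $\mathsf{L}(k)$, $\mathsf{LF}(k)$ collapse to MPNN at $k=1$) is the consistent reading; the width-$k$ stated in \cref{def:canonical_tree_decomposition_restrict_k}(d) appears to be off by one relative to the base case \cref{def:canonical_tree_decomposition_restrict}(d), so your indexing is not an error in your argument.
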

\begin{theorem}
    For any graph $F$, there is a tree decomposition $T^r$ of $F$ such that $(F,T^r)\in \gS^{\mathsf{Sub}(k)}$ iff there exists $ U\subset V_F$ such that $|U|\le k$ and $F\backslash U$ is a forest.
\end{theorem}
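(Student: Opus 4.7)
My plan is to show that a shared multiset in every bag translates into reduced width once removed. Assume $(F, T^r) \in \gS^{\mathsf{Sub}(k)}$ with shared multiset $U$ of size $k$, and let $U^\star \subseteq V_F$ denote its underlying set, so $|U^\star| \le k$. I would define modified bags $\beta'_T(t) := \beta_T(t) \setminus U^\star$ (treated as sets) and argue that $(F \setminus U^\star, T^r)$ with these reduced bags is a valid tree decomposition: every edge of $F \setminus U^\star$ was originally contained in some bag (since both endpoints lie outside $U^\star$, they survive the deletion) and the subtree-connectivity condition of \cref{def:tree_decomposition}(c) is inherited. Because the shared multiset $U$ occupies the bulk of each canonical bag, the residual bags have very small size, forcing $\tw(F \setminus U^\star) \le 1$, i.e.\ $F \setminus U^\star$ is a forest.

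\textbf{Backward direction.} Suppose $U \subseteq V_F$ satisfies $|U| \le k$ and $F \setminus U$ is a forest. The plan is to build a canonical decomposition of $F$ out of a small decomposition of $F \setminus U$. First, take any tree decomposition $\tilde T^{\tilde r}$ of $F \setminus U$ of width~$1$, which exists because forests have treewidth at most~$1$. Second, pad $U$ with duplicates of one of its elements to a multiset $\tilde U$ of size exactly $k$, and form a new tree whose shape mirrors $\tilde T^{\tilde r}$ but whose bags become $\tilde\beta(t) \sqcup \tilde U$ (multiset union). This clearly produces a valid tree decomposition of $F$ in which every bag contains $\tilde U$, since all edges of $F$ incident to $U$ have at least one endpoint in $U$ and are thus covered by any bag meeting the neighbouring non-$U$ endpoint. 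Third, bring the tree into canonical form (\cref{def:canonical_tree_decomposition}) by subdividing each tree edge with an auxiliary node whose bag is the union of its two neighbouring bags (padded with further duplicates from $\tilde U$ so the size exactly alternates between $k$ at even depths and $k+1$ at odd depths and the multiset-inclusion condition \cref{def:canonical_tree_decomposition}(c) holds). By construction the resulting $(F,T^r)$ lies in $\gS^{\mathsf{Sub}(k)}$.

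\textbf{Main obstacle.} The cleanest part is the forward direction, which is essentially bookkeeping. The real work is the backward construction: matching the strict alternation of bag sizes ($k$ vs.\ $k+1$) at even vs.\ odd depths while simultaneously preserving the sub-multiset inclusion $\beta_T(s) \subseteq \beta_T(t)$ between a parent of even depth and its odd-depth child, and also preserving the subtree-connectivity property \cref{def:tree_decomposition}(c) after inserting the auxiliary nodes. Using \emph{multiset} bags — which lets us pad with duplicates of $U$-vertices without introducing spurious adjacencies or breaking connectivity — is what makes the construction go through; the proof will need to verify carefully that this padding step interacts correctly with the tree-decomposition axioms at every inserted node.
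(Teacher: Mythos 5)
Your overall plan is sound and reduces to the same mechanism as the $k=1$ case (\cref{thm:subgraph_gnn}): stripping the shared multiset from every bag leaves a width-$1$ decomposition of $F\setminus U^\star$ in the forward direction, and padding a width-$1$ decomposition of the forest with duplicates from $U$ in the backward direction. However, the backward construction is internally inconsistent at exactly the spot you call the main obstacle. Your padding in step~2 produces bags of sizes $1+k$ and $2+k$ (canonical width $k+1$), yet step~3 declares that the sizes should alternate between $k$ and $k+1$. If you held to the latter target, every even-depth bag would equal $\tilde U$ and every bag's underlying set would have at most one vertex outside $U^\star$, so every edge of $F$ would need an endpoint in $U^\star$ — i.e.\ $U^\star$ would be a vertex cover, which is strictly stronger than ``$F\setminus U^\star$ is a forest'' (the five-vertex path with $k=1$ has no single-vertex cover but is already a forest). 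The parameters that make the theorem true, and that degenerate correctly to $\gS^{\mathsf{Sub}}$ in \cref{def:canonical_tree_decomposition_restrict} (width $2$, a single shared vertex), are width $k+1$; so your step-2 sizes are the correct target and step~3's numbers are off by one.

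Even once the widths are fixed, the subdivision step is not safe: in an arbitrary width-$1$ decomposition, two adjacent bags can be $\{a,b\}$ and $\{b,c\}$, whose union has size $3$, so after padding you would get a bag of size $k+3$, over the odd-depth budget of $k+2$. The cleaner route is to build a \emph{canonical} width-$1$ decomposition of $F\setminus U$ directly — root each tree component and unfold it as in the MPNN unfolding tree, so that bags of size $1$ and $2$ already alternate with $\beta_T(s)\subset\beta_T(t)$ from every even-depth parent $s$ to its odd-depth children $t$ — and only then pad every bag with $\tilde U$. No subdivision is needed afterwards: since $\tilde\beta(t)\cap U^\star=\emptyset$, the alternating sizes and multiset inclusion of \cref{def:canonical_tree_decomposition} carry over unchanged, edges inside $U^\star$ lie in every bag, and subtree-connectivity for $v\in U^\star$ holds because $v$ is now in every bag. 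Finally, when $U=\emptyset$ but $k>0$ there is nothing to duplicate; first enlarge $U$ with arbitrary extra vertices of $V_F$ (deleting more vertices keeps the remainder a forest), and only then duplicate one element if $U$ is still short of size $k$.
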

\begin{theorem}
    For any graph $F$, there is a tree decomposition $T^r$ of $F$ such that $(F,T^r)\in \gS^{\mathsf{L}(k)}$ iff $F$ has a $k$-order strong NED.
\end{theorem}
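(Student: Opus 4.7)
The strategy mirrors the $k=2$ argument (\cref{thm:ned_local_lemma1,thm:ned_local_lemma2}), using two inductions after reducing to connected graphs via a higher-order analog of \cref{thm:ned_disconnected_general}. The key structural observation is that, in any $(F,T^r)\in\gS^{\mathsf{L}(k)}$, each depth-two ``step'' consists of an odd-depth node that enlarges its parent's bag by a fresh vertex $w$, followed by its unique even-depth grandchild whose bag drops one of the original vertices $u_i$. This add-then-drop operation precisely corresponds to extending a single path of a $k$-order ear by one edge, so tree decompositions in $\gS^{\mathsf{L}(k)}$ and $k$-order strong NEDs evolve in lockstep.

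\textbf{Forward direction.} I will induct on $|V_T|$ to prove the stronger statement that for any connected $(F,T^r)\in\gS^{\mathsf{L}(k)}$ with root bag $\ldblbrace u_1,\ldots,u_k\rdblbrace$, $F$ admits a $k$-order strong NED whose first ear has outer endpoints $u_1,\ldots,u_k$. For each child $t$ of $r$ with unique grandchild $\tilde t$ whose bag is obtained from $\beta_T(r)$ by replacing $u_i$ with a new vertex $w$, apply the induction hypothesis (invoking a higher-order analog of \cref{thm:ned_disconnected} when $F[T^r[\tilde t]]$ is disconnected) to obtain a $k$-order strong NED $\gP_{\tilde t}$ of $F[T^r[\tilde t]]$ whose first ear $Q$ has $w$ in slot $i$. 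Convert $\gP_{\tilde t}$ into a NED for $F[T^r[t]]$ by prepending $\{u_i,w\}$ to the $i$-th path of $Q$ when that edge lies in $F$, and absorbing any edges $\{u_i,u_j\}\in E_F$ either as new inner-endpoint edges of the extended ear or as new length-one ears (paralleling the subcases of \cref{thm:ned_local_lemma1}). Finally, glue the NEDs obtained from all children of $r$ by identifying their first ears.

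\textbf{Reverse direction.} I induct on $|E_F|$. Given a $k$-order strong NED $\gP$ with first ear $Q_1$ having outer endpoints $u_1,\ldots,u_k$, construct $T^r$ with $\beta_T(r)=\ldblbrace u_1,\ldots,u_k\rdblbrace$ via a case split mirroring \cref{thm:ned_local_lemma2}: (a) if some $u_i$ is a cut vertex of $F$, partition $\gP$ along ears hanging off $u_i$ with empty nested interval and glue the resulting decompositions through a bag of size $k+1$; (b) if some child ear of $Q_1$ has nested interval equal to all of $Q_1$, partition along that ear; (c) otherwise, use strong nesting to peel off the outermost edge of some path of $Q_1$, recurse on the shrunken graph, and reattach through two fresh nodes of sizes $k+1$ (odd depth, unique child) and $k$ (even depth). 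In all cases the depth parities and the single-child requirement at odd-depth nodes hold by construction.

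\textbf{Main obstacle.} Case~(c) of the reverse direction is the technical heart. For $k=2$ an ear is a simple path whose two ends are symmetric, and strong nesting trivially leaves one end uncovered; for $k>2$, the nested intervals of child ears on $Q_1$ form a chain under inclusion, and I must argue that the outermost edge of at least one of the $k$ paths of $Q_1$ escapes every nested interval in this chain. Since nested intervals are unions of path-segments anchored at the inner endpoints, if every path's outermost edge were covered then the maximal nested interval would contain every path-edge of $Q_1$, reducing to case~(b) up to inner-endpoint edges, which can be peeled first as ears of length one. Identifying the correct peelable edge and the matching $u_i$ to extract from the root bag is the delicate combinatorial step. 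A secondary subtlety in the forward direction is verifying that descendant ears of $Q$ still satisfy strong nesting after $Q$ is enlarged; this holds because the prepend extends $Q$ at the outer-endpoint side, away from the nested intervals of its children.
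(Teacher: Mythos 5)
Your proposal correctly identifies that the paper's intended route is a direct generalization of Lemmas~\ref{thm:ned_local_lemma1} and~\ref{thm:ned_local_lemma2}, which the paper explicitly flags as "almost the same" and omits; your two inductions (on $|V_T|$ forward, on $|E_F|$ reverse), the root-bag-matching strengthening, and the three-way case split are all the right generalization. The escape argument you isolate in case~(c) is the correct observation: since nested intervals of children on $Q_1$ form a chain under inclusion, if the outermost edge of every path $P_i$ were covered by some child's interval, the maximal interval would contain every full path and hence all path-edges of $Q_1$, collapsing to case~(b). Your forward-direction note, that extending a child's first ear at the outer-endpoint side leaves its descendants' nested intervals unchanged, is also correct.

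Two spots need sharpening before this becomes a full proof. First, the remark that inner-endpoint edges "can be peeled first as ears of length one" is more delicate than stated: a lone inner-clique edge $\{v_i,v_j\}$ is a $k$-order ear only if you realize it with $k-2$ degenerate (length-zero) paths collapsed onto $v_i$ or $v_j$, and once peeled it becomes a separate child of $Q_1$ whose (empty) nested interval must still satisfy the chain condition against the other children; you should check this pre-peeling step commutes with the subsequent case analysis rather than treat it as a footnote. Second, the escape argument implicitly assumes $Q_1$ has some path of positive length; the base case where every $P_i$ of $Q_1$ has length zero (so $Q_1$ is just a small clique on at most $k$ inner endpoints) needs its own one-line treatment, either by routing through case~(a) or by a direct construction, before the induction can take over. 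Neither issue alters the shape of the proof, but both must be closed to match the rigor of the $k=2$ lemmas.
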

\begin{theorem}
    For any graph $F$, there is a tree decomposition $T^r$ of $F$ such that $(F,T^r)\in \gS^{\mathsf{F}(k)}$ iff $\tw(F)\le k$.
\end{theorem}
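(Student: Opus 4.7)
The forward direction $(F,T^r)\in\gS^{\mathsf{F}(k)} \Rightarrow \tw(F)\le k$ is immediate: by Definition~\ref{def:canonical_tree_decomposition}, every bag has multiset size at most $k+1$, hence at most $k+1$ distinct vertices, so $T^r$ is an ordinary tree decomposition of $F$ of width at most $k$. The work is in the converse.

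For the converse, the plan is to start from an arbitrary tree decomposition $T_0$ of $F$ of width at most $k$ and reshape it into a canonical form in $\gS^{\mathsf{F}(k)}$ by three local rewriting steps. First, I would convert $T_0$ into a \emph{smooth} tree decomposition $T_1$: one in which every bag has exactly $k+1$ distinct vertices and any two adjacent bags share exactly $k$ vertices. This is a classical transformation, obtained by replacing each tree edge $(s,t)$ with $|\beta(s)\triangle\beta(t)|>2$ by a chain of intermediate bags differing by one vertex swap, and absorbing vertices from nearby bags into undersized ones along paths; the corner case $|V_F|<k+1$ is handled directly by taking the single-bag decomposition and padding with multiset repetitions. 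Next, I would root $T_1$ at any node $r_0$ and subdivide every tree edge: for a parent-child pair $(s,t)$, I insert a fresh intermediate node $m_{s,t}$ with bag $\beta(m_{s,t}):=\beta(s)\cap\beta(t)$, which has size exactly $k$ by smoothness and satisfies the multiset inclusions $\beta(m_{s,t})\subset\beta(s)$ and $\beta(m_{s,t})\subset\beta(t)$ automatically. Finally, to enforce the parity requirements, I prepend a fresh root $r$ with $\beta(r)$ any size-$k$ submultiset of $\beta(r_0)$, and for every remaining leaf at odd depth I attach a new child whose bag is any size-$k$ submultiset of its parent's bag.

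It is then routine to verify all clauses of Definition~\ref{def:canonical_tree_decomposition}: the depth is even, bag sizes alternate between $k$ at even depth and $k+1$ at odd depth, for every even-to-odd parent-child edge the multiset inclusion $\beta(\text{parent})\subset\beta(\text{child})$ holds by construction, and the tree-decomposition axioms from Definition~\ref{def:tree_decomposition}---edge coverage and vertex-bag connectedness---are inherited from $T_1$ because subdividing with an intersection bag and attaching bag-subsets never removes a vertex occurrence from the middle of a connected subtree. The main obstacle is the smoothing step: although the transformation to a smooth tree decomposition of width $k$ is classical, one must carefully verify that the inserted swap-chains preserve condition~(c) of Definition~\ref{def:tree_decomposition}; this is the only combinatorially delicate point, and it is what distinguishes this proof from the more intricate NED-based proofs for $\gS^\mathsf{L}$ and $\gS^\mathsf{LF}$ earlier in the appendix.
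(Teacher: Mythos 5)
Your construction is essentially what the paper treats as standard: the paper explicitly omits a proof of this statement (\cref{sec:higher-order_proof} says the proofs are "almost the same as in \cref{sec:proof_main}, so we omit them for clarity"), so there is no explicit comparison to draw. The forward implication is, as you say, immediate. Your converse — smooth the tree decomposition, root it, subdivide each edge by an intersection bag of size $k$, then prepend a root and append leaf-children to fix the depth parity and the even/odd bag-size alternation — correctly produces a canonical tree decomposition satisfying \cref{def:canonical_tree_decomposition} with width $k$: the multiset inclusion holds along every even--odd edge because the size-$k$ intersection bag sits between its two size-$(k+1)$ neighbours, edge coverage is inherited from the original bags, and vertex-subtree connectedness is preserved because each subdivision bag is exactly the intersection of its endpoints and each newly prepended/appended bag is a submultiset of its unique neighbour. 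Two corner cases deserve slightly more care than your sketch gives them: (i) when $\tw(F)<k$, the smooth decomposition exists only for the true treewidth, so you must pad each bag up to the target size, either with multiset repetitions (accepted by \cref{def:canonical_tree_decomposition}) or by consistently carrying extra vertices along connected subpaths so that \cref{def:tree_decomposition}(c) survives; and (ii) when $F$ is disconnected, gluing the per-component smooth decompositions requires inserting swap-chains between chosen bags so that each foreign vertex appears on a contiguous subpath terminating in its own component's subtree. Neither is an obstruction, and both are standard, but they should be flagged more explicitly than "handled directly" if you write this up in full.
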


\subsection{Expressivity gap between higher-order GNNs}
\label{sec:higher-order_expressivity_gap}

In this subsection, we show how homomorphism expressivity can be used to build a complete expressiveness hierarchy for higher-order GNNs as shown in \cref{thm:higher-order_comparison}.

\textbf{Gap between Subgraph $k$-GNN and Local $(k+1)$-GNN ($k\ge 1$)}. The counterexample graph is a $(k+1)\times (2k+2)$ grid consisting $(k+1)\times (2k+2)$ vertices. First, it is easy to see that the graph is not in $\gF^{\mathsf{Sub}(k)}$, i.e., deleting $k$ vertices of the graph cannot yield a forest. To see this, note that the graph consists of $k\times (2k+1)$ ``squares'', and each vertex is related to at most four squares. Therefore, deleting $k$ vertices cannot eliminate all squares when $k\ge 2$ (because $4k<k\times (2k+1)$). For the case of $k=1$, we clearly have that deleting one vertex cannot eliminate all squares.

\begin{wrapfigure}{r}{0.4\textwidth}
    \centering
    \vspace{-20pt}
    \includegraphics[width=0.4\textwidth]{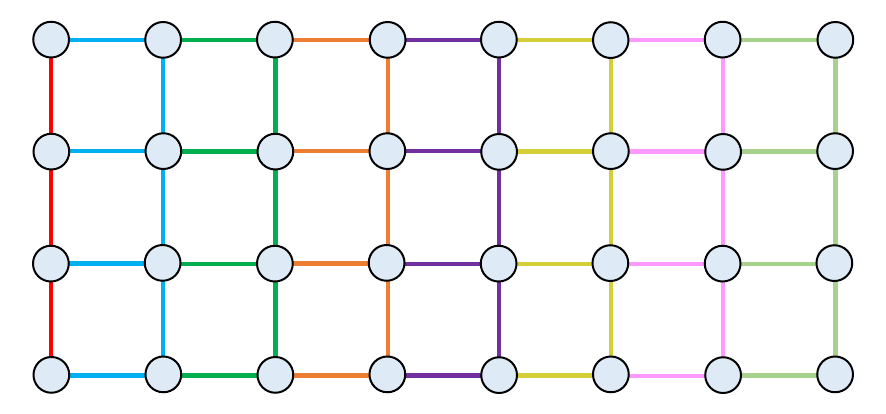}\\
    \vspace{-7pt}
    \caption{An 4-order strong NED of the $4\times 8$ grid graph.}
    \label{fig:proof_grid_graph}
    \vspace{-16pt}
\end{wrapfigure}

We next show that the $(k+1)\times (2k+2)$ grid is in $\gF^{\mathsf{L}(k+1)}$. This is also simple as shown in \cref{fig:proof_grid_graph}, where each color represents a $k$-order ear. It can be seen that the $(k+1)\times (2k+2)$ grid has a $(k+1)$-order strong NED.

\textbf{Relation between Subgraph $k$-GNN and $k$-FGNN ($k\ge 2$)}. To show that $\gF^{\mathsf{Sub}(k)}\not\subset\gF^{\mathsf{F}(k)}$, consider the $(k+2)$-clique. Clearly, deleting $k$ vertices from the $(k+2)$-clique yields a graph consisting of two vertices linked by an edge, which is a tree. On the other hand, the treewidth of a $(k+2)$-clique is $k+1$ (\cref{fact:treewidth}). So the $(k+2)$-clique is in $\gF^{\mathsf{Sub}(k)}$ but not in $\gF^{\mathsf{F}(k)}$.

To show that $\gF^{\mathsf{F}(k)}\not\subset\gF^{\mathsf{Sub}(k)}$, we can again use the grid graph, but this time consider the $k\times (2k+2)$ grid. On the one hand, a standard result in graph theory shows that the treewidth of a $a\times b$ grid graph is $\min(a,b)$. On the other hand, following the similar analysis above, we can prove that deleting $k$ vertices from the $k\times (2k+2)$ grid cannot eliminate all squares.

\textbf{Gap between Local $k$-GNN and $k$-FGNN ($k\ge 2$)}. The counterexample graph $F$ is the union of the following graphs $K_0\cup K_1\cup\cdots\cup K_k$, where $K_0$ is a $(k+1)$-clique with vertex set $\{u_1,\cdots,u_{k+1}\}$, and for $i\in[k+1]$, $K_i$ is a $(k+1)$-clique with vertex set $\{u_1,\cdots,u_{i-1},v_i,u_{i+1},\cdots,u_{k+1}\}$ where $v_i$ is a new vertex (not in $K_0$) and $v_i\neq v_j$ for $i\neq j$. Namely, each $K_i$ has $k$ common vertices with $K_0$. It is easy to construct a tree decomposition $T^r$ of $F$ such that $(F,T^r)\in \gS^{\mathsf{F}(k)}$. On the other hand, it is easy to see that the graph does not have a $k$-order strong NED (equivalently, one can easily check that $F$ does not admit a tree decomposition $T^r$ satisfying $(F,T^r)\in \gS^{\mathsf{L}(k)}$).

\textbf{Gap between $k$-FGNN and Local $(k+1)$-GNN ($k\ge 2$)}. The counterexample graph is again the $(k+1)\times (2k+2)$ grid. We have proved that the graph is in $\gF^{\mathsf{L}(k+1)}$ but not in $\gF^{\mathsf{L}(k)}$.

\textbf{Regarding Local $k$-IGN and \citet{frasca2022understanding}}. Finally, we remark that based on the above results, we essentially proved an open question raised in \citet{frasca2022understanding} regarding the expressive power of Local $k$-IGN (i.e., the $\mathsf{ReIGN}(k)$ proposed in their paper). Following \citet{zhang2023complete}, it is straightforward to see that Local $k$-IGN is as expressive as Local $k$-GNN. Therefore, Local $k$-IGN is strictly more expressive than $(k-1)$-FGNN and strictly less expressive than $k$-FGNN.

\section{Additional Discussions}

\subsection{Regarding the definition of homomorphism expressivity}
\label{sec:hom_discussion}

In this paper, we have shown that homomorphism expressivity exists for a variety of popular GNNs. Unfortunately, due to the ``iff'' statement in \cref{def:homo_expressivity}, homomorphism expressivity may not always be well-defined in the general case. We remark in this subsection that, there do exist pathological, intentionally designed GNNs such that the homomorphism expressivity is not well-defined. 

Consider a simple GNN $M$ that outputs the representation of a graph $G$ as follows. If $G$ is a cycle of odd length, it outputs $(1,L)$ where $L$ is the length of the cycle. Otherwise, it outputs $(0,\chi_G^\mathsf{MP}(G))$, namely, running a MPNN on the graph. It follows that $M$ is strictly more powerful than MPNN, e.g., it can distinguish between the 9-cycle and three triangles, which MPNN fails to distinguish. As a result, $M$ can count all trees under homomorphism. Moreover, it cannot count other patterns under homomorphism, as the counterexample graphs for MPNN (i.e., F{\"u}rer graphs) are not cycles of odd length, so they are still counterexample graphs for model $M$. Therefore, the homomorphism expressivity of $M$ should be exactly the family of forests if it exists. However, the homomorphism information of forests cannot determine the representation of model $M$ since $M$ is strictly more powerful than MPNN. So we conclude that $\gF^M$ does not exist.

Note that the above GNN construction is inherently unnatural and hardly appears in practice. Actually, when a GNN is defined via the message-passing paradigm, we suspect that its homomorphism expressivity is always well-defined. As stated in \cref{sec:open_problems}, we conjecture that, for any GNN characterized by a color refinement algorithm that outputs stable colors, the homomorphism expressivity always exists.

\subsection{Regarding the definition of NED}
\label{sec:ned_discussion}

In the main text, we have illustrated several types of NED with simple example graphs. To gain a deeper understanding of \cref{def:ned}, in this section we will present a few more complex examples (see \cref{fig:more_examples}). Notably, unlike graphs in \cref{fig:ned}(b), for all graphs in \cref{fig:more_examples} their NED contains ears such that only one endpoint is in its nested ear (e.g., ears 2 and 4 in \cref{fig:more_examples}(a)). In other words, these ears have empty nested intervals.

The presence of ears with empty nested interval stems from the fact that the corresponding graph is not \emph{biconnected} \citep{zhang2023rethinking}. Indeed, one can check that all graphs in \cref{def:ned}(b) do not have cut vertices, while all graphs in \cref{fig:more_examples} have cut vertices. Moreover, in these examples, the number of ears with empty nested interval always equals to the number of biconnected components minus one. In particular, for biconnected graphs, it can proved that any NED does not contain an ear with empty nested interval.

\begin{figure}[t]
    \centering
    \small
    \begin{tabular}{ccc}
        \includegraphics[height=0.12\textwidth]{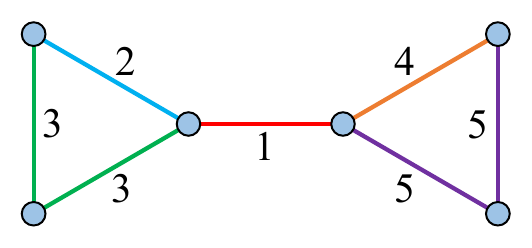} & \includegraphics[height=0.1\textwidth]{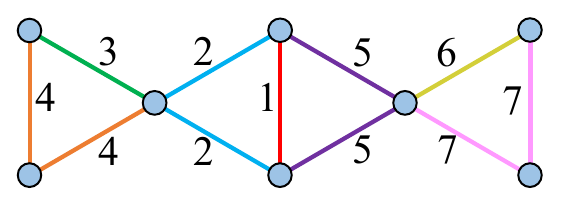} & \includegraphics[height=0.13\textwidth]{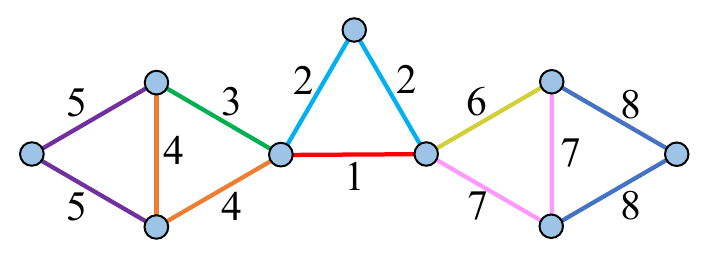} \\
        (a) & (b) & (c)
    \end{tabular}
    \caption{More illustration of NED. All NEDs in these example graphs are strong but not endpoint-shared. See \cref{sec:ned_discussion} for additional discussions.}
    \label{fig:more_examples}
\end{figure}

\section{Homomorphism and Subgraph Counting Power}
\label{sec:subgraph_count_details}

\subsection{Proof of \cref{thm:subgraph_count}}
\label{sec:proof_subgraph_count_main}

Our proof draws inspiration from a recent technique developed in \citet{seppelt2023logical}. We note that the original results in \citet{seppelt2023logical} are described for unlabeled graphs, but it is easy to extend these results to labeled graphs. To begin with, we define a concept called the graph categorical product.

\begin{definition}
    The categorical product of two graphs $G,H$, denoted as $G \times H$, is a graph where $V_{G \times H} = V_G \times V_H$, $\ell_{G\times H}(g,h)=(\ell_G(g),\ell_H(h))$ for all $g\in V_G$, $h\in V_H$, and $E_{G\times H}=\{\{(g,h),(g^\prime, h^\prime)\}: \{g,g^\prime\} \in E_G,\{h,h^\prime\} \in E_H\}$.
\end{definition}

\begin{lemma}[\citet{seppelt2023logical}]
\label{thm:categorical_product_hom_counting}
    For any graphs $F$, $G$, and $H$, $\hom(F,G \times H) = \hom(F,G) \cdot \hom(F,H)$.
\end{lemma}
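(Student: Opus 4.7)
The plan is to prove the identity by constructing an explicit bijection
\[
\Phi: \Hom(F, G \times H) \longrightarrow \Hom(F, G) \times \Hom(F, H),
\]
and then taking cardinalities to obtain $\hom(F, G \times H) = \hom(F, G) \cdot \hom(F, H)$.

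First, I would define $\Phi$ using the two canonical projections $\pi_G : V_{G \times H} \to V_G$ and $\pi_H : V_{G \times H} \to V_H$ given by $\pi_G(g,h) = g$ and $\pi_H(g,h) = h$. Sending $f \in \Hom(F, G \times H)$ to the pair $(\pi_G \circ f,\ \pi_H \circ f)$, I would then check that each component is itself a homomorphism. Edge preservation is immediate from the defining edge condition of the categorical product: if $\{u,v\} \in E_F$, then $\{f(u), f(v)\} \in E_{G \times H}$ unpacks by definition of $G \times H$ to both $\{\pi_G(f(u)), \pi_G(f(v))\} \in E_G$ and $\{\pi_H(f(u)), \pi_H(f(v))\} \in E_H$. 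Label preservation follows from $\ell_{G \times H}(g,h) = (\ell_G(g), \ell_H(h))$; if the labeling of $F$ is not compatible with this product-of-labels structure, then both sides of the claimed identity are zero and there is nothing to prove.

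Second, I would write down the inverse map explicitly: given $(f_G, f_H) \in \Hom(F, G) \times \Hom(F, H)$, set $f(v) := (f_G(v), f_H(v))$. The same edge condition that decomposed $f$ into its two components recombines $f_G$ and $f_H$ into a valid homomorphism from $F$ into $G \times H$, and verifying $\Phi \circ \Phi^{-1} = \mathrm{id}$ and $\Phi^{-1} \circ \Phi = \mathrm{id}$ is a one-line check since the projection-then-pairing and pairing-then-projection operations are mutually inverse on $V_G \times V_H$.

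I do not foresee a substantive obstacle: the statement is essentially the universal property of the categorical product specialized to homomorphism counts, and all verifications reduce to unwinding the definition of $E_{G \times H}$. The only point warranting care is the label convention, which is handled uniformly by the product label $(\ell_G(g), \ell_H(h))$ as noted above.
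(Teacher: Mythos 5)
Your proposal is correct and matches the paper's proof in essence: both construct the same explicit bijection between $\Hom(F, G\times H)$ and $\Hom(F,G)\times\Hom(F,H)$, with the paper defining it in the pairing direction $(\sigma_1,\sigma_2)\mapsto(v\mapsto(\sigma_1(v),\sigma_2(v)))$ while you lead with the projection direction and then present the pairing map as its inverse. The extra care you take with label compatibility is a reasonable elaboration of the same argument, not a different route.
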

\begin{proof}
    We define a mapping $\tau$ from $\Hom(F,G) \times \Hom(F,H)$ to $\Hom(F,G \times H)$ as follows. for all $\sigma_1 \in \Hom(F,G)$ and $\sigma_2 \in \Hom(F,H)$, define $\rho = \tau(\sigma_1, \sigma_2)$ where $\rho(f) = (\sigma_1(f), \sigma_2(f))$ for all $f \in V_F$. It is easy to see that $\tau$ is a bijective mapping from $\Hom(F,G) \times \Hom(F,H)$ to $\Hom(F,G \times H)$.
\end{proof}

\begin{definition}
    Given graphs $F$ and $G$, denote by $\Surj(F,G)$ the set of all homomorphisms from $F$ to $G$ that are surjective on both the vertices and edges of $G$, and define $\surj(F,G)=|\Surj(F,G)|$.
\end{definition}

The following proposition is straightforward (similar to \cref{thm:bIsoHom}):
\begin{proposition}
\label{thm:hom_surj_sub}
    $\hom(G,H) = \sum_{F}\surj(G,F)\cdot\sub(F,H)$, where $F$ ranges over all non-isomorphic graphs.
\end{proposition}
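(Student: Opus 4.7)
The plan is to establish a bijection between $\Hom(G,H)$ and the disjoint union, over all isomorphism classes of graphs $F$, of pairs $(F', \sigma)$ where $F' \in \Sub(F,H)$ and $\sigma \in \Surj(G, F')$. Given a homomorphism $\varphi \in \Hom(G,H)$, I would define its \emph{image graph} $F'_\varphi$ to be the subgraph of $H$ with vertex set $\varphi(V_G)$ and edge set $\{\{\varphi(u),\varphi(v)\} : \{u,v\} \in E_G\}$, inheriting labels from $H$. By construction, $\varphi$ factors as $G \xrightarrow{\sigma} F'_\varphi \hookrightarrow H$ where $\sigma$ coincides with $\varphi$ on vertices and is surjective on both vertices and edges of $F'_\varphi$, and the second map is the subgraph inclusion.

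For the converse direction, given any subgraph $F' \subseteq H$ and any $\sigma \in \Surj(G, F')$, composition with the inclusion $F' \hookrightarrow H$ yields a homomorphism $\widehat\sigma \in \Hom(G,H)$. I would then verify that these two constructions are mutually inverse: starting from $\varphi$, extracting $(F'_\varphi, \sigma)$, and composing back recovers $\varphi$; conversely, starting from $(F',\sigma)$, the image graph of $\widehat\sigma$ is exactly $F'$ (since $\sigma$ is surjective on vertices and edges) and the induced surjection is $\sigma$.

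This bijection gives
\begin{equation*}
    \hom(G,H) \;=\; \sum_{F' \subseteq H} \surj(G, F'),
\end{equation*}
where the sum is over subgraphs $F'$ of $H$. Since $\surj(G, F')$ depends only on the isomorphism type of $F'$, I would then group subgraphs of $H$ by isomorphism class: the number of $F' \subseteq H$ with $F' \simeq F$ is exactly $\sub(F,H)$, yielding
\begin{equation*}
    \hom(G,H) \;=\; \sum_{F} \surj(G, F) \cdot \sub(F,H),
\end{equation*}
as claimed. Note that $\surj(G,F) = 0$ whenever $|V_F| > |V_G|$ or $|E_F| > |E_G|$, so the sum is effectively finite.

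The only subtle point is the labeled-graph bookkeeping: I need to confirm that the image graph $F'_\varphi$ inherits well-defined labels from $H$ (which it does, since it is an induced-edge subgraph of $H$), and that homomorphism preservation of labels ensures $\sigma: G \to F'_\varphi$ is indeed a label-preserving homomorphism. Beyond this, the argument is a routine combinatorial factorization and I do not anticipate any essential obstacle.
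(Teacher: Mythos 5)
Your proof is correct and matches what the paper intends: the paper declares the result ``straightforward (similar to Lemma C.10)'' without writing out a proof, and that cited lemma is itself a factor-through-the-image argument of exactly this kind. Your version is a clean, direct instance of that factorization, with the automorphism-division that appears in Lemma C.10 implicitly absorbed because you sum over concrete subgraphs of $H$ (i.e., $\sub(F,H)$) rather than over injective homomorphisms into $H$.
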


\begin{lemma}
\label{thm:subgraph_counting_main_lemma}
Let $M$ be a GNN model such that its homomorphism expressivity $\gF^M$ exists. Given a finite set of graphs $\gL$ and a function $\alpha: \gL \rightarrow \mathbb{R}/\{0\}$, if
\begin{align*}
\chi^M_G(G)=\chi^M_H(H) \implies \sum_{L \in \gL} \alpha(L) \hom(L, G) = \sum_{L \in \gL} \alpha(L) \hom(L, H),
\end{align*}
holds for all graphs $G$ and $H$, then $\gL \subset \gF^M$.
\end{lemma}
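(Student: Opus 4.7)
The plan is to argue by contrapositive of the maximality clause in Definition~\ref{def:homo_expressivity}(b): I will show that for every pair of graphs $G_0, H_0$ with $\chi^M_{G_0}(G_0) = \chi^M_{H_0}(H_0)$ and every $L \in \gL$, one has $\hom(L, G_0) = \hom(L, H_0)$. Once this is established, Definition~\ref{def:homo_expressivity}(b) immediately forces $L \in \gF^M$ for each $L \in \gL$, giving $\gL \subset \gF^M$. So I fix such a pair $(G_0, H_0)$ and aim to derive $\hom(L, G_0) = \hom(L, H_0)$ uniformly over $L \in \gL$.

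The core idea is to enlarge this single pair into an entire family of indistinguishable pairs by taking categorical products with arbitrary auxiliary graphs. For any graph $G^*$, I plan to apply the ``$\Rightarrow$'' direction of Definition~\ref{def:homo_expressivity}(a) to the pair $(G_0, H_0)$ and then Lemma~\ref{thm:categorical_product_hom_counting} to obtain, for every $F \in \gF^M$,
\begin{equation*}
\hom(F, G_0 \times G^*) = \hom(F, G_0)\,\hom(F, G^*) = \hom(F, H_0)\,\hom(F, G^*) = \hom(F, H_0 \times G^*).
\end{equation*}
The ``$\Leftarrow$'' direction of Definition~\ref{def:homo_expressivity}(a) then yields $\chi^M_{G_0 \times G^*}(G_0 \times G^*) = \chi^M_{H_0 \times G^*}(H_0 \times G^*)$. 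Invoking the hypothesis of the lemma on this new pair, and expanding the homomorphism counts a second time via Lemma~\ref{thm:categorical_product_hom_counting}, gives
\begin{equation*}
\sum_{L \in \gL} \alpha(L)\bigl[\hom(L, G_0) - \hom(L, H_0)\bigr]\,\hom(L, G^*) = 0
\end{equation*}
for every graph $G^*$.

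The final step is to appeal to linear independence of the family $\{\hom(L,\cdot) : L \in \gL\}$ over the class of all finite graphs, which forces each coefficient $\alpha(L)[\hom(L,G_0) - \hom(L,H_0)]$ to vanish; since $\alpha(L) \neq 0$, this gives $\hom(L, G_0) = \hom(L, H_0)$ as required. The main obstacle is precisely this linear independence statement: it is the classical theorem of \citet{lovasz1967operations}, but for self-containedness I would re-derive it from Proposition~\ref{thm:hom_surj_sub}, which expands $\hom(L, G^*) = \sum_{L'} \surj(L, L')\,\sub(L', G^*)$. Ordering a finite set of non-isomorphic graphs by $(|V|,|E|)$ makes the transition matrix $[\surj(L_i, L_j)]$ triangular with strictly positive diagonal entries $\surj(L,L) = \aut(L)$, so the claim reduces to the analogous linear independence of the subgraph-count functions $\{\sub(L,\cdot)\}$, which in turn follows from a standard triangular argument by evaluating on test graphs $G^* = L$ of decreasing size.
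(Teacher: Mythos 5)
Your proposal takes essentially the same route as the paper: fix a pair $G_0,H_0$ with $\chi^M_{G_0}(G_0)=\chi^M_{H_0}(H_0)$, use Lemma~\ref{thm:categorical_product_hom_counting} together with Definition~\ref{def:homo_expressivity}(a) to show that $G_0\times K$ and $H_0\times K$ are also indistinguishable for every auxiliary graph $K$, plug the resulting family of identities into the hypothesis, and then invert the Gram-type system coming from Proposition~\ref{thm:hom_surj_sub} (the paper phrases this as $\mA^{\hom}=\mA^{\surj}\mA^{\sub}$ being a product of triangular matrices with nonzero diagonals, hence invertible, which is exactly your "triangular argument on test graphs of decreasing size"), and finally conclude $\gL\subset\gF^M$ from the maximality clause (b). The only cosmetic difference is that the paper pads $\gL$ to the set $\tilde\gL$ of all graphs of bounded size with coefficients $\tilde\alpha$ set to zero outside $\gL$, so that it can write down a genuinely square, invertible matrix; you gesture at the same idea via linear independence of $\{\hom(L,\cdot)\}$, which is equivalent once unwound.
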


\begin{proof}
Let $n$ be the largest number of vertices for all graphs in $\gL$, and let $\tilde\gL$ be the set of all non-isomorphic graphs with no more than $n$ vertices. We can arrange all graphs in $\tilde\gL$ into a sequence $L_1, L_2, \ldots, L_N$ satisfying the following property: $|V_{L_i}|\leq|V_{L_{i+1}}|$ or ($|V_{L_i}|=|V_{L_{i+1}}|$ and $|E_{L_i}|\leq|E_{L_{i+1}}|$) for all $i=1,2,\ldots,N-1$ . We then define the matrices $\mA^{\hom}$, $\mA^{\surj}$, and $\mA^{\sub}$, where the elements in the $i^{th}$ row and $j^{th}$ column are $\hom(L_i, L_j)$, $\surj(L_i, L_j)$, and $\sub(L_i, L_j)$, respectively. \cref{thm:hom_surj_sub} implies that $\mA^\hom=\mA^{\surj}\cdot\mA^{\sub}$. Since $\mA^{\surj}$ is a lower triangular matrix with non-zero diagonal elements and $\mA^{\sub}$ is an upper triangular matrix with non-zero diagonal elements, the matrix $\mA^{\hom}$ is invertible.

We next extend $\alpha$ to a function $\tilde\alpha: \tilde\gL \rightarrow \mathbb{R}$ by setting $\tilde\alpha(L) = \alpha(L)$ for all $L \in \gL$ and $\tilde\alpha(\tilde L) = 0$ for all $\tilde L \in \tilde\gL\backslash \gL$. Additionally, if $\chi^M_G(G)=\chi^M_H(H)$, then $\hom(F,G)=\hom(F,H)$ for all $F\in\gF^M$ (by definition of homomorphism expressivity). Then, given any graph $K\in\tilde\gL$, \cref{thm:categorical_product_hom_counting} implies that $\hom(F,G\times K)=\hom(F,H\times K)$ for all $F\in\gF^M$. This further implies that $\chi^M_{G \times K}(G \times K)=\chi^M_{H \times K}(H \times K)$ by definition of homomorphism expressivity. Therefore, \cref{thm:categorical_product_hom_counting} implies that
\begin{equation}
\sum_{L \in \gL} \alpha(L) \hom(L, G) \cdot \hom(L, K) = \sum_{L \in \gL} \alpha(L) \hom(L, H) \cdot \hom(L, K).
\end{equation}
Namely,
\begin{equation}
\sum_{L \in \tilde\gL} \tilde\alpha(L) \hom(L, G) \cdot \hom(L, K) = \sum_{L \in \tilde\gL} \tilde\alpha(L) \hom(L, H) \cdot \hom(L, K).
\end{equation}
Now define vectors $\vp_G$ and $\vp_H$, where the $i^{th}$ element of vector $\vp_G$ and $\vp_H$ is $\tilde\alpha(L_i) \hom(L_i, G)$ and $\tilde\alpha(L_i) \hom(L_i, H)$, respectively. We then have the following equation:
\begin{align*}
\mA^{\hom} \cdot \vp_G = \mA^{\hom} \cdot \vp_H.
\end{align*}
Since $\mA^{\hom}$ is invertible, $\vp^{\hom}_G = \vp^{\hom}_H$. Therefore, $\tilde\alpha(L) \hom(L, G) = \tilde\alpha(L) \hom(L, H)$ for all $L \in \tilde\gL$, namely, $\hom(L, G) = \hom(L, H)$ for all $L \in \gL$. To sum up, we have prove that for all graph $G,H$, $\chi^M_G(G)=\chi^M_H(H)$ implies that $\hom(L, G) = \hom(L, H)$ for all $L \in \gL$. By definition of homomorphism expressivity, we conclude that $\gL \subset \gF^M$.
\end{proof}
\begin{theorem}
Let $M \in \{\mathsf{Sub}, \mathsf{L}, \mathsf{LF}, \mathsf{F}\}$ be any model. For any graph $F$, if $ \Spasm(F)\backslash\gF^M\neq\emptyset$, then there exists a pair of graphs $G,H$ such that $\chi^M_G(G)=\chi^M_H(H)$ and $\sub(F, G) \neq \sub(F, H)$.
\end{theorem}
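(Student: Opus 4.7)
The plan is to prove the contrapositive using the key auxiliary lemma (\cref{thm:subgraph_counting_main_lemma}) together with the classical spasm expansion of subgraph count in terms of homomorphism counts. Concretely, I would assume that for every pair of graphs $G, H$ with $\chi^M_G(G) = \chi^M_H(H)$ one has $\sub(F,G) = \sub(F,H)$, and derive $\Spasm(F) \subset \gF^M$, contradicting the hypothesis $\Spasm(F) \setminus \gF^M \neq \emptyset$.

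The first step is to recall the spasm identity discussed in \cref{sec:subgraph}: for every graph $G$,
\begin{equation*}
\sub(F,G) = \sum_{\tilde F \in \Spasm^{\not\simeq}(F)} \alpha(F, \tilde F) \cdot \hom(\tilde F, G),
\end{equation*}
where the coefficients $\alpha(F,\tilde F)$ are all nonzero and independent of $G$. Taking the difference for $G$ and $H$ and invoking the contrapositive assumption, I get that $\chi^M_G(G) = \chi^M_H(H)$ implies
\begin{equation*}
\sum_{\tilde F \in \Spasm^{\not\simeq}(F)} \alpha(F, \tilde F) \cdot \hom(\tilde F, G) = \sum_{\tilde F \in \Spasm^{\not\simeq}(F)} \alpha(F, \tilde F) \cdot \hom(\tilde F, H).
\end{equation*}

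The second step is to apply \cref{thm:subgraph_counting_main_lemma} with $\gL := \Spasm^{\not\simeq}(F)$, which is finite, and with the weight function $\tilde F \mapsto \alpha(F, \tilde F) \in \mathbb{R} \setminus \{0\}$. The lemma immediately yields $\Spasm^{\not\simeq}(F) \subset \gF^M$, and since $\gF^M$ is closed under isomorphism this is equivalent to $\Spasm(F) \subset \gF^M$, contradicting the assumption. Hence the desired pair $G,H$ must exist.

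I do not expect a serious obstacle here: both the spasm identity and \cref{thm:subgraph_counting_main_lemma} are already available, and the whole argument is essentially a one-line contrapositive once these are in place. The only thing worth double-checking is that $\Spasm(F)$ can legitimately be replaced by a finite set of representatives $\Spasm^{\not\simeq}(F)$ when feeding it into the lemma (so that its finiteness hypothesis on $\gL$ is met), and that setting $\alpha$ to the nonzero spasm coefficients indeed fits the hypothesis of the lemma. Both are immediate from the definitions in \cref{sec:preliminary,sec:subgraph}.
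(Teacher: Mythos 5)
Your proposal is correct and follows essentially the same argument as the paper: contrapositive, spasm expansion of $\sub(F,\cdot)$ into a finite nonzero-weighted sum of $\hom(\tilde F,\cdot)$ terms, then \cref{thm:subgraph_counting_main_lemma} to conclude $\Spasm(F)\subset\gF^M$. The two points you flag for double-checking (finiteness via $\Spasm^{\not\simeq}(F)$, nonzero coefficients) are indeed the only hypotheses of the lemma that need verifying, and both hold.
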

\begin{proof}
Recall that $\sub(F,G)=\sum_{\tilde F\in\Spasm^{\not\simeq}(F)}\alpha(F,\tilde F)\cdot\hom(\tilde F,G)$ where $\alpha(F,\tilde F)\neq 0$ for all $\tilde F\in\Spasm^{\not\simeq}(F)$. If the above theorem does not hold, then for all graphs $G,H$, $\chi_G^M(G)=\chi_H^M(H)$ implies that $$\sum_{\tilde F\in\Spasm^{\not\simeq}(F)}\alpha(F,\tilde F)\cdot\hom(\tilde F,G)=\sum_{\tilde F\in\Spasm^{\not\simeq}(F)}\alpha(F,\tilde F)\cdot\hom(\tilde F,G).$$
Then, \cref{thm:subgraph_counting_main_lemma} implies that $\Spasm(F) \subset \gF^M$, yielding a contradiction.
\end{proof}

We next extend our analysis to the node/edge-level subgraph counting. Since the proof techniques are almost the same as the graph-level setting, we only present key definitions and lemmas below while omitting the detailed proofs for clarity.

\begin{definition}
    Given two rooted graphs $G^\vu$ and $H^\vv$ with $\vu\in V_G^m$ and $\vv\in V_H^m$ for some $m\in \mathbb N_+$, the categorical product of $G^\vu$ and $H^\vv$, denoted as $(G \times H)^{(u_1,v_1),\cdots,(u_m,v_m)}$, is a rooted graph obtained from $G\times H$ by marking vertices $(u_1,v_1),\cdots,(u_m,v_m)$.
\end{definition}
\begin{lemma}[Extension of \cref{thm:categorical_product_hom_counting}]
    For any rooted graphs $F^\vw$, $G^\vu$, $H^\vv$ where $|\vu|=|\vv|=|\vw|=m$, $$\hom(F^\vw,(G \times H)^{(u_1,v_1),\cdots,(u_m,v_m)}) = \hom(F^\vw,G^\vu) \cdot \hom(F^\vw,H^\vv).$$
\end{lemma}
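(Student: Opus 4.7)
The plan is to mimic the proof of the unrooted version (Lemma on $\hom(F,G\times H)$ above) and simply verify that the root-preservation constraint is respected by the canonical bijection. Concretely, I would define the map $\tau\colon \Hom(F^\vw,G^\vu)\times \Hom(F^\vw,H^\vv)\to \Hom(F^\vw,(G\times H)^{(u_1,v_1),\ldots,(u_m,v_m)})$ by the same formula as before: given $\sigma_1\in\Hom(F^\vw,G^\vu)$ and $\sigma_2\in\Hom(F^\vw,H^\vv)$, let $\tau(\sigma_1,\sigma_2)(f)=(\sigma_1(f),\sigma_2(f))$ for all $f\in V_F$. The unrooted lemma already guarantees that this yields a homomorphism $F\to G\times H$ preserving edges and labels (using the product label $\ell_{G\times H}(g,h)=(\ell_G(g),\ell_H(h))$), and that the underlying map on unrooted homomorphisms is a bijection.

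Next I would check the root compatibility. By definition of homomorphism between rooted graphs, $\sigma_1\in\Hom(F^\vw,G^\vu)$ forces $\sigma_1(w_i)=u_i$ and similarly $\sigma_2(w_i)=v_i$ for all $i\in[m]$. Hence $\tau(\sigma_1,\sigma_2)(w_i)=(u_i,v_i)$, which is exactly the marked vertex of $(G\times H)^{(u_1,v_1),\ldots,(u_m,v_m)}$, so $\tau(\sigma_1,\sigma_2)$ is a valid rooted homomorphism. For the inverse direction, given $\rho\in\Hom(F^\vw,(G\times H)^{(u_1,v_1),\ldots,(u_m,v_m)})$, write $\rho(f)=(\rho_1(f),\rho_2(f))$ using the two coordinate projections. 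The unrooted proof shows $\rho_1$ and $\rho_2$ are homomorphisms $F\to G$ and $F\to H$ respectively; the rooting condition $\rho(w_i)=(u_i,v_i)$ then forces $\rho_1(w_i)=u_i$ and $\rho_2(w_i)=v_i$, so these projections are rooted homomorphisms. Injectivity and surjectivity of $\tau$ are immediate from the coordinate-wise construction.

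There is no genuine obstacle here: the bijection from the unrooted lemma transports verbatim, and the only thing to verify is that marked vertices map to marked vertices, which is automatic because both factors of the product separately preserve the marking. Taking cardinalities on both sides of the bijection then yields
\[
\hom(F^\vw,(G\times H)^{(u_1,v_1),\ldots,(u_m,v_m)})=\hom(F^\vw,G^\vu)\cdot \hom(F^\vw,H^\vv),
\]
as required. The whole proof should fit in a few lines, essentially saying ``apply the proof of the previous lemma and observe that the root constraints are preserved coordinate-wise.''
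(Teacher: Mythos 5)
Your proof is correct and takes the same approach the paper intends: extend the componentwise bijection from the unrooted \cref{thm:categorical_product_hom_counting} and check that the marking constraints are preserved in each coordinate, which is immediate. The paper explicitly omits this proof as routine, and your argument is exactly the routine verification.
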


\begin{proposition}[Extension of \cref{thm:hom_surj_sub}]
    For any rooted graphs $G^\vu$, $H^\vv$ where $|\vu|=|\vv|=m$, $\hom(G^\vu,H^\vv) = \sum_{F^\vw}\surj(G^\vu,F^\vw)\cdot\sub(F^\vw,H^\vv)$, where $F^\vw$ ranges over all non-isomorphic rooted graphs marking $m$ vertices.
\end{proposition}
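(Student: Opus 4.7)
The plan is to extend the bijective argument behind the unrooted identity (Proposition \cref{thm:hom_surj_sub}) to the rooted setting; essentially no new ideas are required beyond tracking the distinguished vertices. First, I will observe that any $h\in\Hom(G^\vu,H^\vv)$ is in particular a homomorphism $G\to H$ satisfying $h(u_i)=v_i$ for all $i$. Following the standard ``factor through the image'' recipe, I associate to each such $h$ its \emph{image rooted subgraph} $F_h\subset H$, defined on vertex set $h(V_G)$ with edge set $\{\{h(a),h(b)\}:\{a,b\}\in E_G\}$, labels inherited from $H$, and marked vertex tuple $(v_1,\ldots,v_m)$. By construction $F_h$ is a rooted subgraph of $H^\vv$, and $h$ corestricts to a surjective rooted homomorphism $g_h:G^\vu\to F_h$.

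Second, I will set up a bijection
\[
\Hom(G^\vu,H^\vv)\;\longleftrightarrow\;\bigsqcup_{F'}\Surj(G^\vu,F'),
\]
where the disjoint union ranges over all rooted subgraphs $F'$ of $H^\vv$ marking $(v_1,\ldots,v_m)$. The forward direction sends $h\mapsto(F_h,g_h)$, and the inverse sends a pair $(F',g)$ to the composition of $g$ with the inclusion $F'\hookrightarrow H$. Both directions manifestly preserve the condition $h(u_i)=v_i$, since any surjection $g$ in the disjoint union maps $u_i$ to $v_i\in V_{F'}$ by definition of a rooted homomorphism. Counting yields $\hom(G^\vu,H^\vv)=\sum_{F'}\surj(G^\vu,F')$.

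Third, I will regroup the right-hand side by isomorphism type. For any rooted subgraph $F'$ of $H^\vv$ isomorphic to a fixed representative $F^\vw$, any rooted isomorphism $\phi:F'\to F^\vw$ induces a bijection $\Surj(G^\vu,F')\cong\Surj(G^\vu,F^\vw)$ via $g\mapsto\phi\circ g$, so the quantity $\surj(G^\vu,F')$ depends only on the rooted isomorphism type of $F'$. Recalling that $\sub(F^\vw,H^\vv)$ is precisely the number of rooted subgraphs of $H^\vv$ isomorphic to $F^\vw$, grouping the sum over subgraphs $F'$ by their isomorphism class gives
\[
\hom(G^\vu,H^\vv)\;=\;\sum_{F^\vw}\surj(G^\vu,F^\vw)\cdot\sub(F^\vw,H^\vv),
\]
as required. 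There is no genuine obstacle here beyond careful bookkeeping; the only point meriting attention is that every step of the factorization $h=(\text{inclusion})\circ g_h$ respects the vertex markings, which is automatic from $h(u_i)=v_i$ and from rooted subgraphs of $H^\vv$ being required to mark the tuple $\vv$.
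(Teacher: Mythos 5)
Your proof is correct, and it is the natural ``factor through the image'' argument that the paper implicitly has in mind (the paper omits the proof, remarking only that it extends the unrooted case, which in turn is ``similar to'' the proof of Lemma~\ref{thm:bIsoHom}). One small stylistic point worth noting: the paper's Lemma~\ref{thm:bIsoHom} accounts for the intermediate object via injective maps and an automorphism denominator, whereas you sum over \emph{embedded} subgraphs of $H^\vv$ first and then group by isomorphism class; this is exactly why no $\aut$ factor appears in the identity as stated, and your route is the cleaner one for this particular formula. The key checks that make the bijection $\Hom(G^\vu,H^\vv)\leftrightarrow\bigsqcup_{F'}\Surj(G^\vu,F')$ work in the rooted setting---that $h(u_i)=v_i$ forces the roots to lie in the image, that edge-surjectivity of $g$ gives $F_{\iota\circ g}=F'$, and that composing with a rooted isomorphism $\phi$ gives a bijection $\Surj(G^\vu,F')\cong\Surj(G^\vu,F^\vw)$---are all present, so the argument is complete.
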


We next present the main lemma for node-level subgraph counting. We omit the edge-level result for clarity.

\begin{lemma}[Extension of \cref{thm:subgraph_counting_main_lemma}]
Let $M$ be a GNN model such that its node-level homomorphism expressivity $\gF^M_\mathsf{n}$ exists. Given a finite set of rooted graphs $\gL_\mathsf{n}$ and a function $\alpha: \gL_\mathsf{n} \rightarrow \mathbb{R}/\{0\}$, if
\begin{align*}
\chi^M_G(u)=\chi^M_H(v) \implies \sum_{L^w \in \gL_\mathsf{n}} \alpha(L^w) \hom(L^w, G^u) = \sum_{L^w \in \gL_\mathsf{n}} \alpha(L^w) \hom(L^w, H^v),
\end{align*}
holds for all rooted graphs $G^u$ and $H^v$, then $\gL_\mathsf{n} \subset \gF^M_\mathsf{n}$.
\end{lemma}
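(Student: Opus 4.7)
The plan is to mirror the graph-level proof of \cref{thm:subgraph_counting_main_lemma}, with one extra step to handle the fact that the categorical product is typically disconnected. Two algebraic ingredients transfer verbatim. First, the rooted multiplicativity $\hom(F^w,(G\times K)^{(u,x)})=\hom(F^w,G^u)\cdot\hom(F^w,K^x)$ (the extension of \cref{thm:categorical_product_hom_counting} stated just above) follows from the same bijection $\rho\leftrightarrow(\sigma_1,\sigma_2)$, now with the added constraints $\sigma_1(w)=u$ and $\sigma_2(w)=x$. Second, enumerating non-isomorphic connected rooted graphs $L_1^{w_1},L_2^{w_2},\ldots$ in $(|V|,|E|)$-order makes the rooted $\mA^{\surj}$ lower-triangular and $\mA^{\sub}$ upper-triangular with nonzero diagonals, so the rooted hom/surj/sub identity yields $\mA^{\hom}=\mA^{\surj}\cdot\mA^{\sub}$, and $\mA^{\hom}$ is invertible exactly as in the graph-level argument.

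The technical heart is to bootstrap from $\chi^M_G(u)=\chi^M_H(v)$ on connected $G,H$ to an equality of node colors on the possibly disconnected rooted products $(G\times K)^{(u,x)}$ and $(H\times K)^{(v,x)}$. Fix any connected rooted $K^x$ and let $C_1\subset G\times K$ and $C_2\subset H\times K$ be the connected components of $(u,x)$ and $(v,x)$ respectively. Because every $F^w\in\gF_\mathsf{n}^M$ is connected, every root-preserving homomorphism from $F^w$ into $(G\times K)^{(u,x)}$ has its image entirely in $C_1$, so $\hom(F^w,(G\times K)^{(u,x)})=\hom(F^w,C_1^{(u,x)})$, and an identical identity holds on the $H$-side. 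Combining this with rooted multiplicativity and the characterization of $\gF_\mathsf{n}^M$ by clause (a) of \cref{def:node/edge-homo_expressivity} yields $\hom(F^w,C_1^{(u,x)})=\hom(F^w,C_2^{(v,x)})$ for every $F^w\in\gF_\mathsf{n}^M$. Since $C_1,C_2$ are connected, \cref{def:node/edge-homo_expressivity} gives $\chi^M_{C_1}((u,x))=\chi^M_{C_2}((v,x))$; this equals $\chi^M_{G\times K}((u,x))=\chi^M_{H\times K}((v,x))$ because for every GNN model considered here the node color at a vertex is determined by its connected component.

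Applying the hypothesis of the lemma to the pair $(G\times K)^{(u,x)},(H\times K)^{(v,x)}$ and using multiplicativity once more yields
\[\sum_{L^w\in\gL_\mathsf{n}}\alpha(L^w)\,\hom(L^w,G^u)\cdot\hom(L^w,K^x)=\sum_{L^w\in\gL_\mathsf{n}}\alpha(L^w)\,\hom(L^w,H^v)\cdot\hom(L^w,K^x)\]
for every connected rooted $K^x$. Extending $\alpha$ by $0$ to the finite set of non-isomorphic connected rooted graphs of size at most $n:=\max_{L^w\in\gL_\mathsf{n}}|V_L|$ and stacking the above equations as $K^x$ ranges over this set produces a matrix equation $\mA^{\hom}\vp_G=\mA^{\hom}\vp_H$; invertibility of $\mA^{\hom}$ then forces $\hom(L^w,G^u)=\hom(L^w,H^v)$ for every $L^w\in\gL_\mathsf{n}$ whenever $\chi^M_G(u)=\chi^M_H(v)$ on connected $G,H$. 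The maximality clause (b) of \cref{def:node/edge-homo_expressivity} places each such $L^w$ in $\gF_\mathsf{n}^M$, giving $\gL_\mathsf{n}\subset\gF_\mathsf{n}^M$. The main obstacle throughout is the disconnectedness of the categorical product, and the component-restriction step above is the essential new ingredient compared with the graph-level proof.
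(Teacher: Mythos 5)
Your overall strategy—rooted multiplicativity, the rooted $\hom=\surj\cdot\sub$ factorization with triangular matrices, and restriction to the connected component of the root in the (generally disconnected) categorical product—is exactly what the paper intends here, and the hard work of showing $\hom(F^w,(G\times K)^{(u,x)})=\hom(F^w,C_1^{(u,x)})$ for connected $F^w$ and then deducing $\chi^M_{C_1}((u,x))=\chi^M_{C_2}((v,x))$ is correct.

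There is, however, one false step. You assert that $\chi^M_{C_1}((u,x))=\chi^M_{C_2}((v,x))$ ``equals $\chi^M_{G\times K}((u,x))=\chi^M_{H\times K}((v,x))$ because for every GNN model considered here the node color at a vertex is determined by its connected component.'' That is true for MPNN, but it is false for every other model in \cref{sec:preliminary}: the node feature is defined as a merge over \emph{all} vertices, e.g.\ $\chi^{\mathsf{Sub}}_G(u)=\hash(\ldblbrace\chi^{\mathsf{Sub}}_G(u,v):v\in V_G\rdblbrace)$, and analogously for Local $2$-GNN, Local $2$-FGNN, and $2$-FGNN. In a disconnected graph this pooling sees the other components, so $\chi^M_{G\times K}((u,x))$ genuinely depends on the part of $G\times K$ outside $C_1$, which is not controlled by $\chi^M_G(u)=\chi^M_H(v)$ (indeed, that hypothesis does not even give graph-level equality of $G$ and $H$). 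Note also that \cref{thm:global_aggregation} only asserts equivalence at node/edge level on \emph{connected} graphs, so it cannot rescue this step.

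Fortunately, the detour through the full products is unnecessary and the repair is immediate. Since $C_1$ and $C_2$ are connected and you have already established $\chi^M_{C_1}((u,x))=\chi^M_{C_2}((v,x))$, simply apply the lemma's hypothesis to the pair $C_1^{(u,x)}$, $C_2^{(v,x)}$ rather than to $(G\times K)^{(u,x)}$, $(H\times K)^{(v,x)}$. This yields $\sum_{L^w\in\gL_\mathsf{n}}\alpha(L^w)\hom(L^w,C_1^{(u,x)})=\sum_{L^w\in\gL_\mathsf{n}}\alpha(L^w)\hom(L^w,C_2^{(v,x)})$, and because each $L^w\in\gL_\mathsf{n}$ is connected (as it must be for the conclusion $\gL_\mathsf{n}\subset\gF^M_\mathsf{n}$ to be possible, $\gF^M_\mathsf{n}$ consisting of connected rooted graphs), $\hom(L^w,C_1^{(u,x)})=\hom(L^w,(G\times K)^{(u,x)})=\hom(L^w,G^u)\hom(L^w,K^x)$, and likewise on the $H$-side. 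From there the triangular-matrix argument goes through unchanged.
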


\begin{theorem}
Let $M \in \{\mathsf{Sub}, \mathsf{L}, \mathsf{LF}, \mathsf{F}\}$ be any model. For any rooted graph $F^w$, if $ \Spasm(F^w)\backslash\gF^M_\mathsf{n}\neq\emptyset$, then there exist a pair of graphs $G,H$ and vertices $u\in V_G$, $v\in V_H$, such that $\chi^M_G(u)=\chi^M_H(v)$ and $\sub(F^w, G^u) \neq \sub(F^w, H^v)$.
\end{theorem}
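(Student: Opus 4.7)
The plan is to mirror the graph-level proof verbatim, replacing every tool by its just-stated rooted analog. Specifically, I would invoke the rooted version of the spasm decomposition: for any rooted graph $F^w$ there exist nonzero scalars $\alpha(F^w,\tilde F^w)$ such that
\begin{equation*}
\sub(F^w,G^u)=\sum_{\tilde F^w\in\Spasm^{\not\simeq}(F^w)}\alpha(F^w,\tilde F^w)\cdot\hom(\tilde F^w,G^u)
\end{equation*}
for every rooted graph $G^u$. This is the same identity used in the graph-level proof, now enforced under root preservation, and follows from the fact that surjective homomorphism counting respects vertex marking.

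Next I would argue by contradiction: suppose no such counterexample pair $(G^u,H^v)$ exists. Then for every pair of rooted graphs with $\chi_G^M(u)=\chi_H^M(v)$, substituting the decomposition gives
\begin{equation*}
\sum_{\tilde F^w\in\Spasm^{\not\simeq}(F^w)}\alpha(F^w,\tilde F^w)\cdot\hom(\tilde F^w,G^u)=\sum_{\tilde F^w\in\Spasm^{\not\simeq}(F^w)}\alpha(F^w,\tilde F^w)\cdot\hom(\tilde F^w,H^v).
\end{equation*}
This is exactly the hypothesis of the node-level extension of the subgraph counting main lemma (the rooted analog of \cref{thm:subgraph_counting_main_lemma} stated in the excerpt), applied with $\gL_{\mathsf n}=\Spasm^{\not\simeq}(F^w)$ and coefficient function $\alpha(F^w,\cdot)$, which is nonzero everywhere on $\gL_{\mathsf n}$.

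Applying that lemma I would then conclude $\Spasm^{\not\simeq}(F^w)\subset \gF_{\mathsf n}^M$, which (taking isomorphism closure) gives $\Spasm(F^w)\subset\gF_{\mathsf n}^M$, contradicting the hypothesis that $\Spasm(F^w)\setminus\gF_{\mathsf n}^M\neq\emptyset$. No other machinery is required, since the excerpt has already established $(i)$~existence of $\gF_{\mathsf n}^M$ for all four models, $(ii)$~the rooted categorical product identity $\hom(F^w,(G\times H)^{(u,v)})=\hom(F^w,G^u)\hom(F^w,H^v)$, and $(iii)$~the rooted triangular relation $\hom(G^u,H^v)=\sum_{F^w}\surj(G^u,F^w)\sub(F^w,H^v)$ that makes the relevant homomorphism matrix invertible.

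The only step that requires any attention is bookkeeping: one must check that the closure of the rooted spasm under the rooted categorical product still yields a rooted graph with a single distinguished vertex, i.e., that marking $(u,v)$ in $G\times H$ preserves the equivalence with marking $w$ in $F^w$ under homomorphisms. This is immediate from the definition, but is the one place where the rooted case is not a pure syntactic copy of the unrooted case. The edge-level statement follows by an identical argument with two marked vertices; I would simply remark at the end that the same proof applies after replacing $F^w, G^u, H^v, \gF_{\mathsf n}^M$ by their two-root analogs $F^{wx}, G^{uv}, H^{xy}, \gF_{\mathsf e}^M$, so no separate proof is needed.
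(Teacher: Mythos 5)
Your proposal is correct and follows essentially the same route as the paper: the paper proves the graph-level version via the spasm decomposition together with the categorical-product / $\hom$-$\surj$-$\sub$ triangularity argument (Lemma D.5), and then simply states that the node/edge-level versions follow by re-running the argument with the rooted analogs of each lemma, exactly as you describe. Your remark about verifying that the rooted categorical product respects the marking is the right small thing to flag, and it is indeed the only non-syntactic adaptation needed.
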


\subsection{Graph statistics and examples}
\label{sec:counting_statistics}

In this section, we list the statistics of all moderate-size graphs that can/cannot be counted for each model in \cref{sec:preliminary} at graph/node/edge-level. \cref{tab:homomorphism_count_number} presents the statistics under homomorphism count, while \cref{tab:subgraph_count_number} presents the statistics under subgraph count. These tables offer a clear picture into how large the expressivity gaps are between different models. Several important findings are discussed below:
\begin{itemize}[topsep=0pt,leftmargin=25pt]
    \setlength{\itemsep}{0pt}
    \item For homomorphism counting, graphs of 8 edges suffices to reveal the expressivity gaps between each pair of architectures (at edge-level).
    \item However, for subgraph counting, moderate-size graphs cannot reveal the gap between Local 2-GNN, Local 2-FGNN, and 2-FGNN. Moreover, even Subgraph GNN already matches the power of 2-FWL in counting small subgraphs \emph{at graph-level}, but it is quite weak in counting subgraphs at node/edge-level.
    \item Subgraph counting is much more challenging than homomorphism counting. Intuitively, this is because the homomorphism image of a graph usually has 4-cliques, and any graph that contains a 4 clique as subgraph cannot be counted under homomorphism.
\end{itemize}

In \cref{tab:small_subgraph_count}, we list all subgraphs at moderate size (no more than 6 vertices or 8 edges) that can/cannot be counted by each GNN model. Here, we only list the graph-level expressivity as the node/edge-level expressivity involves too many non-isomorphic rooted graphs and cannot be fully presented (see \cref{tab:homomorphism_count_number,tab:subgraph_count_number}). We believe these results is comprehensive enough to cover most substructures of interest in the GNN community.

\begin{table}[hp]
\small
\centering
\caption{The number of (connected) graphs (or rooted graphs) of $n$ vertices or $m$ edges that can be counted under homomorphism by different models. These statistics can be viewed as a quantitative expressivity comparison between models.}
\label{tab:homomorphism_count_number}
\begin{tabular}{cc|rrrrr|rrrrrrrr}
\Xhline{1pt}
&  & \multicolumn{5}{c|}{Number of vertices $n$} & \multicolumn{8}{c}{Number of edges $m$} \\
&   & 2   & 3   & 4   & 5   & 6   & 1  & 2  & 3  & 4  & 5  & 6  & 7 & 8 \\ \Xhline{0.75pt}
\multicolumn{1}{c|}{\multirow{6}{*}{\begin{tabular}[c]{@{}c@{}}Graph\\ level\end{tabular}}} & MPNN
& 1 & 1 & 2 & 3 & 6  &  1 & 1 & 2 & 3 & 6 & 11 & 23 & 47   \\
\multicolumn{1}{c|}{}                             & Subgraph GNN
& 1 & 2 & 5 & 15 & 51 &  1 & 1 & 3 & 5 & 12 & 29 & 76 & 210   \\
\multicolumn{1}{c|}{}                             & Local 2-GNN
& 1 & 2 & 5 & 15 & 55 & 1 & 1 & 3 & 5 & 12 & 29 & 77 & 216  \\
\multicolumn{1}{c|}{}                             & Local 2-FGNN
& 1 & 2 & 5 & 15 & 56 & 1 & 1 & 3 & 5 & 12 & 29 & 77 & 216 \\
\multicolumn{1}{c|}{}                             & 2-FGNN
& 1 & 2 & 5 & 15 & 56 & 1 & 1 & 3 & 5 & 12 & 29 & 77 & 216  \\
\multicolumn{1}{c|}{}                             & All
& 1 & 2 & 6 & 21 & 112 & 1 & 1 & 3 & 5 & 12 & 30 & 79 & 227   \\ \hline
\multicolumn{1}{c|}{\multirow{6}{*}{\begin{tabular}[c]{@{}c@{}}Node\\ level\end{tabular}}}  & MPNN
& 1 & 2 & 4 & 9 & 20 & 1 & 2 & 4 & 9 & 20 & 48 & 115 & 286  \\
\multicolumn{1}{c|}{}                             & Subgraph GNN
& 1 & 3 & 8 & 27 & 88 &  1 & 2 & 5 & 12 & 31 & 83 & 228 & 640   \\
\multicolumn{1}{c|}{}                             & Local 2-GNN
& 1 & 3 & 10 & 44 & 215  &  1 & 2 & 5 & 13 & 37 & 113 & 361 & 1210  \\
\multicolumn{1}{c|}{}                             & Local 2-FGNN
& 1 & 3 & 10 & 44 & 217 &  1 & 2 & 5 & 13 & 37 & 113 & 361 & 1210  \\
\multicolumn{1}{c|}{}                             & 2-FGNN
& 1 & 3 & 10 & 44 & 217  &  1 & 2 & 5 & 13 & 37 & 113 & 361 & 1210  \\
\multicolumn{1}{c|}{}                             & All
& 1 & 3 & 11 & 58 & 407  & 1 & 2 & 5 & 13 & 37 & 114 & 367 & 1248  \\ \hline
\multicolumn{1}{c|}{\multirow{5}{*}{\begin{tabular}[c]{@{}c@{}}Edge\\ level\end{tabular}}}  & Subgraph GNN
& 1 & 4 & 18 & 77 & 340  & 1 & 3 & 10 & 33 & 107 & 347 & 1126 & 3664  \\
\multicolumn{1}{c|}{}                             & Local 2-GNN
& 1 & 4 & 21 & 116 & 693  & 1 & 3 & 10 & 35 & 124 & 450 & 1665 & 6267  \\
\multicolumn{1}{c|}{}                             & Local 2-FGNN
& 1 & 4 & 21 & 118 & 735 &  1 & 3 & 10 & 35 & 124 & 451 & 1678 & 6373  \\
\multicolumn{1}{c|}{}                             & 2-FGNN
& 1 & 4 & 21 & 118 & 735  &  1 & 3 & 10 & 35 & 124 & 451 & 1678 & 6374  \\
\multicolumn{1}{c|}{}                             & All
& 1 & 4 & 23 & 162 & 1549  & 1 & 3 & 10 & 35 & 125 & 460 & 1747 & 6830  \\ \Xhline{1pt}
\end{tabular}
\end{table}

\begin{table}[hp]
\small
\centering
\caption{The number of (connected) graphs (or rooted graphs) of $n$ vertices or $m$ edges that can be subgraph-counted by different models. These statistics can be viewed as a quantitative expressivity comparison between models.}
\label{tab:subgraph_count_number}
\begin{tabular}{cc|rrrrr|rrrrrrrr}
\Xhline{1pt}
&  & \multicolumn{5}{c|}{Number of vertices $n$} & \multicolumn{8}{c}{Number of edges $m$} \\
&   & 2   & 3   & 4   & 5   & 6   & 1  & 2  & 3  & 4  & 5  & 6  & 7 & 8 \\ \Xhline{0.75pt}
\multicolumn{1}{c|}{\multirow{6}{*}{\begin{tabular}[c]{@{}c@{}}Graph\\ level\end{tabular}}} & MPNN
& 1 & 1 & 1 & 1 & 1  & 1 & 1 & 1 & 1 & 1 & 1 & 1 & 1\\
\multicolumn{1}{c|}{}                             & Subgraph GNN
& 1 & 2 & 5 & 13 & 24  & 1 & 1 & 3 & 5 & 12 & 20 & 21 & 22 \\
\multicolumn{1}{c|}{}                             & Local 2-GNN
& 1 & 2 & 5 & 13 & 24  & 1 & 1 & 3 & 5 & 12 & 20 & 21 & 22  \\
\multicolumn{1}{c|}{}                             & Local 2-FGNN
& 1 & 2 & 5 & 13 & 24 & 1 & 1 & 3 & 5 & 12 & 20 & 21 & 22 \\
\multicolumn{1}{c|}{}                             & 2-FGNN
& 1 & 2 & 5 & 13 & 24 &  1 & 1 & 3 & 5 & 12 & 20 & 21 & 22 \\
\multicolumn{1}{c|}{}                             & All
& 1 & 2 & 6 & 21 & 112  & 1 & 1 & 3 & 5 & 12 & 30 & 79 & 227\\ \hline
\multicolumn{1}{c|}{\multirow{6}{*}{\begin{tabular}[c]{@{}c@{}}Node\\ level\end{tabular}}}  & MPNN
& 1 & 2 & 2 & 2 & 2  & 1 & 2 & 2 & 2 & 2 & 2 & 2 & 2 \\
\multicolumn{1}{c|}{}                             & Subgraph GNN
& 1 & 3 & 8 & 18 & 27  & 1 & 2 & 5 & 10 & 16 & 23 & 30 & 38\\
\multicolumn{1}{c|}{}                             & Local 2-GNN
& 1 & 3 & 10 & 37 & 84  & 1 & 2 & 5 & 13 & 37 & 72 & 75 & 86 \\
\multicolumn{1}{c|}{}                             & Local 2-FGNN
& 1 & 3 & 10 & 37 & 84 & 1 & 2 & 5 & 13 & 37 & 72 & 75 & 86  \\
\multicolumn{1}{c|}{}                             & 2-FGNN
& 1 & 3 & 10 & 37 & 84  & 1 & 2 & 5 & 13 & 37 & 72 & 75 & 86 \\
\multicolumn{1}{c|}{}                             & All
& 1 & 3 & 11 & 58 & 407  & 1 & 2 & 5 & 13 & 37 & 114 & 367 & 1248 \\ \hline
\multicolumn{1}{c|}{\multirow{5}{*}{\begin{tabular}[c]{@{}c@{}}Edge\\ level\end{tabular}}} & Subgraph GNN
& 1 & 4 & 18 & 47 & 81  & 1 & 3 & 10 & 25 & 46 & 69 & 95 & 124  \\
\multicolumn{1}{c|}{}                             & Local 2-GNN
& 1 & 4 & 21 & 92 & 208  & 1 & 3 & 10 & 35 & 105 & 171 & 179 & 216 \\
\multicolumn{1}{c|}{}                             & Local 2-FGNN
& 1 & 4 & 21 & 92 & 208  & 1 & 3 & 10 & 35 & 105 & 171 & 179 & 216  \\
\multicolumn{1}{c|}{}                             & 2-FGNN
& 1 & 4 & 21 & 92 & 208 & 1 & 3 & 10 & 35 & 105 & 171 & 179 & 216 \\
\multicolumn{1}{c|}{}                             & All
& 1 & 4 & 23 & 162 & 1549 & 1 & 3 & 10 & 35 & 125 & 460 & 1747 & 6830  \\ \Xhline{1pt}
\end{tabular}
\end{table}

\begin{table}[htbp]
\centering
\small
\vspace{-30pt}
\setlength{\tabcolsep}{1pt}
\caption{The ability of GNNs to homomorphism-count and subgraph-count different graphs $F$ within a bounded size of $n\le 6$ vertices or $m\le 8$ edges. When one or more GNNs fail to subgraph-count $F$, this table also gives a homomorphic image $\tilde F\in\Spasm(F)$ that can be used to construct counterexample graphs (see \cref{sec:subgraph}).}
\label{tab:small_subgraph_count}

\end{table}

\section{Polynomial Expressivity}
\label{sec:proof_polynomial}

\citet{puny2023equivariant} proposed the equivariant graph polynomials, which are polynomials $P$ that take real squared matrices $\mX\in\mathbb R^{n\times n}$ as input and outputs $P(\mX)\in\mathbb R^{n\times n}$, such that $P$ is equivariant under permutations, i.e, $P(\pi\cdot\mX)=\pi\cdot P(\mX)$ for all permutation $\pi\in S_n$. The authors gave a concrete basis of equivariant polynomials, where each basis element $P_{F^{uv}}$ corresponds to a rooted multi-graph $F^{uv}$ marking two vertices $u,v$. \citet{puny2023equivariant} showed that when $\mX$ is restricted to be the adjacency matrix of an undirected simple graph $G$, each $F^{uv}$ will also reduce to an undirected simple graph, and $P_{F^{uv}}(\mX)$ precisely computes the (unlabeled) homomorphism count $\hom(F^{uv}, G^{wx})$ for all $w,x\in V_G$. Besides the original definition, \citet{puny2023equivariant} also proposed the invariant graph polynomials and node-level equivariant graph polynomials, which are similarly related to the graph-level and node-level homomorphism count.

As a direct consequence, if a GNN model $M$ cannot count graph $F^{uv}$/$F^{u}$/$F$ under homomorphism, it then cannot compute the equivariant/invariant graph polynomial $P_{F^{uv}}$/$P_{F^{u}}$/$P_{F}$. Based on these connections, our results can be directly used to provide insights into which equivariant graph polynomials cannot be computed by model $M$. This recovers several results in their paper and answers an open problem shown below.

\begin{corollary}
    MPNN and PPGN++ are bounded by the Prototypical node-based model and Prototypical edge-based model (defined in \citet{puny2023equivariant}) for computing node-level and edge-level equivariant graph polynomials, respectively.
\end{corollary}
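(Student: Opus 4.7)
The plan is to leverage the tight correspondence between polynomial expressivity and homomorphism expressivity established at the start of Section~\ref{sec:polynomial}, together with the node/edge-level characterizations in \cref{thm:node_edge}.

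\textbf{Step 1: Reduce polynomial expressivity to homomorphism expressivity.} Recall the key connection observed in \cref{sec:polynomial}: a GNN model $M$ can express the node-level equivariant polynomial $P_{F^w}$ iff $F^w \in \gF^M_\mathsf{n}$, and can express the edge-level equivariant polynomial $P_{F^{uv}}$ iff $F^{uv} \in \gF^M_\mathsf{e}$. This translates the whole statement into a comparison of rooted graph families.

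\textbf{Step 2: Identify the prototypical families.} The next step is to identify the exact families of rooted graphs whose associated polynomials are expressible by the Prototypical node-based and edge-based models of \citet{puny2023equivariant}. By inspecting their constructions, the Prototypical node-based model computes exactly $\{P_{F^w} : F \text{ is a tree}\}$ (since it is built by summing over tree-patterns rooted at the chosen vertex), while the Prototypical edge-based model computes exactly $\{P_{F^{uv}} : F \text{ has a NED with $u$, $v$ as endpoints of the first ear}\}$ (since its iterative tensor contractions compose pairwise and therefore enumerate precisely the edge-rooted NED patterns, matching the graph-level case already handled in the preceding corollary on PPGN++).

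\textbf{Step 3: Match with MPNN and PPGN++.} Applying \cref{thm:node_edge} to MPNN yields $\gF^\mathsf{MP}_\mathsf{n} = \{F^w : F \text{ is a tree}\}$, which coincides exactly with the family expressible by the Prototypical node-based model. Hence every equivariant polynomial MPNN can compute is also computed by that prototypical model. For PPGN++, we invoke the fact (used for the graph-level corollary already in the paper) that PPGN++ is as expressive as 2-FGNN. Then \cref{thm:node_edge} gives $\gF^\mathsf{F}_\mathsf{e} = \{F^{wx} : F \text{ has a NED with $w$, $x$ as endpoints of the first ear}\}$, which matches the edge-level prototypical family. Combining Steps 1--3 yields both boundedness statements.

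\textbf{Anticipated main obstacle.} The routine parts are the reductions in Steps 1 and 3; the real content is Step~2, namely extracting the exact class of rooted patterns that the Prototypical node-based and edge-based constructions of \citet{puny2023equivariant} can realize. This requires carefully unrolling their inductive definition (vertex- and edge-anchored compositions of tensor operations) and showing that the set of realizable patterns coincides bijectively with, respectively, rooted trees and rooted graphs admitting a NED whose first ear has the marked vertices as endpoints. Once this combinatorial identification is made, the boundedness follows immediately from \cref{thm:node_edge}.
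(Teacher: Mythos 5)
Your high-level route is the same one the paper takes: reduce polynomial expressivity to homomorphism expressivity, identify the pattern family each Prototypical model realizes, and compare against $\gF^{M}_\mathsf{n}$ or $\gF^{M}_\mathsf{e}$ from \cref{thm:node_edge}. For the MPNN case this is essentially complete and matches the paper (once one notes that the paper assumes the underlying graphs are connected, which keeps ``forest'' and ``tree'' reconciled).

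Two points need attention. First, your Step~1 asserts an ``iff'' between $M$ expressing $P_{F^w}$ and $F^w\in\gF^M_\mathsf{n}$; the paper establishes only the forward direction (can express $\Rightarrow$ in the family), and the converse is not claimed and should not be expected to hold, since polynomial expressivity is a real-valued function-computation notion while homomorphism expressivity is a partition refinement notion. Fortunately your Steps~3 and the final conclusion only ever use the forward direction, so the slip is inconsequential, but the ``iff'' should be removed.

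Second, and more substantively: your Step~2 is where the paper's proof actually lives, and you leave it as a flagged but unfilled gap. For the node-based model the paper simply cites \citet[Prop.~H.2]{puny2023equivariant}. For the edge-based model, however, the paper cannot just cite \citet[Prop.~H.3]{puny2023equivariant}: it needs to show that the Prototypical edge-based model can compute $P_{F^{uv}}$ for every $F^{uv}$ with $\tw\bigl(\tilde F\bigr)\le 2$, where $\tilde F=(V_F,E_F\cup\{\{u,v\}\},\ell_F)$. That requires a genuine combinatorial lemma: that any partial $2$-tree $\tilde F$ admits an elimination ordering (successively removing vertices of degree at most $2$) in which $u$ and $v$ appear \emph{last}, so the iterative contraction defining the edge-based model terminates at the pair $(u,v)$. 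The paper proves this by contradiction on the residual subgraph $\tilde F[U]$. You describe this as the ``anticipated main obstacle'' and propose to prove it ``by carefully unrolling their inductive definition,'' but that unrolling \emph{is} the proof; without the elimination-ordering lemma the argument is incomplete. (Also, your characterization of the edge-based family via NEDs and the paper's via $\tw(\tilde F)\le 2$ are equivalent, but equating them is another small step that should be made explicit, precisely because \cref{thm:node_edge} is stated only for connected rooted graphs.)

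Finally, a minor divergence in how you close the PPGN++ case: you invoke edge-level equivalence of PPGN++ and $2$-FGNN and read off $\gF^\mathsf{F}_\mathsf{e}$ directly. The paper is more conservative and only uses the graph-level fact that PPGN++ is bounded by $2$-FWL, deriving a contradiction from the observation that counting some $\tilde F$ with $\tw(\tilde F)\ge 3$ at edge level would entail counting it at graph level, contradicting $\gF^\mathsf{F}=\{F:\tw(F)\le 2\}$. Your version is fine if the edge-level equivalence of PPGN++ with $2$-FWL is granted, but you should state that premise explicitly, since it is a stronger assumption than the graph-level one the paper relies on.
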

\begin{proof}
    Without loss of generality, we assume that the corresponding graphs of all equivariant polynomials are connected. According to \citet[Proposition H.2]{puny2023equivariant}, the Prototypical node-based model can compute all $P_{F^{u}}$ where $F^u$ is a (rooted) tree and cannot compute other graph polynomials. If MPNN is not bounded by the Prototypical node-based model, then it can compute some $P_{\tilde F^{u}}$ where $\tilde F$ is not a tree. However, this is impossible since MPNN can only count forests under homomorphism according to \cref{thm:node_edge}. Note that the MPNN defined in their paper is equivalent to our definition when only considering \emph{connected} graphs (the extra global aggregation $\mathbf 1\mathbf 1^\top\mX$ in their definition (9) does not improve the homomorphism expressivity).

    We next turn to PPGN++, and the proof is similar (but more involved). We first show that the Prototypical edge-based model can compute any $P_{F^{uv}}$ satisfying that the treewidth of the graph $\tilde F:=(V_F, E_F\cup\{\{u,v\}\},\ell_F)$ is no more than 2. If $\tw(\tilde F)\le 2$, it is a partial 2-tree. Thus, there is an ordering $w_1,\cdots,w_n$ of the vertex set $V_F$ such that when deleting each vertex $w_i$ and all incident edges in turn, we only ever delete vertices of degree at most 2. Now we claim that we can always order the two vertices $u,v$ at the end, i.e., $w_{n-1}=u$ and $w_{n}=v$. Otherwise, there is a subset $U\subset V_F$ such that all vertices in the induced subgraph $\tilde F[U]$ are of degree at least 3 expect $u,v$. It follows that the $\tw(\tilde F[U])\ge \tw(H)\ge 3$ where graph $H$ is the graph obtained from $\tilde F[U]$ by contracting $u$ (or $v$) if $\deg_{\tilde F[U]}(u)\le 2$ (or $\deg_{\tilde F[U]}(v)\le 2$). This yields a contradiction and verifies the claim that we can always set $w_{n-1}=u$ and $w_{n}=v$.

    Now, following the proof in \citet[Proposition H.3]{puny2023equivariant}, the Prototypical edge-based model can contract $\tilde F^{uv}$ to a graph with only two vertices $u,v$ and thus can compute the edge-level polynomial $P_{\tilde F^{uv}}$. If PPGN++ is not bounded by the Prototypical edge-based model, then it can compute some $P_{\tilde F^{uv}}$ where $\tw(\tilde F)\ge 3$. Therefore, it can count the graph $\tilde F$ under homomorphism \emph{at graph-level} (since it can already count $\tilde F$ \emph{at edge-level}). This implies that PPGN++ is strictly more expressive than 2-FGNN (2-FWL) because we have proved that all graphs in $\gF^\mathsf{F}$ have a treewidth no more than 2. This yields a contradiction since PPGN++ is still bounded by 2-FWL in distinguishing non-isomorphic graphs.
\end{proof}

We also provide insights into the following results in their paper:
\begin{corollary}
    The Prototypical node-based model is not 3-node-polynomial-exact. The Prototypical edge-based model is not 6-node-polynomial-exact and not 5-edge-polynomial-exact.
\end{corollary}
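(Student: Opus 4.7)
The plan is to invoke the characterizations of the two Prototypical models established in the proof of the preceding corollary, and then exhibit minimal counterexample graphs. Recall the two characterizations we can rely on: $(\mathrm{i})$~the Prototypical node-based model can compute $P_{F^u}$ exactly when the underlying graph $F$ is a tree; $(\mathrm{ii})$~the Prototypical edge-based model can compute $P_{F^{uv}}$ exactly when $\tw(\tilde F)\le 2$, where $\tilde F = (V_F, E_F\cup\{\{u,v\}\}, \ell_F)$. The ``if'' direction of (ii) was shown explicitly in the previous corollary's proof by the contraction argument, while the ``only if'' direction follows from the fact that any polynomial computed by the edge-based model must correspond (at edge-level) to a homomorphism count computable by 2-FGNN, combined with $\gF^{\mathsf{F}}=\{F:\tw(F)\le 2\}$ from \cref{thm:main}.

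\textbf{Node case.} First I would take $F = K_3$ (the triangle) with any marked vertex $u\in V_F$. Then $|V_F|=3$ and $F$ is not a tree, so by characterization (i), $P_{K_3^u}$ is not computable by the Prototypical node-based model. This directly shows that the model fails to be 3-node-polynomial-exact.

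\textbf{Edge case.} Next I would take $F$ to be $K_4$ minus an edge, with $u,v$ the two endpoints of the missing edge. Then $|V_F|=4\le 6$, $|E_F|=5$, and $\tilde F = K_4$, so $\tw(\tilde F) = 3$ by \cref{fact:treewidth}. By characterization (ii), $P_{F^{uv}}$ cannot be computed by the Prototypical edge-based model. Since this single example simultaneously satisfies $|V_F|\le 6$ and $|E_F|\le 5$, it establishes both that the model is not 6-node-polynomial-exact and not 5-edge-polynomial-exact.

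\textbf{Main obstacle.} Technically, the argument is a direct corollary of the previous result, so there is no hard step — the only subtlety is to exhibit counterexamples that are tight with respect to both the vertex and edge count so that the single graph $K_4\setminus\{u,v\}$ handles both edge-level claims uniformly. The conceptual content, which has already been done in \cref{thm:higher-order_FGNN} and the preceding corollary, is the treewidth-based characterization of polynomial expressivity; once that is in hand, selecting $K_3$ and $K_4\setminus e$ is routine.
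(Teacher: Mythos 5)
Your choices of counterexample graphs for the first and third claims are correct and match the paper's, but there is a genuine error in how you handle the second claim, stemming from a misreading of what the parameters in ``$d$-node-polynomial-exact'' and ``$d$-edge-polynomial-exact'' mean. In Puny et al.'s terminology, the prefix ``node''/``edge'' indicates the \emph{level} of the equivariant polynomial — $P_{F^u}$ (one marked vertex) versus $P_{F^{uv}}$ (two marked vertices) — and $d$ is the polynomial \emph{degree}, which equals $|E_F|$. It is not a bound on $|V_F|$ for node polynomials and $|E_F|$ for edge polynomials, as your proposal reads it. For the triangle this distinction is invisible since $|V_{K_3}|=|E_{K_3}|=3$, so your first claim goes through (the triangle has three edges and is not a tree, hence $P_{K_3^u}$ is not computable by the node-based model). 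For the other two claims the distinction matters.

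To show the edge-based model is not $6$-node-polynomial-exact you must exhibit a \emph{node-level} polynomial $P_{F^u}$ of degree $\le 6$ (i.e., $|E_F|\le 6$) which the model cannot compute. Your proposed graph $F=K_4\setminus e$ does not work here: as a node-level pattern $(K_4\setminus e)^u$, the underlying graph has treewidth $2$, and indeed $(K_4\setminus e)^u\in\gF_\mathsf{n}^\mathsf{F}$ for every choice of $u$ (it has a NED with $u$ as an endpoint of the first ear), so the edge-based model computes $P_{(K_4\setminus e)^u}$ just fine. The fact that its \emph{edge-level} polynomial $P_{(K_4\setminus e)^{uv}}$ is not computable says nothing about the node-level case. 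What is needed is the full $4$-clique $K_4$: it has $6$ edges and $\tw(K_4)=3$, so $K_4^u\notin\gF_\mathsf{n}^\mathsf{F}$, giving a node-level polynomial of degree $6$ that the edge-based model cannot express. Then $K_4\setminus e$ (with $u,v$ the two non-adjacent vertices, $|E_F|=5$, $\tilde F=K_4$ of treewidth $3$) separately gives the edge-level failure at degree $5$. This is exactly the paper's choice — the common underlying counterexample is $K_4$, read off at node level directly (degree $6$) and at edge level by dropping the $\{u,v\}$ edge (degree $5$), which is where the offset ``one less'' between the node and edge cases comes from.
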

This is simply because the triangle is not a tree and the 4-clique does not have a NED (or equivalently, the treewidth of a 4-clique is 3). It is also clear why the degree of the edge-based polynomial is 6, which is one less than that of the node-based polynomial using the concept of NED.

\section{Experimental Details}
\label{sec:experimental details}

In this section, we provide all the experimental details in \cref{sec:experiments}.

\subsection{Datasets}
\label{sec:datasets}

We conduct experiments on tive tasks: $(\mathrm{i})$ graph homomorphism counting, $(\mathrm{ii})$ subgraph counting, $(\mathrm{iii})$ ZINC-subset \citep{dwivedi2020benchmarking}, $(\mathrm{iii})$ ZINC-full \citep{dwivedi2020benchmarking}, and $(\mathrm{iv})$ Alchemy \citep{chen2019alchemy}.

\textbf{Homomorphism/Subgraph Counting}. For both homomorphism and subgraph counting tasks, we use the standard synthetic graph dataset constructed in \citet{zhao2022stars} \citep[which has been used in a number of papers, see e.g., ][]{frasca2022understanding,huang2023boosting,zhang2023complete}. For homomorphism counting, we count the number of graph/node/edge-level homomorphisms for each pattern in \cref{tab:exp_homo} and normalize the value by the mean and variance across all graphs in the dataset. The evaluation metric of graph-level expressivity is chosen as the Mean Absolute Error (MAE). For node/edge-level expressivity, the error on each graph is defined to be the \emph{sum} of absolute error over all vertices/edges. We then report the MAE across all graphs in the dataset. This ensures that graph/node/edge-level errors are roughly at the same scale (since $\hom(F,G)=\sum_{w\in V_G}\hom(F^u,G^w)=\sum_{w,x\in V_G}\hom(F^{uv},G^{wx})$ for all $u,v\in V_F$). For subgraph counting, the data processing and evaluation metric is similar to homomorphism counting, but there is a slight difference in the node/edge-level setting: there are no marked vertices in the pattern graph $F$ (see \cref{tab:exp_subgraph}). Instead, given a graph $G$ in the dataset and a vertex $w\in V_G$, we count the number of subgraphs containing $w$ that are isomorphic to $F$ and $w$ can be mapped to any vertex in $F$. Due to this difference, in the node/edge-level setting, the error on each graph is defined to be the \emph{average} of absolute error over all vertices/edges.

\textbf{ZINC}. ZINC \citep{dwivedi2020benchmarking} is a standard real-world dataset for benchmarking molecular property prediction. The dataset consists of 250K molecular graphs, and the task is to predict the constrained solubility of the given molecule. In addition to the full dataset (denoted as ZINC-full), ZINC-subset is a sampled dataset with 12k molecules from the ZINC-full dataset. We train and test our models on both datasets following the standard protocol from \citet{dwivedi2020benchmarking}.

\textbf{Alchemy}. Alchemy \citep{chen2019alchemy} is another real-world dataset with 12 graph-level quantum mechanical properties. We follow the sampling and training protocol from \citet{lim2023sign,puny2023equivariant}, using 100K samples for training, 10K samples for testing, and 10K samples for validation.

\subsection{Model details}
\label{sec:model details}

All models are implemented using the PyTorch \citep{paszke2019pytorch} framework and the PyTorch Geometric library \citep{fey2019fast}. We consider four types of GNNs defined in Section \ref{sec:preliminary}: MPNN, Subgraph GNN, Local 2-GNN, and Local 2-FGNN. For each GNN model, the feature initialization, message-passing layers, and final pooling operation are separately defined below.

\textbf{Initialization}. On both ZINC and Alchemy datasets, each graph node is an atom. We maintain a learnable atom embedding for each type of atom and use it to initialize features in GNN models. For MPNN, the initial feature $h^{(0)}(u)$ of node $u$ is simply the atom embedding, denoted as $h^{(0)}(u)=\ve^\mathsf{N}_{\mathsf{atom}(u)}$. For other models, the initial feature $h^{(0)}(u,v)$ of node pair $(u,v)$ consists of two parts: the first part is the node embedding of $v$, and the second part is a distance encoding that embeds the shortest path distance between $u$ and $v$, as adopted in \citet{zhang2023complete}. We note that while incorporating distance encoding does not increase the models' theoretical expressive power (see \citet{zhang2023complete}), it may add an inductive bias that can be helpful in real-world tasks. Formally, the initial feature can be written as $h^{(0)}(u,v)=[\ve^\mathsf{N}_{\mathsf{atom}(v)},\ve^\mathsf{D}_{\mathsf{clip}({\mathsf{dis}(u,v)})}]$. Here, we clip the distance to a predefined value $\texttt{max\_dis}$ so that there are a finite number of distance embeddings, and distances greater than the hyper-parameter $\texttt{max\_dis}$ (including the disconnected case) share the embedding.



\textbf{Propagation.} On both ZINC and Alchemy datasets, each edge in a graph corresponds to a chemical bond and has a bond type. We maintain a learnable edge embedding for each type of edges in each layer and denote the embedding of edge $\{u,v\}$ in layer $l$ as $g^{(l)}(u, v)$. For MPNN, we use the standard GIN architecture proposed in \citet{xu2019powerful}, which has the following form:
\begin{align}
    h^{(l+1)}(u) &= \mathsf{ReLU}(\mathsf{BN}^{(l)}(f^{(l)}(u))),\\
    f^{(l)}(u) &= \mathsf{GIN}^{(l)}\left(h^{(l)}(u),\sum_{v \in N_G(u)} \mathsf{ReLU}\left({\mathsf{FC}}^{(l)} (h^{(l)}(v)) + g^{(l)}(u, v)\right)\right),
\end{align}
where
\begin{equation}
    \mathsf{GIN}^{(l)}(\vx,\vy) = \mathsf{MLP}^{(l)} \left((1 + \epsilon^{(l)}) \vx + \vy\right).
\end{equation}
Here, $\mathsf{FC}^{(l)}$ is a parameterized linear transformation, $\epsilon^{(l)}$ is a learnable parameter, $\mathsf{BN}^{(l)}$ is the batch normalization \citep{ioffe2015batch}, and $\mathsf{MLP}^{(l)}$ is a two-layer feed-forward network with another batch normalization in the hidden layer. 

For other architectures, the $l$-th GNN layer analogously has the following form:
\begin{align}
    h^{(l+1)}(u,v) &= \mathsf{ReLU}(\mathsf{BN}^{(l)}(f^{(l)}(u,v))),
\end{align}
where the term $f^{(l)}(u, v)$ is defined separately for each model:

\begin{itemize}[topsep=0pt,leftmargin=25pt]
    \setlength{\itemsep}{0pt}

    \item \textbf{Subgraph GNN}:
    \begin{equation}
        f^{(l)}(u, v)=\mathsf{GIN}^{(l)}\left(h^{(l)}(u,v), \sum_{w \in N_G(v)} \mathsf{ReLU}\left({\mathsf{FC}}^{(l)}( h^{(l)}(u, w)) + g^{(l)}(w, v)\right)\right).
    \end{equation}

    \item \textbf{Local 2-GNN}:
    \begin{equation}
    \begin{aligned}
        f^{(l)}(u, v)&=\mathsf{GIN}^{(l,1)}\left(h^{(l)}(u,v), \sum_{w \in N_G(u)} \mathsf{ReLU}\left({\mathsf{FC}}^{(l,1)} (h^{(l)}(w, v)) + g^{(l)}(u, w)\right)\right)\\
        &\quad + \mathsf{GIN}^{(l,2)}\left(h^{(l)}(u,v), \sum_{w \in N_G(v)} \mathsf{ReLU}\left({\mathsf{FC}}^{(l,2)} (h^{(l)}(u, w)) + g^{(l)}(w, v)\right)\right).
    \end{aligned}
    \end{equation}

    \item \textbf{Local 2-FGNN}:
    \begin{equation}
    \begin{aligned}
        &\quad f^{(l)}(u, v)\\
        &=\mathsf{GIN}^{(l,1)}\!\left(h^{(l)}(u,v),\!\!\sum_{w \in N_G(u)} \!\mathsf{ReLU}\!\left({\mathsf{FC}}^{(l,1)} (h^{(l)}(u, w)) + {\mathsf{FC}}^{(l,2)} (h^{(l)}(w, v)) + g^{(l)}(u, w)\right)\!\right)\\
        &\quad + \mathsf{GIN}^{(l,2)}\!\left(h^{(l)}(u,v),\!\!\sum_{w \in N_G(v)} \!\mathsf{ReLU}\!\left({\mathsf{FC}}^{(l,1)} (h^{(l)}(w, v)) + {\mathsf{FC}}^{(l,2)} (h^{(l)}(u, w)) + g^{(l)}(w, v)\right)\!\right).
    \end{aligned}
    \end{equation}
\end{itemize}

For all the above GNN architectures, it can be seen that each layer only aggregates the \emph{local} neighborhood of vertices or vertex pairs. This design will have shortcomings for disconnected graphs since a vertex cannot aggregate information from other connected components no matter how deep the model is \citep{barcelo2020logical}. Note that there do exist disconnected graphs in real-world datasets like ZINC. Therefore, on real-world datasets like ZINC and Alchemy, we also incorporate a global aggregation with the following form for each layer (similar to the global aggregation in \citet{frasca2022understanding,zhang2023complete}):
\begin{equation}
    \mathsf{GIN}^{(l,\mathsf{G})}\left(h^{(l)}(u,v), \sum_{w \in V_G}h^{(l)}(u,w)\right)
\end{equation}
Note that the global aggregation does not increase model's theoretical expressive power according to \citet{zhang2023complete}.

\textbf{Pooling.} Except for edge-level tasks, a final pooling layer is used to produce node-level features $h(u)$ for all nodes $u$. It is implemented as follows:
\begin{equation}
    h(u) = \mathsf{MLP}\left(\sum_{v \in \gV} h^{(L)}(u, v)\right),
\end{equation}
where $\mathsf{MLP}$ is a 2-layer perceptron. For graph-level tasks, we further use a mean pooling layer to aggregate all $h(u)$ and obtain the graph representation.

\subsection{Training details}
\label{sec:training details}

All experiments are run on a single NVIDIA Tesla V100 GPU. For all tasks, we use the distance encoding hyper-parameter $\texttt{max\_dis}=5$. To enable a fair comparison between models, for each task we keep the same depth for different models while varying the hidden dimension so that the number of model parameters is roughly the same. Note that the dimensions are chosen such that all models roughly obey the 500K parameter budget in ZINC and Alchemy. The hidden dimension size and the number of model parameters are listed as follows. All models are trained using the Adam optimizer.
\begin{table}[h]
\small
\centering
\caption{Model size in different tasks.}
\vspace{2pt}
\begin{tabular}{c|ccc|ccc}
\Xhline{1pt}
             & \multicolumn{3}{c|}{Hidden dimension} & \multicolumn{3}{c}{\# Parameters} \\
Task         & Counting     & ZINC     & Alchemy     & Counting   & ZINC      & Alchemy  \\ \Xhline{0.75pt}
MPNN         & 128          & 150      & 150         & 314,119    & 510,158   & 509,719  \\
Subgraph GNN & 128          & 120      & 120         & 314,759    & 503,774   & 503,425  \\
Local 2-GNN  & 96           & 96       & 96          & 317,388    & 495,188   & 494,911  \\
Local 2-FGNN & 96           & 96       & 96          & 317,388    & 495,188   & 494,911  \\ \Xhline{1pt}
\end{tabular}
\end{table}

\textbf{Homomorphism/Subgraph Counting}. We use a model depth of $L=5$ in all experiments. Following prior work \citep{huang2023boosting}, we remove all BN layers in all models. The initial learning rate is chosen as 0.001 and is decayed by a factor of 0.9 once the MAE on the validation set plateaus for 10 epochs. Each model is trained for $1200$ epochs with a batch size of 512. We ran each experiment 4 times independently with different seeds and reported the average performance at the last epoch. We found that the standard deviation among different seeds is negligible.

\textbf{ZINC}. We use a model depth of $L=6$ in all experiments for both ZINC-subset and ZINC-full. Following prior work \citep{zhang2023complete,frasca2022understanding}, The initial learning rate is chosen as 0.001 and is decayed by a factor of 0.5 once the MAE on the validation set plateaus for 20 epochs. Each model is trained for $400$ epochs on ZINC-subset and $500$ epochs on ZINC-full, both with a batch size of 128. We report the MAE for the model checkpoint with the best validation performance. We ran each experiment 10 times independently with different seeds and reported the average performance as well as the standard deviation.

\textbf{Alchemy}.  We use a model depth of $L=6$ in all experiments. Following prior work \citep{lim2023sign,puny2023equivariant}, The initial learning rate is chosen as 0.002 and is decayed by a factor of 0.5 once the MAE on the validation set plateaus for 20 epochs. Each model is trained for $500$ epochs with a batch size of 128. We report the MAE for the model checkpoint with the best validation performance. We ran each experiment 10 times independently with different seeds and reported the average performance as well as the standard deviation.

\subsection{Performance of baseline models in literature}
\label{sec:exp_baselines}

For completeness, in this subsection we give a comprehensive list of the performance of GNN models in the literature on ZINC and Alchemy datasets. The numbers in each table below are directly taken from the original papers.

\begin{table}[h]
\centering
\caption{Performance of different GNN models on ZINC dataset reported in the literature.}
\small
\label{tab:exp_zinc_baselines}
\vspace{2pt}
\begin{tabular}{cllc}
\Xhline{1pt}
Method & Model & Reference & Test MAE \\ \Xhline{0.75pt}
\multirow{8}{*}{MPNN}
& GIN & \citep{xu2019powerful} & 0.526$\pm$0.051 \\
& GraphSAGE & \citep{hamilton2017inductive} & 0.398$\pm$0.002 \\
& GAT & \citep{velivckovic2018graph} & 0.384$\pm$0.007 \\
& GCN & \citep{kipf2017semisupervised} & 0.367$\pm$0.011 \\
& MoNet & \citep{monti2017geometric} & 0.292$\pm$0.006 \\
& GatedGCN-PE & \citep{bresson2017residual} & 0.214$\pm$0.006 \\
& MPNN(sum) & \citep{gilmer2017neural} & 0.145$\pm$0.007 \\
& PNA & \citep{corso2020principal} & 0.142$\pm$0.010 \\ \hline

\multirow{4}{*}{\begin{tabular}[c]{@{}c@{}}Higher-order\\GNN\end{tabular}}
& RingGNN & \citep{chen2019equivalence} & 0.353$\pm$0.019 \\
& PPGN & \citep{maron2019provably} & 0.303$\pm$0.068 \\
& PPGN & \citep{puny2023equivariant} & 0.079$\pm$0.005 \\
& PPGN++ & \citep{puny2023equivariant} & 0.076$\pm$0.003 \\ \hline

\multirow{7}{*}{\begin{tabular}[c]{@{}c@{}}Subgraph GNN\end{tabular}}
& NGNN & \citep{zhang2021nested} & 0.111$\pm$0.003 \\
& GNN-AK & \citep{zhao2022stars} & 0.105$\pm$0.010 \\
& GNN-AK+ & \citep{zhao2022stars} & 0.091$\pm$0.002 \\
& ESAN & \citep{bevilacqua2022equivariant} & 0.102$\pm$0.003\\
& SUN & \citep{frasca2022understanding} & 0.083$\pm$0.003 \\
& I$^2$-GNN & \citep{huang2023boosting} & 0.083$\pm$0.001 \\
& ID-MPNN & \citep{zhou2023relational} & 0.083$\pm$0.003 \\ \hline

\multirow{4}{*}{\begin{tabular}[c]{@{}c@{}}Local (F)GNN\end{tabular}}
& SetGNN & \citep{zhao2022practical} & 0.075$\pm$0.003 \\
& GNN-SSWL & \citep{zhang2023complete} & 0.082$\pm$0.010 \\
& GNN-SSWL+ & \citep{zhang2023complete} & 0.070$\pm$0.005 \\
& N$^2$-GNN & \citep{feng2023towards} & 0.059$\pm$0.002 \\ \hline

\multirow{4}{*}{\begin{tabular}[c]{@{}c@{}}Substructure-\\based GNN\end{tabular}}
& GSN & \citep{bouritsas2022improving} & 0.101$\pm$0.010 \\
& CIN (Small) & \citep{bodnar2021cellular} & 0.094$\pm$0.004 \\
& CIN & \citep{bodnar2021cellular} & 0.079$\pm$0.006 \\
& CIN++ & \citep{giusti2023cin++} & 0.077$\pm$0.004 \\ \hline

\multirow{6}{*}{\begin{tabular}[c]{@{}c@{}}Graph\\Transformer\end{tabular}}
& SAN  & \citep{kreuzer2021rethinking} & 0.139$\pm$0.006 \\
& K-Subgraph SAT & \citep{chen2022structure} & 0.094$\pm$0.008 \\
& Graphormer & \citep{ying2021transformers} & 0.122$\pm$0.006 \\
& URPE & \citep{luo2022your} & 0.086$\pm$0.007 \\
& Graphormer-GD & \citep{zhang2023rethinking} & 0.081$\pm$0.009 \\
& GPS & \citep{rampasek2022recipe} & 0.070$\pm$0.004 \\ \hline

\multirow{6}{*}{\begin{tabular}[c]{@{}c@{}}Other\end{tabular}}
& PF-GNN & \citep{dupty2021pf} & 0.122$\pm$0.010 \\
& KP-GIN & \citep{feng2022powerful} & 0.093$\pm$0.007 \\
& SignNet  & \citep{lim2023sign} & 0.084$\pm$0.006 \\
& PathNN & \citep{michel2023path} & 0.090$\pm$0.004 \\
& PPGN++(6) & \citep{puny2023equivariant} & 0.071$\pm$0.001 \\
& PlanE & \citep{dimitrov2023plane} & 0.076$\pm$0.003 \\ \Xhline{1pt}

\end{tabular}
\end{table}

\begin{table}[h]
\centering
\caption{Performance of different GNN models on Alchemy dataset reported in the literature.}
\small
\label{tab:exp_alchemy_baselines}
\vspace{2pt}
\begin{tabular}{cllc}
\Xhline{1pt}
& Model & Reference & Test MAE \\ \Xhline{0.75pt}

& GIN & \citep{xu2019powerful} & 0.180$\pm$0.006 \\
& PF-GNN & \citep{dupty2021pf} & 0.111$\pm$0.010 \\
& $\delta$-2-GNN & \citep{morris2020weisfeiler} & 0.118$\pm$0.001 \\
& Recon-GNN & \citep{cotta2021reconstruction} & 0.125$\pm$0.001 \\
& SpeqNet & \citep{morris2022speqnets} & 0.115$\pm$0.001 \\
& SignNet  & \citep{lim2023sign} & 0.113$\pm$0.002 \\
& PPGN & \citep{puny2023equivariant} & 0.113$\pm$0.001 \\
& PPGN++ & \citep{puny2023equivariant} & 0.111$\pm$0.002 \\
& PPGN++(6) & \citep{puny2023equivariant} & 0.109$\pm$0.001 \\ \Xhline{1pt}
\end{tabular}
\end{table}

\end{document}